\def\eqref#1{equation~\ref{#1}}
\def\1{\bm{1}}
\DeclareMathAlphabet{\mathsfit}{\encodingdefault}{\sfdefault}{m}{sl}
\SetMathAlphabet{\mathsfit}{bold}{\encodingdefault}{\sfdefault}{bx}{n}
\newcommand{\Cov}{\mathrm{Cov}}
\let\oldnl\nl% Store \nl in \oldnl
\newcommand\nonl{%
  \renewcommand{\nl}{\let\nl\oldnl}}% Remove line number for one line
\newtheorem{theorem}{Theorem}
\newtheorem{corollary}{Corollary}
\declaretheorem[sibling=theorem]{lemma}
\newtheorem{definition}{Definition}
\newtheorem{condition}{Condition}
\newtheorem{example}{Example}
\newtheorem{assumption}{Assumption}
\newtheorem{remark}{Remark}
\newcommand{\indep}{\perp \!\!\! \perp}
\newcommand{\node}[1]{\mathsf{#1}}
\newcommand{\set}[1]{\mathbf{#1}}
\newcommand{\setset}[1]{\mathcal{#1}}
\newcommand{\setsetset}[1]{\mathbb{#1}}
\newcommand{\graph}{\mathcal{G}}
\newcommand{\graphp}{\mathcal{G'}}
\newcommand{\graphpp}{\mathcal{G''}}
\newcommand{\descendant}{\text{De}_{\graph}}
\newcommand{\parents}{\text{Pa}_{\graph}}
\newcommand{\parentsp}{\text{Pa}_{\graphp}}
\newcommand{\purechildren}{\text{PCh}_{\graph}}
\newcommand{\purechildrenp}{\text{PCh}_{\graphp}}
\newcommand{\meassureddes}{\text{MDe}_{\graph}}
\newcommand{\sep}{\text{Sep}}
\title{
A Versatile Causal Discovery Framework to Allow Causally-Related Hidden Variables
%Empowering Causal Discovery
%:\\
%A Versatile Framework for Incorporating Causally-Related Hidden Variables
%via Rank-based Information
}
\author{%
    Xinshuai Dong$^{*1}$ \quad
    Biwei Huang \thanks{Equal contribution.} ~$^{2}$  \quad
    Ignavier Ng$^{1}$ \quad
    Xiangchen Song$^{1}$ \quad
    Yujia Zheng$^{1}$ \quad \\
    \textbf{Songyao Jin}$^{4}$ \quad
    \textbf{Roberto Legaspi}$^{3}$ \quad
    \textbf{Peter Spirtes}$^{1}$ \quad
    \textbf{Kun Zhang}$^{1,4}$\\
    $^1$Carnegie Mellon University\\
    $^2$University of California San Diego\\
    $^3$KDDI Research\\
    $^4$Mohamed bin Zayed University of Artificial Intelligence
}
\begin{document}

\maketitle

\begin{abstract}
  %Most existing causal discovery methods rely on the assumption that there are no latent confounders, limiting their applicability in solving real-life problems. In this paper, we introduce a novel and versatile framework for causal discovery that accommodates the presence of causally-related hidden variables, regardless of their numbers and positions within the causal network. Importantly, our proposed approach eliminates the need for prior knowledge regarding the existence of hidden variables. Furthermore, these hidden variables can be causally related to each other, form a hierarchical structure (i.e., the children of hidden variables can still be hidden), and even serve as both confounders and intermediate variables for observed variables. We develop an estimation algorithm that can efficiently locate hidden variables, determine their cardinalities, and discover the entire causal structure over both observed and hidden ones, by leveraging the rank of the subcovariance matrices of observed variables and a  designed search procedure. Theoretically, we show that the proposed method can correctly identify the Markov Equivalence Class (MEC) of underlying causal structure under weak graphical constraints of the hidden variables. In addition, experimental results on both synthetic and real-world data sets  demonstrate the efficacy of the proposed approach in finite-sample cases.
  %
  
Most existing causal discovery methods rely on the assumption of no latent confounders, limiting their applicability in solving real-life problems. In this paper, we introduce a novel, versatile framework for causal discovery that accommodates the presence of causally-related hidden variables
 almost everywhere 
 %regardless of their numbers and positions within 
 in the causal network (for instance, they can be effects of observed variables), based on rank information of covariance matrix over observed variables. We start by investigating the efficacy of rank in comparison to conditional independence and, theoretically, establish necessary and sufficient conditions for the identifiability of certain latent structural patterns. Furthermore, we develop a  Rank-based Latent Causal Discovery algorithm, RLCD, that can efficiently locate hidden variables, determine their cardinalities, and discover the entire causal structure over both measured and hidden ones.
%, by testing the rank deficiency over measured variables. 
We also show that, under certain graphical conditions, RLCD correctly identifies the Markov Equivalence Class of the whole latent causal graph asymptotically. Experimental results on both synthetic and real-world personality data sets demonstrate the efficacy of the proposed approach in finite-sample cases. Our code will be publicly available.
    
  %In this paper, we consider a very general and challenging setting for causal structure identification where only part of the variables are observed, i.e., multiple paths between two variables are allowed, the children of latent variables may still be latent, and observed variables are not necessarily leaf nodes. To this end, we propose an estimation procedure that can efficiently locate latent variables, determine their cardinalities, and identify the latent structure, by leveraging the rank of the covariance matrix of observed variables. Theoretically, we provide conditions under which our method can identify the correct Markov equivalence class of underlying causal structure. Our experiments show that the proposed algorithm can find the structure asymptotically and surpass existing methods by a significant margin.
% \vspace{-1mm}
\end{abstract}

% \vspace{-1mm}
\section{Introduction and Related Work}
% \vspace{-1mm}
Causal discovery aims at finding causal relationships from observational data
and has received successful applications in many fields \citep{spirtes2000causation,spirtes2010automated,pearl2019seven}.
However, traditional methods, such as PC \citep{spirtes2000causation}, GES \citep{chickering2002optimal}, and LiNGAM \citep{shimizu2006linear}, generally assume that there are no latent confounders in the graph, which hardly holds in many real-world scenarios. 
%
%This is particularly true 
%E.g., in a complex system, it is challenging to exhaustively identify and measure all variables relevant to a task. 
%
%Consequently, there may exist latent variables that exert influence on multiple measured variables. 
%As unmeasured variables may exert influence on multiple observed variables, ignorance of them can lead to the emergence of spurious correlations.
% among the measured variables.
%
Extensive efforts have been dedicated to addressing this issue for causal structure learning.
%by assuming further structural constraints.
%
{
One line of research 
focuses on inferring the causal structure among the observed variables, despite the possible existence of latent confounders.
%relies on the assumption that hidden variables are independent of each other and focuses on inferring the causal structure among the measured ones. 
Notable approaches include FCI
%FCI's informativeness is affected but still work
and its variants \citep{spirtes2000causation,Pearl:2000:CMR:331969,colombo2012learning,Akbari2021} 
 that leverage conditional independence tests 
 %and rely on the independence of latent variables to provide informative results
 }, and over-complete ICA-based techniques~\citep{hoyer2008estimation,salehkaleybar2020learning}
that further leverage non-Gaussianity. %This line of approaches ends up with an equivalence class, which usually consists of many directed acyclic graphs.

Another line of thought %research line 
 focuses more on uncovering the causal structure among latent variables, by assuming observed variables are not directly adjacent. This includes Tetrad condition-based \citep{Silva-linearlvModel, Kummerfeld2016}, high-order moments-based
 \citep{shimizu2009estimation,cai2019triad,xie2020generalized,adams2021identification,chen2022identification}, 
 matrix decomposition-based  \citep{anandkumar2013learning}, and  mixture oracles-based  \citep{kivva2021learning} approaches. Recently, \citet{huang2022latent} propose an approach that makes use of rank constraints to identify general latent hierarchical structures,
 %where latent variables may form a hierarchical structure to generate measured variables, 
 and yet observed variables can only be leaf nodes. 
 Although \citet{RankSparsity_12} allow direct causal influences within observed variables and the existence of latent variables, it cannot recover the causal relationships among latent variables and has strong graphical constraints.
 For a more detailed discussion of related work, please refer to our Appx.~\ref{sec appendix: related work}.
 
 % \begin{comment}
 % \Biwei{add back later}
 % Notably, \citep{RankSparsity_12} allows direct causal influences within measured variables, as well as the existence of latent variables, by decomposing a precision matrix into a sparse matrix that indicates the graph structure over measured variables, and a low-rank matrix that indicates the structure from latent to measured ones, with continuous optimization. However, it cannot recover the causal relationships among latent variables and has strong graphical constraints. Please refer to our Appx.~\ref{sec appendix: related work} for a detailed discussion of related work.
 % \end{comment}

%Despite the  effectiveness, however, existing latent causal discovery imposes too strong structural 

 In this paper, we aim to handle a more general scenario for causal discovery with latent variables, where observed variables are allowed to be directly adjacent, and latent variables to be flexibly related to all the other variables.  That is, hidden variables can serve as confounders, mediators, or effects of latent or observed variables,
and even form a hierarchical structure
% (i.e., the children of hidden variables can still be hidden), 
(see an illustrated example in Figure \ref{fig:example1}).
This setting is rather general and practically meaningful to deal with many real-world problems.

  %In this paper, we aim to handle a more general scenario for causal discovery, that allows the presence of causally-related latent variables as well as direct causal influences among measured variables. This scenario closely resembles real-world systems, making it a challenging but crucial problem to tackle. To overcome these challenges, we present a theoretically sound and practically efficient causal discovery framework that is able to locate latent variables, determine their cardinalities,  and identify the whole causal structure involving measured and latent variables. Importantly, the proposed framework allows the hidden variables to be causally related to each other and even form a hierarchical structure (i.e., the children of hidden variables can still be hidden), and they can serve as confounders, mediators, or effects for both latent and observed variables (see an illustrated example in Figure \ref{fig:example1}. % Moreover, our approach eliminates the need for prior regarding whether there exist hidden variables.
    %in the special case when no latent exists, it returns the same graph as the PC algorithm \citep{spirtes2000causation}.

To address such a challenging problem,  we are confronted with three fundamental questions:
% that naturally arise: 
(i) What information and constraints can be discovered from the observed variables to reveal the underlying causal structure? (ii) How can we effectively and efficiently search for these constraints? (iii) What graphical conditions are needed to uniquely locate latent variables and ascertain the complete causal structure?  Remarkably, these questions can be addressed by harnessing the power of rank deficiency constraints on the covariance of observed variables. By carefully identifying and utilizing rank properties in specific ways, we are able to determine the Markov equivalence class of the entire graph. Our  contributions are mainly three-fold:% can be summarized as follows:
\begin{itemize}[leftmargin=*, topsep=-1pt, itemsep=0pt]
    \item  We investigate the efficacy of rank in comparison to conditional independence  in latent causal graph discovery, and theoretically introduce necessary and sufficient conditions for the identifiability of certain latent structural properties. For instance, the condition we proposed for nonadjacency generalizes the counterpart in \citet{spirtes2000causation} to graphs with latent variables.%These constraints impose only modest requirements on the number of neighbors associated with latent variables. %: Roughly speaking, for a latent variable set with size $k$,  it is sufficient to have $k+1$ pure children (which can be latent), as well as another $k+1$ neighbors. 
    \item We develop RLCD, an efficient %and scale-friendly 
    three-phase causal discovery algorithm that is able to locate latent variables, determine their cardinalities, and identify the whole causal structure involving measured and latent variables, by properly leveraging rank properties.
    %a  designed search procedure.
    In the special case with no latent variables, it asymptotically returns the same graph as the PC algorithm \citep{spirtes2000causation} does.
    %that allows causally-related latent variables, as well as direct causal influences among measured variables, by finding and leveraging rank properties in a  designed search procedure. In the special case with no latent variables, it returns the same graph as the PC algorithm \citep{spirtes2000causation}.
    \item We provide a set of graphical conditions that are sufficient for RLCD to asymptotically identify the correct Markov Equivalence Class of the latent causal graph; notably, these graphical conditions are significantly weaker than those in previous works. 
    Our empirical study on both synthetic and real-world datasets validates RLCD on finite samples. %\Biwei{may emphasize computation complexity}
    %under mild restrictions of the graph structure of latent variables, but with no restrictions on the measured variables. %Roughly speaking, for a latent variable set with size $k$,  it is sufficient to have $k+1$ pure children (which can be latent), as well as another $k+1$ neighbors; this generalizes the conditions in \citep{huang2022latent} to cases where direct causal influences between measured variables are allowed. 
\end{itemize}

\begin{figure}[t]
  \centering 
  % \vspace{-3.5mm} 
   % \begin{minipage}[c]{0.4\textwidth}
     % \centering 
  \includegraphics[width=.6\linewidth]{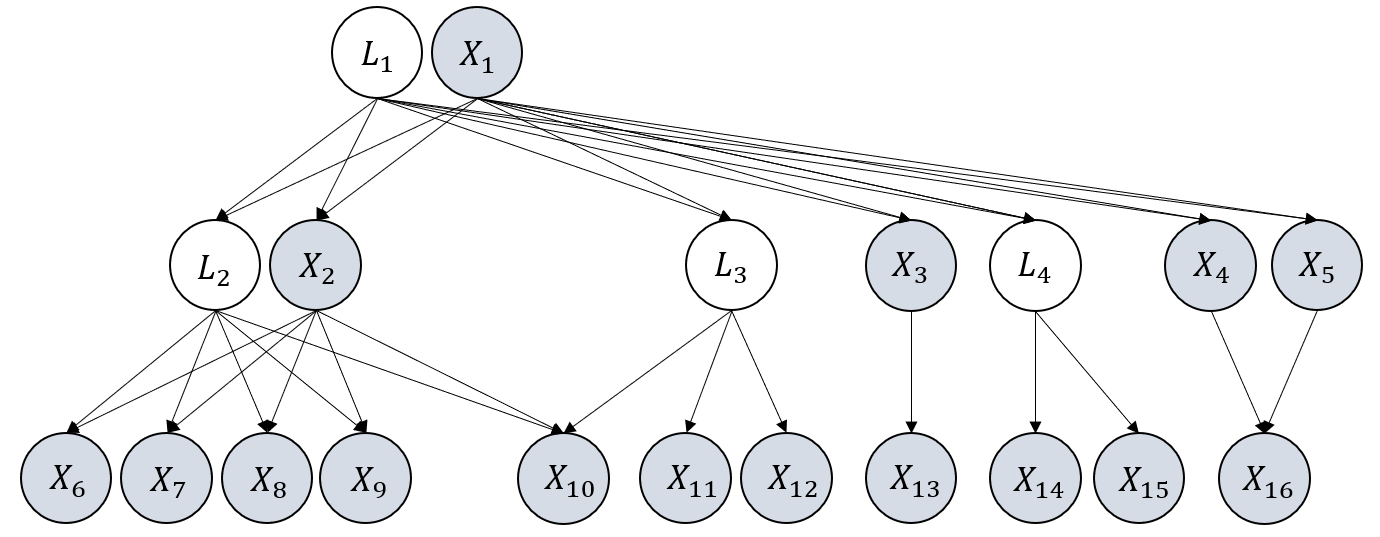}
  % \end{minipage}
  % \hspace{6em}
   % \begin{minipage}[c]{0.35\textwidth}
     % \centering 
  \caption{\small An illustrative graph that we aim to handle, where latent variables are denoted by $\node{L}$ and observed variables are denoted by $\node{X}$. The latent variables can act as a cause, effect, or mediator for both observed variables and other latent variables. See Appx. \ref{compare graphs with each method} for a comparison of graphs that each method can handle.}  \label{fig:example1}
  % \end{minipage}
  % \vspace{-6mm}
\end{figure}

% \vspace{-2mm}
\section{Problem Setting}
\label{sec:pre}
% \vspace{-2mm}
%\subsection{Problem Setting}
In this paper, we aim to identify the causal structure of a latent linear causal model defined as follows.

\begin{definition} (Latent Linear Causal Models)
  Suppose a directed acyclic graph $\graph:=(\set{V}_\graph,\set{E}_\graph)$,
  where each variable $V_i \in \mathbf{V}_{\mathcal{G}}$ is generated following a linear causal structural model:
\begin{align}
  \label{eq:lem}
\node{V}_i=\sum \nolimits_{\node{V}_j \in \parents(\node{V}_i)} a_{ij} \node{V}_j + \varepsilon_{\node{V}_i},
\end{align}
where $\set{V}_\graph:=\set{L}_\graph\cup\set{X}_\graph$ contains a set of $n+m$ random variables, 
with $m$ latent variables $\set{L}_\graph:=\{\node{L}_i\}_{i=1}^{m}$,
and $n$ observed variables $\set{X}_\graph:=\{\node{X}_i\}_{i=1}^{n}$.
$\parents(V_i)$ denotes the parent set of $V_i$, $a_{ij}$ the causal coefficient from $V_j$ to $V_i$, and $\varepsilon_{\node{V}_i}$ represents the noise term.
\label{definition:llcm}
\end{definition}

% \vspace{-2mm}
We further have a basic assumption for latent linear causal models given as follows.

\begin{assumption} [Basic Assumptions for Latent Linear Causal Models] 
   (i) Leaf nodes are observed; or equivalently, a latent variable should have at least one observed descendant. 
   (ii) Rank faithfulness. A probability distribution $p$ is rank faithful to $\mathcal{G}$ if every rank constraint on a sub-covariance matrix that holds in $p$ is entailed by every linear structural model with respect to $\mathcal{G}$. 
   \label{assumption:basic}
   \end{assumption}  

 % \vspace{-2mm}
 The inclusion of Assumption~\ref{assumption:basic} does not compromise the generality.
 If (i) does not hold,
  we can simply remove the latent variables
  that lack observed descendants, 
  since they provide no information that can be inferred for any other variable.
  Further, (ii) is the classical faithfulness assumption that is critical and prevalent in causal discovery \citep{spirtes2000causation,huang2022latent};
  it holds generically on infinite data, as the set of values of the SCM's free parameters 
  for which rank is not faithful is of Lebesgue measure 0   \citep{spirtes2013calculation-t-separation}. 
   On the other hand, if faithfulness is violated 
   (an example in Appx.~\ref{violation of faithfulness}), 
   %then observational data cannot faithfully reflect the underlying graphical information,
     %and thus 
     even classical methods like PC cannot guarantee asymptotic correctness.

Our objective is to identify the underlying causal structure $\graph$ over all the variables $\set{L}_\graph\cup\set{X}_\graph$ (detailed in Sec.~\ref{sec:theory})
that are generated according to a latent linear causal model,
given i.i.d. samples of observed variables $\set{X}_\graph$ only. 
To address this challenging problem,
 traditional wisdom often relies on strong 
graphical constraints  \citep{Pearl88,zhang2004hierarchical,huang2022latent,maeda2020rcd}(detailed in Appx.~\ref{compare graphs with each method} with illustrative graphs).
In contrast, Definition~\ref{definition:llcm} allows all the variables including observed and latent variables to be very flexibly related. 
We basically allow the presence of edges between any two variables such that
a node $\node{V}$, no matter whether it is observed or not, can act as a cause, effect, or mediator for both observed and latent variables.

 A summary of 
 %commonly used 
 notations 
  is  in Tab.~\ref{tab:notation}. The rest of the paper is organized as follows. In Sec.~\ref{sec:why rank},
we motivate the use of rank %for latent variable causal discovery,
and propose conditions
for nonadjacency and the existence of latent variables.   
%that capture what rank is able to imply graphically and globally.  
%
In Sec.~\ref{sec:allalgorithm}, we establish the minimal identifiable
 substructure of a linear latent  graph,
 based on which we propose RLCD for latent variable causal discovery.
 In Sec.~\ref{sec:theory}, we introduce the identifiability of RLCD.
 In Sec.~\ref{sec:exp}, we validate our method using both synthetic and real-life data.
%

  %  where $\node{A}$ is a variable, $\set{A}$ denotes a cover 
%  (a set of variables), $\setset{A}$ represents a set of covers,  and $\setsetset{A}$ denotes a set of sets of covers.

%*******************i.e.,given any two variables as a query, we aim to answer whether there exists an edge between them. We note that previous latent causal discovery methods assume hierarchical structures such as  that there is one and only one undirected path between every pair of variables, and that all observed variables are leaf node \citep{pearl1988probabilistic,zhang2004hierarchical,choi2011latenttree,drton2017marginal,huang2022latent}, which can be very  restrictive in real world problems. In contrast, the model we concern in Definition~\ref{definition:llcm} is very general in that it allows multiple undirected paht between variables and observed variables do not necessarilly be leaf nodes. %*******************

% \vspace{-0mm}
\begin{table}[t]
% \vspace{-3mm}
\small
  \caption{\small Graphical notations used throughout this paper.}
  % \vspace{-3mm}
  \centering
  \begin{tabular}{|l|l|l|l|}
  \hline
  Pa: Parents & $\set{V}$: Variables & $\node{V}$: Variable & $\graph$: The underlying graph\\
  \hline
  Ch: Children & $\set{L}$: Latent variables & $\node{L}$: Latent variable & $\graphp$: Output Graph \\
  \hline
  PCh: Pure children & $\set{X}$: Observed variables & $\node{X}$: Observed variable & $\setset{S}$: Set of covers\\
  \hline
  Sib: Siblings & MDe: Measured descendants & PDe: Pure descendants& $\setsetset{S}$: Set of sets of covers \\
  \hline
  \end{tabular}
\label{tab:notation}
% \vspace{-3mm}
\end{table}

%

% \vspace{-2mm}
\section{Why Use Rank Information?}
\label{sec:why rank}
% \vspace{-3mm}
%Before presenting our developed causal discovery algorithm 
%that utilizes rank properties in specific ways,
In this section, 
we first motivate the use of rank constraints 
for causal discovery in the presence of latent variables,
 and then establish some fundamental theories about what rank  
 implies graphically.
 %that do not rely on any specific search procedure.
 
% \vspace{-2mm}
\subsection{Preliminaries about Treks and Rank}
% \vspace{-2mm}
When there is no latent variable,
 a common approach for causal discovery 
 is to use conditional independence (CI) relationships
  to identify d-separations in a graph; see, e.g.,
   the PC algorithm \citep{spirtes2000causation}.
   The following theorem illustrates this idea.

\begin{theorem} [Conditional Independence and D-separation \citep{Pearl88}]
  Under the Markov and faithfulness assumption, for disjoint sets of variables $\set{A}$, $\set{B}$ and $\set{C}$,
  $\set{C}$ d-separates $\set{A}$ and $\set{B}$ in graph $\graph$, iff
  $\set{A} \indep \set{B} | \set{C}$ holds for every distribution in the graphical model associated to $\mathcal{G}$.
\label{theorem:CI and d-sep}
% \vspace{-1mm}
\end{theorem}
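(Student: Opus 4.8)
The plan is to prove the biconditional by splitting it into the soundness of d-separation (the ``only if'') and its completeness (the ``if''), since these lean on the Markov and faithfulness parts of the hypothesis respectively. For soundness, I would show that d-separation of $\set{A}$ and $\set{B}$ by $\set{C}$ forces $\set{A}\indep\set{B}\mid\set{C}$ in every distribution $p$ that is Markov to $\graph$. The cleanest route is moralization: restrict to the subgraph induced by the ancestors of $\set{A}\cup\set{B}\cup\set{C}$, moralize it into an undirected graph, and invoke the standard equivalence between d-separation in $\graph$ and ordinary vertex separation by $\set{C}$ in this moral graph. Since $p$ factorizes along $\graph$, its ancestral restriction is a Markov random field on the moral graph, so vertex separation yields the conditional independence directly. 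This half uses only the Markov property.

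For completeness I would argue the contrapositive: if $\set{C}$ fails to d-separate $\set{A}$ and $\set{B}$, I exhibit at least one distribution compatible with $\graph$ in which $\set{A}\not\indep\set{B}\mid\set{C}$, which already breaks the universally quantified claim ``holds for every distribution.'' Failure of d-separation means there is an active path between some endpoint in $\set{A}$ and some endpoint in $\set{B}$ given $\set{C}$. I would parametrize $\graph$ as a linear structural model with jointly Gaussian noise, so that each conditional independence is equivalent to the vanishing of the corresponding partial covariance, and then show that this partial covariance is a nonzero polynomial in the edge coefficients $a_{ij}$.

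The main obstacle is cancellation: when several active paths connect $\set{A}$ and $\set{B}$ given $\set{C}$, their contributions to the partial covariance might sum to zero for special coefficient values. I would resolve this with a genericity argument — the vanishing locus of a polynomial that is not identically zero has Lebesgue measure zero, so for generic coefficients the partial covariance is nonzero and the dependence is witnessed. To confirm the polynomial is not identically zero, I would track the monomial contributed by a single active path and argue it cannot be matched by the contributions of the remaining paths. This is exactly the genericity underlying the rank-faithfulness in Assumption~\ref{assumption:basic}(ii) and in \citet{spirtes2013calculation-t-separation}, which is why the statement is given under faithfulness; indeed, the same construction exhibits a faithful distribution, so the ``for every distribution'' quantifier and the faithful single-distribution reading coincide.
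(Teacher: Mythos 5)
The paper does not prove this statement at all: it is imported verbatim as a classical result and attributed to \citet{Pearl88}, so there is no in-paper proof to compare against. Your plan is the standard textbook argument and is essentially sound. The soundness half (d-separation $\Rightarrow$ conditional independence for every Markov distribution) via ancestral restriction, moralization, and the equivalence of d-separation with vertex separation in the moral graph is exactly the Lauritzen-style proof and needs only the Markov property, as you say. The completeness half via the contrapositive --- exhibit one compatible distribution violating the independence --- is the right reading of the universally quantified ``if'' direction, and the linear-Gaussian parametrization reducing conditional independence to vanishing of a partial covariance polynomial is the standard Geiger--Pearl/Spirtes route.

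The one step I would tighten is your argument that the partial-covariance polynomial is not identically zero. ``Tracking the monomial contributed by a single active path and arguing it cannot be matched by the remaining paths'' is fragile: distinct treks can contribute identical monomials (and partial covariances involve sums over products of treks once you condition), so monomial bookkeeping does not obviously rule out cancellation. The cleaner and standard fix is to exhibit a single parameter point rather than a monomial: pick one path between $\set{A}$ and $\set{B}$ that is active given $\set{C}$, set to zero every edge coefficient not lying on that path except the directed edges needed to connect each collider on the path to its descendant in $\set{C}$, and compute the partial covariance directly in this sparse submodel, where it is manifestly nonzero. That witnesses non-vanishing of the polynomial, and then your Lebesgue-measure-zero genericity argument takes over. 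Your closing remark that the same generic construction yields a faithful distribution, reconciling the ``for every distribution'' phrasing with the single-faithful-distribution reading, is correct and worth keeping, since the theorem as stated in the paper mentions faithfulness even though the every-distribution formulation does not strictly require it.
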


As for latent linear causal models,
 trek-separations (t-separations) provide more information than d-separations 
 (for readers who are not very familiar with treks and t-separations, 
kindly refer to Appx.~\ref{example: trek} for examples).
The definitions of treks and t-separation are given as follows, together with Theorem~\ref{theorem:tsep and dsep}
showing the relations between t-separations and d-separations.

\begin{definition} [Treks \citep{sullivant2010trek}]
   In $\graph$, a trek from $\node{X}$ to $\node{Y}$ is an ordered pair of directed paths 
   $(P_1,P_2)$ where $P_1$ has a sink $\mathsf{X}$, 
   $P_2$ has a sink $\mathsf{Y}$,
    and both $P_1$ and $P_2$ have the same source $\mathsf{Z}$. 
    %The notation $\Sigma_{\mathbf{A},\mathbf{B}}$ denotes the cross-covariance matrix over two sets of variables, $\mathbf{A}$ and $\mathbf{B}$. 
\end{definition}

\begin{definition} [T-separation \citep{sullivant2010trek}]
Let $\mathbf{A}$, $\mathbf{B}$, $\mathbf{C}_{\mathbf{A}}$, 
and $\mathbf{C}_{\mathbf{B}}$ be four subsets of $\mathbf{V}_{\mathcal{G}}$ 
in graph $\mathcal{G}$ (not necessarilly disjoint). 
($\mathbf{C}_{\mathbf{A}}$,$\mathbf{C}_{\mathbf{B}}$) t-separates $\mathbf{A}$ from $\mathbf{B}$ if for every trek ($P_1$,$P_2$) from a vertex in $\mathbf{A}$ to a vertex in $\mathbf{B}$, either $P_1$ contains a vertex in  $\mathbf{C}_{\mathbf{A}}$ or $P_2$ contains a vertex in  $\mathbf{C}_{\mathbf{B}}$. 
\label{definition:t-sep}
\end{definition}

\begin{theorem} [T- and D-sep \citep{di2009t}]
  For disjoint sets $\mathbf{A}$, $\mathbf{B}$ and $\mathbf{C}$,
  $\mathbf{C}$ d-separates $\mathbf{A}$ and $\mathbf{B}$ in graph $\mathcal{G}$, iff there is a partition $\mathbf{C} = \mathbf{C}_{\mathbf{A}} \cup \mathbf{C}_{\mathbf{B}}$ such that
  ($\mathbf{C}_{\mathbf{A}}$,$\mathbf{C}_{\mathbf{B}}$) t-separates $\mathbf{A}\cup\mathbf{C}$ from $\mathbf{B}\cup\mathbf{C}$.
\label{theorem:tsep and dsep}
% \vspace{-1mm}
\end{theorem}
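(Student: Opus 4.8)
The plan is to prove the biconditional through the standard correspondence between d-connecting (active) paths and chains of treks glued at colliders, after rewriting it in contrapositive form: $\mathbf{C}$ does \emph{not} d-separate $\mathbf{A}$ and $\mathbf{B}$ iff for \emph{every} partition $\mathbf{C}=\mathbf{C}_{\mathbf{A}}\cup\mathbf{C}_{\mathbf{B}}$ there is a trek from $\mathbf{A}\cup\mathbf{C}$ to $\mathbf{B}\cup\mathbf{C}$ whose first side avoids $\mathbf{C}_{\mathbf{A}}$ and whose second side avoids $\mathbf{C}_{\mathbf{B}}$. The one structural fact I use throughout is that a collider-free path is exactly a trek, and that such a path is active given $\mathbf{C}$ iff none of its vertices lies in $\mathbf{C}$.

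First I would treat the direction ``d-connected $\Rightarrow$ every partition fails to t-separate.'' Take a minimal active path $\pi$ from $a\in\mathbf{A}$ to $b\in\mathbf{B}$ and fix an arbitrary partition. Its colliders $w_1,\dots,w_k$ all lie in $\mathrm{An}(\mathbf{C})$, so each admits a directed descent to some $c_j\in\mathbf{C}$ with interior avoiding $\mathbf{C}$, while every non-collider of $\pi$ lies outside $\mathbf{C}$. If $k=0$ then $\pi$ is itself a trek with no vertex in $\mathbf{C}$, which violates the separation for any partition. If $k\ge 1$, the portion of $\pi$ from $a$ to $w_1$ extended by $w_1\rightsquigarrow c_1$ is a trek from $a$ to $c_1$ whose only $\mathbf{C}$-vertex is $c_1$; symmetrically the portion from $w_k$ to $b$ gives a trek from $c_k$ to $b$; and the collider-free segment between $w_j$ and $w_{j+1}$, whose unique peak feeds descents into $c_j$ and $c_{j+1}$, gives a trek from $c_j$ to $c_{j+1}$ whose only $\mathbf{C}$-vertices are its two endpoints. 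A short case analysis then always produces a violation: if $c_1\in\mathbf{C}_{\mathbf{A}}$ use the first trek; else if $c_k\in\mathbf{C}_{\mathbf{B}}$ use the last; otherwise $c_1\in\mathbf{C}_{\mathbf{B}}$ and $c_k\in\mathbf{C}_{\mathbf{A}}$ force a transition index $j$ with $c_j\in\mathbf{C}_{\mathbf{B}}$, $c_{j+1}\in\mathbf{C}_{\mathbf{A}}$, and the middle trek works. In each case the side ending in $\mathbf{A}\cup\mathbf{C}$ misses $\mathbf{C}_{\mathbf{A}}$ and the side ending in $\mathbf{B}\cup\mathbf{C}$ misses $\mathbf{C}_{\mathbf{B}}$.

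For the converse I would construct the partition explicitly. Restricting to $\mathrm{An}(\mathbf{A}\cup\mathbf{B}\cup\mathbf{C})$, which contains every trek among the three sets, d-separation is equivalent to $\mathbf{C}$ separating $\mathbf{A}$ from $\mathbf{B}$ in the moralized ancestral graph. I would put $c\in\mathbf{C}_{\mathbf{A}}$ exactly when $c$ has a directed descent to $\mathbf{A}$ whose interior avoids $\mathbf{C}$, and set $\mathbf{C}_{\mathbf{B}}=\mathbf{C}\setminus\mathbf{C}_{\mathbf{A}}$. To show this t-separates, I argue by contradiction: a violating trek yields a source $z$ with a directed path to $\mathbf{A}\cup\mathbf{C}$ avoiding $\mathbf{C}_{\mathbf{A}}$ and one to $\mathbf{B}\cup\mathbf{C}$ avoiding $\mathbf{C}_{\mathbf{B}}$; truncating each side at the $\mathbf{C}$-vertices it meets and splicing the resulting sub-treks at those vertices (which now act as activated colliders, being in $\mathbf{C}$) rebuilds an active $\mathbf{A}$--$\mathbf{B}$ path, contradicting d-separation.

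I expect the converse to be the crux. The delicate points are choosing the partition so that no single $\mathbf{C}$-vertex can simultaneously supply a clean $\mathbf{A}$-descent on the side meant to avoid it and a clean $\mathbf{B}$-descent on the other, and verifying that the spliced object is genuinely \emph{active}: every glue vertex must really be a collider (both incident sub-treks enter it head-first), and no non-collider created by the splicing may land in $\mathbf{C}$. An alternative that sidesteps the explicit construction is to invoke the trek-separation min-cut theorem of \citet{sullivant2010trek}: since each $c\in\mathbf{C}$ is a common endpoint, giving a trivial trek $c\to c$ between $\mathbf{A}\cup\mathbf{C}$ and $\mathbf{B}\cup\mathbf{C}$, every t-separator must cover $\mathbf{C}$ and hence has size at least $|\mathbf{C}|$, so a minimum separator of size exactly $|\mathbf{C}|$ is forced to be a partition of $\mathbf{C}$; the equivalence with d-separation then reduces to identifying this minimal cut size with $|\mathbf{C}|$.
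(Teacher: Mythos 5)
First, note that the paper does not prove Theorem~\ref{theorem:tsep and dsep}: it is imported verbatim from \citet{di2009t}, so there is no in-paper argument to compare against and your attempt has to stand on its own. Your forward direction (d-connection implies that every partition fails to t-separate) is sound: decomposing a d-connecting path at its colliders into collider-free segments, appending a shortest descent from each collider into $\mathbf{C}$, and running the three-way case analysis on where $c_1$, $c_k$, and the first transition from $\mathbf{C}_{\mathbf{B}}$ to $\mathbf{C}_{\mathbf{A}}$ occur does produce, for any partition, a violating trek; each of the three candidate treks meets $\mathbf{C}$ only at its designated endpoints, which is exactly what the case analysis needs.

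The converse, which you correctly identify as the crux, has a genuine gap: your explicit partition is wrong. You put $c\in\mathbf{C}_{\mathbf{A}}$ exactly when $c$ has a directed path to $\mathbf{A}$ whose interior avoids $\mathbf{C}$. Take the graph with edges $z\to a$, $z\to c$, $c\to b$, $c\to a'$ and sets $\mathbf{A}=\{a,a'\}$, $\mathbf{B}=\{b\}$, $\mathbf{C}=\{c\}$. Here $\mathbf{C}$ d-separates $\mathbf{A}$ from $\mathbf{B}$ (both paths into $b$ pass through $c$ as a non-collider), and your rule places $c$ in $\mathbf{C}_{\mathbf{A}}$ because of the edge $c\to a'$. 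But the trek $(z\to a,\; z\to c\to b)$ from $a$ to $b$ then has first side $\{z,a\}$ disjoint from $\mathbf{C}_{\mathbf{A}}=\{c\}$ and second side $\{z,c,b\}$ disjoint from $\mathbf{C}_{\mathbf{B}}=\emptyset$, so $(\{c\},\emptyset)$ does not t-separate $\mathbf{A}\cup\mathbf{C}$ from $\mathbf{B}\cup\mathbf{C}$; only the opposite assignment $(\emptyset,\{c\})$ works. The failure mode is not the one you anticipated (a vertex with clean descents to both $\mathbf{A}$ and $\mathbf{B}$): the side on which $c$ must block is dictated by which directed half of the offending treks it lies on, not by which of $\mathbf{A}$ or $\mathbf{B}$ it can reach, so the partition must be extracted from the trek or moralized-ancestral-graph separation structure, and with the wrong partition the splicing argument never gets off the ground. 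Your fallback via the max-flow--min-cut theorem of \citet{sullivant2010trek} does not close the gap either: it only shows that a t-separator of total size $|\mathbf{C}|$, if one exists, must be a partition of $\mathbf{C}$; showing that d-separation forces the minimal t-separation cost down to $|\mathbf{C}|$ is precisely the content still to be proved, and reaching it through generic Gaussian models would additionally require the completeness of d-separation, which you do not invoke.
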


%As we can see, t-separation is very powerful and takes d-separation as a special case. If the relationships between variables are linear, the rank of the covariance matrix reflects t-separations. Therefore in our scenario, rank can provide graphical information better than the conditional independence test. The following theorem shows that rank reflects t-separation.
The theorem above reveals that all d-sep can be reformulated by t-sep, and thus, t-sep encompass d-sep information. 
Just as we use CI tests to find d-sep, 
t-sep can be identified by the rank of cross-covariance matrix over specific combinations of variables,
which is formally stated as follows.
%The following theorem further shows the connection between rank and t-separation, 
%which provides a solution that 
%one may estimate rank from observed samples to infer t-separation relations in the graph.

\begin{theorem} [Rank and T-separation \citep{sullivant2010trek}]
  Given two sets of variables $\set{A}$ and $\set{B}$ from a linear model with graph $\graph$, we have  
  $\text{rank}(\Sigma_{\set{A},{\set{B}}}) = 
  \min \{|\set{C}_{\set{A}}|+
  |\set{C}_{\set{B}}|:(\set{C}_{\set{A}},\set{C}_{\set{B}})~
  \text{t-separates}~\set{A}~\text{from}~\set{B}~\text{in}~\graph\}$, where $\Sigma_{\set{A},{\set{B}}}$ 
is the cross-covariance over $\set{A}$ and $\set{B}$.
\label{theorem:rank and t}
% \vspace{-1mm}
\end{theorem}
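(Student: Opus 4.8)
The plan is to prove the identity by combining the classical \emph{trek rule} with a Lindström--Gessel--Viennot (LGV) style determinant expansion, splitting the argument into a matching upper and lower bound on $\text{rank}(\Sigma_{\set{A},\set{B}})$. First I would set up the trek rule. Writing the model of Eq.~\ref{eq:lem} in matrix form $\node{V} = \Lambda \node{V} + \varepsilon$, where $\Lambda$ collects the coefficients $a_{ij}$ and $\Omega = \mathrm{Cov}(\varepsilon)$ is diagonal, acyclicity of $\graph$ makes $\Lambda$ nilpotent, so $\node{V} = (I-\Lambda)^{-1}\varepsilon$ and $\Sigma = (I-\Lambda)^{-1}\Omega(I-\Lambda)^{-\top}$. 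Setting $T := (I-\Lambda)^{-1} = \sum_{k\ge 0}\Lambda^{k}$, the entry $T_{ij}$ is exactly the sum over directed paths from $j$ to $i$ of the product of edge coefficients, so $\Sigma_{ij} = \sum_{s} T_{is}\,\omega_s\,T_{js}$ is the sum, over all treks from $i$ to $j$ with source $s$, of $\omega_s$ times the product of the coefficients along the two sides $P_1,P_2$. This is the bridge that turns a purely algebraic rank into a combinatorial quantity on $\graph$.

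For the upper bound I would show that any t-separator bounds the rank. Fix $(\set{C}_{\set{A}},\set{C}_{\set{B}})$ that t-separates $\set{A}$ from $\set{B}$. By Definition~\ref{definition:t-sep}, every trek from $\set{A}$ to $\set{B}$ has its $\set{A}$-side $P_1$ meeting $\set{C}_{\set{A}}$ or its $\set{B}$-side $P_2$ meeting $\set{C}_{\set{B}}$. Assigning each contributing trek to such a cut vertex lets me write $\Sigma_{\set{A},\set{B}} = M_{\set{A}} N_{\set{A}}^{\top} + M_{\set{B}} N_{\set{B}}^{\top}$, where the first summand routes every surviving trek through $\set{C}_{\set{A}}$ (so its factorization has at most $|\set{C}_{\set{A}}|$ inner columns) and the second through $\set{C}_{\set{B}}$ (at most $|\set{C}_{\set{B}}|$). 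Subadditivity and submultiplicativity of rank then give $\text{rank}(\Sigma_{\set{A},\set{B}}) \le |\set{C}_{\set{A}}| + |\set{C}_{\set{B}}|$, and minimizing over all t-separators yields the ``$\le$'' direction.

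For the lower bound, let $k$ be the minimum size of a t-separator. I would show that generically some $k\times k$ minor of $\Sigma_{\set{A},\set{B}}$ is nonzero. For subsets $\set{A}_0\subseteq\set{A}$ and $\set{B}_0\subseteq\set{B}$ of size $k$, the LGV/Cauchy--Binet expansion writes $\det \Sigma_{\set{A}_0,\set{B}_0}$ as a signed sum over trek systems (one trek from each vertex of $\set{A}_0$ to a distinct vertex of $\set{B}_0$); the sided-intersection cancellation of LGV removes all systems sharing a vertex on matching sides, leaving only non-intersecting systems. A Menger / max-flow--min-cut argument on the flow network encoding treks and t-separation shows that, since no t-separator of size $<k$ exists, there is a non-intersecting trek system of size $k$ between some $\set{A}_0$ and $\set{B}_0$; its monomial cannot cancel, so $\det\Sigma_{\set{A}_0,\set{B}_0}$ is a nonzero polynomial in the parameters, hence nonzero off a Lebesgue-measure-zero set. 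Combining the two bounds yields the claimed equality, with the genericity being exactly the rank faithfulness of Assumption~\ref{assumption:basic}.

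The hard part will be this lower bound: establishing the LGV cancellation of intersecting trek systems in the determinant expansion and, in tandem, the Menger-type translation equating the largest non-intersecting trek system with the minimum t-separator size. Everything else---the trek rule and the factorization upper bound---is essentially bookkeeping once the trek expansion is in place.
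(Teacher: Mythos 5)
The paper does not prove this statement: it is quoted directly from \citet{sullivant2010trek}, and the paper only restates the supporting max-flow--min-cut fact as Lemma~\ref{lemma: maxflowmincut}, also without proof. Your sketch correctly reconstructs the standard argument from that source --- the trek rule $\Sigma=(I-\Lambda)^{-1}\Omega(I-\Lambda)^{-\top}$, the factorization-through-the-cut upper bound, and the LGV/Menger lower bound showing a $k\times k$ minor is a nonzero polynomial --- and you rightly identify that the resulting genericity is exactly what Assumption~\ref{assumption:basic}(ii) (rank faithfulness) supplies, so the equality as stated holds under that assumption rather than for all parameter values.
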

%In words, linear relationships allow us to infer graphical information from rank information. And such graphical implications, t-separation, is more informative than d-separation. Next we will show that how rank helps identify the causal structure in the presence of latent variables.

%In Theorem~\ref{theorem:rank and t}, $\Sigma_{\set{A},{\set{B}}}$ refers to the subcovariance matrix over $\set{A}$ and $\set{B}$. 
With some abuse of notation, sometimes we also use $\Sigma_{\setset{A},{\setset{B}}}$ to refer to cross-covariance over sets of sets (see Appx.~\ref{appendix: covariance} for definition and examples). 
Notably, when all variables are observed,
 rank and conditional independence are equally informative 
 about the underlying DAG.
  However, in the presence of latent variables, 
   t-separations, which can be inferred from rank by Theorem~\ref{theorem:rank and t},
   offer more graphical information compared to d-separations.
   Therefore,
    %With this understanding, 
    we next demonstrate 
    how rank constraints play
     a pivotal role in identifying latent causal structures.
     %     when latent variables are present.

\begin{figure}[t]
   % \vspace{-4mm}
   % \setlength{\abovecaptionskip}{-2.5mm}
 \setlength{\belowcaptionskip}{-2.5mm}
   %\centering
   \begin{subfigure}[t]{0.16\textwidth}
     %\centering
     %\begin{center}
     \includegraphics[width=\textwidth]{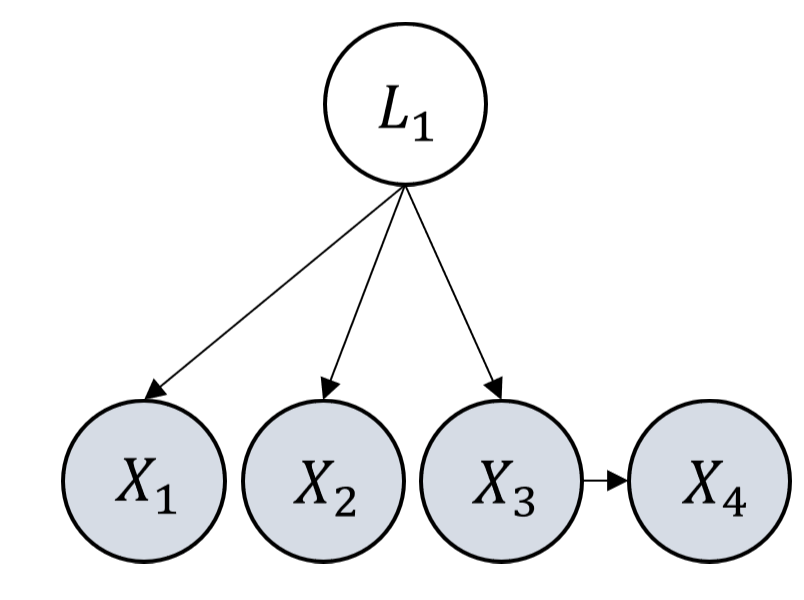}
     \caption{\centering\small $\graph_1$.}
     %\end{center}
 \end{subfigure}
 \hfill
 \begin{subfigure}[t]{0.16\textwidth}
   %\centering
   \includegraphics[width=\textwidth]{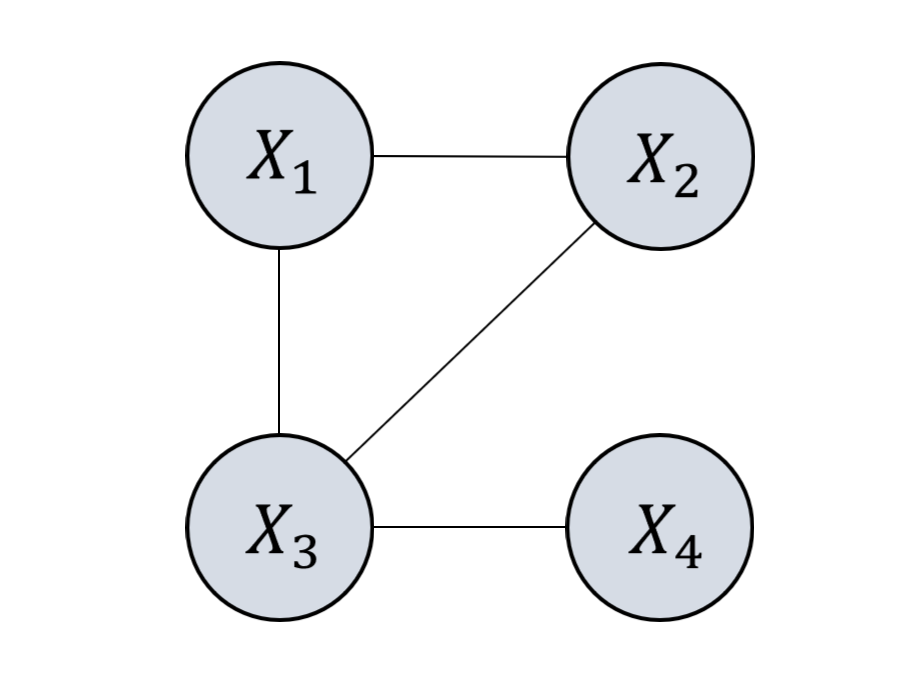}
   \caption{\centering\small   CI skeleton of $\graph_1$.}
 \end{subfigure}
 \hfill
 \begin{subfigure}[t]{0.16\textwidth}
   %\centering
   \includegraphics[width=\textwidth]{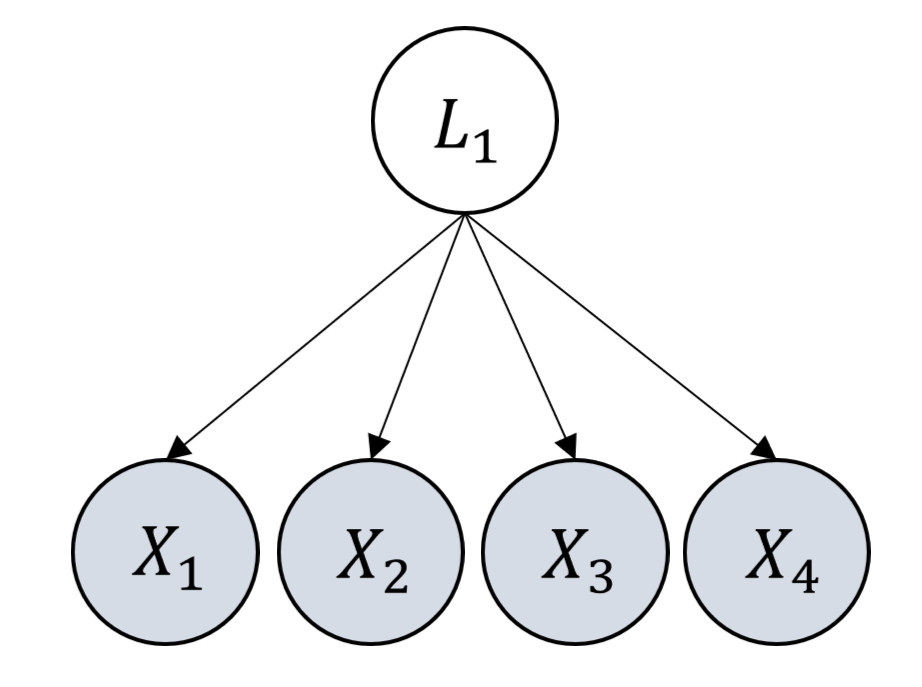}
   \caption{\centering\small $\graph_2$.}
 \end{subfigure}
 \hfill
 \begin{subfigure}[t]{0.16\textwidth}
   %\centering
   \includegraphics[width=\textwidth]{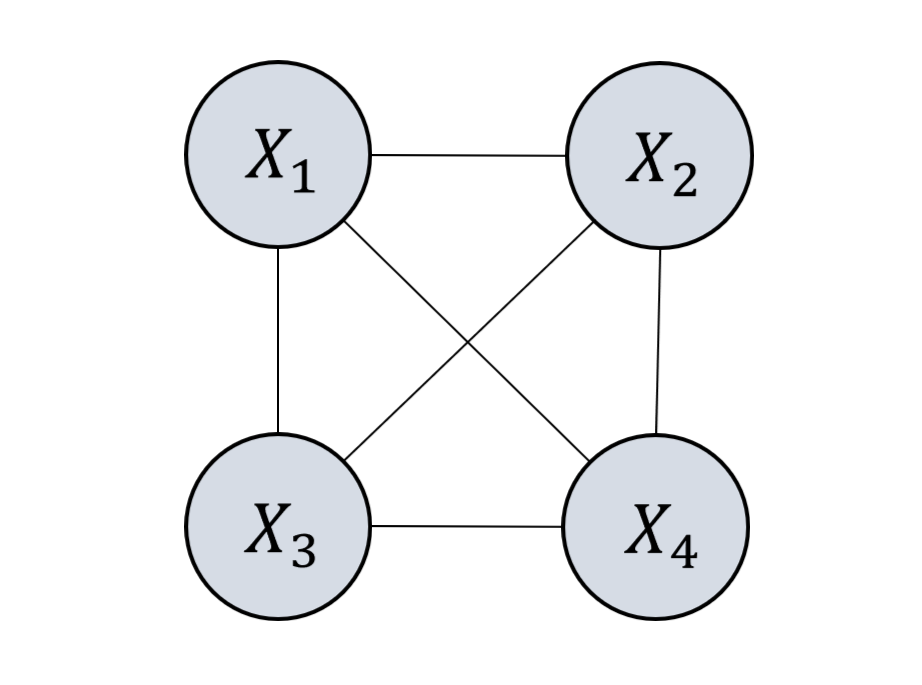}
   \caption{\centering\small   CI skeleton of $\graph_2$.}
 \end{subfigure}
 \hfill
 \begin{subfigure}[t]{0.16\textwidth}
   %\centering
   \includegraphics[width=\textwidth]{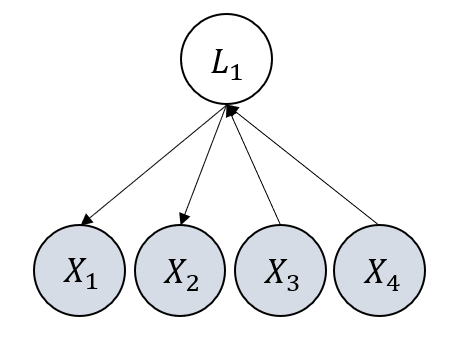}
   \caption{\centering\small $\graph_3$.}
 \end{subfigure}
 \hfill
 \begin{subfigure}[t]{0.16\textwidth}
   %\centering
   \includegraphics[width=\textwidth]{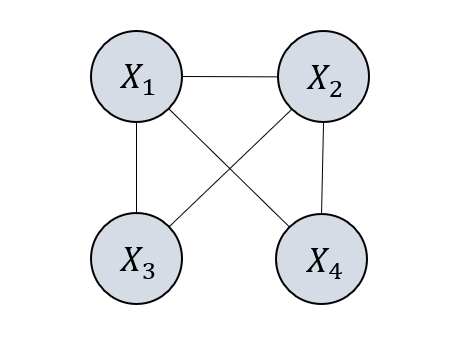}
   \caption{\centering\small    CI skeleton of $\graph_3$.}
 \end{subfigure}
 \caption{\small Examples that illustrate the basic intuition for a latent to be identifiable.}
   \label{fig:intuition of atomic cover}
   % \vspace{-0mm}
 \end{figure}
% \vspace{-0mm}

%\subsubsection{Rank is Graphically More Informative about Latent Variables}
% \vspace{-3mm}
\subsection{Rank: An Informative Graphical Indicator for Latent Variables}
% \vspace{-2mm}
%In the presence of latent variables, 
%CI relations are just particular features of the joint distribution.
%we only have limited 
%the access to conditional independence is limited, which poses challenges for causal discovery. {\color{blue}For instance,} 
%Accordingly,
In the presence of latent variables, CI is not enough: FCI  
and its variants %\citep{spirtes2013causal} 
make full use of CI
%but only recover a maximal ancestral graph 
%which contains much graphical uncertainty.
but only recover a representation which is not informative enough about the latent confounders.
%is usually not inform contains much graphical uncertainty.
%of the underlying DAG,
 %even assuming no direct causal influence among latent variables. 
 Fortunately, leveraging rank information
 can naturally make causal discovery results more informative.
 %offers an 
 %effective solution.

An example highlighting the greater informativeness of rank compared to CI is as follows. 
Consider the graph $\graph_1$ in Fig.~\ref{fig:motiv rank},
where 
$\{\node{X}_1,\node{X}_2\}$ 
 and $\{\node{X}_3,\node{X}_4\}$ are d-separated by $\node{L}_1$,
 but we cannot infer that from a CI test 
 (i.e., whether $\{\node{X}_1,\node{X}_2\} \indep \{\node{X}_3,\node{X}_4\}|\node{L}_1$),
as $\node{L}_1$ is not observed.
In contrast, with rank information, we can 
infer that 
$\text{rank}(\Sigma_{\{\node{X}_1\node{X}_2\},\{\node{\node{X}_3\node{X}_4}\}})=1$, which 
implies $\{\node{X}_1,\node{X}_2\}$ and $\{\node{X}_3,\node{X}_4\}$ are t-separated 
by one latent variable. 
The rationale behind is that the t-sep of $\set{A}$, $\set{B}$ by $(\set{C}_\set{A},
\set{C}_\set{B})$ can be deduced through rank (as in Theorem~\ref{theorem:rank and t}) 
without observing any element in $(\set{C}_\set{A},\set{C}_\set{B})$.

%
%This not only informs us of the existence of latent variables
% but also the number of them (by assuming minimality of the graph detailed in Sec.~\ref{sec:theory}).
%

%With that intuition, bellow we formalize three theorems that characterize what rank is able to imply graphically in the presence of latent variables.
With this intuition in mind, below we present three theorems that characterize the graphical implications of rank constraints, in scenarios where latent variables might exist: Theorem~\ref{theorem: condition for not adjacent} gives conditions for observed variables to be nonadjacent,  %proved in Appx.~\ref{proof: condition for not adjacent} and 
illustrated by Example~\ref{example: nonadjacency};  Theorem~\ref{theorem: global for latent} gives
conditions for the existence of latent variables, illustrated by Example~\ref{example: exist latent}; Theorem~\ref{theorem:measurement as surrogate} implies how to utilize pure children as surrogates for calculating rank, illustrated with Example~\ref{example:measurement as surrogate}. All proofs are in Appendix.

% \vspace{-0mm}
\begin{theorem} [Condition for Nonadjacency]
   Consider a latent linear causal model.
   %Suppose a latent linear causal model with $\graph$. 
   Two observed variables $\node{X_1}$,$\node{X_2}$ $\in \set{X}_\graph$ are not adjacent,
   if there exist two
   sets $\set{A}$,$\set{B}$ $\subseteq \set{X}_\graph \backslash \{\node{X_1},\node{X_2}\}$ that are not necessarily disjoint, such that
   $\text{rank}(\Sigma_{\set{A}\cup\{\node{X_1}\},\set{B}\cup\{\node{X_2}\}})=\text{rank}(\Sigma_{\set{A},\set{B}})$ and $\text{rank}(\Sigma_{\set{A}\cup\{\node{X_1},\node{X_2}\},\set{B}\cup\{\node{X_1},\node{X_2}\}})=\text{rank}(\Sigma_{\set{A},\set{B}})+2$. 
   %(any further constraint on$\set{A},\set{B}$?)
   \label{theorem: condition for not adjacent}
\end{theorem}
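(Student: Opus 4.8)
The plan is to argue by contradiction, using Theorem~\ref{theorem:rank and t} to trade every rank equality for a statement about minimal trek-separating pairs. Write $r=\text{rank}(\Sigma_{\set{A},\set{B}})$. From the first hypothesis and monotonicity of rank under adjoining rows or columns, I would first deduce $\text{rank}(\Sigma_{\set{A}\cup\{\node{X_1}\},\set{B}})=\text{rank}(\Sigma_{\set{A},\set{B}\cup\{\node{X_2}\}})=r$ as well, so a single minimal pair $(\set{C}_\set{A},\set{C}_\set{B})$ with $|\set{C}_\set{A}|+|\set{C}_\set{B}|=r$ simultaneously t-separates all of these set pairs, in particular $\set{A}\cup\{\node{X_1}\}$ from $\set{B}\cup\{\node{X_2}\}$. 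For the second hypothesis I would observe that, since $\node{X_1}$ and $\node{X_2}$ each appear on \emph{both} sides of $\Sigma_{\set{A}\cup\{\node{X_1},\node{X_2}\},\set{B}\cup\{\node{X_1},\node{X_2}\}}$, the trivial self-treks from $\node{X_1}$ to itself and from $\node{X_2}$ to itself can only be blocked by $\node{X_1}$ and $\node{X_2}$ respectively; hence \emph{every} trek-separating pair for the augmented sets must contain both $\node{X_1}$ and $\node{X_2}$.

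The contradiction will come from a gap estimate. A priori, passing from $\Sigma_{\set{A}\cup\{\node{X_1}\},\set{B}\cup\{\node{X_2}\}}$ to $\Sigma_{\set{A}\cup\{\node{X_1},\node{X_2}\},\set{B}\cup\{\node{X_1},\node{X_2}\}}$ adjoins only the single row $\node{X_2}$ and the single column $\node{X_1}$, so the rank can grow by at most $2$; the two hypotheses force this growth to be exactly $2$, namely from $r$ to $r+2$. The crux is therefore the lemma: if $\node{X_1}$ and $\node{X_2}$ are adjacent, the growth is at most $1$, which is incompatible with the hypotheses. To prove this I would assume the edge $\node{X_1}\to\node{X_2}$ (the orientation $\node{X_2}\to\node{X_1}$ being handled identically with the two endpoints interchanged). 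This edge is itself a trek from $\node{X_1}$ to $\node{X_2}$, and so must already be cut by $(\set{C}_\set{A},\set{C}_\set{B})$; inspecting how such a length-one trek can be blocked forces at least one of $\node{X_1},\node{X_2}$ to lie in $\set{C}_\set{A}\cup\set{C}_\set{B}$.

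Given that, I would form a trek-separating pair for the augmented sets by taking $(\set{C}_\set{A},\set{C}_\set{B})$ and adjoining only the one of $\{\node{X_1},\node{X_2}\}$ that is still missing; this costs just one extra vertex and yields a separator of size $r+1$, whence $\text{rank}(\Sigma_{\set{A}\cup\{\node{X_1},\node{X_2}\},\set{B}\cup\{\node{X_1},\node{X_2}\}})\le r+1$, contradicting the hypothesized value $r+2$. The step I expect to be the main obstacle is verifying that this augmented pair genuinely blocks \emph{every} trek between the enlarged sets: because $\node{X_1}$ and $\node{X_2}$ now sit on both sides, there are ``wrong-side'' treks (for instance from $\node{X_2}$, now a left endpoint, to a vertex of $\set{B}$) that the original pair was never required to cut. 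I would dispatch these by trek surgery along the edge: any such trek whose left path passes through $\node{X_1}$ can be truncated at $\node{X_1}$ into a trek from $\node{X_1}$ to $\set{B}$, which $(\set{C}_\set{A},\set{C}_\set{B})$ is guaranteed to block, and the block lifts back to the original trek; the residual treks that avoid $\node{X_1}$ are controlled using the minimality — equivalently the max-flow/min-cut reading — of $(\set{C}_\set{A},\set{C}_\set{B})$ supplied by Theorem~\ref{theorem:rank and t}. Carrying out this trek-blocking case analysis cleanly is the hard part; once it is in place the contradiction, and hence the nonadjacency of $\node{X_1}$ and $\node{X_2}$, follows at once.
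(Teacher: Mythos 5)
Your overall strategy is the min-cut dual of the paper's argument: the paper works on the flow side (it extracts a system of $t=\text{rank}(\Sigma_{\set{A},\set{B}})$ sided-disjoint treks from $\set{A}$ to $\set{B}$ avoiding $\node{X_1},\node{X_2}$, adjoins the edge itself as one more sided-disjoint trek, and thereby contradicts the \emph{first} rank equality by forcing $\text{rank}(\Sigma_{\set{A}\cup\{\node{X_1}\},\set{B}\cup\{\node{X_2}\}})\ge t+1$), whereas you work on the separator side and try to contradict the \emph{second} equality. The dual route is legitimate in principle, but the step you flag as the hard part hides a genuine gap, and it is not just a matter of carrying out the case analysis cleanly.

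The obstruction is the sidedness of t-separation: a vertex placed in $\set{C}_{\set{A}}$ blocks only treks containing it on the \emph{left} path, and symmetrically for $\set{C}_{\set{B}}$. Blocking the edge trek forces one of $\node{X_1},\node{X_2}$ into $\set{C}_{\set{A}}\cup\set{C}_{\set{B}}$, but not on a side of your choosing, and the side matters. In the enlarged problem $\node{X_1}$ newly occurs as a possible \emph{right} endpoint and $\node{X_2}$ as a possible \emph{left} endpoint; a trek from $\set{A}$ to $\node{X_1}$ is automatically killed only if $\node{X_1}\in\set{C}_{\set{B}}$, and a trek from $\node{X_2}$ to $\set{B}$ only if $\node{X_2}\in\set{C}_{\set{A}}$. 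Your surgery along the edge handles precisely those treks whose relevant path passes through $\node{X_1}$, but a trek that reaches $\node{X_2}$ through a different parent, or a trek from $a\in\set{A}$ to $\node{X_1}$ in the sub-case where the edge was blocked via $\node{X_2}\in\set{C}_{\set{B}}$, is simply not constrained by the separation of $\set{A}\cup\{\node{X_1}\}$ from $\set{B}\cup\{\node{X_2}\}$. In the worst configuration ($\node{X_1}\in\set{C}_{\set{A}}$ and $\node{X_2}\in\set{C}_{\set{B}}$ already) your recipe adjoins nothing, yet killing the new trek families may apparently require also placing $\node{X_1}$ in $\set{C}_{\set{B}}$ and $\node{X_2}$ in $\set{C}_{\set{A}}$, costing two extra vertices and landing at $r+2$ — no contradiction. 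Excluding that configuration needs the first hypothesis in a much stronger way than you invoke it (essentially re-deriving the flow-side statement), so either specify and justify the side on which each of $\node{X_1},\node{X_2}$ is adjoined and verify blocking of every new trek family, or switch to the max-flow formulation as the paper does.
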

% \vspace{-0mm}

\begin{remark} 
   Theorem~\ref{theorem: condition for not adjacent} presents a sufficient condition for determining nonadjacency between two observed variables.  In the absence of latent variables, this condition transforms into a necessary and sufficient one. Note that $\set{A}$ and $\set{B}$ may have overlapping variables. 
   SGS and PC \citep{spirtes2000causation} also introduced a necessary and sufficient condition for determining two variables not being adjacent when
    latent variables do not exist:  there exist a set of observed variables $\set{C}\subseteq \set{X}_\graph$, $\node{X_1},\node{X_2} \notin \set{C}$, such that $\node{X_1} \indep \node{X_2} | \set{C}$.
   Interestingly, this condition can be expressed in the form of    Theorem~\ref{theorem: condition for not adjacent}, with $\set{C} = \set{A} = \set{B}$; 
   thus Theorem~\ref{theorem: condition for not adjacent} 
   generalizes PC's condition
   %the condition by PC 
   to scenarios where 
   latent variables may be present (this claim is detailed in Appx.~\ref{proof: condition for not adjacent}).
   %
   %In addition, we would like to highlight that even in the presence of latent variables, with the graphical constraints specified in Condition~\ref{cond:basic}, 
   %the condition in Theorem~\ref{theorem: condition for not adjacent} 
   %becomes necessary and sufficient for determining adjacency between two observed variables, as shown in Theorem \Biwei{add ref}.
\end{remark}

%\begin{theorem} [Condition for introducing latent variables]
%Given $\set{X}_\graph$ from a latent linear model with graph $\graph$ that 
%might contain latent variables.
%%
%Using conditional independence relations with assuming that no latent variable exists, we can arrive at 
%a Markov equivalence class $\setset{C}$.
%
%If there exists two sets of observed variables $\set{A}$ and $\set{B}$,
% such that 
% $\text{rank}(\Sigma_{\set{A},{\set{B}}}) = 
%  \min \{|\set{C}_{\set{A}}|+
%  |\set{C}_{\set{B}}|:(\set{C}_{\set{A}},\set{C}_{\set{B}})~
%  \text{t-separates}~\set{A}~\text{from}~\set{B}~\text{in}~\graph\}
%  \neq
%  \min \{|\set{C}_{\set{A}}|+
%  |\set{C}_{\set{B}}|:(\set{C}_{\set{A}},\set{C}_{\set{B}})~
%  \text{t-separates}~\set{A}~\text{from}~\set{B}~\text{in}~\setset{C}\}$,
%  then there must exist at least one latent variable.
%\label{theorem: global for latent}
%\end{theorem}

\begin{theorem} [Condition for Existence of Latent Variable]
   Suppose a latent linear causal model with graph $\graph$ and observed variables $\set{X}_\graph$.
   If there exist three disjoint sets of variables $\set{A}$,$\set{B},\set{C}
   \subseteq \set{X}_\graph$,
   such that
    (i)  $|\set{B}|\geq|\set{A}|\geq 2$,
     (ii) $\forall~\text{distinct}~ 
   \node{A_1},\node{A_2} \in\set{A}$,
   $\node{A_1},\node{A_2}$  are adjacent in the CI skeleton over $\set{X}_\graph$ 
 with CI skeleton defined in Appx.~\ref{appendix: ci skeleton}),
   %cannot be d-separated by any other observed variables (or equivalently, 
 %$\node{A_1}$ and $\node{A_2}$  are adjacent in the CI skeleton over $\set{X}_\graph$ 
 %with CI skeleton defined in Appx.~\ref{appendix: ci skeleton}),
   (iii) $\forall \node{A}\in\set{A},\node{B}\in\set{B}$, 
   $\node{A},\node{B}$    are adjacent in the CI skeleton over $\set{X}_\graph$,%cannot be d-separated by any other observed variables (or equivalently, 
   %$\node{A}$ and $\node{B}$ are adjacent in the CI skeleton over $\set{X}_\graph$),
   (iv) $\set{C}\subseteq\{\node{X}: \exists \node{Y}\in\set{A}\cup\set{B}~\text{s.t.}~\node{X}, \node{Y}~\text{are adjacent in the CI skeleton}\}$ (i.e., all elements in $\set{C}$ are neighbours of an element in $\set{A}\cup\set{B}$ in the CI skeleton),
   (v) $\text{rank}(\Sigma_{\set{A}\cup\set{C},\set{B}\cup\set{C}}) < |\set{A}|+|\set{C}|$,
   then there must exist at least one latent variable in one of the treks between $\set{A},\set{B}$.
   \label{theorem: global for latent}
   \end{theorem}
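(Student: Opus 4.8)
The plan is to argue by contradiction: assume that no latent variable lies on any trek between $\set{A}$ and $\set{B}$, and derive that $\text{rank}(\Sigma_{\set{A}\cup\set{C},\set{B}\cup\set{C}})\geq|\set{A}|+|\set{C}|$, which contradicts hypothesis (v). The whole argument rests on the two dual readings of Theorem~\ref{theorem:rank and t}: a rank equals the minimum size of a t-separator, and, by the same trek-separation theory of \citet{sullivant2010trek}, it equals the maximum size of a system of treks from $\set{A}\cup\set{C}$ to $\set{B}\cup\set{C}$ that has no sided intersection. I would use the second reading, since a lower bound on the rank can be certified simply by exhibiting enough pairwise non-intersecting treks.

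First I would lay down the skeleton of the trek system. For each $\node{C}_k\in\set{C}$ take the trivial trek $(\node{C}_k,\node{C}_k)$, whose two paths are the single vertex $\node{C}_k$; since $\set{A}$, $\set{B}$, $\set{C}$ are disjoint, these $|\set{C}|$ treks are vertex-disjoint. It then remains to produce $|\set{A}|$ further treks, one out of each $\node{A}\in\set{A}$ into a distinct target in $\set{B}$; condition (i) guarantees $|\set{B}|\geq|\set{A}|$, so an injection $\sigma:\set{A}\hookrightarrow\set{B}$ exists to receive them.

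The key step is a structural lemma: under the standing assumption that no latent lies on any $\set{A}$--$\set{B}$ trek, every pair $\node{A}\in\set{A}$, $\node{B}\in\set{B}$ that is adjacent in the CI skeleton (condition (iii)) must in fact be joined by a direct edge in $\graph$. Indeed, CI-adjacency forces $\text{Cov}(\node{A},\node{B})\neq 0$, hence by Theorem~\ref{theorem:rank and t} a trek between them exists; were there no direct edge, every such trek would be a latent-free mediator or confounder path, which can be blocked by conditioning on an observed ancestral set, while every remaining $\node{A}$--$\node{B}$ path carries a collider and stays blocked, yielding an observed d-separating set and contradicting CI-adjacency (here I translate between d-separation and t-separation via Theorem~\ref{theorem:tsep and dsep}, and invoke conditions (ii) and (iv) to keep the candidate separating set inside the observed neighbourhood of $\set{A}\cup\set{B}$). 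Given these direct edges, the single-edge treks $(\node{A},\sigma(\node{A}))$ are pairwise vertex-disjoint and disjoint from the $\set{C}$-diagonal treks, so the combined system of size $|\set{A}|+|\set{C}|$ has no sided intersection; therefore $\text{rank}(\Sigma_{\set{A}\cup\set{C},\set{B}\cup\set{C}})\geq|\set{A}|+|\set{C}|$, contradicting (v), and a latent must sit on some $\set{A}$--$\set{B}$ trek.

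I expect the structural lemma to be the main obstacle. The delicate point is that CI-adjacency is a statement about non-separability by \emph{observed} conditioning sets, whereas the exclusion hypothesis is phrased in terms of treks; bridging the two requires ruling out that a latent elsewhere, for instance a latent collider between $\node{A}$ and $\node{B}$ (which is permitted, as it lies on no trek), is forced open by the very conditioning used to block the observed treks. Showing that a single observed set simultaneously blocks all treks between $\node{A}$ and $\node{B}$ without opening any collider path is where the care lies, and where conditions (ii) and (iv) enter to bound the neighbourhoods that this separating set must draw from.
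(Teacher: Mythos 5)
Your counting argument (the $|\set{C}|$ trivial treks $(\node{C}_k,\node{C}_k)$ plus an injection of single-edge treks from $\set{A}$ into $\set{B}$, combined via the max-flow--min-cut reading of Theorem~\ref{theorem:rank and t}) is sound and is essentially the same computation the paper uses to reach its full-rank contradiction. The genuine gap is the structural lemma that you yourself flag as the main obstacle: it is false. Under the hypothesis that no latent lies on any trek between $\node{A}$ and $\node{B}$, CI-adjacency does \emph{not} force a direct edge. Counterexample: let $\node{A},\node{B},\node{W}$ be observed and $\node{L}$ latent, with edges $\node{L}\rightarrow\node{A}$, $\node{L}\rightarrow\node{W}$, $\node{W}\rightarrow\node{A}$, $\node{B}\rightarrow\node{W}$. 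The only trek between $\node{A}$ and $\node{B}$ is the directed path $\node{B}\rightarrow\node{W}\rightarrow\node{A}$, which contains no latent ($\node{L}$ has no directed path to $\node{B}$ and shares no ancestor with $\node{B}$, so it lies on no trek between them). Yet $\node{A}$ and $\node{B}$ are not adjacent and are CI-adjacent: conditioning on nothing leaves $\node{B}\rightarrow\node{W}\rightarrow\node{A}$ open, while conditioning on $\node{W}$ --- the only way to block that trek --- activates the collider at $\node{W}$ on the path $\node{A}\leftarrow\node{L}\rightarrow\node{W}\leftarrow\node{B}$, whose remaining non-collider $\node{L}$ is latent and cannot be conditioned on. This is precisely the failure mode you worried about in your closing paragraph: a single vertex serves simultaneously as a mediator on the trek and as a collider on a latent path, and conditions (ii) and (iv) do not rule this out, since the configuration embeds readily into graphs satisfying them. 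Consequently the $|\set{A}|$ pairwise disjoint single-edge treks you need are not available, and the contradiction with (v) does not follow.

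The paper sidesteps this entirely by negating a much stronger statement: it assumes no latent variable exists \emph{anywhere} in $\graph$. Then the CI skeleton coincides with the true skeleton (Lemma~\ref{lemma: rank and ci when no latent}, i.e.\ the standard PC argument), so conditions (ii) and (iii) immediately yield genuine edges in $\graph$, and the same trek count gives $\text{rank}(\Sigma_{\set{A}\cup\set{C},\set{B}\cup\set{C}})=|\set{A}|+|\set{C}|$, contradicting (v). The price is that this only delivers ``some latent variable exists somewhere in $\graph$,'' not the localized conclusion ``on a trek between $\set{A}$ and $\set{B}$'' that the theorem statement (and your proof) aims for. To obtain the localized version along your lines, you would need a substitute for the false lemma --- for instance, a Menger-type argument extracting $|\set{A}|$ pairwise non-sided-intersecting, $\set{C}$-avoiding treks from $\set{A}$ to $\set{B}$ using only latent-free treks --- and that is not supplied by the sketch as written.
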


% (proof in Appx.~\ref{proof:theorem: global for latent})

\begin{remark}
Theorem~\ref{theorem: global for latent} provides a sufficient condition for 
determining the existence of latent variables. 
 The underlying intuition is that, in the absence of latent variables,
  rank information should align 
 with what CI skeleton
  provides;
 if not, then there must exist at least one 
 latent variable.
 Furthermore, we will show in Theorem~\ref{theorem:necessary and sufficient under cond} that, with further graphical constraints 
 %specified in Condition~\ref{cond:basic},
 the condition in Theorem~\ref{theorem: global for latent} becomes both necessary and sufficient.
 %in determining the existence of latent variables.
\end{remark}
% \vspace{-2mm}

Moreover, it can be shown that observed children, or even descendants of latent variables can be used as surrogates to calculate  rank
%involving latent variables, 
as stated in Theorem \ref{theorem:measurement as surrogate} with the definition of \textit{pure children} below.
%and further show that for any subset of latent variables, we can use their measured children as surrogates to estimate the rank without directly estimating the latent variables, as indicated in Theorem {theorem:measurement as surrogate}. 

\begin{definition} [Pure Children]
  %A variable set 
  $\set{Y}$ are pure children of variables $\set{X}$ in graph $\graph$, iff $\parents(\set{Y}) = \cup_{\node{Y_i} \in \set{Y}} \parents(\node{Y_i}) = \set{X}$  and $\set{X} \cap \set{Y}=\emptyset$. We denote the pure children of $\set{X}$ in $\graph$ by $\purechildren(\set{X})$. %\Biwei{we can give a more general definition of pure children, allowing triangle?}
\label{definition:pch}
% \vspace{-4mm}
\end{definition}

\begin{theorem} [Pure Children as Surrogate for Rank Estimation]
Let $\set{C} \subseteq \purechildren(\set{A})$ be a subset of pure children of $\set{A}$, and $\set{B}$ be a set of variables such that for all $\mathsf{B} \in \set{B}$, $\node{B} \notin \descendant(\set{C})$. We have $\text{rank}(\Sigma_{\set{A},{\set{B}}}) \geq \text{rank}(\Sigma_{\set{C},{\set{B}}})$. Moreover, if  $\text{rank}(\Sigma_{\set{A},{\set{C}}})=|\set{A}|$, then $\text{rank}(\Sigma_{\set{A},{\set{B}}})=\text{rank}(\Sigma_{\set{C},{\set{B}}})$.
\label{theorem:measurement as surrogate}
\end{theorem}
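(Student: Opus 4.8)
The plan is to convert the graphical purity of $\set{C}$ into a clean linear-algebraic relation among the three cross-covariances involved. Because every $\node{C}_i \in \set{C} \subseteq \purechildren(\set{A})$ satisfies $\parents(\node{C}_i) \subseteq \set{A}$, its structural equation~\eqref{eq:lem} expresses $\node{C}_i$ as a linear combination of the variables in $\set{A}$ plus its own noise $\varepsilon_{\node{C}_i}$. Stacking these equations, I would write $\set{C} = M \set{A} + \varepsilon_{\set{C}}$, where $M$ is the $|\set{C}| \times |\set{A}|$ matrix whose $(i,j)$ entry is the coefficient $a_{ij}$ when $\node{A}_j \in \parents(\node{C}_i)$ and $0$ otherwise, and $\varepsilon_{\set{C}}$ collects the noise terms of the pure children.

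Next I would record two covariance identities. Since the noise $\varepsilon_{\set{C}}$ influences only $\set{C}$ and its descendants, and since $\set{A}$ lies strictly upstream of $\set{C}$ while $\set{B}$ avoids $\descendant(\set{C})$ by hypothesis, both $\set{A}$ and $\set{B}$ are uncorrelated with $\varepsilon_{\set{C}}$, i.e.\ $\Cov(\set{A}, \varepsilon_{\set{C}}) = 0$ and $\Cov(\varepsilon_{\set{C}}, \set{B}) = 0$. Taking cross-covariances in $\set{C} = M\set{A} + \varepsilon_{\set{C}}$ then gives $\Sigma_{\set{C},\set{B}} = M\,\Sigma_{\set{A},\set{B}}$ and, symmetrically, $\Sigma_{\set{A},\set{C}} = \Sigma_{\set{A},\set{A}}\,M^\top$. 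The first identity already yields the inequality, since $\text{rank}(\Sigma_{\set{C},\set{B}}) = \text{rank}(M\,\Sigma_{\set{A},\set{B}}) \le \text{rank}(\Sigma_{\set{A},\set{B}})$.

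For the equality I would invoke the rank hypothesis $\text{rank}(\Sigma_{\set{A},\set{C}}) = |\set{A}|$. Combined with $\Sigma_{\set{A},\set{C}} = \Sigma_{\set{A},\set{A}}\,M^\top$, and using that a product cannot have rank exceeding either factor, this forces the $|\set{A}| \times |\set{A}|$ factor $\Sigma_{\set{A},\set{A}}$ to be invertible and $M$ to have full column rank $|\set{A}|$. A full-column-rank $M$ admits a left inverse, so left-multiplication by $M$ preserves rank; hence $\text{rank}(\Sigma_{\set{A},\set{B}}) = \text{rank}(M\,\Sigma_{\set{A},\set{B}}) = \text{rank}(\Sigma_{\set{C},\set{B}})$, completing the argument.

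The main obstacle, and the step I would justify most carefully, is the uncorrelatedness claim $\Cov(\set{A}, \varepsilon_{\set{C}}) = \Cov(\varepsilon_{\set{C}}, \set{B}) = 0$: this is precisely where the two structural hypotheses enter, namely that the $\node{C}_i$ are \emph{pure} children of $\set{A}$ (so nothing outside $\set{A}$ feeds into them) and that $\set{B} \cap \descendant(\set{C}) = \emptyset$. An equivalent graph-native route is to argue via treks and Theorem~\ref{theorem:rank and t}: purity forces every trek from $\set{C}$ to $\set{B}$ to cross $\set{A}$ on its $\set{C}$-side, and the non-descendant condition excludes the degenerate trek sourced at $\set{C}$ itself, so $\set{A}$ t-separates $\set{C}$ from $\set{B}$; this reproduces $\Sigma_{\set{C},\set{B}} = M\,\Sigma_{\set{A},\set{B}}$ at the level of rank. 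I would also double-check the convention for $\descendant$ (whether it includes the node itself) so that $\node{B} \neq \node{C}_i$ is covered.
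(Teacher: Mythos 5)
Your proposal is correct, but it takes a genuinely different route from the paper. The paper argues entirely at the level of treks: by Theorem~\ref{theorem:rank and t} and the max-flow-min-cut Lemma~\ref{lemma: maxflowmincut}, each rank equals a maximum number of non-overlapping treks, and since every trek from $\set{C}$ to $\set{B}$ must pass through $\set{A}$ on its $\set{C}$-side (purity plus the non-descendant condition), the trek count from $\set{C}$ to $\set{B}$ is dominated by that from $\set{A}$ to $\set{B}$, with equality when $\text{rank}(\Sigma_{\set{A},\set{C}})=|\set{A}|$. You instead work directly with the structural equations, writing $\set{C}=M\set{A}+\varepsilon_{\set{C}}$ and obtaining $\Sigma_{\set{C},\set{B}}=M\,\Sigma_{\set{A},\set{B}}$ and $\Sigma_{\set{A},\set{C}}=\Sigma_{\set{A},\set{A}}M^{\top}$, so that the inequality is the rank-of-a-product bound and the equality follows from $M$ having full column rank (hence a left inverse). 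Your version is more elementary and self-contained, and it makes transparent what the hypothesis $\text{rank}(\Sigma_{\set{A},\set{C}})=|\set{A}|$ is really doing (forcing left-invertibility of the mixing matrix); it does, however, lean on the implicit assumption that the exogenous noises are jointly independent so that $\Cov(\set{A},\varepsilon_{\set{C}})=\Cov(\varepsilon_{\set{C}},\set{B})=0$, whereas the paper's argument stays inside the trek calculus it has already built and extends verbatim to the set-of-covers generalization (Lemma~\ref{lemma:measurement as surrogate for atomic covers}) used later in the identifiability proof. The caveat you raise about whether $\set{B}$ may intersect $\set{C}$ is well taken: with the paper's convention that $\descendant$ excludes the node itself, the stated hypothesis does not literally rule out $\node{B}\in\set{C}$, in which case $\Cov(\varepsilon_{\node{C}_i},\node{B})\neq 0$ and the claim fails; the paper's own trek argument silently assumes $\set{B}\cap\set{C}=\emptyset$ as well, so this is a shared imprecision in the statement rather than a defect of your proof.
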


\begin{remark}
Theorem \ref{theorem:measurement as surrogate}  
informs us that under certain conditions, we can estimate the rank of 
covariance 
%of $\mathbf{A}$ and $\mathbf{B}$ 
involving latent variables
by using their pure children as surrogates. 
%Moreover, 
Even when the pure children are not observed
%latent variables lack observed children, 
one can recursively examine the children's children until reaching observed descendants (defined in Appx.~\ref{appendix: covariance}). 
This enables us to deduce graphical information associated with latent variables
 through the use of observed ones as surrogates. 
 %An example of Theorem~\ref{theorem:measurement as surrogate} can be found in Appendix Example~\ref{example:measurement as surrogate}.
\end{remark}

 % \vspace{-3mm}
\section{Discovering Latent Structure through Rank Constraints}
\label{sec:allalgorithm}
% \vspace{-3mm}
In this section, we begin with the concept of \textit{atomic cover} and explore
 its rank deficiency, %which forms the basis for achieving the identifiability of latent graphs. 
 %We 
 and then develop an efficient algorithm based on rank-deficiency for latent causal discovery, as in Alg.~\ref{alg:all}.
%then we propose an efficient search procedure that utilizes rank-deficiency constraints for latent variable causal discovery, as summarized in Alg.~\ref{alg:all}.

% \vspace{-2mm}
\subsection{Atomic Cover and Rank Deficiency}
\label{sec:atomic cover}
% \vspace{-2mm}
%In this section, we aim to establish the minimal identifiable latent structure in a graph, which will later be employed as the fundamental building block  to construct the whole graph's identifiability.

Below, we introduce \textit{atomic covers} and their associated rank deficiency properties,
which allows us to define the minimal identifiable substructure of a graph.
%and provides us with valuable insights into the fundamental building blocks of the graph's identifiability. 
We start with an example in Figure \ref{fig:intuition of atomic cover} that motivates the conditions for a latent variable to be identifiable.
%
%According Theorem~\ref{theorem: global for latent},
%for the condition (i) to be satisfied, at least we need four observed variables in a graph.
%
%Suppose there are four observed variables and one latent variable in the graph, we have Fig.~\ref{fig:intuition of atomic cover} (a), (c), (e) for a comparison as follows.

\begin{example}
\label{example: exist latent}
% \vspace{-0mm}
Consider $\graph_1$  in Fig.~\ref{fig:intuition of atomic cover} (a).
% whose CI skeleton is given in Fig.~\ref{fig:intuition of atomic cover} (b).
%
It can be shown that the latent variable $\node{L_1}$ in $\graph_1$
is not identifiable.
We can easily find a graph $\graph_1'$ with no latent variable, e.g., as in Fig.~\ref{fig:more intuition of atomic cover} (b), such that 
$\graph_1'$ shares the same 
 skeleton as Fig.~\ref{fig:intuition of atomic cover} (a),
 but all the observational rank information entailed by $\graph_1$
is the same as by $\graph_1'$ - they are 
indistinguishable.
However, if $\node{X_4}$ becomes the children of $\node{L_1}$,
as in $\graph_2$ given in  Fig.~\ref{fig:intuition of atomic cover} (c),
the whole structure becomes identifiable.  
Specifically, the conditions in Theorem~\ref{theorem: global for latent} holds when we take
 $\set{A}=\{\node{X_1},\node{X_2}\}$,
$\set{B}=\{\node{X_3},\node{X_4}\}$, and $\set{C}=\emptyset$, which informs the existence of latent variables.
The same conditions also hold for Fig.~\ref{fig:intuition of atomic cover} (e)
and thus $\node{L_1}$ in $\graph_3$ is also identifiable,
though  $\node{L_1}$ has only two children together with another two neighbors.
% \vspace{-0.5mm}
\end{example}

The intuition is that, for a latent variable to be identifiable, 
it should have enough children and enough neighbors.
We next formalize this intuition into the concept of \textit{atomic cover} 
as the minimal identifiable unit in a graph. 
%, which provides 
%valuable insights into the fundamental building blocks of 
%the whole graph's identifiability. 
%
%In addition, to achieve a unified view, each observed variable is also considered an atomic cover.
%uncover the causal structure of a latent causal model. 
%The introduction of atomic covers  allows us to define the minimal identifiable substructure 
%of a graph $\graph$, regardless of the existence of latent variables.
%
% A cover is just a set of variables, but to be an atomic cover it requires more.
%  The definition of atomic covers relies on defining effective cardinality and graphical relations between covers first. Before formally introducing the definition of atomic covers, we first give the definitions of \textit{effective cardinality} of a set of covers $\setset{V}$, as $||\setset{V}||=|(\cup_{\set{V} \in \setset{V}} \set{V})|$. For graphical relations such as $\parents(\setset{V})$, they naturally follow the relations between variables (examples in Appx.~\ref{appendix: relations between covers}).
The formal definition of an atomic cover is given in Definition~\ref{definition:ac}, where \textit{effective cardinality} of a set of covers $\setset{V}$ is defined
as $||\setset{V}||=|(\cup_{\set{V} \in \setset{V}} \set{V})|$. 
  %\Biwei{is this the right definition of effective cardinality? what if in the case of overlapping covers? check the old rank paper.}
%In addition, for graphical relations between  covers,
% such as $\parents(\set{V})$, they naturally follow the relations between variables (see examples in Appx.~\ref{appendix: relations between covers}).

   %Next, we introduce the concept of an atomic cover, which allows us to define the minimal identifiable substructure of a graph $\graph$, regardless of the existence of latent variables. The atomic cover provides us with valuable insights into the fundamental building blocks of the graph's identifiability. In addition, to achieve a unified view, each observed variable is also considered an atomic cover.
    %
  %Following this perspective, we can think of a graph consisting only of observed variables 
  %as having a set of covers:
  % $\setset{X}_\graph=\sep({\set{X}_\graph})=\{\{\node{X_1}\},...,\{\node{X_n}\}\}$.

\begin{definition} [Atomic Cover]
  Let $\set{V}$ be a set of variables in $\graph$ with $|\set{V}|=k$, where $t$ of the $k$ variables are observed, and the remaining $k-t$ are latent. 
  $\set{V}$ is an atomic cover if $\set{V}$ contains a single observed variable (i.e.,  $k=t=1$ ), or 
    if the following conditions hold:
  \begin{itemize}[leftmargin=20pt,itemsep=-3pt,topsep=-3pt]
      \item [(i)] There exists 
      a set of atomic covers $\setset{C}$, with $||\setset{C}||\geq k+1-t$, such that
      $\cup_{\set{C} \in \setset{C}} \set{C}\subseteq \purechildren(\set{V})$
      and 
      $\forall \set{C_1}, \set{C_2} \in \setset{C}, \set{C_1}\cap\set{C_2}=\emptyset$.
      %\Biwei{should add "as atomic covers"}, 

     \item [(ii)] There exists a  set  of covers $\setset{N}$, with $||\setset{N}||\geq k+1-t$,
     %\Biwei{should add "as atomic covers"}, 
     such that every element in $\cup_{\set{N} \in \setset{N}} \set{N}$ is a neighbour of $\set{V}$  and 
     $ (\cup_{\set{N} \in \setset{N}} \set{N}) \cap (\cup_{\set{C} \in \setset{C}} \set{C})=\emptyset$.

\item [(iii)] There does not exist a partition of $\set{V}= \set{V_1} \cup \set{V_2}$ such that both $\set{V_1}$ and  $\set{V_2}$ 
are atomic covers.
  \end{itemize}
  % \vspace{-0.5mm}
\label{definition:ac}
\end{definition}
 
In the definition above, each observed variable is treated as an atomic cover, e.g., $\{\node{X_1}\}$ in Figure \ref{fig:example1}. 
We define the minimal identifiable unit as an atomic cover with the rationale that, 
when two or more latent variables share exactly the same set of neighbors (e.g., $\node{L_1}$ and $\node{L_2}$ in Fig.~\ref{fig: example for treks}),
they can never be distinguished from observational information. Hence, it is more convenient and unified to consider them together in an atomic cover. Examples of atomic covers can be found in Appx.~\ref{appendix: example atomic cover}.   %\Biwei{needs to find a corresponding graph}

%
%Following this perspective,
%
%If an atomic cover contains latent variables, 
%there are additional conditions related to the number of pure 
%children and neighbors associated with the latent variables.

Based on the definition of 
atomic covers,
 we define a \textit{cluster} 
 as the set of pure children of an atomic cover,
 and refer \textit{k-cluster} to
 a cluster whose parents' cardinality is k. 
We further define an operator 
%$\sep()$ that takes a set of variables as input, with 
$\sep({\set{X}})=\cup_{\node{X} \in \set{X}} \{\{\node{X}\}\}$.  %E.g., $\sep({\set{X}_\graph})=\{\{\node{X_1}\},...,\{\node{X_n}\}\}$. 
We next show that every atomic cover possesses a  useful rank deficiency property, 
which is formally stated in the following theorem 
(proof is given in Appx.~\ref{proof: rank property of atomic cover}).

\begin{theorem} [Rank Deficiency of an Atomic Cover]
  Let $\set{V}=\set{X} \cup \set{L}$  be an atomic cover %with $k$ variables, 
where $|\set{X}|=t$  variables are observed
  and  $|\set{L}|=k-t$ are latent.
  Let $\setset{X}=\sep(\set{X})$, $\setset{X}_\graph=\sep(\set{X}_\graph)$,
  and a set of atomic covers $\setset{C}$, satisfying (i) in Definition~\ref{definition:ac}.
  Then 
  $\text{rank}(\Sigma_{\setset{C}\cup \setset{X}, \setset{X}\cup\setset{X}_\graph \backslash \setset{C}\backslash\meassureddes(\setset{C})})=k$ and $k< \min(||\setset{C}\cup \setset{X}||,||\setset{X}\cup\setset{X}_\graph \backslash \setset{C}\backslash\meassureddes(\setset{C})||)$ 
  %if both $||\setset{C}\cup \setset{X}||$ and $||\setset{X}\cup\setset{X}_\graph \backslash \setset{C}\backslash\meassureddes(\setset{C})||>k$
  ($\meassureddes$ denotes measured descendants).
  %, whose definition can be found in Appx.~\ref{appendix: covariance}).
\label{theorem: rank property of atomic cover}
\end{theorem}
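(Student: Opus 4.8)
The plan is to reduce the rank computation to a combinatorial t-separation problem via Theorem~\ref{theorem:rank and t}, which equates $\text{rank}(\Sigma_{\set{A},\set{B}})$ with the minimum size $|\set{C}_\set{A}|+|\set{C}_\set{B}|$ of a t-separator of $\set{A}$ from $\set{B}$. Write $\set{P}=(\cup_{\set{C}\in\setset{C}}\set{C})\cup\set{X}$ for the row set and $\set{R}=\set{X}\cup(\set{X}_\graph\backslash(\cup_{\set{C}\in\setset{C}}\set{C})\backslash\meassureddes(\setset{C}))$ for the column set, so the target is $\Sigma_{\set{P},\set{R}}$. It then suffices to prove (a) there is a t-separator of size exactly $k$, (b) no t-separator has size $<k$, and (c) the two strict cardinality bounds; since (c) is a direct count, the mathematical content lives in (a) and (b).

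For the upper bound (a) I would exhibit $(\set{V},\emptyset)$ as a t-separator, which has size $|\set{V}|=k$. The key lemma to establish is that \emph{every} trek from a vertex of $\set{P}$ to a vertex of $\set{R}$ has its $\set{P}$-side directed path meeting $\set{V}$. Every row vertex is either an observed cover member (already in $\set{V}$) or a pure child in $\purechildren(\set{V})$; by Definition~\ref{definition:pch} all parents of such a pure child lie in $\set{V}$, so any nontrivial path terminating at it passes through $\set{V}$. The only escape is a degenerate trek whose source is a pure child $\mathsf{c}$ that is itself an ancestor of the column endpoint. But the column set was deliberately built by deleting $\cup_{\set{C}\in\setset{C}}\set{C}$ and $\meassureddes(\setset{C})$, so no observed column vertex is a proper descendant of $\mathsf{c}$; ruling out the remaining case, namely that a cover member in $\set{X}$ is a descendant of its own pure child, is the delicate step, which I would discharge using acyclicity together with the minimality clause (iii) of Definition~\ref{definition:ac}.

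For the lower bound (b) I would invoke the max-flow/min-cut (Menger-type) duality underlying Theorem~\ref{theorem:rank and t}: the minimum t-separator size equals the maximum number of treks in a non-crossing trek system from $\set{P}$ to $\set{R}$. I would construct $k$ such treks routed through the $k$ distinct members of $\set{V}$: the $t$ observed members contribute $t$ trivial treks (each appears on both sides of $\Sigma$), while each of the $k-t$ latent members carries a trek that enters from one of the $\geq k+1-t$ pairwise-disjoint pure-children covers guaranteed by clause (i) and exits through one of the $\geq k+1-t$ neighbors guaranteed by clause (ii). Since these treks jointly force any t-separator to contain all $k$ members of $\set{V}$, the minimum size is $\geq k$, matching (a) and giving rank $=k$.

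Finally for (c): by clause (i) and the disjointness of pure children from their parents, $\|\setset{C}\cup\setset{X}\|\geq(k+1-t)+t=k+1>k$; for the column side I would combine the $t$ surviving observed cover members with the $\geq k+1-t$ neighbors from clause (ii), replacing any latent neighbor by an observed descendant via Theorem~\ref{theorem:measurement as surrogate} and Assumption~\ref{assumption:basic}(i) and checking those surrogates avoid $\meassureddes(\setset{C})$, to again exceed $k$. \textbf{The main obstacle} I anticipate is the trek-catching lemma in (a): rigorously showing that the column-set deletions, plus acyclicity and cover minimality, eliminate every degenerate source-at-a-pure-child trek, and dually verifying that the $k$ treks in (b) can be chosen non-crossing so that the Menger duality yields the sharp bound $k$ rather than a smaller value.
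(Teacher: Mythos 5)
Your proposal is correct and follows essentially the same route as the paper: both reduce the claim to Theorem~\ref{theorem:rank and t} and identify $(\set{V},\emptyset)$ as the minimal t-separator between the row and column sets, with the column-side exclusion of $\setset{C}$ and $\meassureddes(\setset{C})$ doing the work of killing the degenerate source-at-a-pure-child treks. In fact your write-up is more complete than the paper's two-sentence argument, which asserts minimality of the separator without exhibiting the $k$ non-overlapping treks via Lemma~\ref{lemma: maxflowmincut} and omits the strict cardinality bounds entirely, both of which you supply.
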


% \vspace{-1mm}
\begin{example} [Example for atomic cover and rank deficiency]
  Consider an atomic cover in Fig.~\ref{fig:example2} (d): $\set{V}=\{\node{X_2},\node{L_2}\}$. $\set{V}$ is an atomic cover, because $\set{V}$ has at least 2 pure children and has additional 3 neighbors, satisfying the conditions in Definition \ref{definition:ac}. If we take $\setset{C}=\{\{\node{X_4}\},\{\node{X_5}\}\}$, and $\setset{X}=\{\{\node{X_2}\}\}$,
  we have $\text{rank}(\Sigma_{\setset{C}\cup \setset{X}, 
  \setset{X}\cup\setset{X}_\graph \backslash \setset{C}\backslash\meassureddes(\setset{C})}
  )=
  \text{rank}(\Sigma_{\{\node{X_4},\node{X_5}, \node{X_2}\}, \{\node{X_1},\node{X_2},\node{X_3},\node{X_6},\node{X_7},\node{X_{8}}\}})=2=|\set{V}|$. Noted that both $||\setset{C}\cup \setset{X}||, ||\setset{X}\cup\setset{X}_\graph \backslash \setset{C}\backslash\meassureddes(\setset{C})||>2$, so here the rank is deficient.
\end{example}

%Theorem~\ref{theorem: rank property of atomic cover} provides a property of an atomic cover. If this property is unique under some conditions,
%then we can leverage this property to identify atomic covers of a graph.
%By unique we mean,
%only when we take $\mathbf{X}$ as all $t$ observed variables of a cover and take $\mathbf{C}$ as the subset of the pure children of the cover,
%can we make the corresponding rank deficient.
%We give the following Theorem~\ref{theorem:unique_rank_of_atomic_cover} (proof in Appx.~\ref{proof:unique_rank_of_atomic_cover}) that 
%concludes conditions under which the rank deficient property is unique.

% \vspace{-1.5mm}
Theorem~\ref{theorem: rank property of atomic cover} establishes the rank-deficiency property of an atomic cover. 
Furthermore, if we can build a unique connection between rank deficiency and atomic covers under certain conditions, then we can exploit rank deficiency to identify atomic covers in a graph. 
%
%Uniqueness in this context refers to the scenario where the rank deficiency can be observed only when all $t$ observed variables $\mathbf{X}$ of a cover are considered, along with the subset $\mathbf{C}$ of pure children associated with the cover.
%
In the following, we present the 
graphical conditions to achieve identifiability and  Theorem~\ref{theorem:unique_rank_of_atomic_cover} 
delineates under what conditions 
the uniqueness of the rank-deficiency property can be ensured.

  \begin{condition}[Basic Graphical Conditions for Identifiability] A graph $\graph$ satisfies the basic graphical condition for identifiability, if 
     $\forall \node{L}\in\set{L}_\graph$,  $\node{L}$ belongs to at least one atomic cover in $\graph$ and no latent variable is involved in any triangle structure  (whose definition is in Appx.~\ref{def of triangle}).
    \label{cond:basic}
    % \vspace{-0.5mm}
  \end{condition}

\begin{theorem} [Uniqueness of Rank Deficiency]
  Suppose a graph $\mathcal{G}$ satisfies Condition~\ref{cond:basic}.
  We further assume  
  (i) all the atomic covers with cardinality $k'<k$ have been discovered and recorded, and (ii) there is no collider
  %caused by any two atomic covers 
  in $\graph$. %\Biwei{shall we remove the following sentence?}
  %i.e., for all $\set{V'}$, if $|\set{V'}|< k$, then $\purechildren(\set{V'})\in \setsetset{C}_{k-1}$, where $\setsetset{C}_{k-1}$ records the pure children of all atomic covers whose cardinality $\leq k-1$. 
  %
  If there exists a
  set of observed variables $\set{X}$ and a set of atomic covers $\setset{C}$ satisfying
  %a non-overlapping combination of a set of covers
$\setset{X}=\sep(\set{X})$, $\setset{C}\cap\setset{X}=\emptyset$,  and 
   $||\setset{C}||+||\setset{X}||=k+1$,
   such that (i) For all recorded $k'$ cluster $\setset{C'}$, 
   %\in \setsetset{C}_{k-1}$,
  $||\setset{C} \cap \setset{C'}||\leq |\parents(\setset{C'})|$,
   (ii) $\text{rank}(\Sigma_{\setset{C}\cup \setset{X}, \setset{X}\cup\setset{X}_\graph \backslash \setset{C}\backslash\meassureddes(\setset{C})})=k$,
  then there exists an atomic cover
  $\set{V}=\set{L}\cup\set{X}$ in $\graph$,
  with $\set{X}=\cup_{\set{X'} \in \setset{X}} \set{X'}$, $|\set{L}|=k-|\set{X}|$,
  and $\cup_{\set{C} \in \setset{C}} \set{C}\subseteq \purechildren(\set{V})$.
  %$\setset{C}\subseteq \purechildren(\set{V})$. 
  \label{theorem:unique_rank_of_atomic_cover}
  % \vspace{-1mm}
  \end{theorem}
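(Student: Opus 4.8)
The plan is to establish the converse of Theorem~\ref{theorem: rank property of atomic cover}: instead of deriving rank deficiency from a known atomic cover, I would reconstruct an atomic cover of cardinality $k$ out of the observed rank deficiency. The central tool is Theorem~\ref{theorem:rank and t}, which equates $\text{rank}(\Sigma_{\set{A},\set{B}})$ with the minimum total size of a t-separating pair $(\set{C}_\set{A},\set{C}_\set{B})$. Writing $\set{A}$ for the variable set underlying $\setset{C}\cup\setset{X}$ and $\set{B}$ for that of $\setset{X}\cup\setset{X}_\graph\backslash\setset{C}\backslash\meassureddes(\setset{C})$, the hypotheses give $||\setset{C}||+||\setset{X}||=k+1$, so $\set{A}$ contains $k+1$ variables while $\text{rank}(\Sigma_{\set{A},\set{B}})=k$. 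Hence there is a minimal t-separating pair of total size exactly $k$, one short of $|\set{A}|$; this single unit of deficiency is the structural signature I would exploit, since a rank of $k+1$ would instead indicate no shared latent ancestry among the left variables.

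First I would analyze this minimal t-separating set. Because $|\set{C}_\set{A}|+|\set{C}_\set{B}|=k<k+1=|\set{A}|$, every trek from the $k+1$ variables of $\set{A}$ to $\set{B}$ is forced through a bottleneck of only $k$ nodes. Using assumption (ii) that $\graph$ has no collider, each trek reduces to a pair of directed paths from a common source, so the bottleneck nodes behave as common ancestors through which every left variable must route to reach the rest of the graph. I would collect these $k$ sources into a candidate set $\set{V}$ and argue that the $||\setset{X}||=|\set{X}|$ observed variables of $\setset{X}$ are the observed members of $\set{V}$, while the $||\setset{C}||=k+1-|\set{X}|$ variables of $\setset{C}$ are downstream, yielding $\set{V}=\set{L}\cup\set{X}$ with $|\set{L}|=k-|\set{X}|$ and $\cup_{\set{C}\in\setset{C}}\set{C}\subseteq\purechildren(\set{V})$. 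Concretely, a child $\node{C}\in\setset{C}$ has its only incoming route through some node of $\set{V}$, so taking $\set{C}_\set{A}=\set{V}$, $\set{C}_\set{B}=\emptyset$ already realizes the size-$k$ separation; the no-triangle-with-latents part of Condition~\ref{cond:basic} is what ensures this routing is clean, preventing a latent bottleneck node from also linking directly to its co-children.

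Next I would verify the three clauses of Definition~\ref{definition:ac} for $\set{V}$. Clause (i) holds because $\setset{C}$ itself supplies $||\setset{C}||=k+1-|\set{X}|\geq k+1-t$ pure children. Clause (ii) requires $||\setset{N}||\geq k+1-t$ neighbours, which I would extract from the $\set{B}$-side of the separation together with the edges incoming to $\set{V}$. The delicate clause is (iii), indecomposability: arguing by contradiction, if $\set{V}=\set{V}_1\cup\set{V}_2$ split into two atomic covers, each would have cardinality strictly below $k$ and hence, by assumption (i), already be recorded; the observed rank deficiency would then be fully attributable to those recorded covers, which is exactly the configuration excluded by hypothesis (i) of the theorem, $||\setset{C}\cap\setset{C'}||\leq|\parents(\setset{C'})|$, ranging over recorded $k'$-clusters $\setset{C'}$.

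The main obstacle I anticipate is precisely this last point: ruling out \emph{spurious} rank deficiency. A rank of $k$ on a left side of size $k+1$ can in principle arise not from a genuine new cardinality-$k$ cover but from an accidental combination of already-discovered smaller covers whose pure children happen to populate $\setset{C}$. The entire force of the recording assumption (i) and of the counting condition $||\setset{C}\cap\setset{C'}||\leq|\parents(\setset{C'})|$ is to forbid $\setset{C}$ from being saturated by the children of any single recorded cover $\setset{C'}$, thereby certifying that the deficiency reflects a new common source of cardinality $k$ rather than recycled structure. Turning this intuition into a rigorous counting argument over the minimal t-separating set — tracking how much of the bottleneck can be charged to previously recorded covers versus to the new one, and showing the residual is exactly $\set{V}$ — is where I expect the technical weight of the proof to concentrate.
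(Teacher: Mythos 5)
You have assembled the right toolkit (rank equals the minimal t-separation size, equivalently the maximum number of non-overlapping treks by Lemma~\ref{lemma: maxflowmincut}), and you have correctly located the crux of the theorem: ruling out the possibility that a rank of $k$ on a left-hand side of effective cardinality $k+1$ arises from a combination of several smaller covers rather than from one new cover of cardinality $k$. But your write-up explicitly defers that step (``is where I expect the technical weight of the proof to concentrate''), and that step \emph{is} the proof. The paper resolves it by a contrapositive counting argument that your sketch does not contain: assuming no colliders, each member of $\setset{C}$ is a pure child of exactly one atomic cover, so $\setset{C}$ partitions as $\setset{C}_1,\dots,\setset{C}_N$ by parent cover; since every atomic cover of cardinality $k_i$ has at least $k_i+1$ pure children and $k_i+1$ further neighbours, each block $\setset{C}_i$ (with $||\setset{C}_i||\leq k_i$) supports $||\setset{C}_i||$ pairwise non-overlapping treks to the right-hand set, and treks from distinct blocks do not overlap. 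Hence if $N\geq 2$ (and, in the $t\neq 0$ case, if $\setset{X}$ is not exactly a parent set of the single block) the rank is $\sum_i ||\setset{C}_i|| + t = k+1$, i.e.\ \emph{not} deficient. Deficiency therefore forces all of $\setset{C}$ to be pure children of one cover with $\set{X}$ among its parents. Your proposed direct route --- take a minimal t-separating pair and declare its union to be $\set{V}$ --- is shakier than this: minimal t-separators are not unique, may be split across the two sides as $(\set{C}_{\set{A}},\set{C}_{\set{B}})$, and need not coincide with $\parents(\setset{C})$, so identifying the bottleneck with an atomic cover requires essentially the same partition-and-count argument you postponed.

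Two smaller points. First, your invocation of the no-collider assumption is off: treks are by definition pairs of directed paths from a common source and never pass through colliders, so ``each trek reduces to a pair of directed paths'' is vacuous. The assumption is actually used to guarantee that each element of $\setset{C}$ has a single well-defined parent atomic cover (no variable in $\setset{C}$ is a common child of two covers), which is what makes the partition $\setset{C}_1,\dots,\setset{C}_N$ legitimate. Second, your treatment of clause (iii) of Definition~\ref{definition:ac} conflates two different failure modes: hypothesis (i) of the theorem ($||\setset{C}\cap\setset{C'}||\leq|\parents(\setset{C'})|$) is there to prevent deficiency being caused by an \emph{already recorded} smaller cluster whose children saturate $\setset{C}$, whereas indecomposability of the newly found $\set{V}$ follows from the same trek count (a decomposable $\set{V}$ would again yield $k+1$ disjoint treks). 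As it stands the proposal is a plausible plan with the decisive combinatorial lemma missing.
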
%\Biwei{whether we need to change this part to make it easier to read.}

  For a better understanding, we provide an illustrative example in Appx.~\ref{example: theorem unique rank}. %% and the proof of Theoren~\ref{theorem:unique_rank_of_atomic_cover} can be found in Appx.~\ref{proof:unique_rank_of_atomic_cover}.
Basically, Theorem~\ref{theorem:unique_rank_of_atomic_cover} 
says that under certain conditions, we can build a unique connection between rank deficiency and atomic covers, and thus we can identify atomic covers in a graph
by searching for combinations of $\setset{C}$ and $\setset{X}$ that induce rank deficient property. 
We further introduce Theorem~\ref{theorem:necessary and sufficient under cond}, %%(proof in Appx.~\ref{proof:necessary and sufficient under cond}),
which is useful in that it provides necessary and sufficient conditions for the existence of latent variables,
under Condition~\ref{cond:basic}.

\begin{theorem} [Necessary and Sufficient Condition for Existence of Latent Variable]
\label{theorem:necessary and sufficient under cond}
If a graph satisfies Condition~\ref{cond:basic}, then the sufficient condition for the existence of latent variables in
Theorem~\ref{theorem: global for latent} becomes both necessary and sufficient.
That is, the "if" in Theorem~\ref{theorem: global for latent} becomes "if and only if".
\end{theorem}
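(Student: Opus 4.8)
Since the forward implication of Theorem~\ref{theorem: global for latent} --- that conditions (i)--(v) force a latent onto some trek between $\set{A}$ and $\set{B}$ --- is already established, the plan is to prove only the converse under Condition~\ref{cond:basic}: fixing disjoint $\set{A},\set{B},\set{C}\subseteq\set{X}_\graph$ that satisfy the structural conditions (i)--(iv), if some $\node{L}\in\set{L}_\graph$ lies on a trek between $\set{A}$ and $\set{B}$, then the rank is deficient, $\text{rank}(\Sigma_{\set{A}\cup\set{C},\set{B}\cup\set{C}})<|\set{A}|+|\set{C}|$. First I would translate this target into the language of trek-separation via Theorem~\ref{theorem:rank and t}: since $(\set{A}\cup\set{C},\emptyset)$ already t-separates $\set{A}\cup\set{C}$ from $\set{B}\cup\set{C}$, giving $\text{rank}\le|\set{A}|+|\set{C}|$, it suffices to exhibit a strictly smaller separator, i.e.\ a partition $(\set{C}_{\set{A}},\set{C}_{\set{B}})$ with $|\set{C}_{\set{A}}|+|\set{C}_{\set{B}}|<|\set{A}|+|\set{C}|$ that still t-separates the two sides.

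The heart of the argument is a bottleneck claim driven by Condition~\ref{cond:basic}. By that condition the witnessing $\node{L}$ sits inside some atomic cover $\set{V}$ (Definition~\ref{definition:ac}), whose latent part $\set{L}_{\set{V}}$ is, through requirement (i) of that definition, strictly smaller than its set of pure children. I would argue that the dense adjacency forced by (ii) and (iii) --- $\set{A}$ is a clique and is completely joined to $\set{B}$ in the CI skeleton --- together with the prohibition of any triangle through a latent, channels every trek between $\set{A}$ and $\set{B}$ through $\set{L}_{\set{V}}$. The no-triangle half of Condition~\ref{cond:basic} is exactly what rules out a direct observed edge among the relevant variables that would otherwise supply a trek bypassing $\set{L}_{\set{V}}$ and restore full rank, while atomic-cover membership, via the rank-deficiency of Theorem~\ref{theorem: rank property of atomic cover} and the surrogate principle of Theorem~\ref{theorem:measurement as surrogate}, is what makes this funnel strictly narrower than the $|\set{A}|$ vertices it replaces.

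With the bottleneck in hand I would assemble the separator by keeping $\set{C}$ on one side (contributing $|\set{C}|$ and absorbing, via condition (iv), the treks incident to $\set{C}$) and replacing the $\set{A}$-portion of the trivial separator $(\set{A}\cup\set{C},\emptyset)$ by $\set{L}_{\set{V}}$. Verifying that $(\set{C}\cup\set{L}_{\set{V}},\emptyset)$ t-separates $\set{A}\cup\set{C}$ from $\set{B}\cup\set{C}$ then reduces to the bottleneck claim, and the count $|\set{C}|+|\set{L}_{\set{V}}|<|\set{A}|+|\set{C}|$ yields a strict separator. Theorem~\ref{theorem:rank and t} converts this into $\text{rank}(\Sigma_{\set{A}\cup\set{C},\set{B}\cup\set{C}})<|\set{A}|+|\set{C}|$, which is condition (v), completing the equivalence.

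The main obstacle is the bottleneck claim itself: showing that a latent merely lying on one trek between $\set{A}$ and $\set{B}$ compels all the connecting treks through a latent set of size below $|\set{A}|$. In general a single latent mediates only part of the cross-covariance, so the proof must lean on (ii)--(iii) to rule out that the clique-like adjacency arises from vertex-disjoint observed paths, and on the no-triangle clause to exclude compensating direct edges; I expect the real work --- plausibly an induction on $|\set{A}|$ or a case analysis on the placement of the common ancestors of $\set{A}\cup\set{B}$ relative to $\set{V}$ --- to concentrate entirely here.
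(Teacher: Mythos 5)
Your proposal proves the wrong statement, and the statement it targets is in fact false. The converse that the paper needs (and proves) is existential: if the graph contains any latent variable, then there \emph{exist} disjoint $\set{A},\set{B},\set{C}$ satisfying (i)--(v) of Theorem~\ref{theorem: global for latent}. The paper's proof is accordingly a short construction: by Condition~\ref{cond:basic} the latent variable lies in an atomic cover $\set{V}$ with $n$ latent members and observed members $\set{X}$; one takes $\set{A}$ to be $n+1$ pure children of $\set{V}$, $\set{B}$ to be $n+1$ further neighbours, and $\set{C}=\set{X}$ (replacing any latent members of $\set{A}$ or $\set{B}$ by observed pure descendants), and verifies (i)--(v) for this particular choice. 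You instead fix an \emph{arbitrary} $\set{A},\set{B},\set{C}$ satisfying (i)--(iv), assume only that some latent lies on \emph{a} trek between $\set{A}$ and $\set{B}$, and try to deduce the rank deficiency (v). That is a universally quantified strengthening; no construction of witnesses appears anywhere in your argument, so even if your bottleneck claim held you would not have established the theorem in the form the algorithm uses (Phase 1 only needs \emph{some} group of variables to witness (i)--(v)).

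Worse, the universally quantified version is false, so the ``bottleneck claim'' on which you concentrate the work cannot be established. Take $\node{P_1}\rightarrow\node{L}$, $\node{P_2}\rightarrow\node{L}$, $\node{L}\rightarrow\node{A_1}$, $\node{L}\rightarrow\node{A_2}$, together with direct observed edges $\node{A_1}\rightarrow\node{B_1}$, $\node{A_1}\rightarrow\node{B_2}$, $\node{A_2}\rightarrow\node{B_1}$, $\node{A_2}\rightarrow\node{B_2}$. Condition~\ref{cond:basic} holds: $\{\node{L}\}$ is an atomic cover with pure children $\node{A_1},\node{A_2}$ and neighbours $\node{P_1},\node{P_2}$, and no triangle involves $\node{L}$. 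With $\set{A}=\{\node{A_1},\node{A_2}\}$, $\set{B}=\{\node{B_1},\node{B_2}\}$, $\set{C}=\emptyset$, conditions (i)--(iv) hold ($\node{A_1},\node{A_2}$ are adjacent in the CI skeleton because the latent $\node{L}$ confounds them; the $\set{A}$--$\set{B}$ adjacencies are direct edges), and $\node{L}$ lies on a trek between $\set{A}$ and $\set{B}$, e.g.\ $(\node{L}\rightarrow\node{A_1},\ \node{L}\rightarrow\node{A_2}\rightarrow\node{B_1})$. Yet the two treks $(\node{A_1},\,\node{A_1}\rightarrow\node{B_1})$ and $(\node{A_2},\,\node{A_2}\rightarrow\node{B_2})$ have no sided intersection, so by Lemma~\ref{lemma: maxflowmincut} we get $\text{rank}(\Sigma_{\set{A},\set{B}})=2=|\set{A}|+|\set{C}|$: no deficiency. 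This is precisely the failure mode you flag as the ``main obstacle'': conditions (ii)--(iii) do \emph{not} channel the $\set{A}$--$\set{B}$ connectivity through the latent part of a cover, because the clique structure can be supplied by vertex-disjoint observed edges while the latent only accounts for the internal adjacency of $\set{A}$. The remedy is not a stronger bottleneck argument but abandoning the universal formulation and constructing $\set{A},\set{B},\set{C}$ from the atomic cover itself (in the example above, $\set{A}=\{\node{A_1},\node{A_2}\}$, $\set{B}=\{\node{P_1},\node{P_2}\}$, $\set{C}=\emptyset$ does yield rank $1<2$), which is exactly what the paper's proof does.
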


% \vspace{-1mm}
With the theoretical guarantees
of Theorem~\ref{theorem:unique_rank_of_atomic_cover} and Theorem~\ref{theorem:necessary and sufficient under cond},
the next question is how to design a search procedure that strives to fulfill these conditions, in order to cash out the theorems for existence of latent variable and the uniqueness of rank-deficiency, for
 identifying atomic covers in a graph, and 
 consequently the whole latent causal structure.

%in the next section, we develop a sophisticated designed search algorithm that is able to correctly recover the latent causal graph.

\begin{figure}[t]
   % \vspace{-4mm}
   % \setlength{\abovecaptionskip}{-2.5mm}
 % \setlength{\belowcaptionskip}{0mm}
   %\centering
   \begin{subfigure}[b]{0.42\textwidth}
     \centering
     \includegraphics[width=\textwidth]{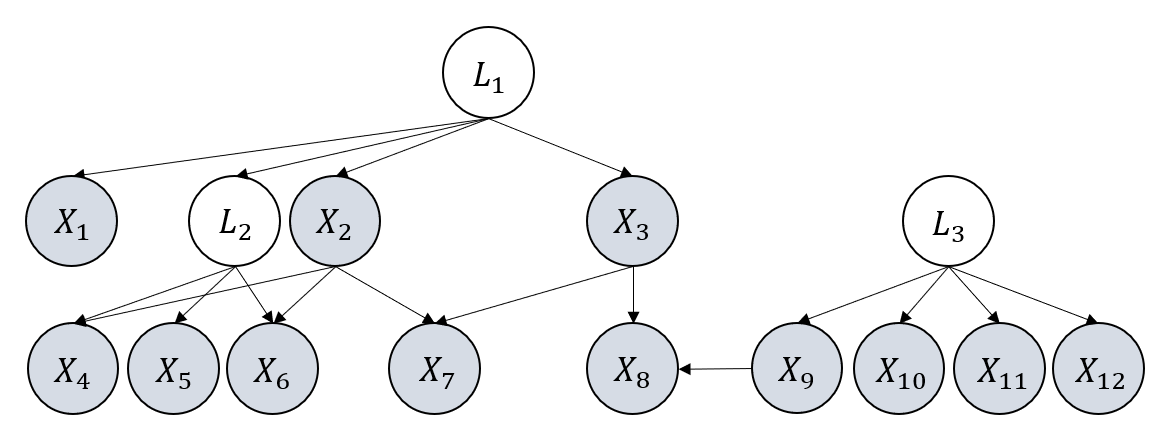}
     \caption{\scriptsize{The underlying graph $\graph$, all the observed variables of which
      are taken as input to Phase 1.\color{white}{place holder}}}
 \end{subfigure}
 \hfill
 \begin{subfigure}[b]{0.42\textwidth}
   \centering
   \includegraphics[width=\textwidth]{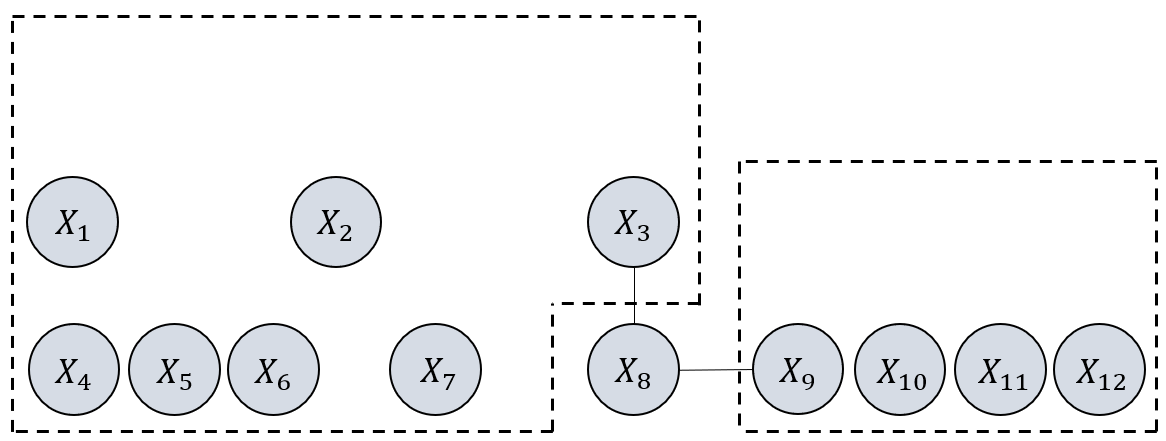}
   \caption{\scriptsize{Given the CI skeleton from Phase 1,
   variables are partitioned into two groups, shown in the two dashed areas.}}
 \end{subfigure}
  \hfill
 \begin{subfigure}[b]{0.42\textwidth}
   \centering
   \includegraphics[width=\textwidth]{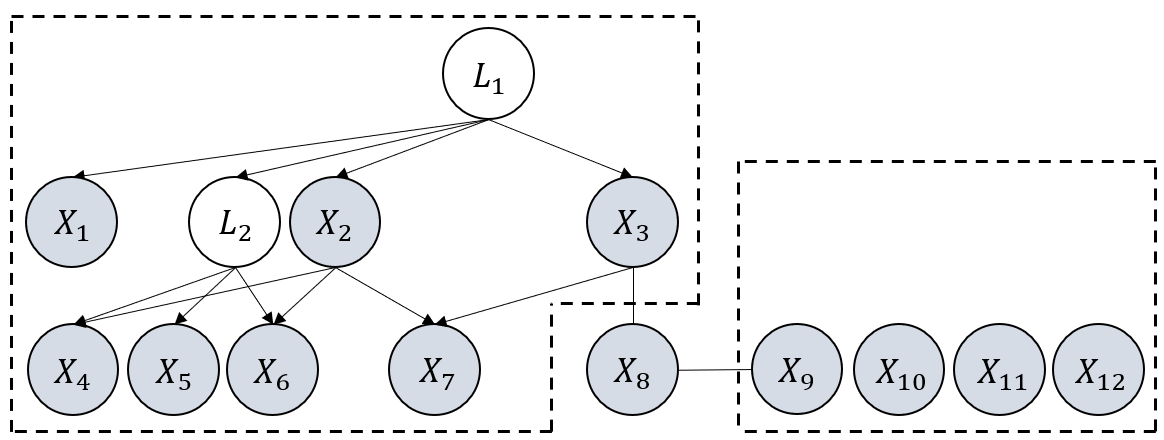}
    \caption{\scriptsize{Take variables from the first dashed area to Phase 2 and 3,  use the result to update the CI skeleton.}}
 \end{subfigure}
  \hfill
 \begin{subfigure}[b]{0.42\textwidth}
   \centering
   \includegraphics[width=\textwidth]{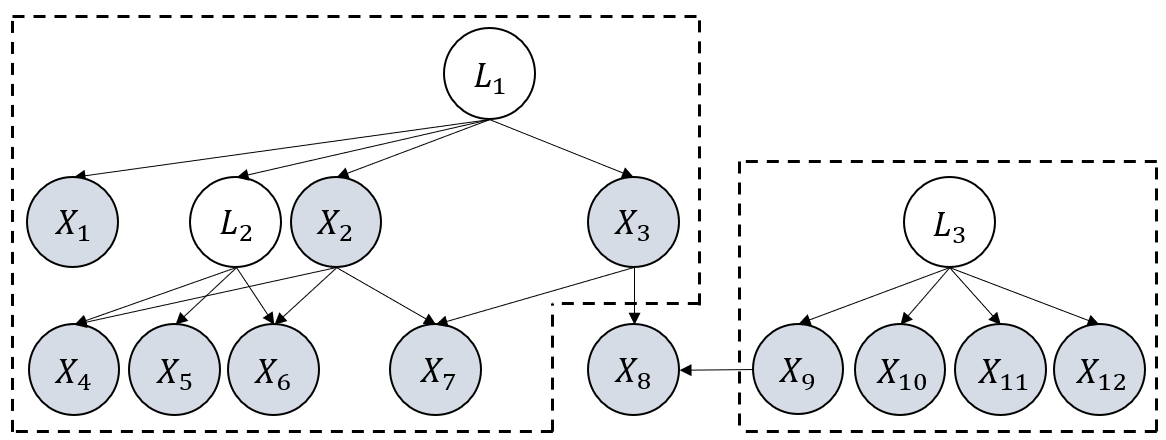}
    \caption{\scriptsize{Do the same thing as in (c) for the second dashed area and update all the remaining directions.}}
 \end{subfigure}
  % \vspace{-2mm}
 \caption{\small An illustrative example of the overall search procedure in Alg.~\ref{alg:all}.}
   \label{fig:phase 1 example}
   % \vspace{-2mm}
 \end{figure}

\begin{figure}[t]
   % \vspace{-2mm}
   % \setlength{\abovecaptionskip}{-2.5mm}
 % \setlength{\belowcaptionskip}{-2.5mm}
   %\centering
   \begin{subfigure}[b]{0.22\textwidth}
     %\centering
     \includegraphics[width=\textwidth]{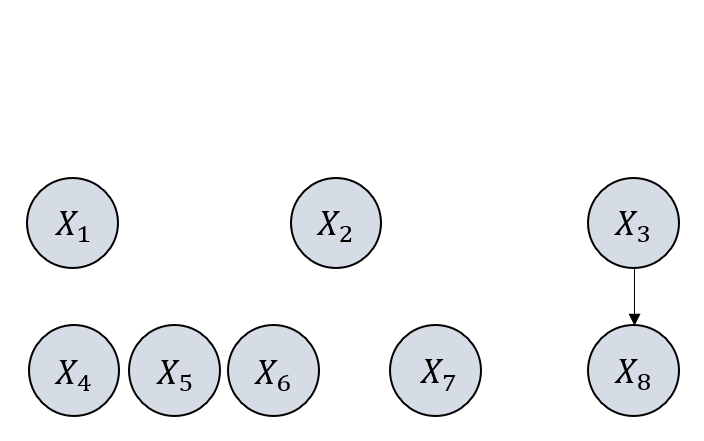}
     \caption{\scriptsize{Take $\setset{X}=\{\{\node{X_3}\}\}$ and \\$\setset{C}=\{\{\node{X_8}\}\}$.}}
 \end{subfigure}
 \hfill
 \begin{subfigure}[b]{0.22\textwidth}
   %\centering
   \includegraphics[width=\textwidth]{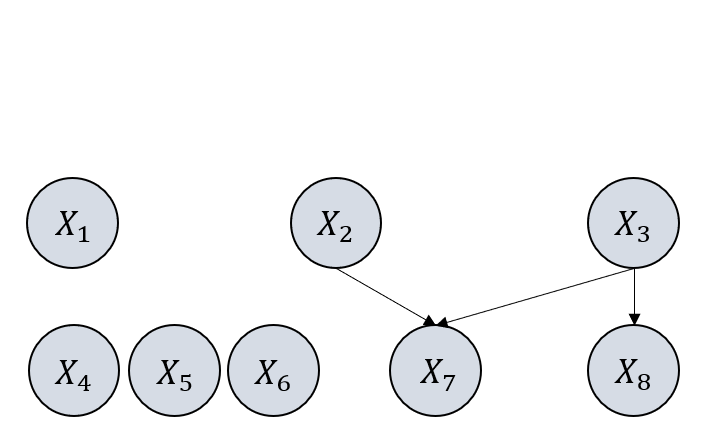}
   \caption{\scriptsize{Take $\setset{X}=\{\{\node{X_2},\node{X_3}\}\}$ and $\setset{C}=\{\{\node{X_7}\},\{\node{X_8}\}\}$.}}
 \end{subfigure}
 \hfill
 \begin{subfigure}[b]{0.22\textwidth}
   %\centering
   \includegraphics[width=\textwidth]{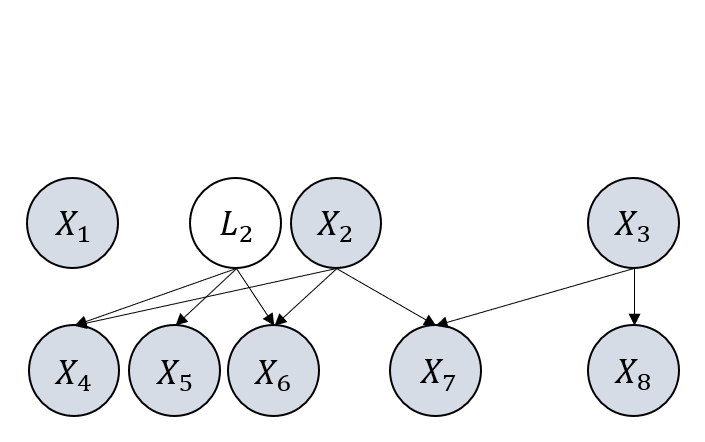}
   \caption{\scriptsize{Take $\setset{X}=\{\{\node{X_2}\}\}$ and \\$\setset{C}=\{\{\node{X_4}\},\{\node{X_5}\},\{\node{X_6}\}\}$.}}
 \end{subfigure}
 \hfill
 \begin{subfigure}[b]{0.22\textwidth}
   %\centering
   \includegraphics[width=\textwidth]{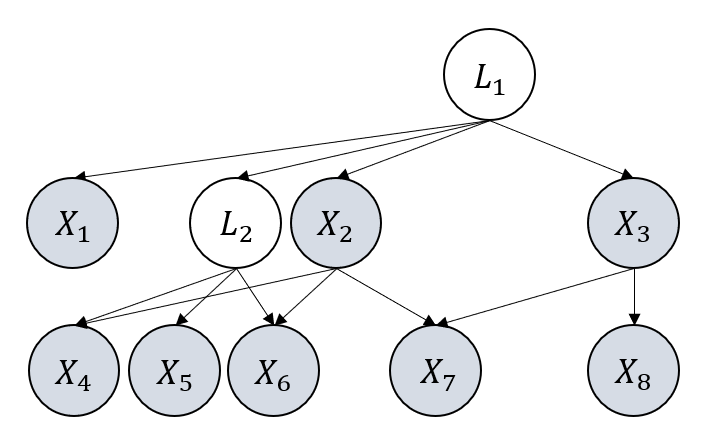}
   \caption{\scriptsize{Take $\setset{X}=\{\}$ and $\setset{C}=\{\{\node{X_1}\},\{\node{L_2},\node{X_2}\},\{\node{X_3}\}\}$}}
 \end{subfigure}
 \caption{\small An illustrative example of the process of Phase 2 in Alg.~\ref{alg:phase2}.}
   \label{fig:example2}
   % \vspace{-2mm}
 \end{figure}

% \vspace{-2mm}
\subsection{Method - Rank-based Latent Causal Discovery}
% \vspace{-2mm}
In this section, we propose a computationally efficient and scalable search algorithm to 
identify the latent causal structure, referred to as  \textit{Rank-based Latent Causal Discovery (RLCD)}, that leverages the connection between graph structures and rank deficiency, as we discussed in previous sections. The search process mainly comprises three phases: (1) Phase 1: FindCISkeleton, (2) Phase 2: FindCausalClusters, and (3) Phase 3: RefineCausalClusters, as outlined in Alg.~\ref{alg:all}. 

%
%The overall search process includes three phases: 1. FindCISkeleton, 2. FindCausalClusters, 3. RefineCausalClusters, as summarized in Alg.~\ref{alg:all}.

Specifically, Phase 1 is to find the CI skeleton deduced by conditional independence tests over $\set{X}_\graph$,
and Phase 2 and Phase 3 are designed such that the conditions in Theorem~\ref{theorem:unique_rank_of_atomic_cover} are satisfied to the largest extent, in order to make use of the unique rank deficiency property to identify the latent structure.
%
%The reason why we start from finding the CI skeleton lies in that: According to Theorem~\ref{theorem:necessary and sufficient under cond}, latent variables exist iff (i)-(iv) in Theorem~\ref{theorem: global for latent} are satisfied; Therefore, we do not have to take all the $\set{X}_\graph$ as input to Phase 2 and 3, rather, we only need to check the CI skeleton over $\set{X}_\graph$ and take those variables that are possible to satisfy (i)-(iv) as input to Phase 2 and 3.
 We initiate the process by finding the  CI skeleton first, for the reason as follows. According to Theorem~\ref{theorem:necessary and sufficient under cond}, latent variables exist iff conditions (i)-(v) in Theorem~\ref{theorem: global for latent} are satisfied, while
conditions (i)-(iv) can be directed inferred from the CI skeleton.
 Therefore, there is no need to consider all variables in $\set{X}_{\graph}$ as inputs for Phases 2 and 3. Instead, we make use of the CI skeleton over $\set{X}_\graph$ and select some groups of observed variables as inputs into Phases 2 and 3,
 where variables in each group together have the potential to satisfy conditions (i)-(iv).
 This also benefits the computational efficiency as 
 the Phase 2 and 3 for different groups can be done in parallel.
 
 An example of the entire search process is illustrated in Fig.~\ref{fig:phase 1 example}.
 After Phase 1, variables are partitioned into groups as shown in Fig.~\ref{fig:phase 1 example} (b). For each group (dashed area in Fig.~\ref{fig:phase 1 example} (b)),
 we conduct Phases 2 and 3, and have the final result shown in Fig.~\ref{fig:phase 1 example} (d). Details of each phase are given below.

% \vspace{-2mm}
\subsection{Phase 1: Finding CI Skeleton}
% \vspace{-2mm}
The objective of  Phase 1 is to find the CI skeleton over observed variables $\set{X}_\graph$ by utilizing conditional independence relations.
To this end, we employ 
the first stage of the PC algorithm \citep{spirtes2000causation},
with the difference that we replace all CI tests with rank tests, 
according to the following Lemma~\ref{lemma: rank and d-sep} (proof in Appx.~\ref{proof: rank and d-sep}).

\begin{lemma} [D-separation by Rank Test]
\label{lemma: rank and d-sep}
   Suppose a linear latent causal model with graph $\graph$. For disjoint $\mathbf{A}, \mathbf{B} ,\mathbf{C}\in\set{X}_\graph$,
   $\mathbf{C}$ d-separates
    $\mathbf{A}$ and $\mathbf{B}$ 
    in graph $\mathcal{G}$,
     if and only if $\text{rank}(\Sigma_{\set{A}\cup\set{C}, \set{B}\cup\set{C}})=|\set{C}|$.
\end{lemma}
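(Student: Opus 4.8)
The plan is to derive the lemma as a direct consequence of two results already available: the characterization of d-separation via t-separation (Theorem~\ref{theorem:tsep and dsep}) and the rank formula for t-separation (Theorem~\ref{theorem:rank and t}). I would apply the latter to the pair of sets $\set{A}\cup\set{C}$ and $\set{B}\cup\set{C}$, which expresses $\text{rank}(\Sigma_{\set{A}\cup\set{C},\set{B}\cup\set{C}})$ as the minimum total size $|\set{C}_1|+|\set{C}_2|$ over all pairs $(\set{C}_1,\set{C}_2)$ that t-separate $\set{A}\cup\set{C}$ from $\set{B}\cup\set{C}$. The whole proof then reduces to showing that this minimum equals $|\set{C}|$ exactly when $\set{C}$ d-separates $\set{A}$ and $\set{B}$.

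The key structural observation I would isolate first is that every t-separator of $\set{A}\cup\set{C}$ from $\set{B}\cup\set{C}$ has total size at least $|\set{C}|$. For each vertex $\node{C}_i\in\set{C}$, the single-vertex pair of trivial directed paths $(\node{C}_i,\node{C}_i)$ is a trek from a vertex of $\set{A}\cup\set{C}$ to a vertex of $\set{B}\cup\set{C}$, since $\set{C}$ lies in both sides. To t-separate this trek, one of the two separator components must contain $\node{C}_i$; hence $\set{C}\subseteq \set{C}_1\cup\set{C}_2$, giving $|\set{C}_1|+|\set{C}_2|\ge |\set{C}_1\cup\set{C}_2|\ge|\set{C}|$. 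Thus the rank is always at least $|\set{C}|$.

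For the forward direction, assuming $\set{C}$ d-separates $\set{A}$ and $\set{B}$, Theorem~\ref{theorem:tsep and dsep} furnishes a partition $\set{C}=\set{C}_\set{A}\cup\set{C}_\set{B}$ whose components t-separate $\set{A}\cup\set{C}$ from $\set{B}\cup\set{C}$; this exhibits a separator of total size $|\set{C}_\set{A}|+|\set{C}_\set{B}|=|\set{C}|$, so the minimum (the rank) is at most $|\set{C}|$, and combined with the lower bound it equals $|\set{C}|$. For the converse, assuming the rank equals $|\set{C}|$, the minimizing separator $(\set{C}_1,\set{C}_2)$ delivered by Theorem~\ref{theorem:rank and t} satisfies $|\set{C}_1|+|\set{C}_2|=|\set{C}|$; the counting in the previous paragraph then forces every inequality there to be an equality, so $\set{C}_1$ and $\set{C}_2$ are disjoint and $\set{C}_1\cup\set{C}_2=\set{C}$. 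This is precisely a partition of $\set{C}$ of the form required by Theorem~\ref{theorem:tsep and dsep}, whence $\set{C}$ d-separates $\set{A}$ and $\set{B}$.

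The only step requiring care — and the one I expect to be the main obstacle — is the disjointness bookkeeping in the converse: Theorem~\ref{theorem:tsep and dsep} is stated for a genuine partition $\set{C}=\set{C}_\set{A}\cup\set{C}_\set{B}$, whereas an arbitrary minimizing t-separator could a priori have overlapping components or include vertices lying outside $\set{C}$. The chain $|\set{C}|\le|\set{C}_1\cup\set{C}_2|\le|\set{C}_1|+|\set{C}_2|=|\set{C}|$ resolves both issues simultaneously, pinning down $\set{C}_1\cup\set{C}_2=\set{C}$ together with $\set{C}_1\cap\set{C}_2=\emptyset$. I would also flag that the disjointness of $\set{A}$, $\set{B}$, and $\set{C}$ is what legitimizes both the trivial treks used in the lower bound and the partition interpretation invoked at the end.
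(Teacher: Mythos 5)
Your proof is correct and follows essentially the same route as the paper's: both combine Theorem~\ref{theorem:tsep and dsep} with Theorem~\ref{theorem:rank and t}, obtain the lower bound $\text{rank}(\Sigma_{\set{A}\cup\set{C},\set{B}\cup\set{C}})\geq|\set{C}|$ from the trivial treks at each vertex of $\set{C}$, and the upper bound from the partition supplied by the d-separation. Your treatment of the converse is in fact more careful than the paper's, which asserts the equivalence without explicitly verifying that a minimizing t-separator of total size $|\set{C}|$ must be a disjoint partition of $\set{C}$.
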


% \begin{algorithm}[tb]
%   \caption{Latent Variable Causal Discovery}
%   \label{alg:all}
% \begin{algorithmic}
%   \STATE {\bfseries Input:} Samples of all $n$ observed variables $\mathbf{X}_{\mathcal{G}}$.
%   \STATE {\bfseries Output:} Markov equivalence class $\mathcal{G'}$.
%   \STATE Phase 1: $\mathcal{G'}$ = FindCausalClusters($\mathbf{X}_{\mathcal{G}}$);
%   \STATE Phase 2: $\mathcal{G'}$ = RefineCausalClusters($\mathcal{G'}$, $\mathbf{X}_{\mathcal{G}}$);
% \end{algorithmic}
% \end{algorithm}

\begin{figure}[t]
% \vspace{-8mm}
\begin{algorithm}[H]
% \vspace{-5mm}
\small
  \caption{The overall procedure for Rank-based Latent Causal Discovery (RLCD).}
  \label{alg:all}
  \SetAlgoLined
  \SetKwInOut{Input}{Input}
  \SetKwInOut{Output}{Output}
  \Input{Samples from all $n$ observed variables $\mathbf{X}_{\mathcal{G}}$}
  \Output{Markov equivalence class $\graphp$}
  \SetKwProg{Def}{def}{:}{}
  \Def{LatentVariableCausalDiscovery($\set{X}_{\graph}$)}
  {
    %Initialize an empty graph $\graphp$\;
    {Phase 1: $\graphp$ = FindCISkeleton($\set{X}_{\graph}$)} (Algorithm~\ref{alg:phase1})\;
    \For{ Each $\setset{Q}$, a group of overlapping maximal cliques, in $\graphp$} 
    {
      Set an empty graph $\graphpp$,      $\set{X}_\setset{Q}=\cup_{\set{Q}\in\setset{Q}}\set{Q}$,
      $\set{N}_\setset{Q}=\{\node{N}: \exists \node{X}\in \set{X}_\setset{Q}~\text{s.t.}~ 
      \node{N},\node{X}~\text{are adjacent in}~\graphp \}$\;
      {Phase 2: $\graphpp$ = FindCausalClusters($\graphpp$, $\set{X}_\setset{Q}\cup\set{N}_\setset{Q}$) (Algorithm~\ref{alg:phase2})\;}
      {Phase 3: $\graphpp$ = RefineCausalClusters($\graphpp$, $\set{X}_\setset{Q}\cup\set{N}_\setset{Q}$) (Algorithm~\ref{alg:phase3})\;}
      Transfer the estimated DAG $\graphpp$ to the Markov equivalence class and update $\graphp$ by $\graphpp$\;
    }
    Orient remaining causal directions that can be inferred from v structures\;
    \Return{$\graphp$}
  }
  % \vspace{-0mm}
\end{algorithm}
% \vspace{-6mm}
\end{figure}

\begin{algorithm}[tb]
    \small
    % \setlength{\intextsep}{0pt}
    % \setlength{\intextsep}{1\baselineskip}
    % \setlength{\textfloatsep}{0pt}
    % \SetAlgoSkip{}
   % \setlength{\floatsep}{-2.5mm}
     \caption{Phase1: FindCISkeleton (Stage 1 of PC \citep{spirtes2000causation})}
     \label{alg:phase1}
     \SetAlgoLined
     \SetKwInOut{Input}{Input}
     \SetKwInOut{Output}{Output}
     \Input{Samples from $n$ observed variables $\set{X}_{\graph}$}
     \Output{CI skeleton $\graphp$}
     \SetKwProg{Def}{def}{:}{}
     \Def{\text{Stage1PC}($\set{X}_{\graph}$)}
     {
        Initialize a complete undirected graph $\graphp$ on $\set{X}_{\graph}$\;
        \Repeat{no adjacent $\node{X},\node{Y}$ s.t., $|\text{Adj}_{\graphp}(\node{X})\backslash\{\node{Y}\}|<n$}
        {
          \Repeat{all $\node{X},\node{Y}$ s.t., 
          $|\text{Adj}_{\graphp}(\node{X})\backslash\{\node{Y}\}|\geq n$ and all 
          $\set{S}\subseteq \text{Adj}_{\graphp}(\node{X})\backslash\{\node{Y}\}$, $|\set{S}|=n$, tested.}
          {
          Select an ordered pair $\node{X},\node{Y}$ that are adjacent in $\graphp$, s.t., 
          $|\text{Adj}_{\graphp}(\node{X})\backslash\{\node{Y}\}|\geq n$\;
          Select a subset $\set{S}\subseteq \text{Adj}_{\graphp}(\node{X})\backslash\{\node{Y}\}$
          s.t., $|\set{S}|=n$\;
          If $\text{rank}(\Sigma_{\{\node{X}\}\cup\set{C}, \{\node{Y}\}\cup\set{C}})=|\set{C}|$,
          delete the edge between $\node{X}$ and $\node{Y}$ from $\graphp$ 
          and record $\set{S}$ in
          $\text{Sepset}(\node{X},\node{Y})$ and $\text{Sepset}(\node{Y},\node{X})$.\;
          }
          n:=n+1\;
        }
       \Return $\graphp$
     }
   \end{algorithm}
   % \setlength{\textfloatsep}{7pt}
   %             \STATE  Draw a set of  test variables $\mathbf{X} \subset \mathcal{S'}\cap \mathcal{O}$, with $|\mathbf{X}|=t$, and $\mathcal{C}\gets \emptyset$;
   %             \REPEAT 
   %               \STATE  Draw a set of  test variables $\mathbf{C} \subset \mathcal{S'}\backslash \mathbf{X}$, with $|\mathbf{C}|=k-t+1$;
   %               \STATE  Let $\mathbf{B}$  be $(\mathcal{S'}\backslash \mathbf{X}) \backslash \mathbf{C}$;

We summarize the procedure of Phase 1 in Alg.~\ref{alg:phase1} in the appendix.
Although, asymptotically,
using CI and rank information will provide the same d-separation result over observed variables,
we use rank instead of CI in Phase 1  just for the purpose of having a unified causal discovery framework with rank constraints (as Phases 2 and 3 are also based on rank).

Given the CI skeleton $\graphp$ (result from Phase 1), the next step is to find the substructures in $\graphp$
that might contain latent variables.
Specifically, Theorem~\ref{theorem:necessary and sufficient under cond} informs us that latent variable exists iff
(i)-(v) in Theorem~\ref{theorem: global for latent} holds,
where (i)-(iv) can be directly inferred from the CI skeleton.
Specifically,
 we consider all the maximal cliques $\set{Q}$ in $\graphp$ (a clique is a set of variables that are fully connected and a maximal clique is a clique that cannot be extended),% by including one more adjacent variable
   ~s.t., 
 $|\set{Q}|\geq3$.
We then partition these cliques into groups 
such that two cliques $\set{Q_1},\set{Q_2}$ are in the same group if
$|\set{Q_1}\cap\set{Q_2}|\geq 2$.
Finally, for each group of
cliques $\setset{Q}$ (as in line 3 in Alg.~\ref{alg:all}),
we combine them to form a set of variables
$\set{X}_{\setset{Q}}=
\cup_{\set{Q}\in\setset{Q}}\set{Q}$.
We further determine the neighbour set for each $\set{X}_\setset{Q}$, as 
$\set{N}_\setset{Q}=\{\node{N}: 
\exists \node{X}\in \set{X}_{\setset{Q}}~\text{s.t.}~ 
\node{N},\node{X}~\text{are adjacent in CI skeleton}~\graphp\}$.
It can be shown that variables that satisfy (i)-(iv) will be in the same set with $\set{X}_{\setset{Q}}\cup\set{N}_{\setset{Q}}$ (examples and proof in Appx.~\ref{appdneix: example phase1}).
Therefore 
our next step is to take each $\set{X}_{\setset{Q}}\cup\set{N}_{\setset{Q}}$  separately 
as input to our Phases 2 and 3, 
detailed as follows.

\begin{algorithm}[tb]
 \small
 % \setlength{\intextsep}{0pt}
 % \setlength{\intextsep}{1\baselineskip}
 % \setlength{\textfloatsep}{0pt}
 % \SetAlgoSkip{}
% \setlength{\floatsep}{-2.5mm}
  \caption{Phase2: FindCausalClusters}
  \label{alg:phase2}
  \SetAlgoLined
  \SetKwInOut{Input}{Input}
  \SetKwInOut{Output}{Output}
  \Input{Samples from $n$ observed variables $\set{X}_{\graph}$}
  \Output{Graph $\graphp$}
  \SetKwProg{Def}{def}{:}{}
  \Def{\text{FindCausalClusters}($\graphp$, $\set{X}_{\graph}$)}
  {
    Active set $\setset{S} \gets \setset{X}_\graph=\{\{\node{X_1}\},...,\{\node{X_n}\}\}$, $k \gets 1$ \tcp*{$\setset{S}$ is a set of covers}\
    \Repeat{$k$ is sufficiently large}
    {
      $\graphp$, $\setset{S}$, $\text{found}$ = Search($\graphp$, $\setset{S}$, $\set{X}_{\graph}$, $k$) \tcp*{Only when nothing can be found }\
      If $\text{found}=1$ then $k\gets 1$ else $k\gets k+1$ \tcp*{udner current $k$ do we add $k$ by 1}
    }
    \Return $\graphp$\;
  }
  \SetKwProg{Def}{def}{:}{}
  \Def{\text{Search}($\graphp$, $\setset{S}$, $\set{X}_{\graph}$, $k$)}
  {
    Rank deficiency set $\setsetset{D}=\{\}$ \tcp*{To store rank deficient combinations}\
    \For{$\setset{T} \in \text{PowerSet}(\setset{S})$ (from $\setset{S}$ to $\emptyset$)} 
    {
      $\setset{S'} \gets (\setset{S} \backslash \setset{T}) \cup (\cup_{\set{T} \in \setset{T}} \purechildrenp(\set{T}))$ \tcp*{\footnotesize Unfold $\setset{S}$ to get $\setset{S'}$}
      \For{$t=k$ to $0$} 
      {
        \Repeat{all $\setset{X}$  exhausted}
        {
          Draw a set of $t$ observed covers $\setset{X} \subset \mathcal{S'}\cap \setset{X}_\graph$\;
          \Repeat{all $\setset{C}$ exhausted}
          {
            Draw a set of covers $\setset{C} \subset \setset{S'}\backslash \setset{X}$, s.t.,
            $||\setset{C}||=k-t+1$
            and get $\setset{N} \gets \setset{S'}\backslash (\setset{X} \cup \setset{C})$\;
            
            \lIf{$\texttt{rank}(\Sigma_{\setset{C}\cup\setset{X},\setset{N}\cup\setset{X}}) = k$ and  NoCollider($\setset{C}$, $\setset{X}$, $\setset{N})$}
            {
              Add $\setset{C}$ to $\setsetset{D}$
            }
          }
          \If{$\setsetset{D}\neq \emptyset$}
          {
            %Merge elements of $\setsetset{D}$: merge $\setset{D}_i\setset{D}_j$ if $\setset{D}_i\cap\setset{D}_j\neq\emptyset, \setset{D}_i,\setset{D}_j\in\setsetset{D}$\;
            \For{$\setset{D}_i \in \setsetset{D}$}
            {
              \lIf {$|\parentsp(\setset{D}_i)\cup\mathbf{X}|=k$}
              {
                $\mathbf{P}\gets \parentsp(\setset{D}_i)\cup\mathbf{X}$
              }
              \lElse
              {
                Create new latent variables $\set{L}$, s.t., $\mathbf{P} \gets \mathbf{L}\cup\parentsp(\setset{D}_i)\cup\mathbf{X}$ 
                 and $|\set{L}|=k-|\parentsp(\setset{D}_i)\cup\mathbf{X}|$
              }
              Update $\graphp$ by taking elements of $\setset{D}_i$ as the pure children of $\set{P}$\; 
              \lIf {$\set{P}$ is atomic}
              {
                Update $\setset{S} \gets (\setset{S} \backslash \setset{D}_i) \cup \set{P}$
              }
            }
            \Return $\graphp$, $\setset{S}$, $\text{True}$ \tcp*{Return to search with $k=1$}
          }
          %               \STATE 
          %               if $\mathbf{C_i}\cap\mathbf{C_j}\neq \emptyset$; 
          %               \FOR{$\mathbf{C_i}\in\mathcal{C}$}
          %                 \STATE Create an cover $\mathbf{N}=\{\mathbf{L},\mathbf{X}\}$  with $|\mathbf{L}|=k-t$. If $t<k$, this cover is atomic;
          %                 \STATE 
        }
      }
    } 
    \Return $\graphp$, $\setset{S}$, $\text{False}$ \tcp*{Return to search with $k\gets k+1$}
  }
 \vspace{-1mm} 
\end{algorithm}
\setlength{\textfloatsep}{7pt}
%             \STATE  Draw a set of  test variables $\mathbf{X} \subset \mathcal{S'}\cap \mathcal{O}$, with $|\mathbf{X}|=t$, and $\mathcal{C}\gets \emptyset$;
%             \REPEAT 
%               \STATE  Draw a set of  test variables $\mathbf{C} \subset \mathcal{S'}\backslash \mathbf{X}$, with $|\mathbf{C}|=k-t+1$;
%               \STATE  Let $\mathbf{B}$  be $(\mathcal{S'}\backslash \mathbf{X}) \backslash \mathbf{C}$;
% \begin{algorithm}[tb]
%   \caption{CheckCollider}
%   \label{alg:checkcollider}
% \begin{algorithmic}
%   \STATE {\bfseries Input:} $\mathbf{C}$, $\mathbf{X}$, and $\mathbf{B}$.
%   \STATE {\bfseries Output:} True or False.
%   \STATE If $\exists \mathsf{C} \in\mathbf{C}$ such that $\mathsf{C}$ is a collider of $\mathbf{C}\backslash \mathsf{C}$ and $\mathbf{B}$, return True;
%   \STATE If $\exists \mathsf{B} \in\mathbf{B}$ such that $\mathsf{B}$ is a collider of $\mathbf{C}$ and $\mathbf{B}\backslash \mathsf{B}$. return True;
%   \STATE Return False;
% \end{algorithmic}
% \end{algorithm}

\begin{algorithm}[t]    
\small
  \caption{Function: NoCollider}
  \SetAlgoLined
  \SetKwInOut{Input}{Input}
  \SetKwInOut{Output}{Output}
  \Input{$\setset{C}$, $\setset{X}$, $\setset{N}$}
  \Output{Whether  there exists $\set{O}\in\setset{C}$ s.t., $\set{O}$ is a collider of $\setset{C}\backslash\{\set{O}\}$ and $\setset{N}$}
  \SetKwProg{Def}{def}{:}{}
    \label{alg:checkcollider}
  \Def{\text{NoCollider}($\setset{C}$, $\setset{X}$, $\setset{N}$)}
  {
    \For{$c=1$ to $|\setset{C}|-1$} 
    {
      Draw $\setset{C'} \subset \setset{C}$ s.t., $|\setset{C'}|=c$\;
      \Repeat{all $\setset{C'}$ exhausted}
      {
        \lIf{$\texttt{rank}(\Sigma_{\setset{C'}\cup\setset{X},\setset{N}\cup\setset{X}}) <  ||\setset{C'}\cup\setset{X}||$}
        {
          \Return False
        }
      } 
    }
  \Return True
  }
  \vspace{-1mm}
\end{algorithm}
\setlength{\textfloatsep}{7pt}

% \vspace{-1mm}
\subsection{Phase 2: Finding Causal Clusters}
% \vspace{-1mm}
In this section, we introduce the second phase of our algorithm, 
FindCausalClusters, summarized in Alg.~\ref{alg:phase2} in the appendix and 
illustrated in Fig.~\ref{fig:example2}.
The objective here is to design an effective search procedure to find combinations of 
sets of covers $\setset{C}$ and $\setset{X}$ (as defined in
Theorem~\ref{theorem:unique_rank_of_atomic_cover}), such that rank deficiency holds and 
all the conditions required in 
Theorem~\ref{theorem:unique_rank_of_atomic_cover} are satisfied.
To be specific,  
given Condition~\ref{cond:basic},
Theorem~\ref{theorem:unique_rank_of_atomic_cover} further requires that (i) when we are searching for $k$-clusters, all the $k'$-clusters, $k'<k$ have been found and recorded, and that (ii) there is no collider in $\graph$.
We next introduce the key designs for that end, accompanied by examples.

As for the requirement (i), we design our search procedure such that it starts with $k=1$. If a k-cluster is found, we update the graph and reset $k$ to $1$; otherwise, we increase $k$ by $1$ (as in line 5 Alg~\ref{alg:phase2}).
This ensures to a large extent that when searching for a $k$-cluster, all $k'$ clusters such that $k'<k$, can be found, in order to fulfill the requirement (i).

Regarding the requirement (ii), we need to ensure that all the rank deficiencies are from atomic covers, rather than from colliders. One could directly assume the absence of colliders in the underlying $\graph$,
but this would impose rather strong structural constraints and thus limit the applicability of a discovery method.
Therefore, we add a collider check function \textit{NoCollider} defined in Alg.~\ref{alg:checkcollider}, together
with the designed search procedure to allow incorporating colliders timely such that they will not induce unexpected rank deficiency anymore.
With these two designs,
we only rely on a much weaker condition about colliders, as in Condition~\ref{cond:vstructure},
under
which
it can be guaranteed that our search procedure will not be affected by the existence of colliders (proof in Appx.~\ref{proof:identifiability}).

\begin{condition}[Graphical condition  on  colliders for identifiability] 
In a latent graph $\graph$, if (i) there exists a set of variables $\set{C}$
such that every variable in $\set{C}$ is a collider of two atomic covers $\set{V_1}$, $\set{V_2}$, and denote by $\set{A}$ the minimal set of variables that d-separates $\set{V_1}$ from $\set{V_2}$, (ii) there is a latent variable in $\set{V_1}, \set{V_2}, \set{C}$ or $\set{A}$, then we must have 
$|\set{C}| + |\set{A}| \geq |\set{V_1}|+|\set{V_2}|$.
\label{cond:vstructure}
\end{condition}

% \vspace{-1.5mm}
We summarize the whole process of Phase 2 in Alg.~\ref{alg:phase2}
and provide an illustration in Fig.~\ref{fig:example2}, where the input variables are from 
 the left dash area in Fig.~\ref{fig:phase 1 example} (b).
 For a better understanding, please refer to Appx.~\ref{appendix: example phase2} for a detailed description of key steps and illustrative examples.

% \begin{algorithm}[tb]
%   \caption{Phase2: RefineCausalClusters}
%   \label{alg:phase2}
% \begin{algorithmic}
%   \STATE {\bfseries Input:} $\mathbf{C}$, $\mathbf{X}$, and $\mathbf{B}$.
%   \STATE {\bfseries Output:} True or False.
%   \STATE If $\exists \mathsf{C} \in\mathbf{C}$ such that $\mathsf{C}$ is a collider of $\mathbf{C}\backslash \mathsf{C}$ and $\mathbf{B}$, return True;
%   \STATE If $\exists \mathsf{B} \in\mathbf{B}$ such that $\mathsf{B}$ is a collider of $\mathbf{C}$ and $\mathbf{B}\backslash \mathsf{B}$. return True;
%   \STATE Return False;
% \end{algorithmic}
% \end{algorithm}

\begin{algorithm}[t]
\small
  \caption{Phase3: RefineCausalClusters}
  \label{alg:phase3}
  \SetKwInOut{Input}{Input}
  \SetKwInOut{Output}{Output}
  \Input{Graph $\graph'$}
  \Output{Refined graph $\graph'$}
  \SetKwProg{Def}{def}{:}{}
  \Def{RefineCausalCLusters($\graphp$, $\set{X}_\graph$)}
  {
    \Repeat{No more $\set{V}$ found and all $\set{V}$ exhausted}
    {
      Draw an atomic cover $\set{V}$ from $\graphp$\;
      Delete $\set{V}$, neighbours of $\set{V}$ that are latent, and all relating edges from $\graphp$ to get $\hat{\graph}$\;
      $\graphp =\text{FindCausalClusters}(\hat{\graph},\set{X}_\graph)$\; 
    }
    \Return{$\graphp$}
  }
\end{algorithm}

% \vspace{-1mm}
\subsection{Phase 3: Refining Causal Clusters}
% \vspace{-1mm}
%\Biwei{If atomic cover does not contain latent, shall we refine?}not sure...
In Phase 2, we strive to fulfill all required conditions such that we can correctly identify causal clusters and related structures.
%including locating latent variables and the causal structure.
%
However,
 there still exist some rare cases where our search cannot ensure the requirement (i) in Theorem~\ref{theorem:unique_rank_of_atomic_cover}. %I.e., that all $k'$ clusters  have been found when searching for $k$ clusters with $k'<k$. 
In this situation, Phase 2 might produce a big cluster in the resulting $\graphp$ that should be split into smaller ones (see examples in  
Appx.~\ref{appendix: example phase3}).
Fortunately, the incorrect cluster will not do harm to the identification of other substructures in the graph, and thus we can employ Phase 3 to characterize and refine the incorrect ones,
by making use of the following Theorem~\ref{theorem:phase3} (proof in Appx.~\ref{proof:phase3}).
%

%\begin{theorem} [Necessary condition for fake clusters]
%Phase 1 may output $\graphp$ with fake clusters only when the $k'<k$ condition is violated and there exists a bond set in $\graph$,
% where a bond set is defined as xxx.
%\label{theorem:fake cluster}
%\end{theorem}

\begin{theorem}[Refining Clusters]
  Denote by $\graphp$ the output from FindCausalClusters and by $\graph$ the true graph. For an atomic cover $\set{V}$ in $\graphp$, if $\set{V}$  is not a correct cluster in $\graph$ but consists of some smaller clusters, then $\set{V}$ can be refined into correct ones by $\text{FindCausalClusters}(\hat{\graph},\set{X})$, where $\hat{\graph}$ is got by deleting $\set{V}$, all neighbors of $\set{V}$ that are latent, and all relating edges of them, from $\graphp$.
  % \Biwei{check whether "$\measuredp(Pa_{\graphp}(\setL'))$ is not a bond set in the true graph $\graph$" always holds. It should hold.}
  \label{theorem:phase3}
  \end{theorem}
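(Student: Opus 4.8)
The plan is to show that the refinement step of Theorem~\ref{theorem:phase3} restores, for each genuine sub-cluster hidden inside $\set{V}$, exactly the hypotheses of Theorem~\ref{theorem:unique_rank_of_atomic_cover}, so that rerunning FindCausalClusters on $\hat{\graph}$ recovers them correctly. First I would characterize the failure that produced $\set{V}$. Since $\set{V}$ is atomic in $\graphp$ but not a correct cluster in $\graph$, and by hypothesis $\set{V}$ consists of smaller clusters, there exist genuine atomic covers $\set{V}_1,\dots,\set{V}_p$ in $\graph$, each of effective cardinality strictly smaller than $||\set{V}||$, whose latent content Phase~2 collapsed into $\set{V}$. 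The only way Alg.~\ref{alg:phase2} can merge them rather than discover them separately is a violation of requirement~(i) of Theorem~\ref{theorem:unique_rank_of_atomic_cover}: at the cardinality level where $\set{V}$ was declared, some $\set{V}_i$ (or a cluster feeding into it) had not yet been recorded. Using the trek characterization of rank (Theorem~\ref{theorem:rank and t}) I would make precise that this premature rank deficiency must be witnessed by treks passing through a \emph{latent neighbor} of $\set{V}$, so that the spurious rank-deficient witness straddles two or more true clusters.

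Second, I would verify that $\hat{\graph}$ retains all information needed to identify each $\set{V}_i$ separately. The pruning deletes only $\set{V}$ and its latent neighbors, not the observed descendants that serve, via Theorem~\ref{theorem:measurement as surrogate}, as surrogates for the pure children of the $\set{V}_i$, nor the observed neighbors required by condition~(ii) of Definition~\ref{definition:ac}. I would check that each $\set{V}_i$ still possesses, within $\hat{\graph}$, a set of pure-children surrogates and neighbors exhibiting the strict rank deficiency $\text{rank}(\Sigma_{\setset{C}_i\cup\setset{X}_i,\,\cdot})=||\set{V}_i||$ guaranteed by Theorem~\ref{theorem: rank property of atomic cover}, since none of the vertices in these witness sets is among the deleted ones.

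Third, and this is the crux, I would argue that in $\hat{\graph}$ the obstruction to requirement~(i) is gone. Removing $\set{V}$ together with its latent neighbors eliminates precisely the treks that enabled the premature high-cardinality rank deficiency identified in the first step; hence, when FindCausalClusters reruns from $k=1$, each $\set{V}_i$ becomes the smallest rank-deficient witness at its own cardinality, and by the reset-to-$k=1$ logic of Alg.~\ref{alg:phase2} all strictly smaller clusters are recorded before it. Since Condition~\ref{cond:basic} holds for $\graph$ and deleting vertices creates no new collider configurations violating Condition~\ref{cond:vstructure}, the NoCollider check passes, and all hypotheses of Theorem~\ref{theorem:unique_rank_of_atomic_cover} are now genuinely met for each $\set{V}_i$. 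By the uniqueness half of that theorem, FindCausalClusters on $\hat{\graph}$ outputs exactly $\set{V}_1,\dots,\set{V}_p$, i.e. the correct refinement of $\set{V}$.

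The hard part will be the third step: rigorously proving that deleting $\set{V}$ and its latent neighbors removes \emph{all and only} the treks responsible for the spurious merge, while leaving intact the rank-deficiency witnesses of the genuine sub-clusters. This demands a careful trek-based accounting showing that every trek between the relevant observed surrogate sets either survives the deletion with unchanged t-separation structure, or else passed through a deleted latent neighbor and was exactly the source of the premature deficiency. A secondary subtlety is ruling out that a \emph{new} incorrect big cluster appears in $\hat{\graph}$ in place of $\set{V}$; I would dispatch this by noting that, once the obstruction is removed, each $\set{V}_i$ satisfies requirement~(i) honestly, so the uniqueness guarantee of Theorem~\ref{theorem:unique_rank_of_atomic_cover} forbids any alternative merged rank-deficient witness at a higher cardinality.
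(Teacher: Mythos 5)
Your plan follows essentially the same route as the paper's proof: both diagnose the fake cover $\set{V}$ as the product of a violation of requirement (i) of Theorem~\ref{theorem:unique_rank_of_atomic_cover} (some smaller cluster not yet recorded when the oversized rank-deficient witness was declared), both argue that the damage is confined to $\set{V}$ and its latent neighbours, and both conclude that after deleting those and rerunning FindCausalClusters the smaller clusters are found first, so the uniqueness half of Theorem~\ref{theorem:unique_rank_of_atomic_cover} now applies and yields the correct refinement. The one substantive divergence is exactly at the step you flag as the hard part. The paper does not attempt a direct trek-by-trek accounting; it instead invokes Lemma~11 of \citet{huang2022latent}, which says a fake cover with observed descendants $\set{X}$ corresponds to a \emph{bond set} in $\graph$ that d-separates the remaining observed variables $\set{X}_\graph\backslash\set{X}$ into two groups $\set{A}$ and $\set{B}$; the search therefore creates two dummy latent covers interfacing with $\set{A}$ and $\set{B}$ respectively, so no rank test for the other structures is corrupted, and deleting $\set{V}$ together with its latent neighbours removes precisely these dummy covers. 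This citation is what discharges both your ``all and only the treks'' concern and the correctness of the rest of $\graphp$ (which your step two does not explicitly address: you check that $\hat{\graph}$ retains the surrogates for the $\set{V}_i$, but not that the \emph{other} clusters already recorded in $\graphp$ and inherited by $\hat{\graph}$ are themselves correct). If you want a self-contained argument rather than the citation, you would indeed need to carry out the trek accounting you sketch; as written, your proposal leaves open the same step the paper closes externally, so you should either import the bond-set lemma or complete that accounting before the proof is finished.
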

   % \vspace{-.5mm} 

% \vspace{-1.5mm}
 To be specific, we search through all the atomic covers $\set{V}$ in $\graphp$,
  the output of Phase 2, and then perform $\text{FindCausalClusters}(\hat{\graph}, \set{X})$, 
where $\hat{\graph}$ is defined as in Theorem~\ref{theorem:phase3}.
%is got by deleting $\set{V}$, all neighbours of $\set{V}$ that are latent,
 %and all relating edges of them, from $\graphp$.
 %
With this procedure, we can make sure that all the found clusters in $\graphp$ are correct as in $\graph$. 
We summarized Phase 3 in Alg.~\ref{alg:phase3}
and provide illustrative examples in  
Appx.~\ref{appendix: example phase3}. 

%\Biwei{consider the more complicated example for phase 2}
%Examples of the detailed process of Phase 2 can be found in  Appx.~\ref{appendix for phase2}.

 %Starting with the output $\graphp$ from findCausalClusters, 
 %we proceed in a breadth-first search from a root cover. 
 % Theorem \ref{The:remove_bond} guarantees that for each child of the cover we are considering,
 %  we will form new clusters that are not over bond sets. 
 %  For each cover $\set{V}$, if it has only one child, we will recursively consider the children of its child instead,
 %   since if $\set{V}$ is over a bond set then its single child will also be.
 %    We add the first set of children into the search set $\set{V}$. We construct the search set 
 %    and remove the covers we are refining. 
 %    Finally, we conduct the search to form new clusters.
 %     The algorithm ends after refining every latent cover in $\graphp$.
 %      With this refining procedure, we will derive the correct latent covers for the graph in Fig.~\ref{fig:example3 for phase2}.
 
 % \vspace{-1mm}
\section{Identifiability Theory of Causal Structure}
\label{sec:theory}
% \vspace{-1mm}
% \subsection{Markov Equivalence Class and Identifiability}

Here we show the identifiability of the proposed RLCD algorithm. Specifically,
%by leveraging the rank constraints solely from observed variables, 
RLCD asymptotically produces the correct Markov equivalence class of the causal graph over both observed and latent variables under certain graphical conditions, up to the \textit{minimal-graph operator} $\mathcal{O}_{\text{min}}(\cdot)$ and \textit{skeleton operator} $\mathcal{O}_{\text{s}}(\cdot)$ (defined in Appx.~\ref{appendix: graph operator} following \citet{huang2022latent}). 
$\mathcal{O}_{\text{min}}(\cdot)$ is to absorb redundant latent variables under certain conditions and $\mathcal{O}_{\text{s}}(\cdot)$ is to introduce edges involving latent variables if certain conditions hold,
and we note that the observational rank information is invariant to these two graph operators (examples in Appx.~\ref{examples:operator}). We summarize the identifiability result in
%Therefore, we will show that given $\set{X}_\graph$ as input of Alg.~\ref{alg:all}, the resulting $\graphp$ belongs to the same Markov equivalence class as $\mathcal{O}_{\text{min}} (\mathcal{O}_{s}(\graph))$. This is  in 
Theorem~\ref{theorem:identifiability}, along with  Corollary~\ref{cor:pcandrank} (proof in Appx.~\ref{proof:identifiability}).

\begin{theorem}[Identifiability of the Proposed Alg.~\ref{alg:all}]
Suppose $\graph$ is a DAG associated with a Linear Latent Causal Model
that satisfies Condition~\ref{cond:basic}
and Condition~\ref{cond:vstructure}.
Algorithm \ref{alg:all} can asymptotically identify the Markov equivalence class of $\mathcal{O}_{\text{min}}(\mathcal{O}_s(\graph))$.
\label{theorem:identifiability}
\end{theorem}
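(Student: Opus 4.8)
The plan is to establish asymptotic correctness phase by phase, leaning on the component theorems already proved, and then to argue that the assembled output is the Markov equivalence class of $\mathcal{O}_{\text{min}}(\mathcal{O}_s(\graph))$ rather than of $\graph$ itself. First I would invoke Lemma~\ref{lemma: rank and d-sep}: since rank tests asymptotically recover exactly the d-separations among observed variables, Phase~1 (Stage~1 of PC with rank tests) returns the correct CI skeleton $\graphp$ over $\set{X}_\graph$ in the large-sample limit. I would then verify that the clique-grouping step is lossless, i.e. any collection of observed variables capable of triggering conditions (i)--(iv) of Theorem~\ref{theorem: global for latent} must lie inside some $\set{X}_\setset{Q}\cup\set{N}_\setset{Q}$ (the argument deferred to the appendix), so that running Phases~2 and 3 separately on each group cannot miss any latent substructure.

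Next I would prove the correctness of Phase~2 by induction on the cardinality $k$ of the atomic covers being discovered, under Condition~\ref{cond:basic}. The search starts at $k=1$ and resets to $k=1$ whenever a cover is found, which maintains precondition~(i) of Theorem~\ref{theorem:unique_rank_of_atomic_cover} — that all atomic covers of cardinality $k'<k$ have already been recorded — whenever the search reaches cardinality $k$. For precondition~(ii), the danger is that a rank-deficient combination arises not from a genuine atomic cover but from a collider; here I would use the \emph{NoCollider} check (Alg.~\ref{alg:checkcollider}) together with Condition~\ref{cond:vstructure} to show that any collider-induced deficiency is either detected and excluded, or that the collider has already been incorporated and so no longer produces a spurious deficiency. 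With both preconditions met, Theorem~\ref{theorem:unique_rank_of_atomic_cover} guarantees that every rank-deficient $(\setset{C},\setset{X})$ surviving the checks corresponds to a genuine atomic cover whose pure children are exactly the elements of $\setset{C}$, while Theorem~\ref{theorem:necessary and sufficient under cond} guarantees that the search detects every latent variable the graph requires.

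The only gap left by Phase~2 is that the ordering heuristic can occasionally merge several smaller covers into one oversized cover; I would close this with Theorem~\ref{theorem:phase3}, which shows that deleting such a cover together with its latent neighbours and re-running FindCausalClusters splits it into the correct atomic covers, and that this local repair does not disturb correctly-identified structure elsewhere. After Phases~1--3 every atomic cover and every pure-children relation of $\graph$ (up to the two operators) has been recovered; orienting the remaining edges from the v-structures entailed by the recovered skeleton then yields a Markov equivalence class. Finally, because the whole procedure consults only observational rank, and rank is invariant under both $\mathcal{O}_{\text{min}}$ and $\mathcal{O}_s$, the graph the algorithm pins down is exactly $\mathcal{O}_{\text{min}}(\mathcal{O}_s(\graph))$, giving the stated identifiability up to its Markov equivalence class.

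I expect the main obstacle to be the induction underlying Phase~2, specifically showing that the two preconditions of Theorem~\ref{theorem:unique_rank_of_atomic_cover} really are preserved throughout the search. The subtlety lies in the interaction between the cardinality ordering and the collider handling: one must rule out that a collider structure satisfying Condition~\ref{cond:vstructure} mimics the rank signature of an as-yet-undiscovered atomic cover at some intermediate $k$, before the relevant smaller covers (or the collider itself) have been processed. Making this simultaneous induction — over $k$ and over the bottom-up order of covers by descendant depth — fully rigorous, and confirming that the \emph{NoCollider} test is both sound and complete under Condition~\ref{cond:vstructure}, is where the real work lies.
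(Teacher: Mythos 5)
Your proposal follows essentially the same route as the paper's proof: Phase~1 correctness via Lemma~\ref{lemma: rank and d-sep}, Phase~2 correctness by maintaining the two preconditions of Theorem~\ref{theorem:unique_rank_of_atomic_cover} with collider-induced rank deficiencies neutralized by the \emph{NoCollider} check together with Condition~\ref{cond:vstructure}, Phase~3 repairs via Theorem~\ref{theorem:phase3}, and the final identification up to $\mathcal{O}_{\text{min}}(\mathcal{O}_s(\cdot))$ from the rank-invariance of those operators. The one ingredient the paper states explicitly that you leave implicit is the extension of Theorem~\ref{theorem:measurement as surrogate} to atomic covers (pure children plus the observed part of a cover serving as surrogates for rank computations), which is what licenses evaluating the rank conditions in later iterations when $\setset{C}$ contains previously discovered covers whose elements are not all observed.
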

%\Biwei{in the proof, emphasize why our specific search procedure can handle the collider case, even Theorem 6 does not}

\begin{corollary}
    \label{cor:pcandrank}
    Assume linear causal models. Asymptotically, when no latent variable exists, Alg. \ref{alg:all}'s output is the same as that of PC; as another special case, when there is no edge between observed variables, the output of Alg. \ref{alg:all} is the same as that of Hier. rank \citep{huang2022latent}.
\end{corollary}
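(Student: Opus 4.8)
The plan is to handle the two special cases separately, in each case reducing the claim to the identifiability machinery already in place rather than re-deriving it. The unifying observation I would use is that RLCD's stated target in Theorem~\ref{theorem:identifiability} is the Markov equivalence class of $\mathcal{O}_{\text{min}}(\mathcal{O}_s(\graph))$, and that both operators act trivially when the latent structure degenerates (no latents) or reduce to exactly the operators of \citet{huang2022latent} (the hierarchical regime). So in each special case the comparison algorithm is shown to recover the very same object, and equality of outputs follows.

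For the first claim (no latent variables, output $=$ PC), I would argue in three steps. First, Phase~1 returns exactly the PC skeleton: Alg.~\ref{alg:phase1} is literally Stage~1 of PC with CI tests replaced by rank tests, and by Lemma~\ref{lemma: rank and d-sep} the rank tests asymptotically recover the same d-separations over $\set{X}_\graph$ that CI tests do, so the two skeletons coincide. Second, Phases~2 and~3 introduce no latent variables: with $\set{L}_\graph=\emptyset$, Condition~\ref{cond:basic} holds vacuously, so Theorem~\ref{theorem:necessary and sufficient under cond} applies and tells us a latent is posited only if conditions (i)--(v) of Theorem~\ref{theorem: global for latent} hold; these can never be met when no latent exists, hence the rank-deficiency search in Alg.~\ref{alg:phase2} creates no new cover. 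Third, the v-structure orientation (and subsequent propagation) in the final line of Alg.~\ref{alg:all} coincides with PC's orientation phase, and since $\mathcal{O}_{\text{min}}$ and $\mathcal{O}_s$ have nothing to modify, the returned equivalence class equals PC's CPDAG.

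For the second claim (no edges among observed variables, output $=$ Hier. rank), the cleanest route is to note that this regime is precisely the measurement/hierarchical setting of \citet{huang2022latent}, and that RLCD borrows $\mathcal{O}_{\text{min}}$ and $\mathcal{O}_s$ directly from that work. I would first verify that a graph with no observed--observed edges lying in Hier. rank's applicable class also satisfies Condition~\ref{cond:basic} and Condition~\ref{cond:vstructure}, so Theorem~\ref{theorem:identifiability} guarantees RLCD returns the equivalence class of $\mathcal{O}_{\text{min}}(\mathcal{O}_s(\graph))$. Since Hier. rank is asymptotically sound for the same target under the same pair of operators, the two outputs must coincide. I would make this concrete by observing that with no observed--observed edges each observed variable enters the search only as a leaf/pure child, so the general atomic-cover search of Phase~2 degenerates into the bottom-up rank-deficiency clustering of latent covers, matching Hier. rank step for step.

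I expect the second claim to be the main obstacle. The difficulty is that RLCD's search is phrased in far more general language (arbitrary $\setset{C},\setset{X}$, neighbour sets, the NoCollider routine) than Hier. rank's clustering, so instead of a line-by-line simulation I would argue at the level of identifiability targets: both methods are asymptotically correct for the same object up to the same operators, hence agree. The delicate points I anticipate are (a) checking that the graphical assumptions of the two methods are genuinely compatible in this regime, and (b) confirming that the way RLCD merges observationally indistinguishable latents into a single atomic cover coincides with the way Hier. rank represents a latent cluster, so that the two ``up to operator'' representatives are literally the same graph rather than merely equivalent ones.
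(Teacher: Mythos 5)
Your proposal is correct and follows essentially the same route as the paper, which derives the corollary directly from Theorem~\ref{theorem:identifiability} by noting that when no latent exists the Markov equivalence class of $\mathcal{O}_{\text{min}}(\mathcal{O}_s(\graph))$ coincides with that of $\graph$ (hence with PC's output), and that in the hierarchical regime both methods target the same object up to the same two operators. Your write-up is in fact more detailed than the paper's two-sentence argument, particularly in justifying that Phases~2 and~3 posit no latents via Theorem~\ref{theorem:necessary and sufficient under cond}, but the underlying reduction is identical.
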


% \begin{proposition}
    %We further find that Algorithm \ref{alg:all} can be extended by adding a complete checking of colliders between atomic covers (refer to Appx.~\ref{appendix:checkcollider}). This helps our algorithm achieve better identifiability that relies on Condition~\ref{cond:basic} only. That checking function is computationally inefficient, however.
% \end{proposition}

\begin{center}
\begin{table}[tb]
% \vspace{-5mm}
  \caption{\small{F1 scores (mean (standard deviation)) of compared methods on different types of latent graphs.} }
   \vspace{-2mm}
   \label{tab:f1 v}
  \footnotesize
  \center 
\begin{center}
\begin{tabular}{|c|c|c|c|c|c|c|c|}
  \hline  \multicolumn{2}{|c|}{} &\multicolumn{6}{|c|}{\textbf{F1 score for skeleton among all variables $\set{V}_\graph$ (both $\set{X}_\graph$ and $\set{L}_\graph$)}}\\
  \hline 
  \multicolumn{2}{|c|}{Algorithm}  & Ours & Hier. rank  & PC & FCI & GIN &RCD\\
  \hline 
    & 2k 
    & \textbf{0.84}{\color{white}-}(0.11) & 0.58{\color{white}-}(0.01) & 0.36{\color{white}-}(0.01) & 0.37{\color{white}-}(0.01) &   0.37{\color{white}-}(0.03) & 0.24{\color{white}-}(0.04)\\
  \cline{2-8}
  {\emph{Latent+tree}}
    &5k 
    & \textbf{0.92}{\color{white}-}(0.05) & 0.60{\color{white}-}(0.01) & 0.37{\color{white}-}(0.00) & 0.37{\color{white}-}(0.01) &   0.41{\color{white}-}(0.03) & 0.33{\color{white}-}(0.00)\\
  \cline{2-8}
    &10k
    & \textbf{0.98}{\color{white}-}(0.02) &0.60{\color{white}-}(0.01)  &0.37{\color{white}-}(0.00)  & 0.38{\color{white}-}(0.02) &    0.41{\color{white}-}(0.03)& 0.33{\color{white}-}(0.01)\\
  \hline 
    %case 2
    & 2k 
    & \textbf{0.81}{\color{white}-}(0.12)  & 0.52{\color{white}-}(0.05) &0.44{\color{white}-}(0.01)  &  0.38{\color{white}-}(0.02) & 0.40{\color{white}-}(0.02)&0.26{\color{white}-}(0.03) \\
    \cline{2-8}
    {\emph{Latent+measm}}
    &5k 
    & \textbf{0.88}{\color{white}-}(0.11) & 0.52{\color{white}-}(0.05)  & 0.49{\color{white}-}(0.01) &0.40{\color{white}-}(0.01)  &  0.46{\color{white}-}(0.03)&0.29{\color{white}-}(0.01) \\
  \cline{2-8}
    &10k
    & \textbf{0.91}{\color{white}-}(0.09) & 0.53{\color{white}-}(0.05)  & 0.49{\color{white}-}(0.01) & 0.40{\color{white}-}(0.01) &  0.47{\color{white}-}(0.05)& 0.34{\color{white}-}(0.04) \\
  \hline 
  %case 3
    & 2k 
    & \textbf{0.66}{\color{white}-}(0.01) & 0.44{\color{white}-}(0.02) & 0.31{\color{white}-}(0.01) & 0.25{\color{white}-}(0.02) &   0.30{\color{white}-}(0.04)&0.32{\color{white}-}(0.03)  \\
  \cline{2-8}
  {\emph{Latent general}} &5k 
  & \textbf{0.72}{\color{white}-}(0.03) & 0.45{\color{white}-}(0.03) & 0.32{\color{white}-}(0.01) & 0.28{\color{white}-}(0.02) &   0.38{\color{white}-}(0.04)&0.34{\color{white}-}(0.02) \\
  \cline{2-8}&10k 
  & \textbf{0.80}{\color{white}-}(0.05)& 0.45{\color{white}-}(0.04) &0.32{\color{white}-}(0.01)  & 0.28{\color{white}-}(0.02) &  0.35{\color{white}-}(0.01)&0.36{\color{white}-}(0.01) \\
  \hline 
  %\caption{F1 score for $\set{V}$.}
\end{tabular}
\end{center}
% \vspace{-1mm}
\end{table}
\end{center}

\begin{center}
\begin{table}[h]
\vspace{-2mm}
  \caption{\small F1 scores (mean (standard deviation)) of compared methods on different types of latent graphs.
  F1 score is calculated only for edges between observed variables $\set{X}_\graph$ in this graph. Hier.rank and GIN assume that observed variables are not directly adjacent so their performance is reported as -.}
   \label{tab:f1 x}
  \footnotesize
  \center 
  \begin{center}
  \begin{tabular}{|c|c|c|c|c|c|c|c|}
    \hline  \multicolumn{2}{|c|}{} &\multicolumn{6}{|c|}{\textbf{F1 score for skeleton among $\set{X}_\graph$}}\\
    \hline 
     \multicolumn{2}{|c|}{Algorithm} & Ours & Hier. rank  & PC & FCI & GIN & RCD \\
    \hline
    & 2k 
    & \textbf{0.79} (0.16) & - &  0.46{\color{white}-}(0.02)& 0.00{\color{white}-}(0.00) & - & 0.30{\color{white}-}(0.03)\\
    \cline{2-8}
    {\emph{Latent+tree}}
    &5k 
    & \textbf{0.86} (0.10) & - & 0.44{\color{white}-}(0.00) & 0.03{\color{white}-}(0.04) & - & 0.38{\color{white}-}(0.01)\\
    \cline{2-8}
    &10k
    &  \textbf{0.97} (0.04) &- & 0.44{\color{white}-}(0.00) & 0.18{\color{white}-}(0.07) &  -& 0.39{\color{white}-}(0.02)\\
    \hline 
    %case 2
    & 2k 
    & \textbf{0.84} (0.11) &- &0.50{\color{white}-}(0.02)  & 0.00{\color{white}-}(0.00) & -& 0.30{\color{white}-}(0.02)\\
    \cline{2-8}
    {\emph{Latent+measm}}
    &5k 
    & \textbf{0.93} (0.08) &- & 0.49{\color{white}-}(0.01) & 0.05{\color{white}-}(0.03) & - &0.32{\color{white}-}(0.02) \\
    \cline{2-8}
    &10k
    & \textbf{0.95} (0.05)  & - & 0.48{\color{white}-}(0.02) &0.03{\color{white}-}(0.05)  & - &0.42{\color{white}-}(0.09)\\
    \hline 
    %case 3
    & 2k 
    & \textbf{0.68} (0.02) &-  & 0.44{\color{white}-}(0.01) &0.27{\color{white}-}(0.09)  &  -& 0.39{\color{white}-}(0.06)\\
    \cline{2-8}
    {\emph{Latent general }} &5k 
    & \textbf{0.71} (0.03) &-  &  0.45{\color{white}-}(0.01)& 0.31{\color{white}-}(0.10) &-&0.44{\color{white}-}(0.05)\\
    \cline{2-8}&10k 
    & \textbf{0.78} (0.06)  & - & 0.45{\color{white}-}(0.01) &0.32{\color{white}-}(0.05)  & - & 0.44{\color{white}-}(0.01)\\
    \hline 
  %\caption{F1 score for $\set{X}$.}
  \end{tabular}
  \end{center}
  \hfill
\end{table}
\end{center}

  \begin{figure}[t]
  % \vspace{-0mm}
%     \setlength{\abovecaptionskip}{0mm}
% \setlength{\belowcaptionskip}{-1mm}
  \centering 
  \includegraphics[width=1\linewidth]{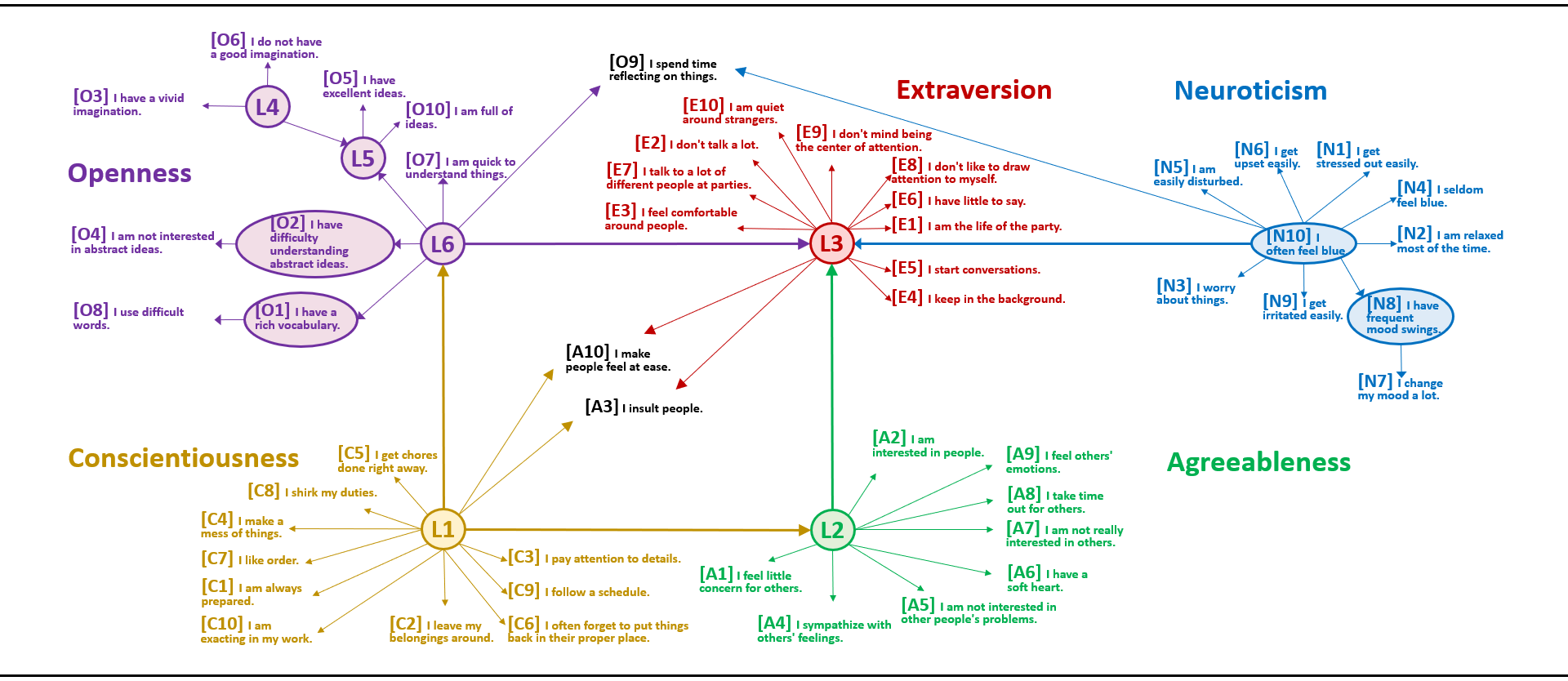}
  \caption{\small Identified latent graph on Big Five personality dataset.}
  \label{fig:big5}
  % \vspace{-0mm}
\end{figure}

% \vspace{-2mm}
\section{Experiments}
\label{sec:exp}
% \vspace{-2mm}
We validate our method using both synthetic and real-life data.
In finite sample cases, 
we employ canonical correlations \citep{ranktest} 
to estimate the rank (detailed in Appx.~\ref{ranktest}).

% \vspace{-2mm}
\paragraph{Synthetic Data}
%We first validate and compare our approach using  synthetic data.
%
Specifically, we considered different types of latent graphs:
%(i) all variables are observed (graphs given in Appx.~\ref{sec apendix: graphs for observed only}),
%
(i) latent tree models
 (Appx.~\ref{sec apendix: graphs for latent tree}), (ii) latent measurement models (Appx.~\ref{sec apendix: graphs for latent mm}),
 and 
 (iii) general latent models (Fig.~\ref{fig:example1} and Appx. \ref{sec apendix: graphs for latent general}).
 The causal strength is uniformly from $[-10, 10]$, 
 and the noise is either Gaussian or uniform (which is for RCD and GIN).
% The causal strength was generated uniformly from $[-10, 10]$, 
 %and the noise term either follows a gaussian (with noise variance uniformly in $[1,5]$) or a uniform distribution from $-1$ to $1$ 
  %(as RCD and GIN rely on non-Gaussianity).
%
%
We propose to use the following two metrics for comparisons:
  \textbf{(1)} F1 score of skeleton among observed variables $\set{X}_\graph$,
  \textbf{(2)} F1 score of skeleton among all variables $\set{V}_\graph$.
  %, where the  edges among all variables $\set{V}_\graph$ (both latent and observed ones) are considered.
  %We consider The specific definition of F1 and  can be found in Appx~\ref{}.
  % 
  We consider combinations and permutations of latent variables during evaluation (detailed in Appx.~\ref{appendix:comb and perm} together with specific definition of F1 score).
  %we will first pad each result by adding latents that have no edge to any other variables to match the number of latents in the ground truth graph.
  %
  %On the other hand, if the number of latents are more than that of the ground truth, all different combinations will be tried.
  %
  %Last, we will try all different permutations of latent variables take the best F1 score as the final result among 
   %all different combinations and permutations.

  We compared with many competitive baselines, including \textbf{(i)} Hier. rank \citep{huang2022latent},
  \textbf{(ii)} PC \citep{spirtes2000causation}, 
  \textbf{(iii)} FCI \citep{spirtes2013causal},
  \textbf{(iv)} RCD \citep{maeda2020rcd},
   and \textbf{(v)} GIN \citep{xie2020generalized}.
%
%For Hier. rank and GIN  we only report  the F1 score among $\set{V}_\graph$ (their first metric is always 0).
The results are reported in Tab.~\ref{tab:f1 v} and Tab.~\ref{tab:f1 x}, where we run experiments with different random seeds and sample sizes $2k$, $5k$, and $10k$.
 Our proposed RLCD gives the best results on all types of graphs,
 in terms of both metrics, with a clear margin. This result serves as strong empirical support for the identifiability of latent linear causal graphs by our proposed method.

 \paragraph{Real-World Data}
To further verify our proposed method, we employed a real-world Big Five Personality dataset \url{https://openpsychometrics.org/}. It consists of 50 personality indicators and close to 20,000 data points. Each Big Five personality dimension, namely, Openness, Conscientiousness, Extraversion, Agreeableness, and Neuroticism (O-C-E-A-N), are measured with their own 10 indicators. Data is processed to have zero mean and unit variance. We employ the proposed method to determine the Markov equivalence class and employ GIN~\citep{xie2020generalized} to further decide other directions between latent variables (more details in Appendix \ref{sec appendix: info of big5}).

We analyzed the data using RLCD, producing a causal graph
in Fig~\ref{fig:big5} that exhibits interesting psychological properties. 
First,
 most of the variables 
related to the same Big Five dimension are in the same cluster.
Strikingly, our result reconciles two currently deemed distinct theories of personality: latent personality dimensions and network theory \citep{cramer2012dimensions,wright2017factor}. We see groups of closely connected items that are predictable from latent dimensions (L1, L2, L3),  interactions among latents (L1$\rightarrow$L6$\rightarrow$L3, L1$\rightarrow$L2$\rightarrow$L3), and latents influencing the same indicators (L1, L3).
%We can also infer that N10 is very informative about the N dimension, and 
We also observe plausible causal links between indicators (e.g., O2$\rightarrow$O4 and O1$\rightarrow$O8). We argue that our findings are consistent with pertinent personality literature, but more importantly, offer new, plausible explanations as to the nature of human personality (detailed analysis in Appx.~\ref{appendix:bigfive analysis}).

%\vspace{-3mm}
%\section{Related Work}
%\vspace{-3mm}
%Please refer to our Appx.~\ref{sec appendix: related work} for a detailed discussion.

% \vspace{-3.5mm}
\section{Discussion and Conclusion}
% \vspace{-3mm}
We developed a versatile causal discovery approach that allows latent variables to be causally-related in a flexible way, by making use of rank information.
We showed the proposed method can asymptotically identify the Markov equivalence class of the underlying graph under mild conditions.
One limitation of our method is that it cannot directly handle cyclic graphs, and as future work, we will extend 
this line of thought of using rank information to cyclic graphs. Another line of future research is to extend the idea to handle nonlinear causal relations.

\bibliography{iclr2024_conference}
\bibliographystyle{iclr2024_conference}

%\appendix
%\section{Appendix}
%You may include other additional sections here.
\newpage
\renewcommand\appendixpagename{\Large Appendix}
\begin{appendices}
Organization of Appendices:
\begin{itemize}
    \item Section A: Definitions, Examples, and Proofs
      \begin{itemize}
              \item Section~\ref{appendix: covariance}: Subcovariance Matrix and Definition of Descendants.
                \item 
          Section~\ref{example: trek}: Treks, T-separations, and Examples.
                    \item 
          Section~\ref{appendix: ci skeleton}: Definition of CI Skeleton
                    \item 
          Section~\ref{appendix: graph operator}: Definition of Rank-invariant Graph Operator.
                    \item 
          Section~\ref{appendix: maxflowmincut lemma}: Max-Flow-Min-Cut Lemma~\ref{lemma: maxflowmincut} for Treks
                    \item 
          Section~\ref{appendix: nonadjacency example}: Example for Theorem~\ref{theorem: condition for not adjacent}.
                    \item 
          Section~\ref{appendix: example surrogate}: Example for Theorem~\ref{theorem:measurement as surrogate}.
                    \item 
          Section~\ref{proof: condition for not adjacent}: Proof of Theorem~\ref{theorem: condition for not adjacent}.
          \item 
          Section~\ref{proof:theorem: global for latent}: Proof of Theorem~\ref{theorem: global for latent}.
          \item Section~\ref{proof:measurement as surrogate}: Proof of Theorem~\ref{theorem:measurement as surrogate}.
          \item Section~\ref{proof: rank property of atomic cover}: Proof of Theorem~\ref{theorem: rank property of atomic cover}.
          \item Section~\ref{example: theorem unique rank}: Example for Theorem~\ref{theorem:unique_rank_of_atomic_cover}.
          \item Section~\ref{proof:unique_rank_of_atomic_cover}: Proof of Theorem~\ref{theorem:unique_rank_of_atomic_cover}.
          \item Section~\ref{proof:necessary and sufficient under cond}: Proof of Theorem~\ref{theorem:necessary and sufficient under cond}.
          \item Section~\ref{proof: rank and d-sep}: Proof of Lemma~\ref{lemma: rank and d-sep}.
          \item Section~\ref{proof:phase3}:Proof of Theorem~\ref{theorem:phase3}.
          \item Section~\ref{proof:identifiability}: Proof  of Theorem~\ref{theorem:identifiability}.
          \item Section~\ref{ranktest}: Description of the rank test that we employ.
      \end{itemize}   
    \item Section B: Graphs, Illustrations of algorithms, and more information on datasets
    \begin{itemize}
        \item Section~\ref{compare graphs with each method}: Examples of Grpahs that Each Method Can Handle.
        \item Section~\ref{violation of faithfulness}: Example of Violation of faithfulness
        \item Section~\ref{appendix: example atomic cover}: Example of Atomic Cover.
        \item Section~\ref{appdneix: example phase1}: Example for Phase 1
        \item Section~\ref{appendix: example phase2}: Detailed Description and Example for Phase 2.
        \item Section~\ref{appendix: example phase3}: Example for Phase 3.
        \item Section~\ref{sec apendix: graphs for observed only}: Graph examples for model that has only observed variables.
        \item Section~\ref{sec apendix: graphs for latent tree}: Graph examples for latent tree model.
        \item Section~\ref{sec apendix: graphs for latent mm}: Graph examples for latent measurement model.
        \item Section~\ref{sec apendix: graphs for latent general}: Graph examples for general latent model.
        \item Section~\ref{example:colliderinc}: Example for considering colliders in Phase 1.
        \item Section~\ref{examples:operator}: Examples for graph operators.
        \item Section~\ref{appendix: relations between covers}: Examples for graphical relations between covers.
        \item Section~\ref{appendix:checkcollider}: Discussions on checking colliders completely.
        \item Section~\ref{appendix:comb and perm}: Evaluation metric details.
        \item Section~\ref{appendix:Computational cost}: More details of experiments on synthetic data.
        \item Section~\ref{sec appendix: info of big5}: Detailed information of the Big Five personality dataset.
        \item Section~\ref{appendix:bigfive analysis}: Detailed analysis of the result from the Big Five dataset.
    \end{itemize}
    \item Section C: Related work% and broader impacts
    \begin{itemize}
        \item Section~\ref{sec appendix: related work}: Related work.
        %\item Section~\ref{broader impacts}: Broader impacts.
    \end{itemize}
    \item {Section D: Additional Information (added during rebuttal)}
    
\end{itemize}

\section{Definitions, Examples, and Proofs}

\subsection{Subcovariance Matrix and Definition of Descendants}
\label{appendix: covariance}
$\Sigma_{\set{A},\set{B}}$ refers to subcovariance over $\set{A}$ and $\set{B}$. E.g., $\Sigma_{\{\node{X_1},\node{X_2}\},\{\node{X_3}\}}$
is a $2\times1$ matrix whose  entry at $(1,1)$ is $\Cov(\node{X_1},\node{X_3})$ 
and  entry at $(2,1)$ is $\Cov(\node{X_2},\node{X_3})$.
$\Sigma_{\setset{A},\setset{B}}$ refers to subcovariance over $\setset{A}$ and $\setset{B}$.
Specifically,$\Sigma_{\setset{A},\setset{B}}=\Sigma_{\cup_{\set{A} \in \setset{A}} \set{A},\cup_{\set{B} \in \setset{B}} \set{B}}$.
E.g., $\Sigma_{\{\{\node{X_1}\},\{\node{X_2}\}\},\{\{\node{X_3}\}\}}$
is also a $2\times1$ matrix whose  entry at $(1,1)$ is $\Cov(\node{X_1},\node{X_3})$ 
and  entry at $(2,1)$ is $\Cov(\node{X_2},\node{X_3})$.

As for the definition of descendants,
a descendant is a node that can be reached by following one or more directed edges from a given node, and this definition excludes the node itself.

\subsection{Treks, T-separations, and Examples}
\label{example: trek}

\begin{figure}[h]
    \centering 
    \vspace{0mm} 
     % \begin{minipage}[c]{0.5\textwidth}
       % \centering 
    \includegraphics[width=0.35\textwidth]{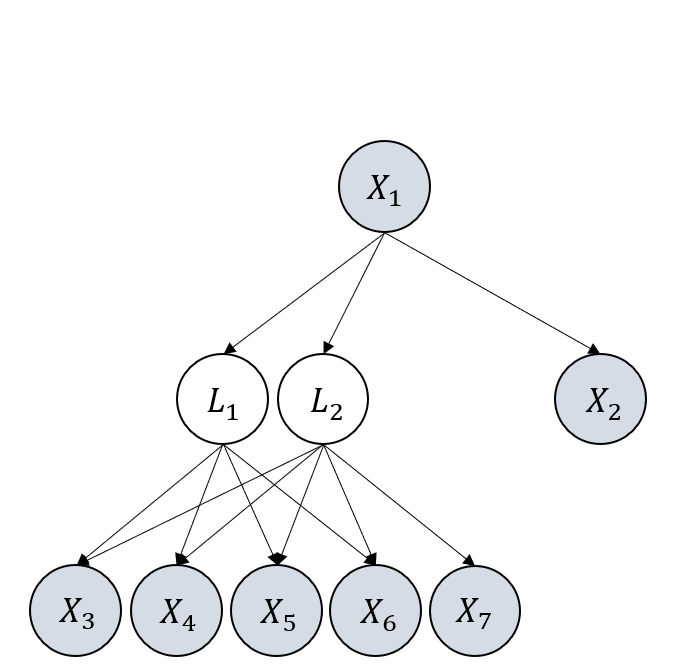}
    % \end{minipage}
    % \hspace{6em}
     % \begin{minipage}[c]{0.3\textwidth}
     %   \centering 
    \caption{\small An example graph to show treks, t-separations, and rank of subcovariance matrix.} 
     \label{fig: example for treks}
    % \end{minipage}
    % \vspace{0mm}
  \end{figure}

\begin{example} [Example of Treks]
    \label{example:treks}
    In Figure~\ref{fig: example for treks},
    there are four treks between $\node{X_3}$ and $\node{X_4}$:
    (i) $(P_1,P_2)$=($\node{X_3}\leftarrow\node{L_1}$,~$\node{L_1}\rightarrow\node{X_4}$),
    (ii) $(P_1,P_2)$=($\node{X_3}\leftarrow\node{L_2}$,~$\node{L_2}\rightarrow\node{X_4}$),
    (iii) $(P_1,P_2)$=($\node{X_3}\leftarrow\node{L_1}\leftarrow\node{X_1}$,~
    $\node{X_1}\rightarrow\node{L_2}\rightarrow\node{X_4}$),
    (iv) $(P_1,P_2)$=($\node{X_3}\leftarrow\node{L_2}\leftarrow\node{X_1}$,~
    $\node{X_1}\rightarrow\node{L_1}\rightarrow\node{X_4}$).
    For adjacent variables such as $\node{X_1}$ and $\node{X_2}$,
    there must exist at least one trek $(P_1,P_2)$=($\node{X_1}$,~$\node{X_1}\rightarrow\node{X_2}$)
     between them.
    \end{example}

\begin{example} [Example of Trek-separations]
\label{example:t-sep}
In Figure~\ref{fig: example for treks}, 
$\node{X_3}$ and $\node{X_4}$ can be t-separated by $(\{\node{L_1},\node{L_2}\},\emptyset)$,
as for all the treks (i)-(iv) in Example~\ref{example:treks}, either $P_1$ contains a vertex in $\{\node{L_1},\node{L_2}\}$ or
$P_2$ contains a vertex in $\emptyset$.
Similarly, $\node{X_3}$ and $\node{X_4}$ can also be t-separated 
by $(\emptyset,\{\node{L_1},\node{L_2}\})$.
However, the most simple way to t-separate $\node{X_3}$ and $\node{X_4}$ is by 
$(\{\node{X_3}\},\emptyset)$ or $(\emptyset,\{\node{X_4}\})$. 
%and thus  $\min |\set{C}_{\set{A}}|+|\set{C}_{\set{B}}|$ would be $1$.
\end{example}

\begin{example} [Example of calculating rank]
    As shown in Example~\ref{example:t-sep},
    the minimal way to t-separate
    $\node{X_3}$ and $\node{X_4}$ 
    is by $(\set{C}_{\set{A}},\set{C}_{\set{B}})=(\{\node{X_3}\},\emptyset)$ or $(\emptyset,\{\node{X_4}\})$,
    and thus  $\min |\set{C}_{\set{A}}|+|\set{C}_{\set{B}}|=1$.
    Therefore, $\text{rank}(\Sigma_{\{\node{X_3}\},\{\node{X_4}\}})=1$.
    Now suppose we want to calculate $\text{rank}(\Sigma_{\{\node{X_3},\node{X_4},\node{X_5}\},
    \{\node{X_1},\node{X_6},\node{X_7}\}})$.
    As the minimal way to t-separate 
    $\{\node{X_3},\node{X_4},\node{X_5}\}$ and $\{\node{X_1},\node{X_6},\node{X_7}\}$
    is by $(\{\node{L_1},\node{L_2}\},\emptyset)$ (or $(\emptyset,\{\node{L_1},\node{L_2}\})$),
    the rank is $|\{\node{L_1},\node{L_2}\}|+|\emptyset|$=2.
    \end{example}

 \begin{figure}[t]
   \centering 
   \vspace{-1mm} 
    \begin{minipage}[c]{0.26\textwidth}
      \centering 
   \includegraphics[width=1.3\linewidth]{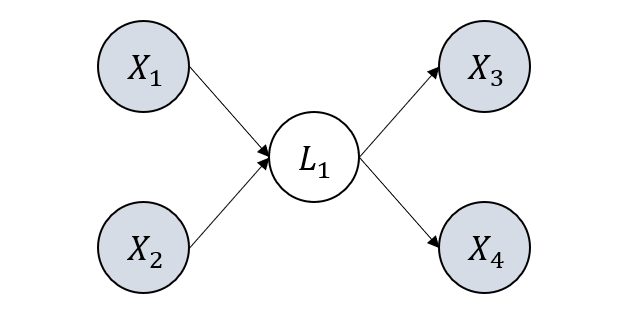}
   \end{minipage}
   \hspace{6em}
    \begin{minipage}[c]{0.35\textwidth}
      \centering 
   \caption{\small An illustrative example that highlights the motivation for using rank. When using CI, we cannot deduce that $\{\node{X}_1,\node{X}_2\}$ and 
    $\{\node{X}_3,\node{X}_4\}$ are d-separated by $\node{L_1}$ as 
    $\node{L_1}$ is latent, 
    while by using rank we can.}
   \label{fig:motiv rank}
   \end{minipage}
   \vspace{-3mm}
 \end{figure}
 
\subsection{Definition of CI Skeleton}
\label{appendix: ci skeleton}
\begin{definition}
A CI skeleton of $\set{X}_\graph$ is an undirected graph where
the edge between $\node{X_1}$ and $\node{X_2}$ exists 
iff there does not exist a set of observed variables $\set{C}$ such that $\node{X_1},\node{X_2}\notin \set{C}$ and $\node{X_1}\indep \node{X_2}|\set{C}$.
\end{definition}

Examples of CI skeleton can be found in Example~\ref{fig:intuition of atomic cover}

\subsection{Definition of Rank-invariant Graph Operator}
\label{appendix: graph operator}

The definitions are as follows with examples in Appx.~\ref{examples:operator}.

  \begin{definition}[Minimal-Graph Operator \citep{huang2022latent}]
    \label{def:minimal operator}
  Given two atomic covers $\set{L},\set{P}$ in $\graph$, we can merge $\set{L}$ to $\set{P}$ if the following conditions hold: (i) $\set{L}$ is the pure children of $\set{P}$, (ii) all elements of $\set{L}$ and $\set{P}$ are latent and  $|\set{L}| = |\set{P}|$, and (iii) the pure children of $\set{L}$ form a single atomic cover, or the siblings of $\set{L}$ form a single atomic cover. 
  %and (2) $Pa(\setS) = \setP$, where $\setS$ denotes the sibling of $\setL$. 
  We denote such an operator as minimal-graph operator $\mathcal{O}_{\text{min}}(\graph)$.
  \end{definition}

  \begin{definition}[Skeleton Operator \citep{huang2022latent}]
    \label{def:skeleton operator}
    Given an atomic covers $\set{V}$ in a graph $\graph$, for all  $\node{V} \in \set{V}$, $\node{V}$ is latent, and all $\node{C} \in \purechildren(\set{V})$, such that $\node{V}$ and $\node{C}$ are not adjacent in $\graph$, we can draw an edge from $\node{V}$ to $\node{C}$. We denote such an operator as skeleton operator $\mathcal{O}_s(\graph)$.
  \end{definition}
  
\subsection{Max-Flow-Min-Cut Lemma for Treks}
\label{appendix: maxflowmincut lemma}

\begin{lemma} [Max-Flow-Min-Cut for Treks \citep{sullivant2010trek}]
\label{lemma: maxflowmincut}
The minimal $|\set{C}_\set{A}|+|\set{C}_\set{B}|$ s.t., $(\set{C}_\set{A}|,|\set{C}_\set{B})$ t-separates $\set{A}$ from $\set{B}$, equals to the maximum number of non-overlapping (no sided intersection) treks from $\set{A}$ to $\set{B}$.
\end{lemma}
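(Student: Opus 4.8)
The plan is to reduce the statement to the vertex-capacitated version of the max-flow-min-cut (Menger) theorem, by building an auxiliary directed flow network $H$ in which integral $s$–$t$ flows correspond exactly to systems of treks with no sided intersection, and $s$–$t$ vertex cuts correspond exactly to t-separators. The crucial structural feature to accommodate is that the two sides $P_1$ and $P_2$ of a trek must be kept disjoint only among themselves across treks (a $P_1$ may freely meet a $P_2$), so the construction uses two decoupled copies of the vertex set, glued at the apex of each trek.

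First I would construct $H$ as follows. For every $\node{V}\in\set{V}_\graph$ introduce a top copy $\node{V}^{t}$ and a bottom copy $\node{V}^{b}$, each split into an in/out pair joined by a unit-capacity edge so as to impose a capacity of $1$ per copy. For every directed edge $\node{U}\to\node{V}$ of $\graph$ add the ascending edge $\node{V}^{t}\to\node{U}^{t}$ in the top layer and the descending edge $\node{U}^{b}\to\node{V}^{b}$ in the bottom layer; for every $\node{V}$ add an apex edge $\node{V}^{t}\to\node{V}^{b}$. Finally attach a source $s$ with edges $s\to\node{A}^{t}$ for $\node{A}\in\set{A}$ and a sink $t$ with edges $\node{B}^{b}\to t$ for $\node{B}\in\set{B}$. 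By construction every $s$–$t$ path enters the top layer, ascends along reversed $\graph$-edges to some vertex $\node{W}$, crosses the unique apex edge $\node{W}^{t}\to\node{W}^{b}$, and then descends in the bottom layer to a vertex of $\set{B}$; reading these two segments back in $\graph$ yields precisely a trek with apex $\node{W}$, with $P_1$ ending in $\set{A}$ and $P_2$ ending in $\set{B}$, and conversely every trek arises this way. The unit capacity on each $\node{V}^{t}$ forbids two routed treks from sharing a vertex on their $P_1$-sides, the unit capacity on each $\node{V}^{b}$ does the same for their $P_2$-sides, while a top copy and a bottom copy are distinct nodes, so a $P_1$ of one trek may meet a $P_2$ of another. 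Hence integral $s$–$t$ flows are in bijection with systems of treks having no sided intersection, and the maximum flow value equals the maximum number of such treks.

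Next I would identify minimum cuts with minimal t-separators. A minimum $s$–$t$ vertex cut consists of a set of top copies, read as a set $\set{C}_{\set{A}}\subseteq\set{V}_\graph$, together with a set of bottom copies, read as $\set{C}_{\set{B}}\subseteq\set{V}_\graph$; removing $\node{V}^{t}$ kills exactly the treks whose $P_1$ passes through $\node{V}$, and removing $\node{V}^{b}$ kills exactly those whose $P_2$ passes through $\node{V}$. Thus the cut disconnects $s$ from $t$ if and only if every trek has $P_1\cap\set{C}_{\set{A}}\neq\emptyset$ or $P_2\cap\set{C}_{\set{B}}\neq\emptyset$, i.e. if and only if $(\set{C}_{\set{A}},\set{C}_{\set{B}})$ t-separates $\set{A}$ from $\set{B}$, and its size is $|\set{C}_{\set{A}}|+|\set{C}_{\set{B}}|$ (a vertex chosen on both sides counted twice, matching the convention in Definition~\ref{definition:t-sep}). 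Applying the vertex form of the max-flow-min-cut theorem to $H$ then equates the maximum number of sided-disjoint treks with the minimum of $|\set{C}_{\set{A}}|+|\set{C}_{\set{B}}|$ over all t-separators, which is the claim. The main obstacle is verifying the two correspondences rigorously rather than schematically: one must confirm that every integral flow decomposes into paths that are genuine treks (each using exactly one apex edge, with no revisited vertex on a side owing to the capacities, and with no edge leading from the bottom layer back to the top), and that the capacity bookkeeping matches the no-sided-intersection condition and the t-separation cost exactly in the boundary cases, such as trivial treks with $\node{A}=\node{B}$ and treks sharing a common apex.
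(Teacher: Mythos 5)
The paper does not actually prove this lemma: it is imported verbatim from \citet{sullivant2010trek}, and Appendix~\ref{appendix: maxflowmincut lemma} contains only the statement with that citation. Your construction is correct and is essentially the argument used in that cited source --- a two-layer auxiliary network (top layer carrying the reversed edges for the $P_1$ sides, bottom layer carrying the forward edges for the $P_2$ sides, unit vertex capacities via in/out splitting, apex edges joining the copies) to which the vertex form of max-flow-min-cut is applied; the correspondences you check (every $s$--$t$ path crosses exactly one apex edge because the network is acyclic and the only inter-layer edges point downward, shared apices are correctly forbidden as sided intersections, a vertex placed in both $\set{C}_{\set{A}}$ and $\set{C}_{\set{B}}$ is counted twice on both sides of the equality) all go through. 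The only detail worth stating explicitly is that all edges other than the in/out splitting edges must be given infinite capacity, so that a minimum cut consists solely of vertex copies and therefore reads off as a pair $(\set{C}_{\set{A}},\set{C}_{\set{B}})$; your phrasing implies this but does not say it.
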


\subsection{Example for Theorem~\ref{theorem: condition for not adjacent}}
\label{appendix: nonadjacency example}

\begin{figure}[th]
    \vspace{-2mm}
  \setlength{\belowcaptionskip}{-2.5mm}
    %\centering
    \begin{subfigure}[b]{0.3\textwidth}
      %\centering
      \includegraphics[width=\textwidth]{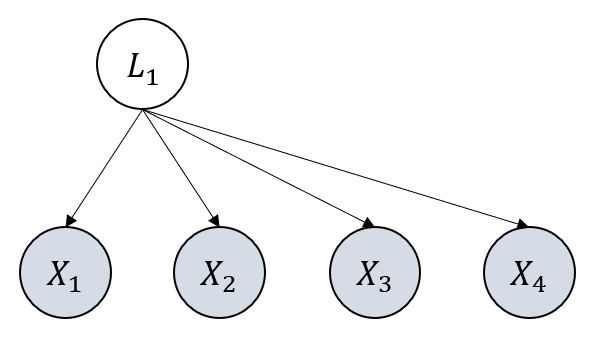}
      \caption{Illustrative graph $\graph_1$ for Theorem~\ref{theorem: condition for not adjacent}.}
  \end{subfigure}
  \hfill
  \begin{subfigure}[b]{0.3\textwidth}
    %\centering
    \includegraphics[width=\textwidth]{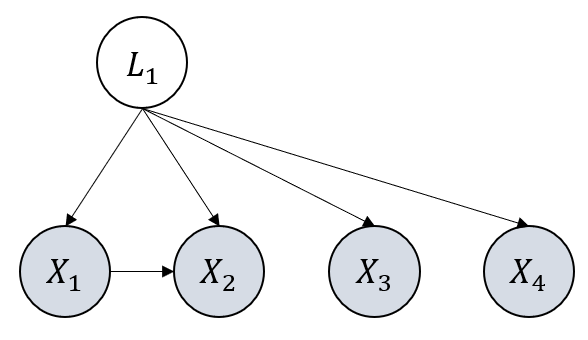}
    \caption{Illustrative graph $\graph_2$ for Theorem~\ref{theorem: condition for not adjacent}.}
  \end{subfigure}
  \hfill
  \begin{subfigure}[b]{0.3\textwidth}
    %\centering
    \includegraphics[width=\textwidth]{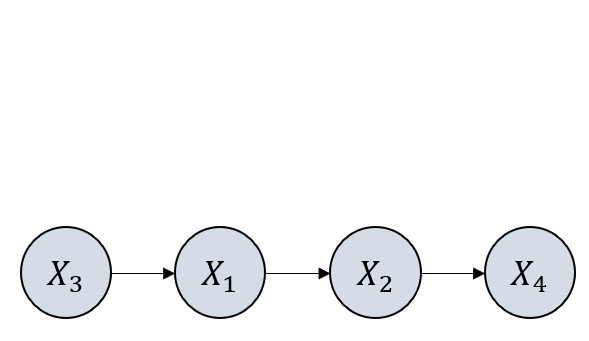}
    \caption{Illustrative graph $\graph_3$ for Theorem~\ref{theorem: condition for not adjacent}.}
  \end{subfigure}
  \caption{Illustrative figures for Theorem~\ref{theorem: condition for not adjacent}.}
    \label{fig:example for nonadjacency}
    \vspace{-0mm}
  \end{figure}
\begin{example}
\label{example: nonadjacency}
In Figure~\ref{fig:example for nonadjacency} (a),
we can employ Theorem~\ref{theorem: condition for not adjacent}
to check whether $\node{X_1}$ and $\node{X_2}$
are adjacent.
Specifically, let $\set{A}=\{\node{X_3}\}$ and 
$\set{B}=\{\node{X_4}\}$, 
then the condition in Theorem~\ref{theorem: condition for not adjacent} is satisfied,
i.e., $\text{rank}(\Sigma_{\set{A}\cup\{\node{X_1}\},
\set{B}\cup\{\node{X_2}\}})=\text{rank}(\Sigma_{\set{A},\set{B}})$ 
and 
$\text{rank}(\Sigma_{\set{A}\cup\{\node{X_1},\node{X_2}\},
\set{B}\cup\{\node{X_1},\node{X_2}\}})=\text{rank}(\Sigma_{\set{A},\set{B}})+2$. 
Therefore  $\node{X_1}$ and $\node{X_2}$ are not adjacent.

When it comes to Figure~\ref{fig:example for nonadjacency} (b),
the condition does not hold, and thus we cannot conclude that 
$\node{X_1}$ and $\node{X_2}$ are not adjacent.

Figure~\ref{fig:example for nonadjacency} (c) is an example to show that 
$\text{rank}(\Sigma_{\set{A}\cup\{\node{X_1},\node{X_2}\},
\set{B}\cup\{\node{X_1},\node{X_2}\}})=\text{rank}(\Sigma_{\set{A},\set{B}})+2$ 
in the condition is important in the theorem.
We need this condition to ensure than treks between $\set{A}$ and $\set{B}$ do not rely on $\node{X_1}$ or $\node{X_2}$. Otherwise, as in
Figure~\ref{fig:example for nonadjacency} (c),
if we only check $\text{rank}(\Sigma_{\set{A}\cup\{\node{X_1}\},
\set{B}\cup\{\node{X_2}\}})=\text{rank}(\Sigma_{\set{A},\set{B}})$, we will found that it holds if we take $\set{A}=\{\node{X_3}\}$ and 
$\set{B}=\{\node{X_4}\}$.
Therefore, we will mistakenly conclude that 
$\node{X_1}$ and $\node{X_2}$ are not adjacent.

\end{example}

%\subsection{Example for Theorem~\ref{theorem: global for latent}}

%\begin{example}
%\label{example: global for latent}
%\end{example}

\begin{figure}[h]
    \vspace{-2mm}
  \setlength{\belowcaptionskip}{-2.5mm}
    %\centering
    \begin{subfigure}[b]{0.4\textwidth}
      %\centering
      \includegraphics[width=\textwidth]{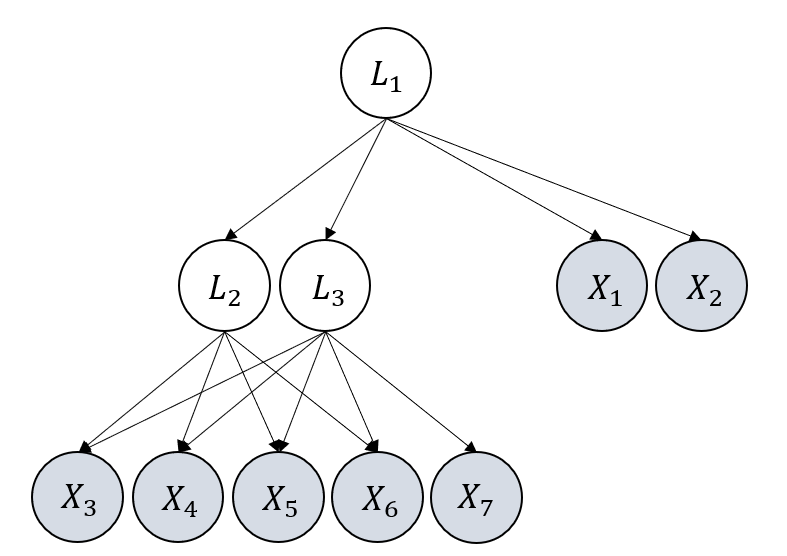}
      \caption{$\graph_1$.}
  \end{subfigure}
  \hfill
  \begin{subfigure}[b]{0.4\textwidth}
    %\centering
    \includegraphics[width=\textwidth]{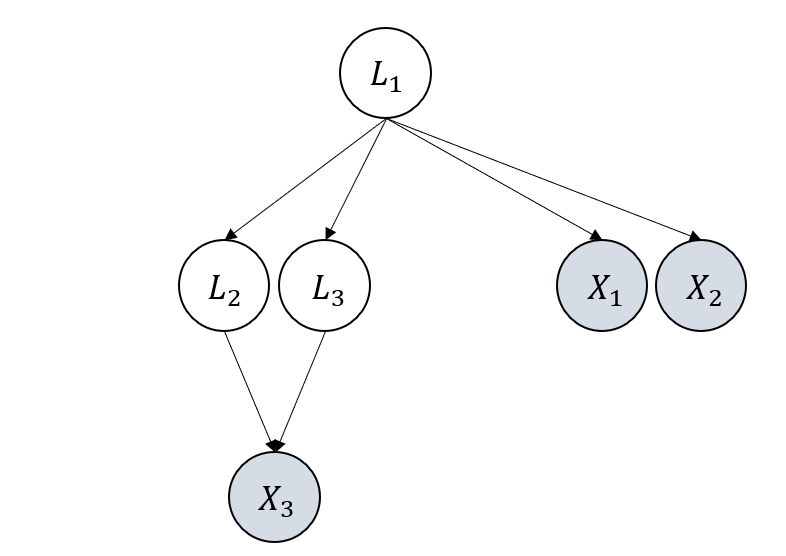}
    \caption{$\graph_2$.}
  \end{subfigure}
  \caption{Illustrative figures for Theorem~\ref{theorem:measurement as surrogate}.}
    \label{fig:example for surrogate}
    \vspace{-0mm}
  \end{figure}
\subsection{Example for Theorem~\ref{theorem:measurement as surrogate}}
\label{appendix: example surrogate}

\begin{figure}[t]
   \vspace{0mm}
 \setlength{\belowcaptionskip}{-2.5mm}
 \begin{subfigure}[b]{0.25\textwidth}
   \centering
   \includegraphics[width=\textwidth]{figures/intuition_atomic_cover/1.png}
   \caption{\small $\graph_1$}
 \end{subfigure}
 \hfill
 \begin{subfigure}[b]{0.25\textwidth}
   \centering
   \includegraphics[width=\textwidth]{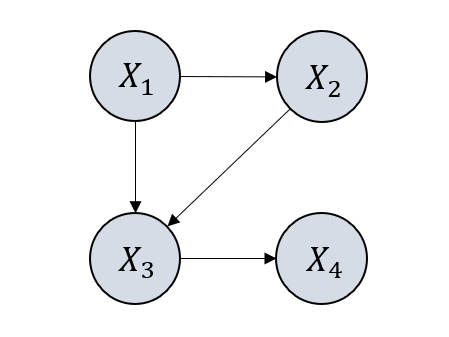}
   \caption{\small $\graph_1'$.}
 \end{subfigure}
  \hfill
 \begin{subfigure}[b]{0.25\textwidth}
   \centering
   \includegraphics[width=\textwidth]{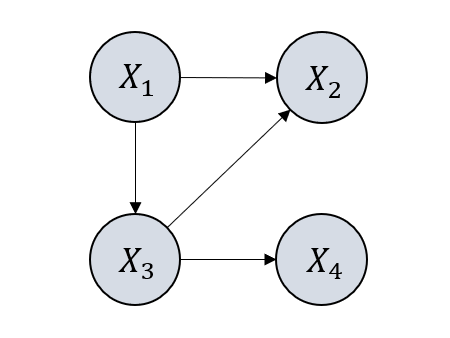}
   \caption{\small $\graph_1''$.}
 \end{subfigure}
 \caption{Examples to show that $\graph_1$ cannot be identified as $\graph_1$ $\graph_1'$,$\graph_1''$ have the same observational rank information.}
   \label{fig:more intuition of atomic cover}
   % \vspace{-0mm}
 \end{figure}
 
\begin{example}
\label{example:measurement as surrogate}
Take Figure~\ref{fig:example for surrogate} (a) as an example.
Suppose we aim to calculate $\text{rank}(\Sigma_{\set{A},\set{B}})$, where
$\set{A}=\{\node{L_2},\node{L_3}\}$ and $\set{B}=\{\node{X_1},\node{X_2}\}$.
We can employ the pure children of $\set{A}$, $\set{C}=\{\node{X_3},...,\node{X_7}\}$ to do so.
Specifically, we have $\text{rank}(\Sigma_{\set{A},\set{B}})$=$\text{rank}(\Sigma_{\set{C},\set{B}})$=
$\text{rank}(\Sigma_{\{\node{X_3},...,\node{X_7}\},\{\node{X_1},\node{X_2}\}})=1$.

However, in Figure~\ref{fig:example for surrogate} (b), it is not the case.
this is because the number of pure children of $\{\node{L_2},\node{L_3}\}$ is not enough and thus 
$\text{rank}(\Sigma_{\set{A},\set{C}})\neq|\set{A}|$. In this case 
$\set{C}=\{\node{X_3}\}$ cannot work as a surrogate for calculating 
$\text{rank}(\Sigma_{\set{A},\set{B}})$.
\end{example}

\subsection{Proof of Theorem~\ref{theorem: condition for not adjacent}}
\label{proof: condition for not adjacent}
\begin{proof}
    Assume that the condition holds and suppose $\text{rank}(\Sigma_{\set{A},\set{B}})=t$, which means that there exist t non-overlapping treks between $\set{A}$ and $\set{B}$.
    By $\text{rank}(\Sigma_{\set{A}\cup\{\node{X_1},\node{X_2}\},\set{B}\cup\{\node{X_1},\node{X_2}\}})=\text{rank}(\Sigma_{\set{A},\set{B}})+2$, we further have that these t treks do not necessarily travel across $\node{X_1}$ or $\node{X_2}$.
    As such, if $\node{X_1}$ and $\node{X_2}$ are adjacent, then we must have
    $\text{rank}(\Sigma_{\set{A}\cup\{\node{X_1}\},\set{B}\cup\{\node{X_2}\}})=t+1\neq \text{rank}(\Sigma_{\set{A},\set{B}})$, which contradicts with the condition. Therefore $\node{X_1}$ and $\node{X_2}$ cannot be adjacent.
\end{proof}

Here we show that the condition generalizes PC's condition.
 SGS and PC \citep{spirtes2000causation} proposed: there exist a set of observed variables $\set{C}\subseteq \set{X}_\graph$, $\node{X_1},\node{X_2} \notin \set{C}$, such that $\node{X_1} \indep \node{X_2} | \set{C}$.
   This condition can be expressed in the form of    Theorem~\ref{theorem: condition for not adjacent}, with $\set{C} = \set{A} = \set{B}$, because we have  
   $\node{X_1} \indep \node{X_2} | \set{C}$ iff $\text{rank}(\Sigma_{\set{A}\cup\{\node{X_1}\},\set{B}\cup\{\node{X_2}\}})=\text{rank}(\Sigma_{\set{A},\set{B}})$.
   Plus, $\text{rank}(\Sigma_{\set{A}\cup\{\node{X_1},\node{X_2}\},\set{B}\cup\{\node{X_1},\node{X_2}\}})=\text{rank}(\Sigma_{\set{A},\set{B}})+2$ is always true when $\set{A}=\set{B}$.

\subsection{Proof of Theorem~\ref{theorem: global for latent}}
\label{proof:theorem: global for latent}

We first introduce Lemma~\ref{lemma: rank and ci when no latent} as follows to show that when there is no latent variable, the rank information should be aligned with what CI skeleton provides.

\begin{lemma}
    When there is no latent variable in the graph,
     rank and CI are equally informative about the underlying structure,
      i.e., the rank-equivalence class and the Markov equivalence class are 
      the same when there is no latent variable.
 \label{lemma: rank and ci when no latent}
 \end{lemma}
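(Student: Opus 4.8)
The plan is to show that, in the absence of latent variables, the assignment of a graph to its set of entailed rank constraints and the assignment of a graph to its set of entailed conditional-independence (equivalently d-separation) relations induce the same partition of the class of DAGs; since two maps with identical fibres induce the same equivalence relation, this is exactly the statement that the rank-equivalence class coincides with the Markov equivalence class. Concretely, I would prove two implications: (a) two graphs entailing the same rank constraints entail the same CI relations, and (b) two graphs entailing the same CI relations entail the same rank constraints. Implication (a) shows that rank is at least as refined as CI, and (b) shows it is no more refined; together they force equality of the two equivalence classes.

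For direction (a) I would invoke Lemma~\ref{lemma: rank and d-sep}. Because there are no latent variables, every variable lies in $\set{X}_\graph$, so for any disjoint $\set{A},\set{B},\set{C}$ the statement ``$\set{C}$ d-separates $\set{A}$ and $\set{B}$'' is, by that lemma, equivalent to the single rank equality $\text{rank}(\Sigma_{\set{A}\cup\set{C},\set{B}\cup\set{C}})=|\set{C}|$. Hence the entire collection of d-separation relations is recoverable from the rank constraints, and by Theorem~\ref{theorem:CI and d-sep} the same is then true of all CI relations. Consequently, two graphs with identical entailed rank constraints agree on every d-separation, hence on every CI relation, and therefore lie in the same Markov equivalence class.

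Direction (b) is the crux and the place where care is required. A priori rank could be strictly finer than CI: two adjacent observed variables admit no d-separating set at all, yet the rank of their cross-covariance is still well defined, so one cannot simply read rank off minimal d-separators. The cleanest route I would take is through the realized second-order model: Markov-equivalent DAGs encode the same CI relations and therefore parametrize exactly the same family of covariance matrices realizable by linear structural models (rank depends only on second moments, so non-Gaussianity is irrelevant here). Since every entailed rank constraint is by definition a statement that holds throughout this family, graphs realizing the same family of covariance matrices entail the same rank constraints. As a purely combinatorial alternative, one may instead use Theorem~\ref{theorem:rank and t} together with the max-flow-min-cut characterization of Lemma~\ref{lemma: maxflowmincut} to write $\text{rank}(\Sigma_{\set{A},\set{B}})$ as the maximal number of non-overlapping treks between $\set{A}$ and $\set{B}$, and then check that this trek min-cut is invariant under covered-edge reversals, which generate the Markov equivalence class. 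The main obstacle is exactly this invariance: establishing that the trek min-cut, and hence every entailed rank, is constant across a Markov equivalence class. I expect the covariance-family viewpoint to be the most economical way to settle it, with the trek-cut invariance argument available as a self-contained backup.
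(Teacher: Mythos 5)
Your proposal is correct and follows essentially the same two-step argument as the paper: direction (a) is the paper's point (i), using Lemma~\ref{lemma: rank and d-sep} to recover all d-separations (hence the Markov equivalence class) from rank constraints, and direction (b) is the paper's point (ii), that Markov-equivalent DAGs realize the same family of covariance matrices under linear parametrization and therefore entail identical rank constraints. Your write-up is somewhat more explicit about why direction (b) needs an argument at all, but the underlying reasoning is the same.
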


\begin{proof}
 (i) As all d-sep can be stated by rank according to Lemma~\ref{lemma: rank and d-sep},
 using rank information is able to arrive at the markov equivalence class.
 (ii) Every element in the markov equivalence class are distributionally equivalent in terms of second order statistics.
 Therefore, using information from the rank of the covariance matrix cannot differentiate
 elements in the markove equivalence class.
 Taking (i) and (ii) together, we have that 
 the rank-equivalence class and the Markov equivalence class are 
 the same when there is no latent variable.
\end{proof}

Bellow is the proof of Theorem~\ref{theorem: global for latent}.

\begin{proof}
First we assume that there is no latent variable 
and we have that the CI skeleton 
among observed variables is also the skeleton 
of the true underlying graph $\graph$.
By (ii) and (iii), we have
(ii') $\forall~\text{distinct}~ 
\node{A_1},\node{A_2} \in\set{A}$, $\node{A_1},\node{A_2}$ 
are adjacent in $\graph$,
(iii') $\forall \node{A}\in\set{A},\node{B}\in\set{B}$, 
$\node{A},\node{B}$ are adjacent in $\graph$.
By (ii') and (iii'),
it must hold that (iv')
$\text{rank}(\Sigma_{\set{A}\cup\set{C},\set{B}\cup\set{C}}) 
 = |\set{A}|+|\set{C}|$.
However, (iv') contradicts with (iv).
Therefore, there must exist at least one latent variable.
\end{proof}

\subsection{Proof of Theorem~\ref{theorem:measurement as surrogate}}
\label{proof:measurement as surrogate}

\begin{proof}
By Theorem~\ref{theorem:rank and t}
and  Lemma~\ref{lemma: maxflowmincut},
we have $\text{rank}(\Sigma_{\set{A},{\set{B}}})$
equals the maximum number of non-overlapping trek paths from $\set{A}$ 
to $\set{B}$.
As $\set{C}$ are pure children of $\set{A}$ and no element in $\set{B}$
are descendants of $\set{C}$,
every trek from $\set{C}$ to $\set{B}$ must travel across $\set{A}$.
Therefore,  $\text{rank}(\Sigma_{\set{A},{\set{B}}})\geq\text{rank}(\Sigma_{\set{C},{\set{B}}})$.
If  we further have $\text{rank}(\Sigma_{\set{A},{\set{C}}})=|\set{A}|$,
then the   maximum number of non-overlapping trek paths from $\set{A}$ to $\set{B}$
equals  the maximum number of non-overlapping trek paths from $\set{C}$ to $\set{B}$, and thus  $\text{rank}(\Sigma_{\set{A},{\set{B}}})=\text{rank}(\Sigma_{\set{C},{\set{B}}})$.
\end{proof}

\subsection{Proof of Theorem~\ref{theorem: rank property of atomic cover}}
\label{proof: rank property of atomic cover}

\begin{proof}
As none of elements in $\setset{X}_\graph\setminus \setset{C}$ are descendants of $\setset{C}$, by Theorem~\ref{theorem:rank and t}, 
the minimal way to block every trek path between 
$\setset{C}\cup \setset{X}$ and $(\setset{X}_\graph\setminus \setset{C})\cup\setset{X}$ is by blocking all the elements of the atomic
cover $\set{V}$ and all the elements in $\set{X}$.
As $\set{X}$ is a subset of $\set{V}$, we have 
$\text{rank}(\Sigma_{\setset{C}\cup \setset{X}, (\setset{X}_\graph\setminus \setset{C})\cup\setset{X}})=|\set{V}|=k$.
\end{proof}

\subsection{Example for Theorem~\ref{theorem:unique_rank_of_atomic_cover}}
\label{example: theorem unique rank}

\begin{example} [Example for the uniqueness of rank deficiency in Theorem\ref{theorem:unique_rank_of_atomic_cover}]
\label{example:uniqueness}
In Figure~\ref{fig:example1}, 
if the current $k=2$, and we assume that all $k=1$ clusters are found,
and no v-structure exists,
the rank deficiency would uniquely map to a $k=2$ cluster. 
E.g., if we take 
$\setset{C}=\{\{\node{X_6}\},\{\node{X_7}\}\}$ and $\setset{X}=\{\{\node{X_2}\}\}$, we have 
$\text{rank}(\Sigma_{\setset{C}\cup \setset{X}, \setset{X}\cup\setset{X}_\graph \backslash \setset{C}\backslash\meassureddes(\setset{C})})=
\text{rank}(\Sigma_{\{\node{X_6},\node{X_7}, \node{X_2}\}, \{\node{X_1},...,\node{X_5},\node{X_8},...,\node{X_{16}}\}})=2$.
In this case, this rank deficiency uniquely relates to 
a cover $\set{V}=\set{X}\cup\set{L}$, where $\set{X}=\cup_{\set{X'} \in \setset{X}} \set{X'}
=\{\node{X_2}\}$, $\set{L}$ is latent variable to be added with $|\set{L}|=k-|\set{X}|=1$, and $\setset{C}=\{\{\node{X_6}\},\{\node{X_7}\}\}$ are the pure children of $\set{V}$.

In contrast, 
if we are searching for $k=2$ and a $1$-cluster $\node{L_4}\rightarrow\{\node{X_{14}},
\node{X_5}\}$
has  not been identified, then 
the condition for 
Theorem~\ref{theorem:unique_rank_of_atomic_cover} 
is not satisfied 
and thus the uniqueness of rank deficiency does not hold:
e.g., by  taking $\setset{C}=\{\node{X_{13}},\node{X_{14}},\node{X_{15}}\}$ and $\setset{X}=\{\}$, we have $\text{rank}(\Sigma_{\{\node{X_{13}},\node{X_{14}},\node{X_{15}}\},
 \{\node{X_1},...,\node{X_{12}},\node{X_{16}}\}})=2$, 
which is deficient, and yet 
$\{\node{X_{13}},\node{X_{14}},\node{X_{15}}\}$ are not from a $k=2$ cluster.
This is because this rank deficiency is not from a $k=2$ cluster, rather,
it is from the $1$-cluster  $\{\node{X_{14}},
\node{X_{15}}\}$ with parent $\node{L_4}$ that has not been found yet.
\end{example}

\subsection{Proof of Theorem~\ref{theorem:unique_rank_of_atomic_cover}}
\label{proof:unique_rank_of_atomic_cover}

\begin{proof}
We first  show that (a)
if $||\setset{X}||=t=0$
and, elements from $\setset{C}$  are pure children of two or more atomic covers, then we must have 
$\text{rank}(\Sigma_{\setset{C}\cup \setset{X}, \setset{X}\cup\setset{X}_\graph \setminus \setset{C}\setminus\meassureddes(\setset{C})})= k+1$.

Proof of (a).
As there is no collider between atomic covers,
we have that all the elements from $\setset{C}$ are pure children of
atomic covers. 
Suppose $\setset{C}$ can be partitioned into $\setset{C}_1,...,\setset{C}_N$, where each $\setset{C}_i$ are the pure children of a distinct atomic cover in $\graph$.
If $\setset{C}_i$ are the pure children of an atomic cover with cardinality $k'<k$,
then we have $||\setset{C}_i||\leq k'$. If
$\setset{C}_i$ are the pure children of an atomic cover with cardinality $k'\geq k \geq ||\setset{C}_i||$ ($k\geq ||\setset{C}_i||$ because if $k<||\setset{C}_i||$ then all elements of $\setset{C}$ are from the same cluster),
so we also have $||\setset{C}_i||\leq k'$. Therefore,
 by Lemma~\ref{lemma: maxflowmincut} and the fact that each atomic cover has $k+1-t$ pure children and $k+1$ additional neighbors,
 we have
 $\text{rank}(\Sigma_{\setset{C}\cup \setset{X}, \setset{X}\cup\setset{X}_\graph \setminus \setset{C}\setminus\meassureddes(\setset{C})})= 
 \text{rank}(\Sigma_{\setset{C}, \setset{X}_\graph \setminus \setset{C}\setminus\meassureddes(\setset{C})})=\sum_{1}^{N} ||\setset{C}_i||=k+1$. Therefore, when $||\setset{X}||=t=0$,
 the rank deficiency property does not hold
 when elements of $\setset{C}$ are from different clusters.

(b) When $||\setset{X}||=t\neq0$,
we consider a new graph $\graph''$,
where all variables from $||\setset{X}||$
are removed (as well as related edges).
Assume elements of $\setset{C}$ are from different clusters and elements of $\setset{X}$ are from the same atomic cover.
Thus by (a), we have that 
the maximum number of non-overlapping treks in $\graph''$
between 
$\setset{C}$ and $\setset{X}_{\graph''} \setminus \setset{C}\setminus\text{MDe}_{\graph''}(\setset{C})$ is $k+1-t$.
Then we add $\setset{X}$ with relating edges back to the graph and 
thus we will have $||\setset{X}||=t$ additional non-overlapping treks between 
$\setset{C}\cup \setset{X}$ and $\setset{X}\cup\setset{X}_\graph \setminus \setset{C}\setminus\meassureddes(\setset{C})$.
Therefore, we also have
$\text{rank}(\Sigma_{\setset{C}\cup \setset{X}, \setset{X}\cup\setset{X}_\graph \setminus \setset{C}\setminus\meassureddes(\setset{C})})=k+1$, when $||\setset{X}||=t\neq0$ and elements of $\setset{C}$ are from different clusters. Similarly, if elements of $\setset{C}$ are from the same cluster but not all elements of $\setset{X}$ are the parents of that cluster, we also have  $\text{rank}(\Sigma_{\setset{C}\cup \setset{X}, \setset{X}\cup\setset{X}_\graph \setminus \setset{C}\setminus\meassureddes(\setset{C})})=k+1$.

Taking (a) and (b) together, we have that rank deficiency holds only if all elements of $\setset{C}$ are from the same cluster and all elements of $\setset{X}$  are the parents of that cluster.
\end{proof}

\subsection{Proof of Theorem~\ref{theorem:necessary and sufficient under cond}}
\label{proof:necessary and sufficient under cond}
In the proof of Theorem~\ref{theorem: global for latent}, we have already shown the 'if' direction. Now we are going to show the sketch of the proof for the 'only if' direction.
\begin{proof}
Suppose there is a latent variable in $\graph$.
According to Condition~\ref{cond:basic}, it must belong to an atomic cover, say $\set{V}$ and we suppose that $\set{V}$ 
contains $n$ latent variables in total, rest of which are observed $\set{X}$.
According to the definition of atomic cover,
$\set{V}$ has at least $n+1$ pure children and $n+1$ neighbours that are distinct with the $n+1$ pure children.
Assume that all of them are observed.
Then we can simply take $\set{A}$ as the pure children, $\set{B}$ as the neighbours, $\set{C}$ as $\set{X}$,
and thus conditions (i)-(v) will be all satisfied.
If some of the pure children or neighbors of $\set{V}$ are latent, we can simply use their pure children instead (if the pure children are still latent, use the pure children of the pure children and finally we will find enough observed pure children/descendants, as latent variables cannot be leaf nodes).
Thus the conditions (i)-(v) can also be satisfied.
Therefore, if there is at least a latent variable, then there must exist disjoint $\set{A}$,$\set{B}$, and $\set{C}$, such that (i)-(v) hold.
\end{proof}

\subsection{Proof of Lemma~\ref{lemma: rank and d-sep}}
\label{proof: rank and d-sep}
\begin{proof}
By Theorem~\ref{theorem:tsep and dsep} 
we have that for disjoint $\set{A}$,$\set{B}$ and $\set{C}$,
$\set{C}$ d-separates $\set{A}$ from $\set{B}$, iff
 there is a partition $\mathbf{C} = \mathbf{C}_{\mathbf{A}} \cup \mathbf{C}_{\mathbf{B}}$ such that
  ($\mathbf{C}_{\mathbf{A}}$,$\mathbf{C}_{\mathbf{B}}$)
   t-separates $\mathbf{A}\cup\mathbf{C}$ from $\mathbf{B}\cup\mathbf{C}$.
   By Theorem~\ref{theorem:rank and t},
    we have that $\text{rank}(\Sigma_{\set{A}\cup\set{C},\set{B}\cup\set{C}})\leq|\set{C}|$.
    Plus, by the definition of treks,
    $\text{rank}(\Sigma_{\set{A}\cup\set{C},\set{B}\cup\set{C}})\geq|\set{C}|$.
    Therefore, $\set{C}$ d-separates $\set{A}$ from $\set{B}$, iff $\text{rank}(\Sigma_{\set{A}\cup\set{C},\set{B}\cup\set{C}})=|\set{C}|$.

\end{proof}

\subsection{Proof of Theorem~\ref{theorem:phase3}}
\label{proof:phase3}

\begin{proof}
The sketch of the proof is as follows.

We first show that a fake cover will not influence all other found structures except itself and its neighbors in the result $\graphp$.
By Lemma 11 in \citep{huang2022latent},
we have that a fake cover with observed  descendants $\set{X}$ in $\graphp$ implies a bond set in $\graph$ (whose definition can be found in \citet{huang2022latent}),
and there is a partition of the rest of the observed variables $\set{X}_\graph \backslash \set{X}$ into two groups $\set{A}$ and $\set{B}$ such that $\set{A}$ and $\set{B}$ are d-separated by the bond set.
Suppose this faker cover is $\set{V}$ that corresponds to a set of $n$ latent covers $\{\set{L_1},...,\set{L_n}\}$ in $\graph$.
Then $\{\set{L_1},...,\set{L_n}\}$ d-separates $\set{A}$ and $\set{B}$, and thus during the search we will generate two dummy covers that interact with $\set{A}$ and $\set{B}$ respectively,
during which the rank information will not be mistaken.
Then we show that in Phase 3, the fake cover can be corrected. When we refine the fake cover $\set{V}$, we will delete it together with its neighbours and thus the two dummy covers will be deleted. Suppose now we are searching for $k$ clusters, as this time all the remaining $k'$-clusters s.t. $k'<k$ have been found, the FindCausalCluster function will not generate fake cluster anymore. Therefore, the output would be corrected.

\end{proof}

\subsection{Proof of Theorem~\ref{theorem:identifiability}}
\label{proof:identifiability}
%\Biwei{first need to prove these three lemmas, and then prove the theorem.}

First, we show an extension of Theorem~\ref{theorem:measurement as surrogate} to atomic covers.
\begin{lemma} [Pure Children as Surrogate for Atomic Covers]
Let $\set{A}=\set{L}\cup\set{X}$ be an atomic cover, $\setset{C} \subseteq \purechildren(\set{A})$ be a subset of pure children of $\set{A}$, and $\set{B}_1$,$\set{B}_2$ be two sets of variables such that for all $\mathsf{B} \in \set{B}_1\cup\set{B}_2$, $\node{B} \notin \descendant(\setset{C})$. We have $\text{rank}(\Sigma_{\{\set{A}\}\cup\{\set{B}_1\},\set{B}_2}) = \text{rank}(\Sigma_{\setset{C}\cup\{\set{X}\}\cup\{\set{B}_1\},{\set{B}_2}})$,
if $||\setset{C}||+|\set{X}|\geq |\set{A}|$.
\label{lemma:measurement as surrogate for atomic covers}
\end{lemma}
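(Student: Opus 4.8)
The plan is to translate everything into the trek/flow language supplied by Theorem~\ref{theorem:rank and t} and Lemma~\ref{lemma: maxflowmincut}, and then to mirror the reasoning behind Theorem~\ref{theorem:measurement as surrogate}, but with a surrogate that keeps the observed part $\set{X}$ of the cover alongside the pure children. Write $\set{M}=(\cup_{\set{C}\in\setset{C}}\set{C})\cup\set{X}$ for this surrogate, so that the claim becomes $\text{rank}(\Sigma_{\set{A}\cup\set{B}_1,\set{B}_2})=\text{rank}(\Sigma_{\set{M}\cup\set{B}_1,\set{B}_2})$. Note that Theorem~\ref{theorem:measurement as surrogate} does not apply directly: it would require $||\setset{C}||\geq|\set{A}|$ pure children to replace all of $\set{A}$, whereas here the weaker hypothesis $||\setset{C}||+|\set{X}|\geq|\set{A}|$ only lets the pure children stand in for the \emph{latent} part $\set{L}$, which is exactly why $\set{X}$ is retained in the surrogate. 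By Theorem~\ref{theorem:rank and t} and Lemma~\ref{lemma: maxflowmincut}, each side equals the maximum number of non-overlapping (no sided intersection) treks between its two index sets, so it suffices to produce a size-preserving correspondence between the two trek systems in each direction.

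The first preparatory step is to show $\text{rank}(\Sigma_{\set{A},\set{M}})=|\set{A}|$, which is precisely where the cardinality hypothesis enters. Since $\set{A}=\set{L}\cup\set{X}$ is an atomic cover and $\cup_{\set{C}\in\setset{C}}\set{C}\subseteq\purechildren(\set{A})$, every element of $\set{L}$ is a parent of every pure child. I can therefore exhibit $|\set{A}|$ non-overlapping treks from $\set{A}$ to $\set{M}$: the trivial one-vertex treks $\node{X_i}\to\node{X_i}$ for the $|\set{X}|$ observed elements, together with the single-edge treks $\node{L_i}\to\node{C_{\pi(i)}}$ obtained by matching each latent to a distinct pure child, which is possible because $||\setset{C}||\geq|\set{L}|$. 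These are pairwise non-overlapping, and since the rank cannot exceed $\min(|\set{A}|,|\set{M}|)=|\set{A}|$, equality holds.

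For the inequality $\text{rank}(\Sigma_{\set{M}\cup\set{B}_1,\set{B}_2})\leq\text{rank}(\Sigma_{\set{A}\cup\set{B}_1,\set{B}_2})$ I would take a maximal non-overlapping family of treks from $\set{M}\cup\set{B}_1$ to $\set{B}_2$ and truncate it. Treks originating in $\set{X}\subseteq\set{A}$ or in $\set{B}_1$ are already legal on the other side. For a trek originating at a pure child $\node{C}$, purity of $\setset{C}$ together with the hypothesis that no element of $\set{B}_2$ is a descendant of $\setset{C}$ forces the path with sink $\node{C}$ to be nontrivial and to enter $\node{C}$ through a parent in $\set{A}$; truncating that path at its first vertex in $\set{A}$ turns the trek into one originating in $\set{A}$. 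Non-overlap is preserved because truncation only deletes vertices, and if two truncated treks shared their new source in $\set{A}$, the original treks would have had a sided intersection there, contradicting non-overlap of the family.

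The reverse inequality is the main obstacle. Here one must \emph{extend} rather than truncate: the $\set{L}$-endpoints of a maximal family of treks from $\set{A}\cup\set{B}_1$ to $\set{B}_2$ should be pushed one edge further onto distinct pure children $\node{C_{\pi(i)}}$ using the matching from the preparatory step, while $\set{X}$- and $\set{B}_1$-endpoints stay fixed. The danger is that extension \emph{adds} vertices, so a chosen pure child could a priori already lie on another trek of the family, breaking non-overlap. I would resolve this by arguing at the level of cuts via Lemma~\ref{lemma: maxflowmincut}, rather than editing individual treks: one shows that a minimal t-separator of $(\set{M}\cup\set{B}_1,\set{B}_2)$ can be converted into a t-separator of $(\set{A}\cup\set{B}_1,\set{B}_2)$ of no larger size, so the min cut—hence the rank—cannot increase when $\set{A}$ is replaced by $\set{M}$. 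The full-rank relation $\text{rank}(\Sigma_{\set{A},\set{M}})=|\set{A}|$ supplies the vertex-disjoint routing capacity into $\set{M}$ that makes this conversion valid, and the DAG/purity structure (no element of $\set{B}_2$ being a descendant of $\setset{C}$) rules out the problematic placements of pure children on other treks. Combining the two inequalities gives the equality; throughout, $\set{B}_1$ is passive since it indexes identical rows on both sides, the only care being to keep the rerouted $\set{A}$/$\set{M}$ treks disjoint from the $\set{B}_1$ treks, which again follows from the no-sided-intersection property.
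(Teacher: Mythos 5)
Your proposal is correct and follows essentially the same route as the paper: both reduce rank to the maximum number of non-overlapping treks via Theorem~\ref{theorem:rank and t} and Lemma~\ref{lemma: maxflowmincut}, and both rest on the observation that treks with sinks in $\set{L}$ can be extended one edge onto distinct pure children (the cardinality hypothesis $||\setset{C}||+|\set{X}|\geq|\set{A}|$ supplying the matching), while the non-descendant condition on $\set{B}_1\cup\set{B}_2$ keeps the trek families non-overlapping. The paper's own proof is only a two-sentence sketch of the extension direction, so your explicit truncation argument for the other inequality and your attention to why extension cannot create sided intersections fill in details the paper leaves implicit.
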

\begin{proof}
    By Lemma~\ref{lemma: maxflowmincut}, $\text{rank}(\Sigma_{\{\set{A}\}\cup\{\set{B}_1\},\set{B}_2})$ is the max number of non-overlapping treks between $\{\set{A}\}\cup\{\set{B}_1\}$ and $\set{B}_2$.
    If $||\setset{C}||+|\set{X}|\geq |\set{A}|$ holds,
   all the treks starting from 
   $\{\set{A}\}\cup\{\set{B}_1\}$
   can be extended to treks that start from 
    $\setset{C}\cup\{\set{X}\}\cup\{\set{B}_1\}$,
    and   the max number of non-overlapping treks is the same, which means 
    $\text{rank}(\Sigma_{\{\set{A}\}\cup\{\set{B}_1\},\set{B}_2}) = \text{rank}(\Sigma_{\setset{C}\cup\{\set{X}\}\cup\{\set{B}_1\},{\set{B}_2}})$.
\end{proof}

This lemma informs  us  that if we correctly found a cover $\set{A}=\set{L}\cup\set{X}$ by
our rules in Algorithm~\ref{alg:phase1}, we can calculate the rank relating to $\set{A}$ by using its pure children $\setset{C}$ together with part of the observation of $\set{A}$, i.e., $\set{X}$ as surrogates, even though part of $\set{A}$ , i.e., $\set{L}$, cannot be observed. Note that
$||\setset{C}||+|\set{X}|\geq |\set{A}|$ always holds as it is required when we are searching for clusters in Algorithm~\ref{alg:phase1}.

Next, we show that 
when we are searching for combinations of $\setset{C}$ and $\setset{X}$
in Algorithm~\ref{alg:phase1},
by leveraging the checking function \text{NoCollider} defined in Algorithm~\ref{alg:checkcollider}, the correctness of Algorithm~\ref{alg:phase1} will not be influenced even though there might exist
colliders in $\setset{C}$.

\begin{lemma}[Colliders in $\setset{C}$ do not harm] 
Suppose there exist some collider structures in graph $\graph$, e.g., there exist two atomic covers $\set{V}_1$ and $\set{V}_2$,
with $\set{A}$ the minimal set of variables that d-separates $\set{V_1}$ from $\set{V_2}$, and  $\set{C}$
as a collider of $\set{V_1}$, $\set{V_2}$.
The correctness of Algorithm~\ref{alg:phase1} will not be influenced by the existence of colliders in $\setset{C}$.
\label{lemma:handle colliders in C}
\end{lemma}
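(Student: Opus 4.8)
The plan is to reduce every rank appearing in the \emph{NoCollider} test of Algorithm~\ref{alg:checkcollider} to a statement about systems of treks, using Theorem~\ref{theorem:rank and t} together with the max-flow-min-cut characterization in Lemma~\ref{lemma: maxflowmincut}, and then to track how the collider configuration $(\set{V_1},\set{V_2},\set{C},\set{A})$ shapes the minimal t-separators between a candidate set $\setset{C}\cup\setset{X}$ and its complement $\setset{N}\cup\setset{X}$ that the Search routine of Algorithm~\ref{alg:phase2} examines. The first fact I would establish is that, since every variable in $\set{C}$ is a common child of $\set{V_1}$ and $\set{V_2}$, every trek leaving such a collider must climb to a source in $\set{V_1}\cup\set{V_2}$; consequently any trek from the collider part of $\setset{C}$ to $\setset{N}$ is bottlenecked by $\set{V_1}\cup\set{V_2}$, while the only other route connecting the two sides is blocked by the separator $\set{A}$. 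This localizes the size of all relevant min-cuts to quantities governed by $|\set{V_1}|$, $|\set{V_2}|$, and $|\set{A}|$.

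Next I would split the analysis of an observed rank deficiency $\text{rank}(\Sigma_{\setset{C}\cup\setset{X},\setset{N}\cup\setset{X}})=k$ with $||\setset{C}\cup\setset{X}||=k+1$ into the genuine case and the collider case. In the genuine case $\setset{C}\subseteq\purechildren(\set{V})$ for a true atomic cover $\set{V}$ of cardinality $k$, the min-cut passes through $\set{V}$, so every proper subset $\setset{C}'\subsetneq\setset{C}$ keeps $||\setset{C}'\cup\setset{X}||\leq k$ treks available and hence retains full rank, making \emph{NoCollider} return true; this step reuses the surrogate argument behind Lemma~\ref{lemma:measurement as surrogate for atomic covers}. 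In the collider case I would show that whenever an element $\set{O}\in\setset{C}$ acts as a bridge, i.e. a collider of $\setset{C}\setminus\{\set{O}\}$ and $\setset{N}$, deleting $\set{O}$ strictly lowers the maximum number of non-overlapping treks from $\setset{C}\setminus\{\set{O}\}$ to $\setset{N}$ below $||(\setset{C}\setminus\{\set{O}\})\cup\setset{X}||$, so that subset is rank-deficient and the loop in Algorithm~\ref{alg:checkcollider} reports a collider. The quantitative heart of this step is Condition~\ref{cond:vstructure}: the inequality $|\set{C}|+|\set{A}|\geq|\set{V_1}|+|\set{V_2}|$ guarantees that the collider-induced bottleneck is always witnessed by some subset of effective cardinality at most $||\setset{C}||-1$, i.e. within the range actually enumerated by \emph{NoCollider}; without this margin a deficiency could appear only at the full-set level and escape the test.

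Finally I would combine the two cases to conclude that any rank deficiency surviving the \emph{NoCollider} filter must originate from a genuine atomic cover, so the hypotheses of Theorem~\ref{theorem:unique_rank_of_atomic_cover} are met and the cover constructed by Search is the correct one; hence the presence of colliders in $\setset{C}$ cannot cause Algorithm~\ref{alg:phase2} to emit a spurious cluster. I expect the main obstacle to be the trek min-cut bookkeeping when several colliders, multi-variable covers $\set{V_1},\set{V_2}$, and the separator $\set{A}$ interact simultaneously: one must argue that the deficiency is always exposed at a subset whose effective cardinality is strictly below $||\setset{C}||$, and it is precisely here that Condition~\ref{cond:vstructure} must be invoked carefully to rule out the borderline configurations in which the collider structure would otherwise mimic a legitimate rank-deficient cover undetectably.
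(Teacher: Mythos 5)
Your overall strategy (reduce everything to trek min-cuts, split into the genuine-cover case and the collider case, and check what the \emph{NoCollider} loop can witness) matches the paper's toolkit, and you correctly identify the dangerous sub-case in which the deficiency appears only at the full-set level and escapes the subset test. But your resolution of that sub-case is wrong in two ways. First, you make Condition~\ref{cond:vstructure} the ``quantitative heart'' of the argument, whereas that condition is reserved for the companion lemma about colliders in $\setset{N}$ (Lemma~\ref{lemma:based on condition2}); the present lemma is meant to hold \emph{without} it, using only the \emph{NoCollider} check and the search order of Algorithm~\ref{alg:phase2}. Second, even granting Condition~\ref{cond:vstructure}, it does not do what you claim: it does not force the collider-induced bottleneck to be ``witnessed by some subset of effective cardinality at most $||\setset{C}||-1$,'' i.e.\ it does not rescue the \emph{NoCollider} test in the escape case.

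The paper's actual argument writes the candidate as $\setset{C}=\{\set{V'_1},\set{V'_2},\set{C'}\}$ with $\set{V'_i}\subseteq\set{V_i}$, $\set{C'}\subseteq\set{C}$, sets $\set{R}=\set{V_1}\cup\set{V_2}\setminus(\set{V'_1}\cup\set{V'_2})$, and splits on $|\set{C'}|$ versus $|\set{R}|$. When $|\set{C'}|\leq|\set{R}|$, removing $\set{C'}$ leaves $|\set{V'_1}\cup\set{V'_2}|=1+|\set{A}|$ elements whose cross-rank is at most $|\set{A}|$, so \emph{NoCollider} detects the collider at a proper subset — this is the half you captured. When $|\set{C'}|>|\set{R}|$, the counting gives $|\set{V_1}\cup\set{V_2}|\leq k$ unconditionally, so the two true atomic covers are discovered at an earlier iteration of the incremental-$k$ search, the colliders $\set{C}$ are recorded as their pure children, and the unfolding order ensures the spurious combination is never proposed; no appeal to Condition~\ref{cond:vstructure} is needed. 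To repair your proof you should replace the Condition~\ref{cond:vstructure} step with this search-order/absorption argument, which is the missing mechanism for the case your subset test cannot see.
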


\begin{proof}
In Algorithm~\ref{alg:phase1},
we check whether different combinations of $\setset{C}$, $\setset{X}$, and $\setset{N}$ induce rank deficiency.
Suppose we take $\setset{C}=\{\{\set{V'}_1\},\{\set{V'}_2\},\{\set{C'}\}\}$, where $\set{V'}_1 \subseteq \set{V}_1$,
$\set{V'}_2 \subseteq \set{V}_2$, and
$\set{C'} \subseteq \set{C}$ and  let $\set{R}$ be $\set{V}_1\cup\set{V}_2\backslash\set{V'}_1\backslash\set{V'}_2$.\\
(i) If $|\set{C'}|\leq |\set{R}|$,
and rank deficiency holds,
we have $|\set{V'}_1\cup\set{V'}_2\cup\set{C'}|=1+|\set{C'}|+|\set{A}|$.
Therefore, we can detect $\set{C'}$ in $\setset{C}$ by  the checking function \text{NoCollider},
because by removing $\set{C'}$ we have
$|\set{V'}_1\cup\set{V'}_2|=1+|\set{A}|$.\\
(ii) If $|\set{C'}| > |\set{R}|$,
and rank deficiency holds when checking $k$,
we have 
$|\set{V'}_1\cup\set{V'}_2\cup\set{C'}|=1+|\set{R}|+|\set{A}|=k+1$,
which means 
$|\set{V}_1\cup\set{V}_2|=|\set{V'}_1|+|\set{V'}_2|+|\set{R}|=1+|\set{R}|+|\set{A}|+|\set{R}|-|\set{C'}|\leq k$.
Therefore, by the unfolding order in Algorithm~\ref{alg:phase1}, $\set{C}$ will be taken as the children of $\set{V}_1$ and $\set{V}_2$ first and thus will not induce incorrect rank deficiency.
\end{proof}

Further, under Condition~\ref{cond:vstructure},
we can  show that 
the correctness of Algorithm~\ref{alg:phase1} will not be influenced even though there might exist
colliders in $\setset{N}$, which is summarized in the following Lemma.

\begin{lemma}[Under Condition~\ref{cond:vstructure}, colliders in $\setset{N}$ do not harm] 
If Condition~\ref{cond:vstructure} holds, i.e., for every collider structures $\set{V}_1$, $\set{V}_2$, $\set{C}$, and $\set{A}$,
we have
$|\set{C}| + |\set{A}| \geq |\set{V_1}|+|\set{V_2}|$,
then
the correctness of Algorithm~\ref{alg:phase1} will not be influenced by the existence of colliders in $\setset{N}$.
\label{lemma:based on condition2}
\end{lemma}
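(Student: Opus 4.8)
The plan is to argue that a collider appearing in the right-hand block $\setset{N}$ cannot create a rank deficiency that the search of Algorithm~\ref{alg:phase2} would mistake for a genuine atomic cover. Throughout I would read every rank through the max-flow--min-cut characterization of Lemma~\ref{lemma: maxflowmincut}: $\text{rank}(\Sigma_{\setset{C}\cup\setset{X},\setset{N}\cup\setset{X}})$ equals the maximum number of non-overlapping treks from the left block to the right block, equivalently the size of the cheapest t-separating cut. The only way a collider can lower a rank is by enabling a cut that is cheaper than the one dictated by the true cover boundaries, so the whole argument reduces to comparing cut costs.

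First I would record the structural fact that a collider $\set{C}$, being a common child of the atomic covers $\set{V}_1$ and $\set{V}_2$, is never a conduit for treks: any trek that ends at a vertex of $\set{C}$ must enter it through a parent, i.e.\ through $\set{V}_1$ or $\set{V}_2$. Consequently, when $\set{C}\subseteq\setset{N}$, the treks from the left block to the right block split into those that terminate inside $\set{C}$ and those that reach the remainder of $\setset{N}$ by crossing from the $\set{V}_1$-side to the $\set{V}_2$-side. The first family is severed by placing $\set{C}$ on the right of the cut (cost $|\set{C}|$); the second family is severed by placing the minimal d-separator $\set{A}$ in the cut (cost $|\set{A}|$). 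Hence the cheapest cut that exploits the collider costs $|\set{C}|+|\set{A}|$, whereas the cut that respects the genuine cover structure---blocking the parent sets $\set{V}_1$ and $\set{V}_2$ directly---costs $|\set{V}_1|+|\set{V}_2|$.

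The decisive step is then immediate from Condition~\ref{cond:vstructure}: since $|\set{C}|+|\set{A}|\ge|\set{V}_1|+|\set{V}_2|$, the collider-based cut is never strictly cheaper than the structural one, so the minimal cut---and therefore the measured rank---is exactly the value prescribed by the true cover boundaries. No spurious deficiency is introduced, and any deficiency that does occur still certifies a genuine cluster. Mirroring the two-case layout of Lemma~\ref{lemma:handle colliders in C}, I would separate (i) the regime where the collider endpoints are dominated by the remaining cover vertices, in which the equality above leaves the genuine rank untouched, from (ii) the regime where the collider's children are numerous enough that the search order of Algorithm~\ref{alg:phase2} identifies $\set{V}_1$ and $\set{V}_2$, and absorbs the collider as their common children, before the combined set could be read as a single independent cluster; in that case the NoCollider test of Algorithm~\ref{alg:checkcollider} never fires on it incorrectly. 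In both regimes the search behaves as if the collider were absent, which is the claim.

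I expect the main obstacle to be the trek bookkeeping behind the cut-cost accounting: one must verify, across all ways that $\setset{C}$, $\setset{X}$, and $\setset{N}$ can overlap $\set{V}_1$, $\set{V}_2$, $\set{C}$, and $\set{A}$, that $\{\set{C}\}\cup\set{A}$ really is a valid t-separating set for the two trek families and that no cheaper mixed cut sneaks below $|\set{V}_1|+|\set{V}_2|$ without being forbidden by Condition~\ref{cond:vstructure}. A secondary subtlety is composing this result with Lemma~\ref{lemma:handle colliders in C}, since a collider may sit partly in $\setset{C}$ and partly in $\setset{N}$ simultaneously; handling this shared case cleanly is what ultimately lets the two lemmas feed into the correctness guarantee of Theorem~\ref{theorem:identifiability}.
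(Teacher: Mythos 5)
Your proposal is correct and follows essentially the same route as the paper's proof: both reduce the question to comparing the cost $|\set{C}|+|\set{A}|$ of the collider-exploiting t-separating cut against the structural cost $|\set{V}_1|+|\set{V}_2|$, invoke Condition~\ref{cond:vstructure} to conclude the former is never strictly cheaper, and then rely on the bottom-up search order of Algorithm~\ref{alg:phase2} so that $\set{V}_1$ and $\set{V}_2$ are identified and $\set{C}$ absorbed as their pure children before any spurious cluster could be declared. The paper states this far more tersely, but the key accounting and the appeal to search order are identical to yours.
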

\begin{proof}
Consider a collider structure
$\set{V}_1$, $\set{V}_2$, $\set{C}$, and $\set{A}$.
The potential existence of $\set{C}$ in $\setset{N}$ will not cause rank deficiency unless it is when $k=|\set{C}|+|\set{A}|$. But under Condition~\ref{cond:vstructure}, we have $|\set{C}| + |\set{A}| \geq |\set{V_1}|+|\set{V_2}|$, so in Algorithm~\ref{alg:phase1}, the colliders $\set{C}$ will be taken as the pure children of $\set{V}_1$ and $\set{V}_2$ first.
This holds for every collider structure and thus the correctness of Algorithm~\ref{alg:phase1} will not be influenced.
\end{proof}

Now we are ready to prove Theorem~\ref{theorem:identifiability}, as follows.
\begin{proof}
As the existence of colliders in $\setset{X}$ will
enable more non-overlapping treks, the existence of colliders in $\setset{X}$ will not induce incorrect rank deficiency. 
Taking this and the above two Lemmas~\ref{lemma:handle colliders in C},\ref{lemma:based on condition2} into consideration, under Condition~\ref{cond:basic} and \ref{cond:vstructure},
the existence of colliders between atomic covers will not influence the correctness of Algorithm~\ref{alg:phase1}.
During the search process of Phase 2, 
Lemma~\ref{lemma:measurement as surrogate for atomic covers} allows us to test the rank involving partially observed atomic covers and thus we are able to iteratively find all clusters in $\graph$ by making use of Theorem~\ref{theorem:unique_rank_of_atomic_cover}, with the direction of some edges undetermined.
Further, Theorem~\ref{theorem:phase3} allows us to correct clusters induced by the violation of the assumption that when searching $k$ clusters all $k'<k$ clusters have been found, by Phase 3 in Algorithm~\ref{alg:phase3}.
Therefore, our Algorithm~\ref{alg:all} including Phase 1, 2, 3 can identify the Markov equivalence class of $\graph$, up to rank invariant operations $\mathcal{O}_{\text{min}}$ and $\mathcal{O}_s$.
\end{proof}

Corollary~\ref{cor:pcandrank} is directly from
Theorem~\ref{theorem:identifiability}.
When there is no latent, 
the Markov equivalence of $\mathcal{O}_{\text{min}}(\mathcal{O}_s(\graph))$ is the Markov equivalence of $\graph$ so asymptotically the output of RLCD is the same as that of PC.

\subsection{Rank Test}
\label{ranktest}
We employ canonical correlations \citep{ranktest} to calculate the rank of covariance matrices.
 Denote by $\alpha_i$ the $i$-th canonical correlation coefficient between two sets of variables $\set{A}$ and $\set{B}$,
 under  the null hypothesis $\texttt{rank}(\Sigma_{\set{A},\set{B}}) \leq r$ with $N$ sample size,
 the statistics $-(N\!-\!(p+q+3)/2) \sum_{i=r+1}^{\max(|\set{A}|,|\set{B}|)} \log(1-\alpha_i^2)$ is approximately
  $\chi^2$ distributed with $(|\set{A}|-r)(|\set{B}|-r)$ degrees of freedom. 

%$-(N'\!-\!(p+q+3)/2) \sum_{i=r+1}^{\max(|\set{A}|,|\set{B}|)} \log(1-\alpha_i^2), N'=N*t$

\section{Illustrations of Algorithms and More Details about Datasets}

%\begin{figure}[t]%\vspace{2mm}
%  \vspace{-0mm}
%  \centering
%    \includegraphics[width=0.35\textwidth]{}
%    \caption{Example of handling colliders.}
%  \label{fig:handle colliders}
%\end{figure}

\begin{figure}[t]
    \vspace{-0mm}
    % \setlength{\abovecaptionskip}{-2.5mm}
  %\setlength{\belowcaptionskip}{-2.5mm}
    %\centering
    \begin{subfigure}[b]{0.32\textwidth}
      %\centering
      \includegraphics[width=\textwidth]{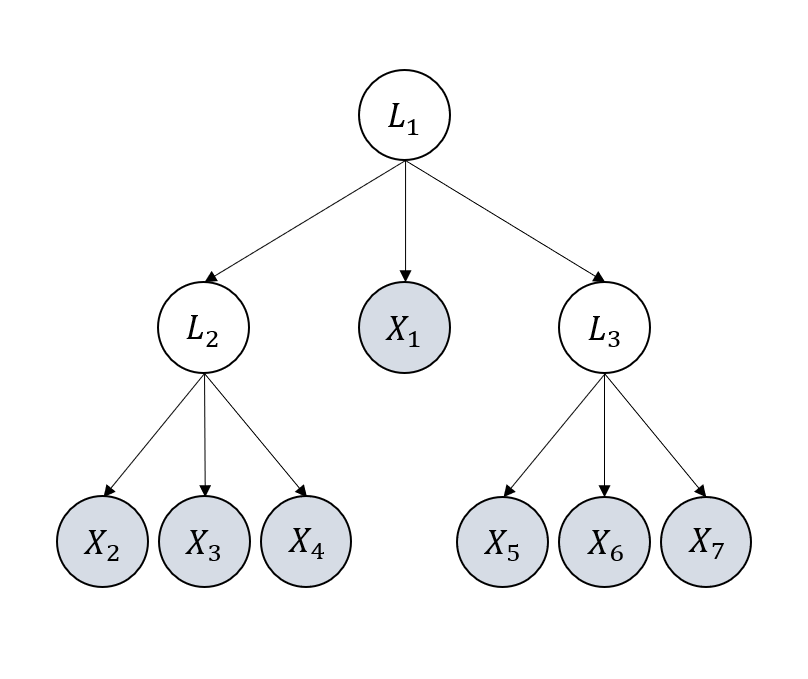}
      \caption{Illustrative graph allowed by \citet{Pearl88,zhang2004hierarchical}.}
    \end{subfigure}
    \hfill
    \begin{subfigure}[b]{0.60\textwidth}
    %\centering
    \includegraphics[width=\textwidth]{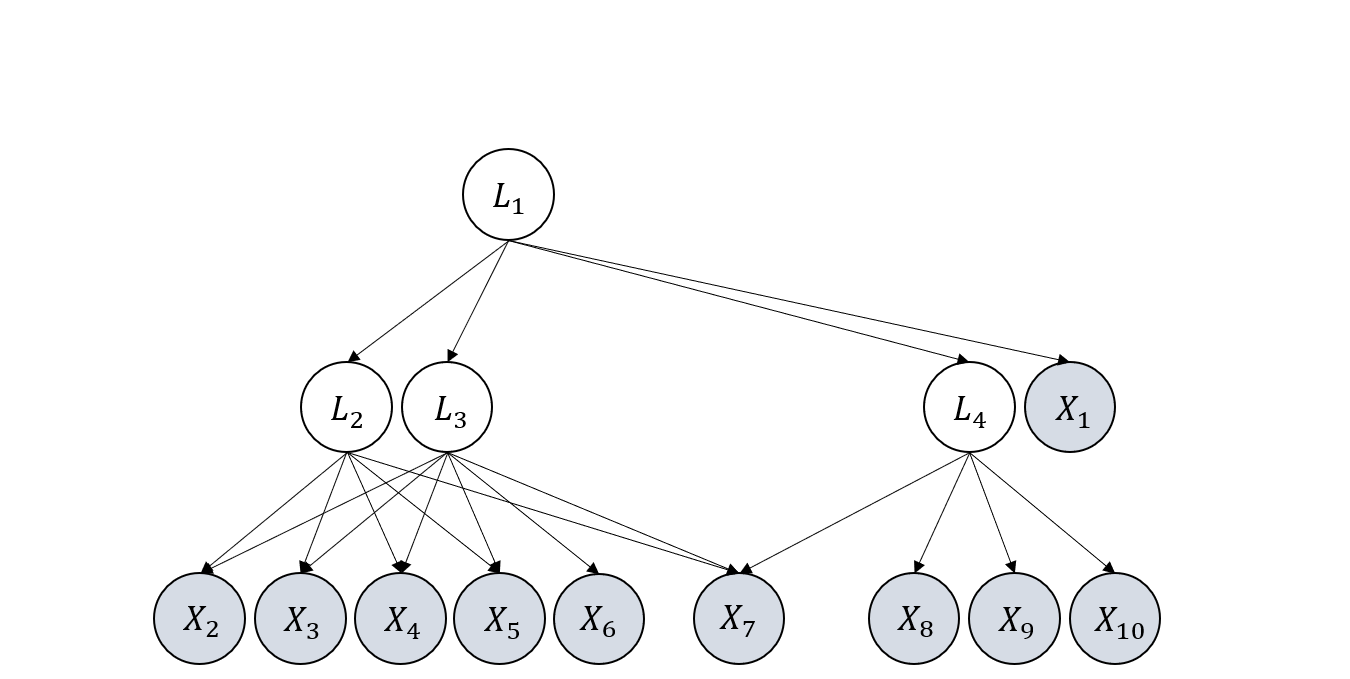}
    \caption{Illustrative graph allowed by \citet{huang2022latent}.\color{white}{place holder  place holder place holder}}
    \end{subfigure}
    \hfill
    \begin{subfigure}[b]{0.32\textwidth}
    %\centering
    \includegraphics[width=\textwidth]{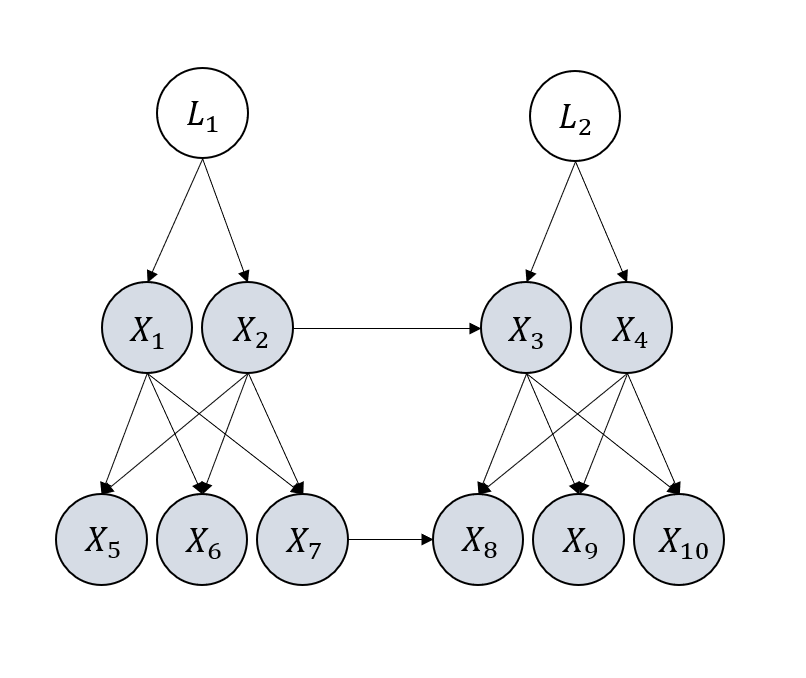}
    \caption{Illustrative graph allowed by \citet{maeda2020rcd}.}
    \end{subfigure}
    \hfill
    \begin{subfigure}[b]{0.60\textwidth}
    %\centering
    \includegraphics[width=\textwidth]{figures/example1.png}
    \caption{Illustrative graph allowed by the proposed method.\color{white}{place holder  place holder place holder place holder}}
    \end{subfigure}
  \caption{Examples of graphs that are allowed by each method.}
    \label{fig:compare graphs with each method}
    \vspace{-0mm}
  \end{figure}

\subsection{Examples of Grpahs that Each Method Can Handle}
\label{compare graphs with each method}

For causal discovery in the presence of latent variables, traditional wisdom often relies on strong 
graphical constraints for achieving the identifiability of the structure.
E.g., \citet{Pearl88,zhang2004hierarchical} assume that the underlying graph only follows a tree structure;
\citet{huang2022latent} assumes a more general latent hierarchical structures 
but edges among observed variables are not allowed;
\citet{maeda2020rcd} allows observed variables to be adjacent but requires that all latent variables are mutually independent.  

Illustrative graphs allowed by each method are shown in 
Figure~\ref{fig:compare graphs with each method}.
To be specific,
(a) is the illustrative graph allowed by 
\citet{Pearl88,zhang2004hierarchical} where
each cluster has only one latent variable and observed variables are not allowed to
 be directly related to each other.
 (b) is the graph allowed by \citet{huang2022latent} 
 where each cluster can have multiple latent variables. 
 However, observed variable cannot be adjacent to each other and observed variables cannot be cause of latent variables.
 (c) is the graph allowed by \citet{maeda2020rcd}, where all latent variables are required to be mutually independent.
 (d) is the graph allowed by the proposed method, where all variables are allowed to be very flexibly related to each other.

\subsection{Example of Violation of faithfulness}
\label{violation of faithfulness}
Below, we provide a special example where faithfulness does not hold. 
Suppose the true underlying graph $\mathcal{G}$ 
is $\mathsf{X}\xrightarrow{a}\mathsf{Y}\xrightarrow{b}\mathsf{Z}$ 
and 
$\mathsf{X}\xrightarrow{c}\mathsf{Z}$. 
If the corresponding SCM is parameterized with $ab+c=0$, 
then the faithfulness assumption is violated.
 This is because, when $ab+c=0$, 
 there exists another SCM with a graph $\mathcal{G}':$ 
  $\mathsf{X}\xrightarrow{a'}\mathsf{Y}\xleftarrow{b'}\mathsf{Z}$
that can generate exactly the same observational distribution 
as that of $\mathcal{G}$, 
and thus from observational data it is impossible to
 differentiate $\mathcal{G}$  and $\mathcal{G}'$
  (which results in $\text{rank}(\Sigma_{\mathsf{X},\mathsf{Z}})=0$).
   We note that such scenarios are very rare 
   and classical methods like PC \citep{spirtes2000causation} 
   cannot handle these situations either.

\

\

\

\

\

\subsection{Example of Atomic Cover}
\label{appendix: example atomic cover}

\begin{figure}[h]
    \centering 
    \vspace{0mm} 
     % \begin{minipage}[c]{0.5\textwidth}
     %   \centering 
    \includegraphics[width=0.36\textwidth]{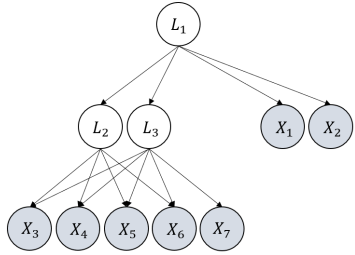}
    % \end{minipage}
    % \hspace{6em}
    %  \begin{minipage}[c]{0.3\textwidth}
    %    \centering 
    \caption{\small An example graph for showing atomic covers.} 
     \label{fig: example for atomic cover}
    % \end{minipage}
    % \vspace{0mm}
  \end{figure}
\begin{example}
  Take Figure~\ref{fig: example for atomic cover} as an example.
  Here $\{\node{X_1}\}$, $\{\node{X_2}\}$, $\{\node{X_3}\}$, $\{\node{X_4}\}$, $\{\node{X_5}\}$, $\{\node{X_6}\}$
  are all atomic covers as they only contain a single observed variable.
  We also have  
  $\set{V}=\{\node{L_2},\node{L_3}\}$ as an atomic cover.
  To show this, lets check whether
    conditions (i)-(iii) in Definition~\ref{definition:ac}
    are satisfied.
    For (i), we can let $\setset{C}=\{\{\node{X_3}\},\{\node{X_4}\},\{\node{X_5}\}\}$, and $||\setset{C}||=3\geq k+1-t=3$.
    For (ii), we can let $\setset{N}=\{\{\node{L_1}\},\{\node{X_6}\},\{\node{X_7}\}\}$, and $||\setset{N}||=3\geq k+1-t=3$.
    For (iii), it can be shown that both $\{\node{L_2}\}$ and $\{\node{L_3}\}$ cannot be an atomic cover.
    Therefore, $\{\node{L_2},\node{L_3}\}$ is an atomic cover.

    Next, let's show $\{\node{L_1}\}$ is also an atomic cover. As for (i)
    we can take $\setset{C}=\{\{\node{L_2},\node{L_3}\}\}$ with $||\setset{C}||=2\geq k-1+t=2$,
    for (ii) we can take $\setset{N}=\{\{\node{X_1}\},\{\node{X_2}\}\}$ with $||\setset{N}||=2\geq k-1+t=2$,
    and (iii) naturally holds as $\{\node{L_1}\}$ has a single element. Therefore $\{\node{L_1}\}$ is an atomic cover.

    From the example it is natural to see that when we take an atomic cover (a set) as the unit, we need to use a set of covers, i.e., $\setset{C}$, to capture the pure children of a cover.

\end{example}

\subsection{Example for Phase 1}
\label{appdneix: example phase1}

We take Figure~\ref{fig:phase 1 example} (a) as an example.
After Phase 1, we will find the CI skeleton $\graphp$.
In $\graphp$,
 we have three maximal cliques that have cardinality$\geq3$. They are $\{\node{X_1},\node{X_2},\node{X_3},\node{X_4},\node{X_5},\node{X_6}\}$, $\{\node{X_2},\node{X_3},\node{X_7}\}$, and $\{\node{X_9},\node{X_{10}},\node{X_{11}},\node{X_{12}}\}$.
 Then we partition them into groups such that two cliques $\set{Q_1},\set{Q_2}$ are in the same group if
$|\set{Q_1}\cap\set{Q_2}|\geq 2$.
Thus we have two groups of cliques $\setset{Q}_1=\{\{\node{X_1},\node{X_2},\node{X_3},\node{X_4},\node{X_5},\node{X_6}\},\{\node{X_2},\node{X_3},\node{X_7}\}\}$
and $\setset{Q}_2=\{\{\node{X_9},\node{X_{10}},\node{X_{11}},\node{X_{12}}\}\}$.
Given $\setset{Q}_1$ and $\setset{Q}_2$,
we get $\set{X}_{\setset{Q}_1}=
\cup_{\set{Q}\in\setset{Q}_1}\set{Q}=\{\node{X_1},\node{X_2},\node{X_3},\node{X_4},\node{X_5},\node{X_6},\node{X_7}\}$ and 
$\set{X}_{\setset{Q}_2}=
\cup_{\set{Q}\in\setset{Q}_2}\set{Q}=\{\node{X_9},\node{X_{10}},\node{X_{11}},\node{X_{12}}\}$,
the corresponding input to Phase 2 and 3 will be  
$\set{X}_{\setset{Q}_1}\cup\set{N}_{\setset{Q}_1}=\{\node{X_1},\node{X_2},\node{X_{3}},\node{X_{4}},\node{X_{5}},\node{X_{6}},\node{X_{7}},\node{X_{8}}\}$ and $\set{X}_{\setset{Q}_2}\cup\set{N}_{\setset{Q}_2}=\{\node{X_9},\node{X_{10}},\node{X_{11}},\node{X_{12}},\node{X_{8}}\}$ respectively.

We next show that if there exist disjoint $\set{A}$, $\set{B}$, $\set{C}$,
s.t., (i)-(iv) in Theorem~\ref{theorem: global for latent} hold, then 
$\set{A}\cup\set{B}\cup\set{C}\subseteq\set{X}_{\setset{Q}}$.

If there exist disjoint $\set{A}$, $\set{B}$, $\set{C}$,
s.t., (i)-(iv) in Theorem~\ref{theorem: global for latent} hold. Then $\set{A}$ itself is a clique, and for all $\node{B}\in\set{B}$, $\set{A}\cup\{\node{B}\}$ is also a clique. Plus $\set{A}$ and $\set{A}\cup\{\node{B}\}$ have at least two common elements as $|\set{A}|\geq 2$. Thus, after our processing,
$\set{A},\set{B}$ will both be subsets of a same $\set{X}_{\setset{Q}}$. Plus, by (iv), $\set{C}$ will be a subset of $\set{N}_{\setset{Q}}$. Therefore, we have 
$\set{A}\cup\set{B}\cup\set{C}\subseteq\set{X}_{\setset{Q}}$.

\subsection{Detailed Description and Example for Phase 2}
\label{appendix: example phase2}

Our search starts with $k=1$ and an input graph $\graphp$ (could be empty) over observed variables. Every time we successfully found rank deficiency with $\texttt{rank}=k$, we update the graph and reset $k$ to $1$. On the other hand, if no rank deficiency can be found with current $k$, we add $k$ by $1$ (line 5, Alg~\ref{alg:phase2}),
as we want to ensure all $k'$ clusters, $k'<k$, have been found
when searching for $k$, as in Theorem~\ref{theorem:unique_rank_of_atomic_cover}.

%\Biwei{Give a more detailed description of the algorithm with Figure 2 in the appendix}

During the search procedure, we maintain an active set $\setset{S}$, which is a set of covers and is initialized as the set of observed covers $\{\{\node{X_1}\},...,\{\node{X_n}\}\}$ (line 2,  Alg.~\ref{alg:phase2}).
We test the rank deficiency over different combinations of $\setset{C}$ and $\setset{X}$, drawn from $\setset{S'}$, where $\setset{S'}$ is generated from $\setset{S}$ by unfolding some existing clusters (lines 10-11, Alg.~\ref{alg:phase1}).
Introducing $\setset{S}$ and $\setset{S'}$ has several merits: 
%(i) When we found a new atomic cover, we need to allow the found atomic cover to interact with current ones.
(i) When we found a new atomic cover, we want to explore its relation with existing ones. 
This can be achieved by adding the newly found atomic cover to the active set $\setset{S}$ (illustrated in Example~\ref{example:phase2}).
(ii) According to Theorem~\ref{theorem:unique_rank_of_atomic_cover}, we do not want any descendants of $\setset{C}$ to be on the right side of the cross-covariance matrix when testing the rank. 
This can be achieved by removing all the children
of a newly found atomic cover from the active set $\setset{S}$.
%
%The right hand side of the rank would be $\setset{N}\cup\setset{X}$
%where $\setset{N} \gets \setset{S'}\backslash (\setset{X} \cup \setset{C})$.
Taking  (i) and (ii) together, we update the active set by $\setset{S} \gets (\setset{S} \backslash \setset{C}) \cup \mathbf{P}$ (line 24, Alg.~\ref{alg:phase2}).
(iii) We unfold $\setset{S}$ to get $\setset{S'}$,
and draw combinations $\setset{C}$ and $\setset{X}$
from $\setset{S'}$ instead of $\setset{S}$.
This allows the children of existing clusters to 
re-appear in $\setset{C}$ and $\setset{X}$,
to facilitate finding new clusters that share parents with existing ones, which will be discussed in detail later.%\Biwei{give an example in the appendix?}

In addition, to establish a unique connection between rank deficiency and atomic covers, we need to avoid colliders and their descendants to appear in
$\setset{S'}$, as the existence of colliders in $\setset{C}$ or $\setset{N}$
($\setset{N} \gets \setset{S'}\backslash (\setset{X} \cup \setset{C})$) might induce rank deficiency that does not indicate a correct cluster (example in Appx.~\ref{example:colliderinc}).
%
%In addition, we do not want colliders between atomic covers to exist in our active set, for the rank deficiency to be unique.
To this end, we take two steps:
(i) Every time we found rank deficiency, we further check whether there is a collider in $\setset{C}$ (by the \textit{NoCollider} function described in Alg.~
\ref{alg:checkcollider}),
i.e., check whether there exists $\set{O}\in\setset{C}$ s.t., $\set{O}$ is a collider of $\setset{C}\backslash\{\set{O}\}$ and $\setset{N}$ (line 5 in Alg.~\ref{alg:checkcollider}).
If there is a collider, then we ignore the corresponding combination of $\setset{C}$  and $\setset{X}$ (line 17 in Alg.~\ref{alg:phase2}).
(ii) We perform unfolding on $\setset{S}$ to get $\setset{S'}$.
That is, every time we consider $\setset{T}$, a subset of $\setset{S}$, and get $\setset{S'}\gets(\setset{S} \backslash \setset{T}) \cup (\cup_{\set{T} \in \setset{T}} \purechildrenp(\set{T}))$ (lines 10-11 in Alg.~\ref{alg:phase2}).
This allows us to reconsider the pure children of existing atomic covers when choosing combinations of $\setset{C}$ and $\setset{X}$, and thus,  colliders can be identified (illustrated in Example~\ref{example:phase2}).
Taking these two steps together,
under the Condition~\ref{cond:vstructure},
it can be guaranteed that our search procedure will not be affected by the existence of colliders (proof in Appx.~\ref{proof:identifiability}).

\begin{example} [Example for Phase 2]
\label{example:phase2}
Consider the graph in Figure~\ref{fig:example2}.
  We start with finding atomic covers with $k=1$,
  and we can find that $\{\node{X_3}\}$ is a parent of $\{\node{X_8}\}$,
  as in Figure~\ref{fig:example2}(a).
  At this point, no more $k=1$ clusters can be found, so next we search for $k=2$ clusters.
  Then to identify collider $\{\node{X_7}\}$, we only need to consider $\{\node{X_7}\}$ and $\{\node{X_8}\}$ together as the children of $\{\node{X_2},\node{X_3}\}$.
  %
  %However,  when we found  $\{\node{X_3}\}$ to be a parent of $\{\node{X_8}\}$, we have already removed $\{\node{X_8}\}$ from the active set. 
  %
  %Fortunately, we have the unfolding step, so when we unfold $\{\node{X_3}\}$,
  After finding such a relationship, we arrive at Figure~\ref{fig:example2}(b),
  and from now on the collider $\{\node{X_7}\}$ will not induce unfavorable rank deficiency anymore (as it is recorded).
  The next step is to find the relation of  $\{\node{X_4}\}$,$\{\node{X_5}\}$,$\{\node{X_6}\}$ with $\{\node{L_2},\node{X_2}\}$, 
   by
  taking $\setset{X}=\{\{\node{X_2}\}\}$ and $\setset{C}=\{\{\node{X_4}\},\{\node{X_5}\}\}$
  or $\{\{\node{X_4}\},\{\node{X_6}\}\}$ or $\{\{\node{X_5}\},\{\node{X_6}\}\}$,
  and thus conclude $\{\{\node{X_4}\},\{\node{X_5}\},\{\node{X_6}\}\}$ as the pure children of $\{\node{L_2},\node{X_2}\}$,
   as in Fig~\ref{fig:example2}(c).
   Finally we 
    are able to find the relationship of $\{\node{X_1}\}$,$\{\node{L_2},\node{X_2}\}$, $\{\node{X_3}\}$ with $\{\node{L_1}\}$,
  by taking $\setset{X}=\{\}$ and $\setset{C}$ as $\{\{\node{X_1}\},\{\node{L_2},\node{X_2}\}\}$,
  $\{\{\node{X_1}\},\{\node{X_3}\}\}$, or $\{\{\node{L_2},\node{X_2}\},\{\node{X_3}\}\}$, as in Figure~\ref{fig:example2}(d).
\end{example}

We here give a more detailed example to show the procedure of Phase 2,
with the underlying graph $\graph$ showed in Figure~\ref{fig:example2}(d).

\textbf{Step 1.}  Initialize active set $\setset{S}$ as $\{\{\node{X_1}\},...,\{\node{X_8}\}\}$, $k$ as $1$.

\textbf{Step 2.} Get $\setset{S'}$ by unfolding $\setset{S}$. Currently, 
$\setset{S'}=\{\{\node{X_1}\},...,\{\node{X_8}\}\}$.
Now let $k=1$ and $t=1$. 
 Draw a set of $t$ observed covers $\setset{X} \subset \mathcal{S'}\cap \setset{X}_\graph$, and  draw a set of covers $\setset{C} \subset \setset{S'}\backslash \setset{X}$, s.t., $||\setset{C}||=k-t+1$,
 and check whether rank deficiency holds.
 We will find that when $\setset{X}=\{\{\node{X}_3\}\}$ and $\setset{C}=\{\{\node{X}_8\}\}$, rank deficiency holds and there is no collider detected.
 Therefore  we draw a link from $\node{X}_3$ to $\node{X}_8$ in $\graphp$, as shown in Figure~\ref{fig:example2}(a).
 Now update the active set $\setset{S}$ as $\{\{\node{X_1}\},...,\{\node{X_7}\}\}$.

 \textbf{Step 3.} 
 Continue searching with $k=1$, but no more rank deficiency can be found.
 Therefore, we add $k$ by 1.

 \textbf{Step 4.} 
 Unfold  $\setset{S}$ and  get 
$\setset{S'}=\{\{\node{X_1}\},...,\{\node{X_8}\}\}$.
Now, $k=2$ and $t=2$.
 By drawing $\setset{X}$ and $\setset{C}$,
 we will find that when $\setset{X}=\{\{\node{X}_2\},\{\node{X}_3\}\}$ and $\setset{C}=\{\{\node{X}_7\}\}$, rank deficiency holds and there is no collider detected.
Therefore, we draw links from $\node{X}_2\node{X}_3$ to $\node{X}_7$ in $\graphp$, as shown in Figure~\ref{fig:example2}(b).
 Now update the active set $\setset{S}$ as $\{\{\node{X_1}\},...,\{\node{X_6}\}\}$.

\textbf{Step 5.}
 Reset $k=1$ and search for rank deficiency. No more rank deficiency can be found with $k=1$, and thus we add $k$ by 1.

 \textbf{Step 6.}
 Get $\setset{S'}$ by unfolding $\setset{S}$, and  
$\setset{S'}=\{\{\node{X_1}\},...,\{\node{X_6}\}\}$.
When $k=2$ and $t=2$, no more rank deficiency can be found.
Therefore, we try $k=2$ and $t=1$.
 By drawing $\setset{X}$ and $\setset{C}$,
 we will find that when $\setset{X}=\{\{\node{X}_2\}\}$ and $\setset{C}=\{\{\node{X}_4\},\{\node{X}_5\}\}$ or $\{\{\node{X}_4\},\{\node{X}_6\}\}$ or $\{\{\node{X}_5\},\{\node{X}_6\}\}$, rank deficiency holds and there is no collider detected.
 Therefore, we conclude that there is an atomic cover. 
 As $k=2$ but $||\setset{X}||=1$, we need one additional latent variable to explain this atomic cover.
 Thus, we add a new node $\node{L}_2$ to $\graphp$ (the subscript index for $\node{L}$ can be rather arbitrary as long as it is not the same as an existing one), and draw links from $\node{L}_2\node{X}_2$ to $\node{X}_4$$\node{X}_5$$\node{X}_6$ in $\graphp$, as shown in Figure~\ref{fig:example2}(c).
 Now, update the active set $\setset{S}$ as $\{\{\node{X_1}\},\{\node{X_2}\},\{\node{X_3}\},\{\node{L}_2,\node{X}_2\}\}$.

 \textbf{Step 7.} Reset $k=1$ and search for rank deficiency. 
 Unfold $\setset{S}$ and get  $\setset{S'}=\{\{\node{X_1}\},\{\node{X_2}\},\{\node{X_3}\},\{\node{L}_2,\node{X}_2\}\}$.
When $k=1$ and $t=1$, no more rank deficiency can be found.
Therefore we try $k=1$ and $t=0$.
 By drawing $\setset{X}$ and $\setset{C}$,
 we will find that when $\setset{X}=\{\}$ and $\setset{C}=\{\{\node{X}_1\},\{\node{X}_2\}\}$ or $\{\{\node{X}_2\},\{\node{X}_3\}\}$ or $\{\{\node{X}_1\},\{\node{X}_3\}\}$ or $\{\{\node{L}_2,\node{X}_2\}\}$, rank deficiency holds and there is no collider detected.
  Therefore, we conclude that there is an atomic cover. 
  All the possible $\setset{C}$ will be merged.
 As $k=1$ but $||\setset{X}||=0$, we need one additional latent variable to explain this atomic cover.
  Thus, we add a new node $\node{L}_1$ to $\graphp$, and draw links from $\node{L}_1$ to $\node{X}_1$$\node{L}_2$$\node{X}_2$$\node{X}_3$ in $\graphp$, as shown in Figure~\ref{fig:example2}(d),
  and update the active set $\setset{S}$ as $\{\{\node{L}_1\}\}$.

\textbf{Step 8.} From now on, no more rank deficiency can be found, and  when $k$ is sufficiently large the procedure ends.
Output $\graphp$, as in Figure~\ref{fig:example2}(d).

\begin{figure}[t]%\vspace{2mm}
  \vspace{-0mm}
  \centering
  \begin{subfigure}[b]{0.45\textwidth}
    \centering
    \includegraphics[width=\textwidth]{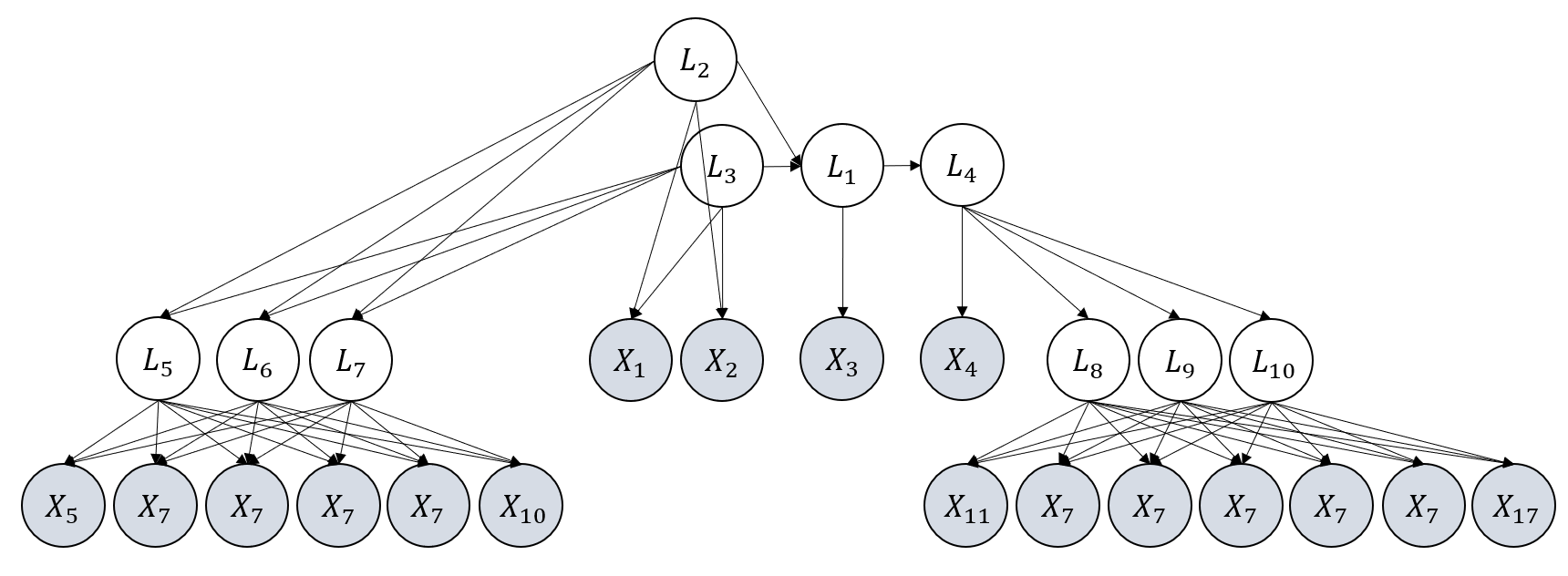}
    \caption{The ground truth graph $\graph$.\color{white}{place holder place holder}}
  \end{subfigure}
  \begin{subfigure}[b]{0.45\textwidth}
    \centering
    \includegraphics[width=\textwidth]{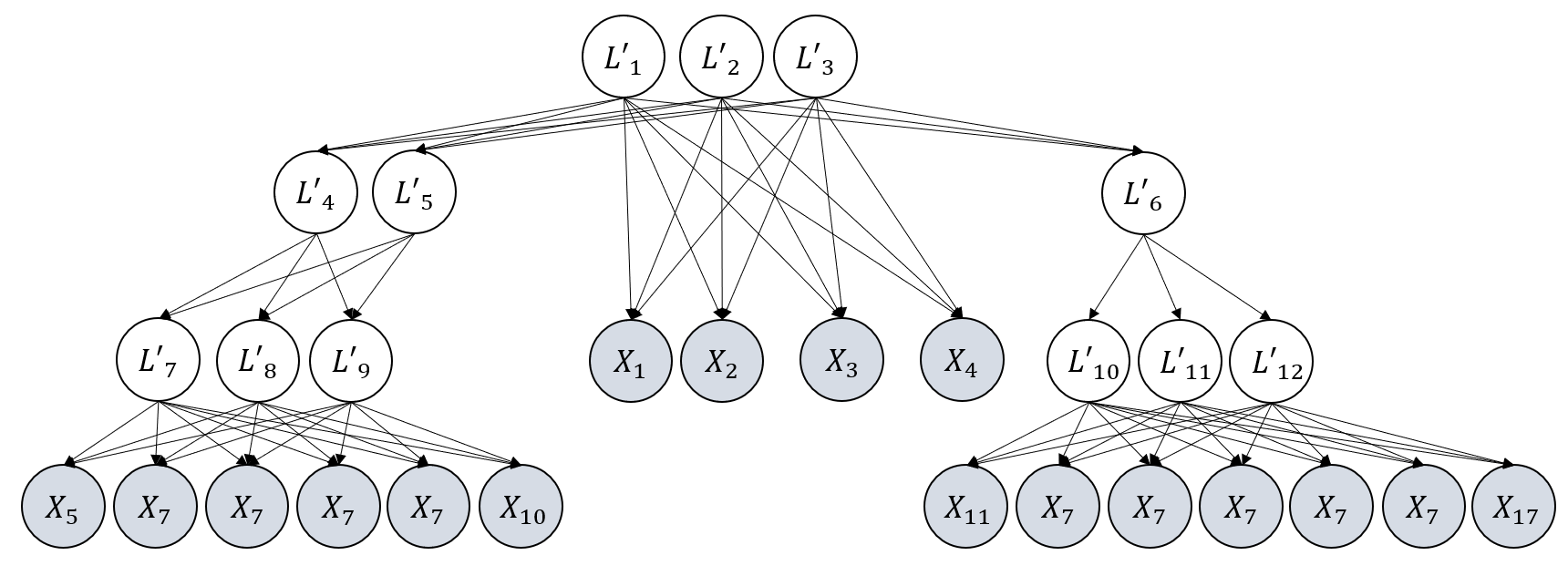}
    \caption{Algorithm output after phase 2, taken as input of RefineCausalClusters.}
  \end{subfigure}
  \begin{subfigure}[b]{0.45\textwidth}
    \centering
    \includegraphics[width=\textwidth]{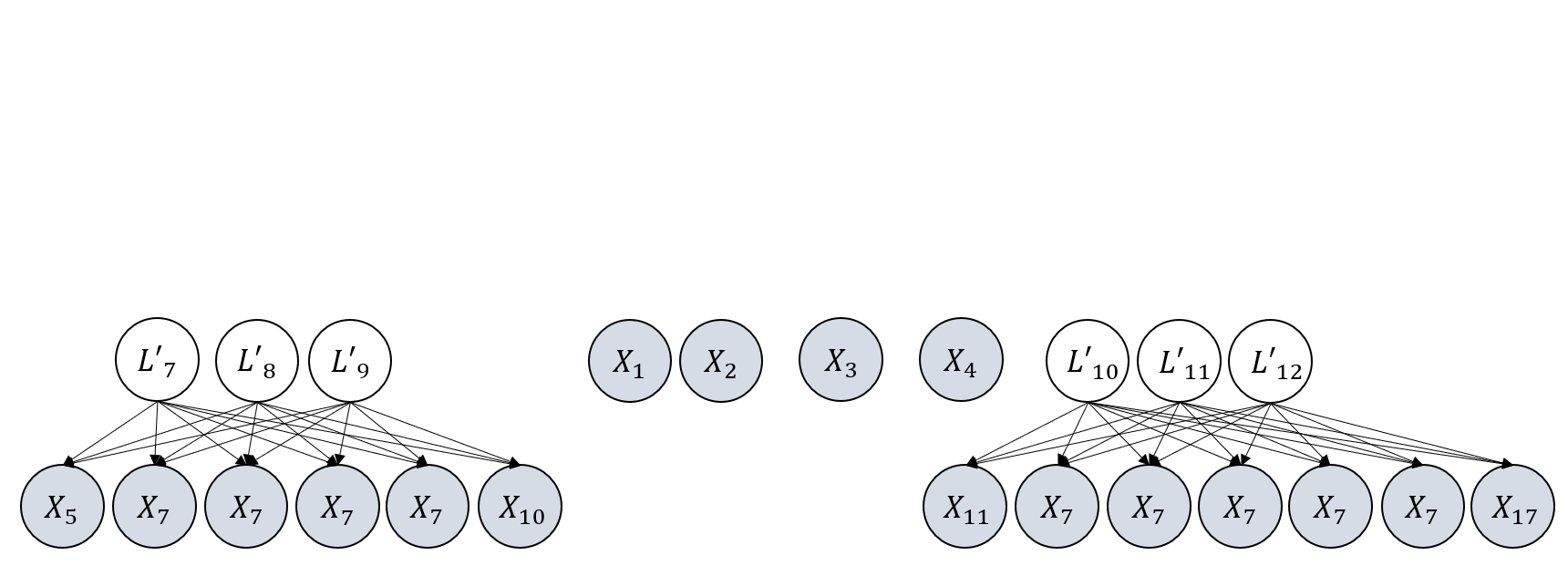}
    \caption{Remove $\{\node{L_1'},\node{L_2'},\node{L_3'}\}$ and its neighbours that are latent. Then perform FindCausalClusters.}
  \end{subfigure}
  \begin{subfigure}[b]{0.45\textwidth}
    \centering
    \includegraphics[width=\textwidth]{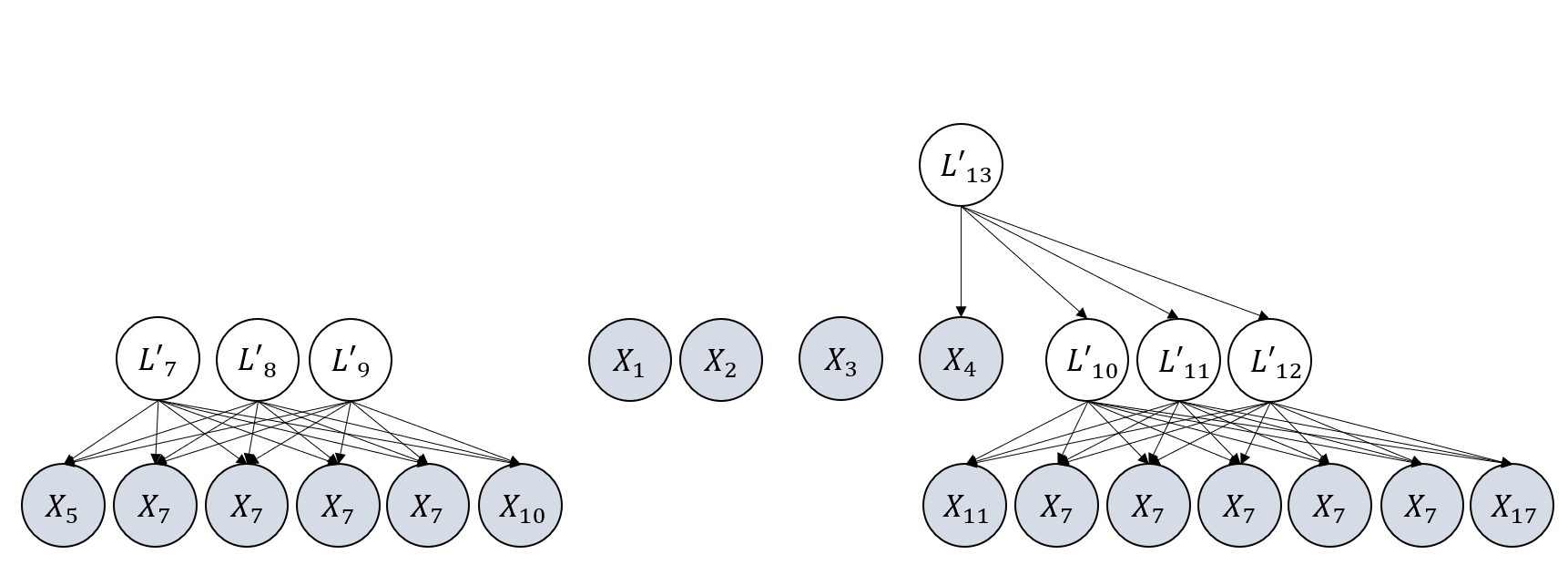}
    \caption{During FindCausalClusters performed at (c) we first  find $\{\node{L_{13}'}\}$.\color{white}{place holder place holder}}
  \end{subfigure}
  \begin{subfigure}[b]{0.45\textwidth}
    \centering
    \includegraphics[width=\textwidth]{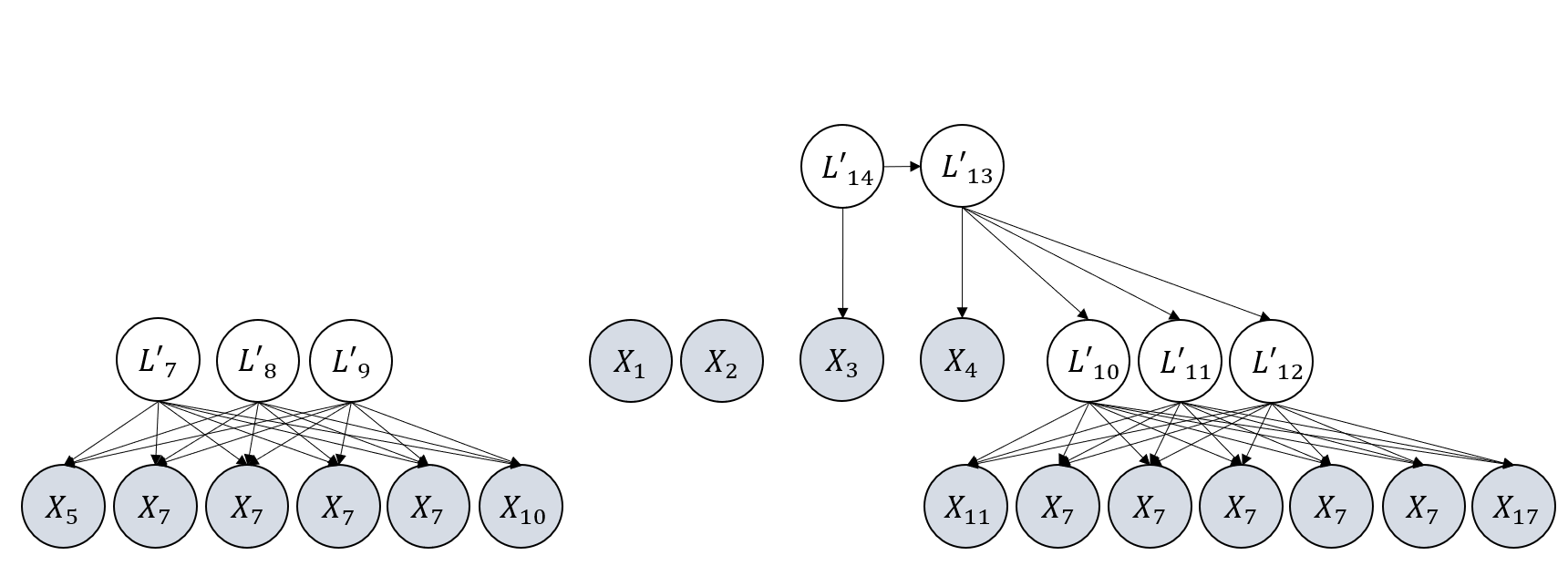}
    \caption{During FindCausalClusters performed at (c), we then find $\{\node{L_{14}'}\}$.\color{white}{place holder place holder}}
  \end{subfigure}
  \begin{subfigure}[b]{0.48\textwidth}
    \centering
    \includegraphics[width=\textwidth]{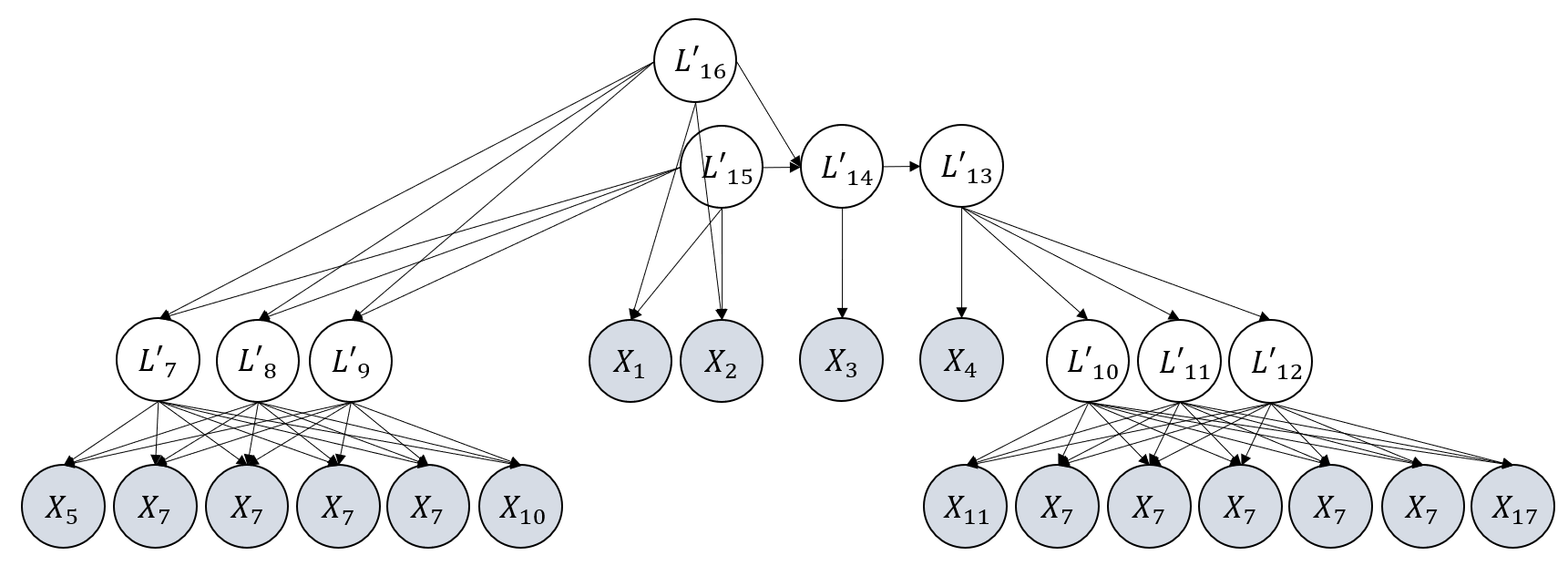}
    \caption{During FindCausalClusters performed at (c), we finally find $\{\node{L_{15}'},\node{L_{16}'}\}$.\color{white}{place holder place holder}}
  \end{subfigure}
  \caption{Example in subfigure (a) is the real graph $\graph$. 
  After phase 2 the output graph still contains a fake cluster, as in (b).
   After phase 3, the output graph will be correct, as in (f).}
  \label{fig:example3 for phase2}
\end{figure}

\subsection{Example for Phase 3}
\label{appendix: example phase3}
Here, we give an example (see Figure~\ref{fig:example3 for phase2}) where Phase 2 may result in incorrect latent covers, and thus we need Phase 3 to characterize and refine these incorrect latent covers.
Specifically, as in Figure~\ref{fig:example3 for phase2}),
when we look for $k=3$ clusters, none of the atomic  covers $\{\node{L_1}\}$,$\{\node{L_4}\}$,$\{\node{L_2},\node{L_3}\}$ has been discovered.
Therefore, in Phase 2, when looking for $k=3$ clusters,
we will find a combination of  $\setset{C}=\{\{\node{X_1}\},\{\node{X_2}\},\{\node{X_3}\},\{\node{X_4}\}\}$ and $\setset{X}=\{\}$ that causes rank deficiency, and thus we will mistakenly create an atomic cover $\{\node{L'_1},\node{L'_2},\node{L'_3}\}$
with their pure children $\{\node{X_1}\},\{\node{X_2}\},\{\node{X_3}\},\{\node{X_4}\}$, as in  Figure~\ref{fig:example3 for phase2}(b).

Fortunately, this incorrect cluster will not affect the identification of other clusters in the graph: e.g., in Figure~\ref{fig:example3 for phase2}(b),
the covers  $\{\node{L'_7},\node{L'_8},\node{L'_9}\}$,$\{\node{L'_{10}},\node{L'_{11}},\node{L'_{12}}\}$ are correctly found,
except that the neighbors of the wrong atomic cover $\{\node{L'_1},\node{L'_2},\node{L'_3}\}$ could be incorrect.
This allows us to take a further look into the incorrect cluster and refine it based on Theorem~\ref{theorem:phase3} (the proof of which is in Appendix~\ref{proof:phase3}).

As shown in Figure~\ref{fig:example3 for phase2},
the subfigure (a) is the underlying graph $\graph$. After phase 2 the output graph $\graphp$ in (b) contains incorrect cover $\set{V}=\{\node{L'_1},\node{L'_2},\node{L'_3}\}$.
In (c), we first calculate $\hat{\graph}$,
 which is got by deleting $\set{V}$, all neighbours of $\set{V}$ that are latent,
 and all relating edges of them from $\graphp$.
 The resulting $\hat{\graph}$  is shown in Figure~\ref{fig:example3 for phase2} (c).
 After that,
 we perform $\text{FindCausalClusters}(\hat{\graph}, \set{X})$,
 and then the clusters 
 $\{\node{X_1},\node{X_2}\}$,$\{\node{X_3}\}$,and $\{\node{X_4}\}$ can be correctly found,
 as shown in Figure~\ref{fig:example3 for phase2} (d)(e)(f).

\subsection{Graph Examples with Variables all Observed}
\label{sec apendix: graphs for observed only}
Please refer to Figure~\ref{fig:example all observed}.

\subsection{Graph Examples for Latent Tree Models}
\label{sec apendix: graphs for latent tree}
Please refer to Figure~\ref{fig:example latent tree}.

\subsection{Graph Examples for Latent Measurement Models}
\label{sec apendix: graphs for latent mm}
Please refer to Figure~\ref{fig:example latent measurement}.

\subsection{Graph Examples for General Latent Models}
\label{sec apendix: graphs for latent general}
Please refer to Figure~\ref{fig:example latent general}.

\begin{figure}[t]%\vspace{2mm}
  \vspace{-0mm}
  \centering
  \begin{subfigure}[b]{0.3\textwidth}
    \centering
    \includegraphics[width=\textwidth]{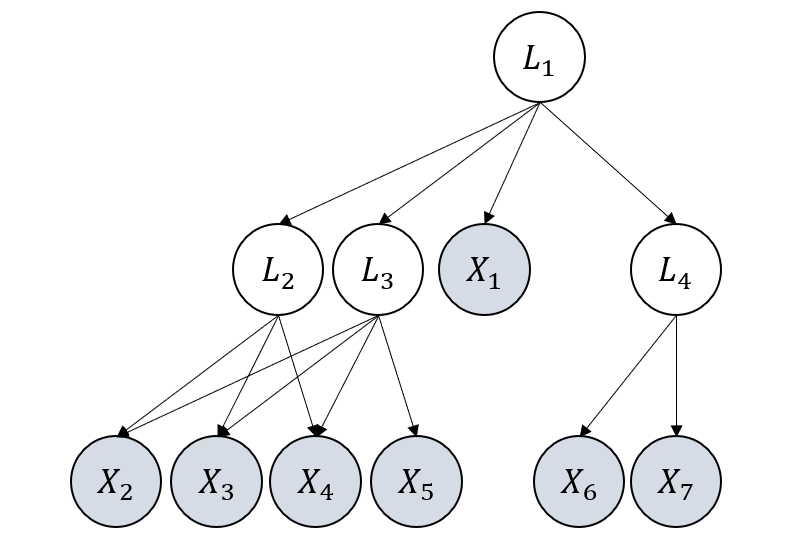}
    \caption{The original graph $\graph$.}
  \end{subfigure}
  \begin{subfigure}[b]{0.3\textwidth}
    \centering
    \includegraphics[width=\textwidth]{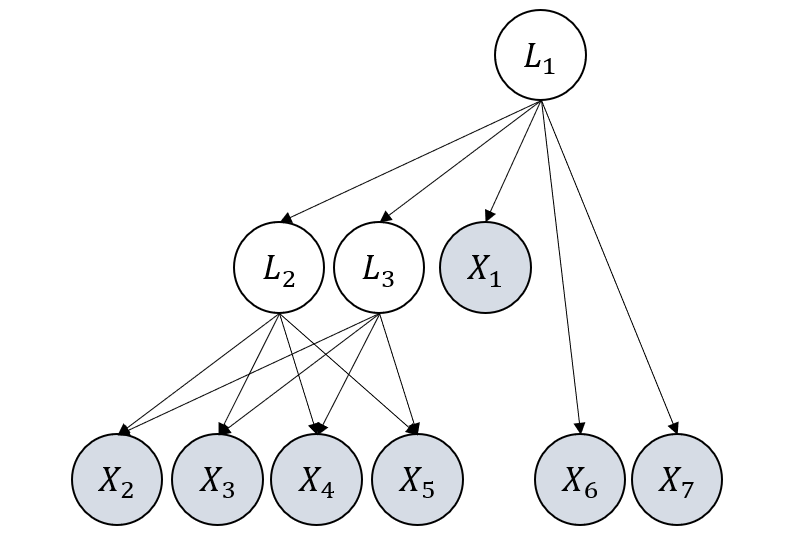}
    \caption{After operators $\mathcal{O}_{\text{min}}$ and $\mathcal{O}_s$.}
  \end{subfigure}
  \caption{Example to show graph operators $\mathcal{O}_{\text{min}}$ and $\mathcal{O}_s$.}
  \label{fig:example for operator}
\end{figure}

\begin{figure}[t]
  \vspace{-0mm}
  % \setlength{\abovecaptionskip}{-2.5mm}
%\setlength{\belowcaptionskip}{-2.5mm}
  %\centering
  \begin{subfigure}[b]{0.32\textwidth}
    %\centering
    \includegraphics[width=\textwidth]{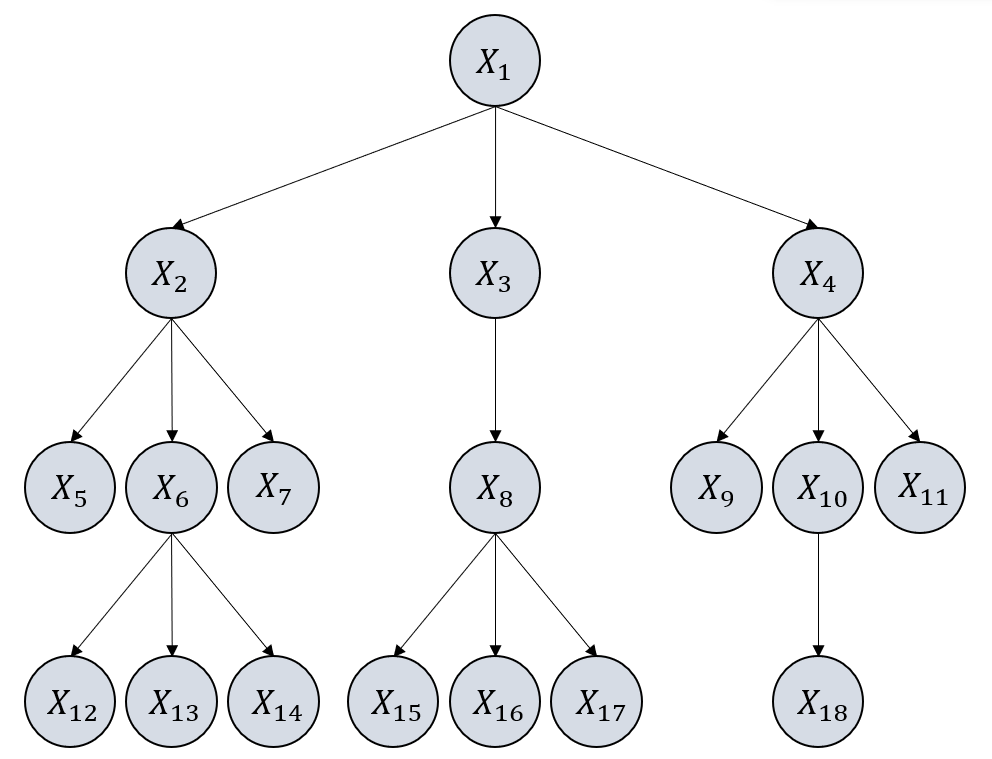}
    \caption{Example 1.}
\end{subfigure}
\hfill
\begin{subfigure}[b]{0.32\textwidth}
  %\centering
  \includegraphics[width=\textwidth]{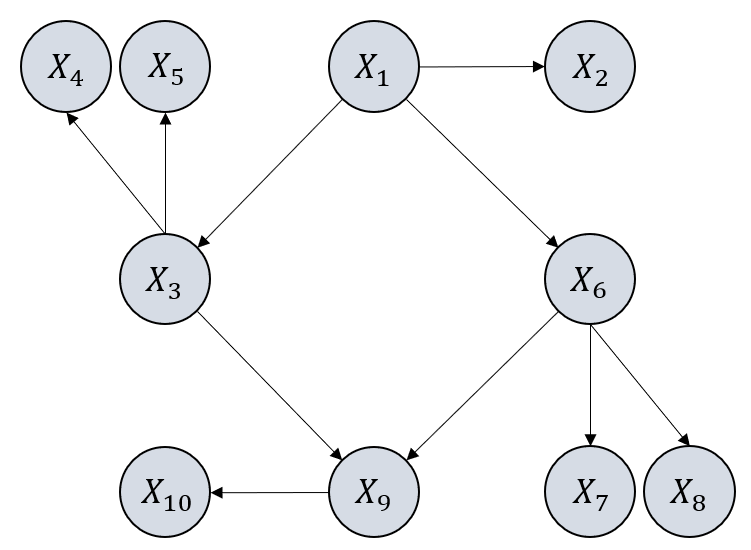}
  \caption{Example 2.}
\end{subfigure}
\hfill
\begin{subfigure}[b]{0.32\textwidth}
  %\centering
  \includegraphics[width=\textwidth]{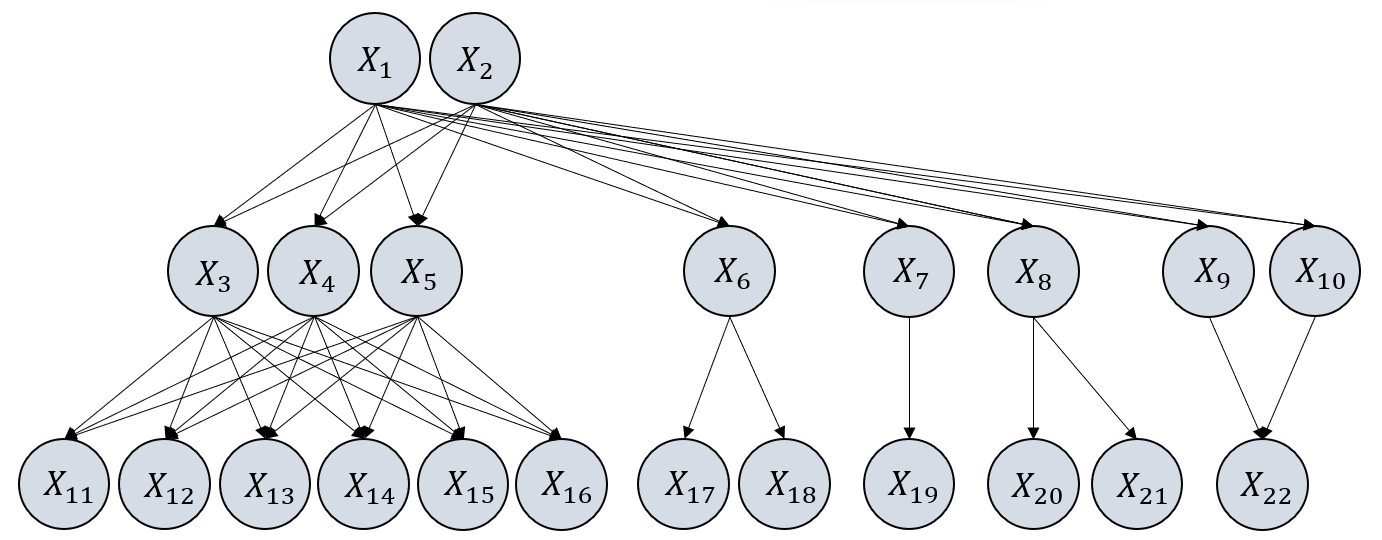}
  \caption{Example 3.}
\end{subfigure}
\caption{Examples of graphs that have only observed variables.}
  \label{fig:example all observed}
  \vspace{-0mm}
\end{figure}

\subsection{Illustrative Example of Considering Colliders in Phase 2}
\label{example:colliderinc}
For example, in Figure~\ref{fig:example for checking colliders in N},
suppose that we have already found the cover $\node{L}_1$
as the parent of cluster $\node{X}_1\node{X}_2$,
and 
$\node{L}_2$
as the parent of cluster $\node{X}_6\node{X}_7$.
Next, we search for $k=2$ clusters and take 
$\setset{C}=\{\{\node{X}_3\},\{\node{X}_4\},\{\node{L}_2\}\}$ and $\setset{X}=\{\}$,
and then we have rank deficiency $\texttt{rank}(\Sigma_{\setset{C}\cup\setset{X},\setset{N}\cup\setset{X}}) = 2$.
However, this rank deficiency does not imply a correct cluster as
there is a set of collider $\{\node{L}_2\}$ inside $\setset{C}$.
Fortunately, it can be detected by Algorithm~\ref{alg:checkcollider}.
Specifically, if we take $\setset{C'}=\{\{\node{X}_3\},\{\node{X}_4\}\}$,
we can find that   $\texttt{rank}(\Sigma_{\setset{C'}\cup\setset{X},\setset{N}\cup\setset{X}}) = \texttt{rank}(\Sigma_{\node{X}_3\node{X}_4,\node{X}_1\node{X}_2\node{X}_5\node{X}_6})=1$ (line 5 in Algorithm~\ref{alg:checkcollider}),
 which means there exists a smaller group of rank deficiency caused by removing the collider in $\setset{C}$.
 Thus, we conclude that $\setset{C}=\{\{\node{X}_3\},\{\node{X}_4\},\{\node{L}_2\}\}$ and $\setset{X}=\{\}$
 is not a correct combination and will not consider them for forming a cluster (as in line 1 in Algorithm~\ref{alg:phase1}).

%\begin{figure}[t]%\vspace{2mm}
%  \vspace{-0mm}
%  \centering
%    \includegraphics[width=0.4\textwidth]{}
%    \caption{The existence of colliders introduces difficulities.}
%  \label{fig:example for collider}
%\end{figure}

\subsection{Examples for Graph Operators}
\label{examples:operator}

Suppose a graph $\graph$ of a latent linear model in  Figure~\ref{fig:example for operator}(a) is $\graph$.
After applying 
$\mathcal{O}_{\text{min}}(\mathcal{O}_s(\graph))$,
we have the graph in Figure~\ref{fig:example for operator}(b).
Specifically,
 the $\mathcal{O}_s$ operator 
 adds an edge from $\node{L_2}$ to $\node{X_5}$ and the $\mathcal{O}_{\text{min}}$
operator delete $\node{L_4}$
and add an edge from  $\node{L_1}$ directly to $\node{X_6}$ and $\node{X_7}$.
For  $\graph$, 
such 
 two operators will not change the rank in the infinite sample case.

\subsection{Graphical Relations between Covers and Set of Covers}
\label{appendix: relations between covers}
The relation between covers naturally follows the relation between a set of variables. For example,
in Figure~\ref{fig:example3 for phase2}(a),
the pure children of $\{\node{L_4},\node{L_5}\}$ is 
$\{\node{X_5},\node{X_6},\node{X_7},\node{X_8}\}$.
For the relation between  sets of covers,
it also follows the relationship between variables.
E.g., in Figure~\ref{fig:example3 for phase2}(a),
the parents of $\{\{\node{L_4}\},\{\node{L_5}\}\}$ is a set of nodes 
$\{\node{L_1}\}$.

\subsection{Discussions on Checking Colliders Completely}
\label{appendix:checkcollider}
With our search procedure that checks colliders in Algorithm~\ref{alg:checkcollider},
we can make sure that the existence of colliders between atomic covers in $\setset{C}$ will not induce incorrect clustering results.
However,
we note that there are still chances that colliders are in $\setset{N}$.
If Condition~\ref{cond:vstructure} holds,
then we can make sure that the existence of colliders in $\setset{N}$
will not induce fake clusters.
In fact, there is a way to further check whether 
there exist colliders in $\setset{N}$.
Specifically,
in the line $17$ of Algorithm~\ref{alg:phase1},
if we have
$\texttt{rank}(\Sigma_{\setset{C}\cup\setset{X},\setset{N}\cup\setset{X}}) = k$, and NoCollider($\setset{C}$, $\setset{X}$, $\setset{N})$ returns True, we can further check whether there exist
a set of covers $\setset{N'} \subseteq \setset{N}$ such that 
$\setset{N'}$ consists of all the colliders between $\setset{C}$ and $\setset{N}\backslash \setset{N'}$.
To this end, we just enumerate all the possible subsets $\setset{N'}$ of $\setset{N}$.
If $\setset{N'}$ is the set of all the colliders,
then it must be that (i) 
$\texttt{rank}(\Sigma_{\setset{C}\cup\setset{X},(\setset{N}\backslash\setset{N'})\cup\setset{X}}) = k'<k$,
and (ii)
$\texttt{rank}(\Sigma_{\setset{C}\cup\setset{X}\cup\setset{N'},(\setset{N})\cup\setset{X}})> k'+||\setset{N'}||$.

Take Figure~\ref{fig:example for checking colliders in N} as an example. First, we check whether Condition~\ref{cond:vstructure} holds.
As $|\set{C}| + |\set{A}| =|\{\node{L}_1\}| + |\{\node{L}_2\}| = 2 < |\set{V_1}|+|\set{V_2}| = |\{\node{X}_3,\node{X}_4\}| + |\{\node{X}_5,\node{X}_6\}|=4$,
Condition~\ref{cond:vstructure} does not hold.
Therefore, when checking $k=2$,
if we take $\setset{X}=\{\}$,  $\setset{C}=\{\{\node{X}_3\},\{\node{X}_4\},\{\node{X}_5\}\}$, and
$\setset{N}=\{\{\node{X}_1\},\{\node{X}_2\},\{\node{X}_6\},\{\node{X}_7\},\{\node{X}_8\}\}$ in line 17 of Algorithm~\ref{alg:phase1},
we will find that 
$\texttt{rank}(\Sigma_{\setset{C}\cup\setset{X},\setset{N}\cup\setset{X}}) = k=2$,
 which implies an incorrect cluster as the cardinality of parents of $\{\{\node{X}_3\},\{\node{X}_4\},\{\node{X}_5\}\}$ should be only 1.
Fortunately, in this scenario,
we can detect that $\setset{N'}=\{\{\node{X}_7\},\{\node{X}_8\}\} \subseteq \setset{N}$ is the set of all the colliders, by finding that (i) 
$\texttt{rank}(\Sigma_{\setset{C}\cup\setset{X},(\setset{N}\backslash\setset{N'})\cup\setset{X}}) = 1<k=2$,
and (ii)
$\texttt{rank}(\Sigma_{\setset{C}\cup\setset{X}\cup\setset{N'},(\setset{N})\cup\setset{X}})=4> 1+||\setset{N'}||=3$.

As mentioned in Section~\ref{sec:theory},
by adding this check function to our algorithm (specifically to line 17 in Algorithm~\ref{alg:phase1} before adding $\setset{C}$ to $\setsetset{D}$),
we can achieve better identifiability that relies on Condition~\ref{cond:basic} only. However, that additional checking function is computationally inefficient.

\begin{figure}[t]%\vspace{2mm}
  \vspace{-0mm}
  \centering
    \includegraphics[width=0.35\textwidth]{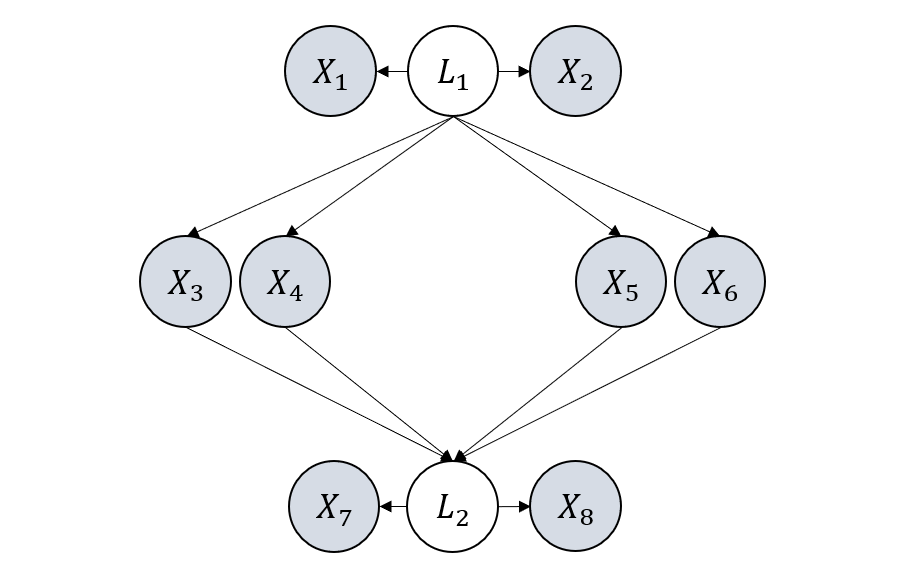}
    \caption{Example of  checking colliders in $\setset{N}$.}
  \label{fig:example for checking colliders in N}
\end{figure}

\begin{figure}[t]
  \vspace{-1mm}
  % \setlength{\abovecaptionskip}{-2.5mm}
%\setlength{\belowcaptionskip}{-2.5mm}
  %\centering
  \begin{subfigure}[b]{0.24\textwidth}
    %\centering
    \includegraphics[width=\textwidth]{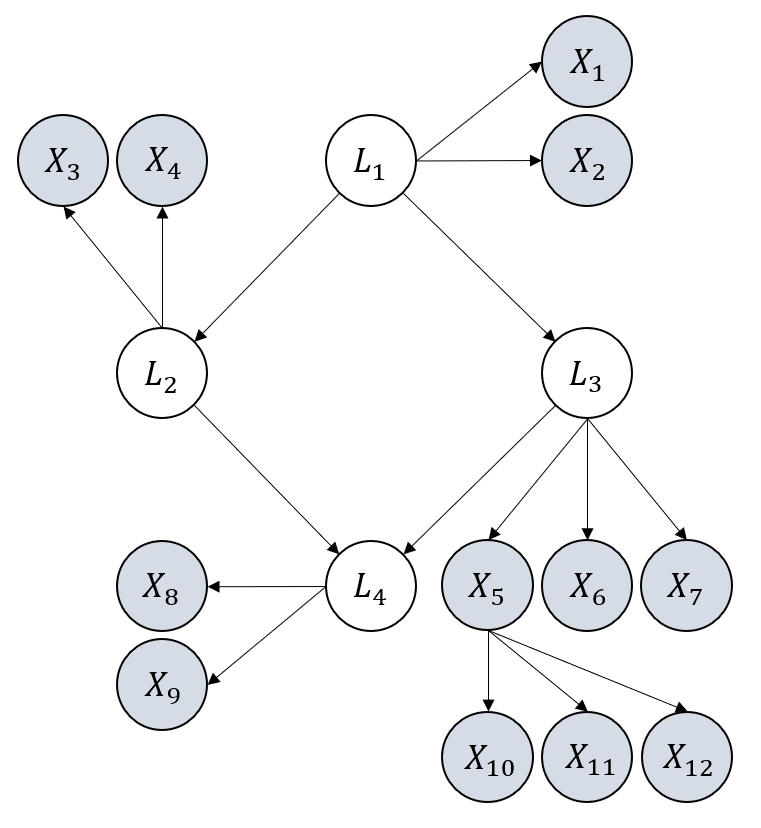}
    \caption{Example 1.}
\end{subfigure}
\hfill
\begin{subfigure}[b]{0.24\textwidth}
  %\centering
  \includegraphics[width=\textwidth]{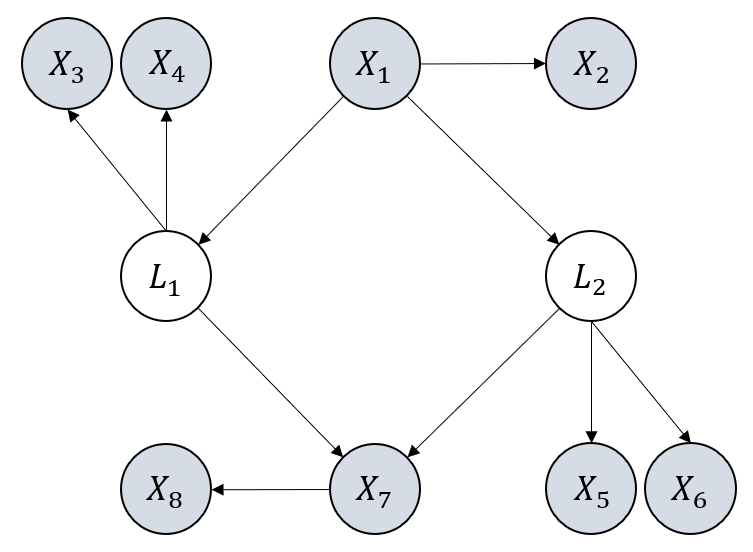}
  \caption{Example 2.}
\end{subfigure}
\hfill
\begin{subfigure}[b]{0.24\textwidth}
  %\centering
  \includegraphics[width=\textwidth]{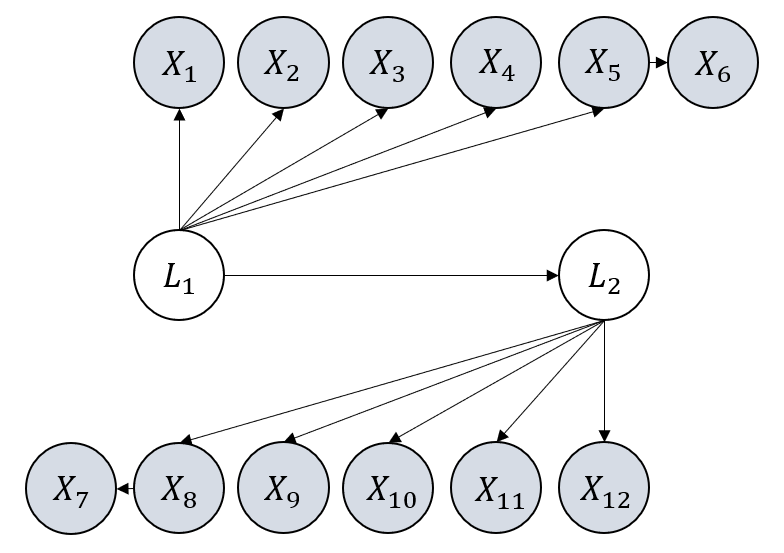}
  \caption{Example 3.}
\end{subfigure}
\hfill
\begin{subfigure}[b]{0.24\textwidth}
  %\centering
  \includegraphics[width=\textwidth]{figures/MM_latent4.png}
  \caption{Example 4.}
\end{subfigure}
\caption{Examples of  Latent Measurement Graphs.}
  \label{fig:example latent measurement}
  \vspace{-0mm}
\end{figure}

\begin{figure}[t]
  \vspace{-0mm}
  % \setlength{\abovecaptionskip}{-2.5mm}
%\setlength{\belowcaptionskip}{-2.5mm}
  %\centering
  \begin{subfigure}[b]{0.245\textwidth}
    %\centering
    \includegraphics[width=\textwidth]{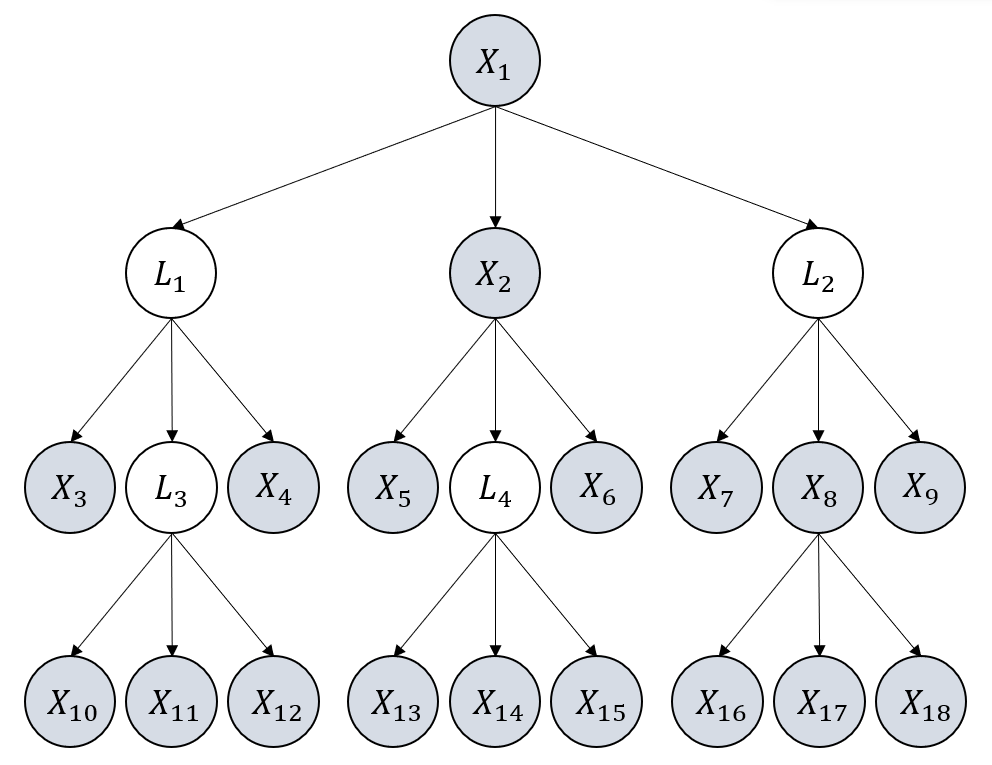}
    \caption{Example 1.}
\end{subfigure}
\hfill
\begin{subfigure}[b]{0.245\textwidth}
  %\centering
  \includegraphics[width=\textwidth]{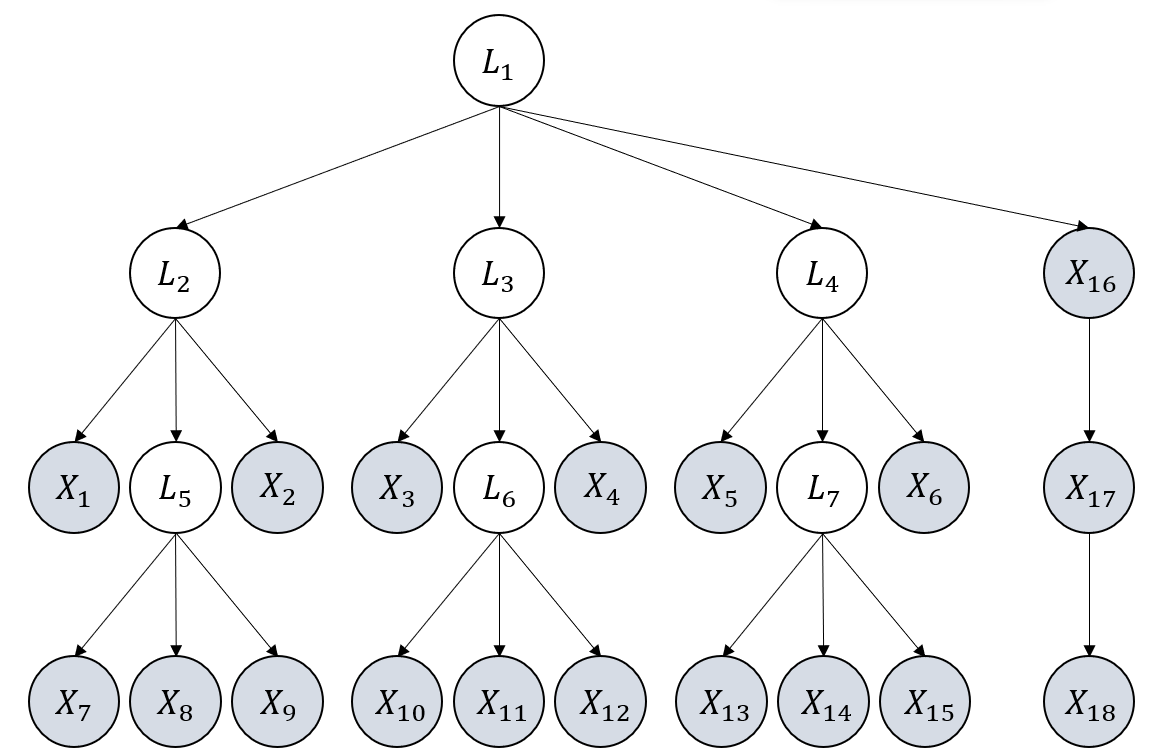}
  \caption{Example 2.}
\end{subfigure}
\hfill
\begin{subfigure}[b]{0.245\textwidth}
  %\centering
  \includegraphics[width=\textwidth]{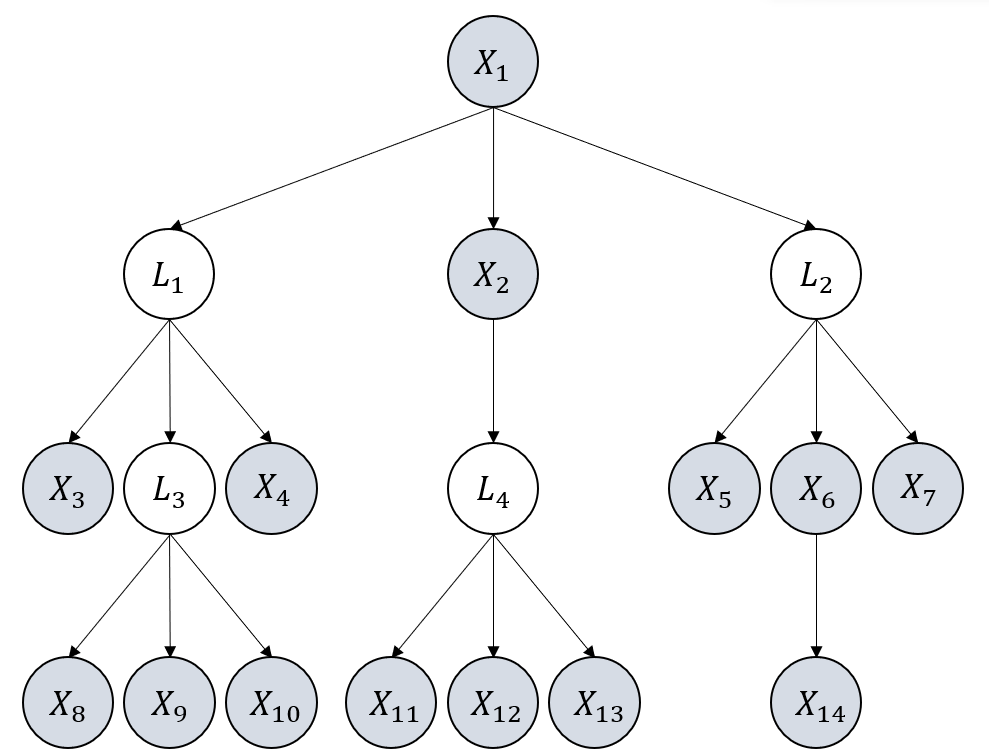}
  \caption{Example 3.}
\end{subfigure}
\hfill
\begin{subfigure}[b]{0.245\textwidth}
  %\centering
  \includegraphics[width=\textwidth]{figures/treelatent4.png}
  \caption{Example 4.}
\end{subfigure}
\caption{Examples of  Latent Tree Graphs.}
  \label{fig:example latent tree}
  \vspace{-0mm}
\end{figure}

\begin{figure}[t]
  \vspace{-0mm}
  % \setlength{\abovecaptionskip}{-2.5mm}
%\setlength{\belowcaptionskip}{-2.5mm}
  %\centering
  \begin{subfigure}[b]{0.5\textwidth}
    %\centering
    \includegraphics[width=\textwidth]{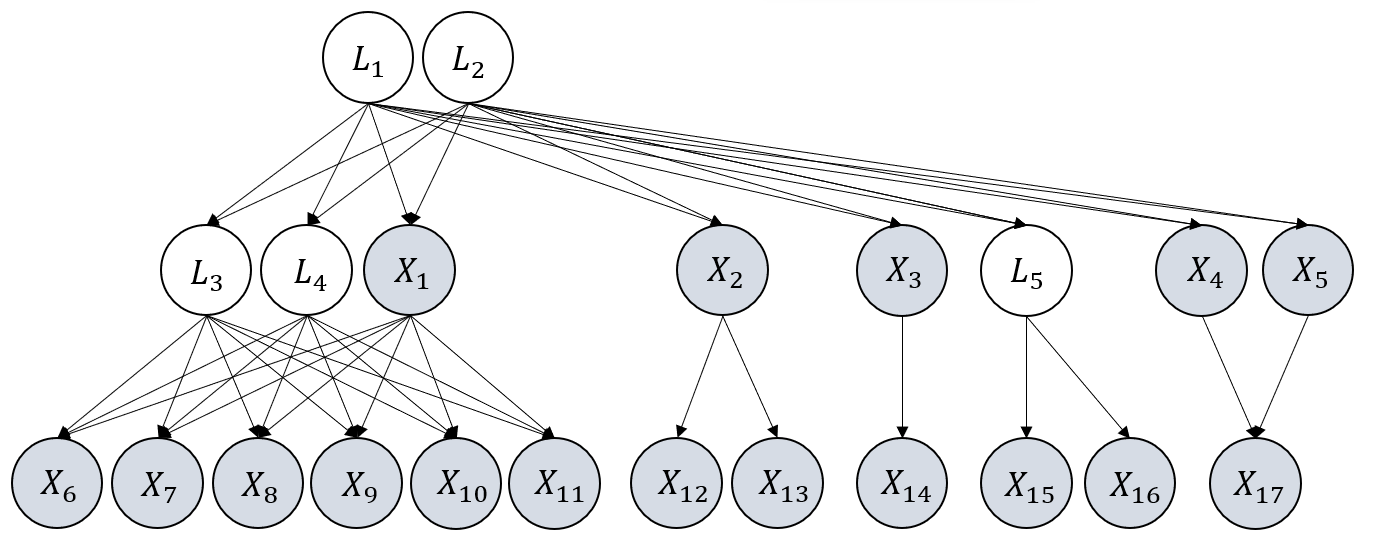}
    \caption{Example 1.}
\end{subfigure}
\hfill
\begin{subfigure}[b]{0.5\textwidth}
  %\centering
  \includegraphics[width=\textwidth]{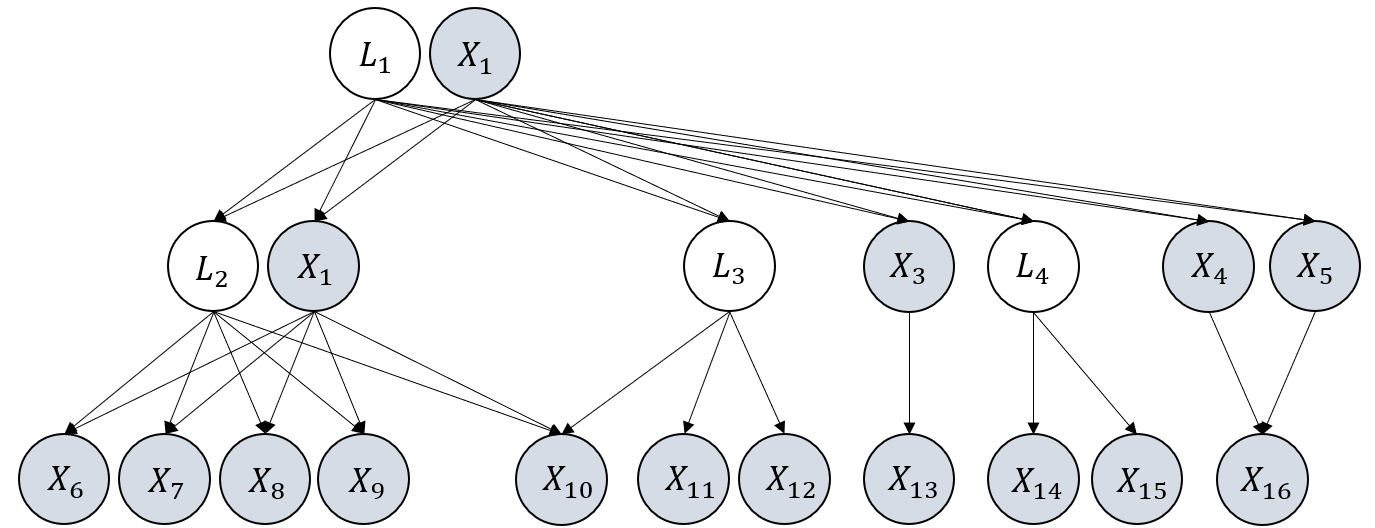}
  \caption{Example 2.}
\end{subfigure}
\hfill
\begin{subfigure}[b]{0.5\textwidth}
  %\centering
  \includegraphics[width=\textwidth]{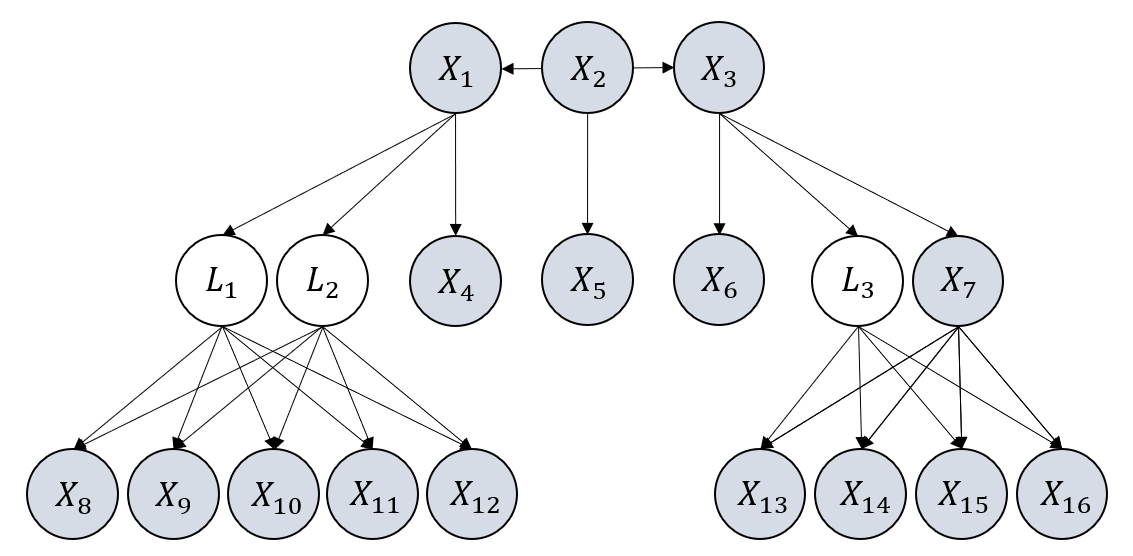}
  \caption{Example 3.}
\end{subfigure}
\hfill
\begin{subfigure}[b]{0.5\textwidth}
  %\centering
  \includegraphics[width=\textwidth]{figures/general_latent4.png}
  \caption{Example 4.}
\end{subfigure}
\caption{Examples of  Latent General Graphs.}
  \label{fig:example latent general}
  \vspace{-0mm}
\end{figure}

\subsection{Evaluation Metric Details}
\label{appendix:comb and perm}

The definition of F1 is as follows.
$\text{F1}=\frac{2*\text{Recall}*\text{Precision}}{\text{Recall}+\text{Precision}}$, $\text{Recall}=\frac{\text{TP}}{\text{TP}+\text{FN}}$,
   and $\text{Precision}=\frac{\text{TP}}{\text{TP}+\text{FP}}$, where TP, FP, and FN denote True Positive, False Positive, and False Negative, respectively.

For a fair comparison, we need to align the latent variables in the output graph $\graphp$ of a method with the latent variables in the ground truth graph $\graph$.
To this end,
we first pad each result by adding latents that have no edge to any other variables to match the number of latents in the ground truth graph.
  On the other hand, if the number of latents is more than that of the ground truth $\graph$, all different combinations will be tried.
  Finally, we try all different permutations of latent variables to test the F1 score.
  For each method, the final F1 score is taken as  the best F1 score among all possible combinations and permutations.
  
\subsection{More Details of Experiments on Synthetic Data}
\label{appendix:Computational cost}
Our code is implemented with Python 3.7. 
Asymptotically speaking, if the ground gruth graph is a DAG,
then there will be no cycle in our result.
However, in the finite sample case, rank test results could be self-contradictary.
Therefore in our implementation we explictly prevent that 
 by checking whether cycles may occur every time before concluding a cluster.
As different methods employ different statistical tests that may perform differently, 
 the hyperparameter $\alpha$ is chosen from $\{0.1, 0.05, 0.01, 0.005\}$ 
 in favor of each method to ensure their best performance and thus a fair comparison.
For the proposed method we employ $\alpha=0.005$ for the procedure of finding latent variables, while for the first stage we empirically find that using a rather big $\alpha$ would be better. This is because the first stage of PC is good at deleting edges and thus bad at recall, and the following procedure would expects input with high recall rather than high precision.
We conduct all the experiments with
single Intel(R) Xeon(R) CPU E5-2470. Our proposed method and GIN~\citep{xie2020generalized} take around 3 hours to finish all the experiments (three random seeds and three different sample sizes), and Hier. rank~\citep{huang2022latent} takes around 1 hour. 
PC~\citep{spirtes2000causation} and FCI~\citep{spirtes2013causal} take around 10 minutes, while RCD~\citep{maeda2020rcd} takes around two days to finish the experiments. 
For GIN, RCD, and Hier. rank, 
we employ their original implementation while for PC and FCI we use the 
causal-learn python package \url{https://causal-learn.readthedocs.io/en/latest/}.

The time complexity of our proposed algorithm is upper bounded by 
$\mathcal{O}(l \sum_{k=1}^{K} \sum_{t=0}^{k} \binom{n}{t} \binom{n-t}{k+1-t})$, where $n$ 
is the number of measured variables, $K$ is the cardinality of the largest cover of the estimated graph, 
with $K \ll n$, and $l$ is the number of levels of the estimated graph, with $l \ll n$.

It is also possible to exhaustively enumerate all possible graphs
and check whether one of them may be aligned with
observational rank information from data.
However, that would be very computationally expensive.
Assume that the underlying graph consists of $n$ measured variables and $m$ latent variables.
To conduct an exhaustive search for causal clusters, 
we need to enumerate all possible numbers of latent vars and then enumerate all possible structures,
 which results in an approximate total of  $\sum_{i=1}^{m}3^{(i+n)(i+n-1)/2}$ possible combinations. 
 In our synthetic data, which comprises an average of 15 measured variables and 4 latent variables, 
 the mere act of enumeration already demands $10^{66}$ seconds (in our Python environment), which is computationally unacceptable.
 %Additionally, for each combination, a thorough examination requires examining whether the rank information
 % entailed by the enumerated graph aligns with that extracted from the data.
%Taking these into consideration, exhaustive searches would be computationally unacceptable.

\subsection{Detailed Information of the Big Five Personality Dataset}
\label{sec appendix: info of big5}

Data was collected through an interactive online personality test \url{https://openpsychometrics.org/}.
Participants were informed that their responses would be recorded and used for research at the beginning of the test and asked to confirm their consent at the end of the test.
Items were rated on a five-point scale:
1=Disagree, 2=Slightly disagree, 3=Neutral, 4=Slightly agree, 5=Agree (0=missed). Datapoints with missing values have been filtered out.
Some additional information is also collected including
Race, Age, and Gender but are not used in our experiment.
The Markov equivalence class of Figure~\ref{fig:big5} is generated
by using our proposed method, while we further apply GIN~\citep{xie2020generalized} to determine directions between latent variables.
The five  personality dimensions are Openness, Conscientiousness, Extraversion, Agreeableness, and Neuroticism (O-C-E-A-N). Below are the raw questions. E.g., E1 denotes the first question for the Extraversion score.

E1	I am the life of the party.\\
E2	I don't talk a lot.\\
E3	I feel comfortable around people.\\
E4	I keep in the background.\\
E5	I start conversations.\\
E6	I have little to say.\\
E7	I talk to a lot of different people at parties.\\
E8	I don't like to draw attention to myself.\\
E9	I don't mind being the center of attention.\\
E10	I am quiet around strangers.\\
N1	I get stressed out easily.\\
N2	I am relaxed most of the time.\\
N3	I worry about things.\\
N4	I seldom feel blue.\\
N5	I am easily disturbed.\\
N6	I get upset easily.\\
N7	I change my mood a lot.\\
N8	I have frequent mood swings.\\
N9	I get irritated easily.\\
N10	I often feel blue.\\
A1	I feel little concern for others.\\
A2	I am interested in people.\\
A3	I insult people.\\
A4	I sympathize with others' feelings.\\
A5	I am not interested in other people's problems.\\
A6	I have a soft heart.\\
A7	I am not really interested in others.\\
A8	I take time out for others.\\
A9	I feel others' emotions.\\
A10	I make people feel at ease.\\
C1	I am always prepared.\\
C2	I leave my belongings around.\\
C3	I pay attention to details.\\
C4	I make a mess of things.\\
C5	I get chores done right away.\\
C6	I often forget to put things back in their proper place.\\
C7	I like order.\\
C8	I shirk my duties.\\
C9	I follow a schedule.\\
C10	I am exacting in my work.\\
O1	I have a rich vocabulary.\\
O2	I have difficulty understanding abstract ideas.\\
O3	I have a vivid imagination.\\
O4	I am not interested in abstract ideas.\\
O5	I have excellent ideas.\\
O6	I do not have a good imagination.\\
O7	I am quick to understand things.\\
O8	I use difficult words.\\
O9	I spend time reflecting on things.\\
O10	I am full of ideas.\\

\subsection{More Analysis of the Results for the Big Five}
\label{appendix:bigfive analysis}
A prevalent theory of personality is that personality dimensions (factors or traits) are latent causes of the responses to personality inventory items, which are indicators of the latent construct. For instance, extraversion yields high scores for the indicators "I like to go to parties" and "I like people." Thus, the responses to inventory items are outcomes of one's position on the latent dimension. However, there is also the suggestion of a network perspective in which personality structure is viewed in terms of microcausal connections in a complex network \citep{wright2017factor} and personality dimensions emerge out of the connectivity structure \citep{cramer2012dimensions}. This is a radical divergence from the conventional viewpoint that dimensions are causes of the relevant indicators. For instance, a network would show that instead of being two distinct markers of extraversion, one might say "I like to go to parties" because "I like people" \citep{cramer2012dimensions}; or in the case of openness, “I am full of ideas” because “I have a vivid imagination”.
Our result in Figure~\ref{fig:big5} indicates, however, that our method adheres to a network perspective while identifying groups of closely connected items that are predictable under a latent dimension model. Further, it can be observed that causal links occur between latent dimensions, between observed indicators, and among latents and indicators. Following are interesting aspects of our results.

\textbf{(i)	L1, L2, L3 and L5.}
While L1, L2, and L3 clearly delineate conscientiousness, agreeableness, and extraversion as causes of the corresponding item responses, it is different in the case of openness (L5). Openness to experience can lead to excellent ideas brought about by active imagination, reflection, and understanding of things. Moreover, those who develop a rich vocabulary will have the propensity to think critically and read more, behaviors that also birth ideas.

\textbf{(ii) L1$\rightarrow$L6$\rightarrow$L3.}
Conscientiousness and openness are most frequently associated with achievement \citep{gatzka2021aspects}. In our results, those who are organized, efficient at tasks, thorough, systematic, or exacting will comprehend things quickly, demonstrate sophistication in language, or are good at introspection. These could instill a sense of confidence and assurance that encourages assertive, verbal, and bold behaviors, among other extraversion markers.

\textbf{(iii) L1$\rightarrow$L2$\rightarrow$L3.}
People who score highly on conscientiousness are frequently perceived as perfectionists, high achievers, overly focused on personal goals, preoccupied with flawless task execution, overly demanding, and headstrong~\citep{le2011too}\citep{curcseu2019personality}.  Our findings suggest that due to such behaviors, highly conscientious individuals would judge other people on their accomplishments and results, without giving consideration to others' feelings. Consequently, they act distant, uncommunicative, or unsociable. These can be reasons why they have been found not to engage in group behaviors that lead to straining relationships and tend to take criticism poorly \citep{curcseu2019personality}, as well as refrain from conversing about interpersonal issues because they have no bearing on achieving task objectives \citep{curcseu2019personality}. 
On the other hand, those who are both highly conscientious and agreeable put the needs of others before their own \citep{lord2007neo}, at times to the point of pleasing others by overlooking their mistakes, doing things for others because they cannot say no, not disclosing performance gaps and withholding dissident opinions out of aversion to conflict and a lack of competitiveness \citep{graziano2002agreeableness}\citep{howard2010owner}\citep{curcseu2019personality}. Thus, they will resolve problems on their own, not to draw attention to themselves, have little to say, or would rather stay in the background in order to be integrated. Those who score low on agreeableness are perceived to be unempathetic, unfriendly, and untrustworthy, consequently leading to introvertive behaviors as well.

\textbf{(iv) L1 and L3 together as common causes.}
Conscientious individuals who care about being liked by others despite being focused, detail-oriented, and exacting, will make efforts to make people feel at ease amid these behaviors. Otherwise, conscientious individuals who care not about what others think of them could be quick to lambast others if they perform poorly. Low-conscientious people, those who are disorganized, messy, sloppy, and negligent, will also tend to make people at ease in order to remain in their good graces. 

\textbf{(v)	No latent variable for neuroticism.}
Our method did not discover any latent variable that is supposed to correspond to neuroticism. One possible interpretation would be that the question "[N10]: I often feel blue." is designed so well that it fully captured the sense of neuroticism.

\textbf{(vi) Responses to indicators influence other responses.}
It is the question items, not the latent dimensions, 
that can be perceived to have caused the succeeding responses,
 as in the case of N10$\rightarrow$N8$\rightarrow$N7, N10$\rightarrow$O9, O2$\rightarrow$O4,
  and O1$\rightarrow$O8, all of which are plausible. Mood swings are common with depression, and frequent mood swings cause emotions to fluctuate rapidly and intensely, switching between positive and negative emotions. Some people may find themselves reflecting a lot on things because they are trying to figure out what makes them frequently feel blue and how to cope with it. A person who has trouble understanding abstract concepts is unlikely to be particularly interested in them. Finally, one who has a rich vocabulary will not be constrained from using unusual words that are hard to comprehend.

%(Currently we keep references that are newly-added to this subsection in a rather temporary form, as we do not want to change the order of references used in the main part of the paper, for better readability of the main part of the paper, and we will definitely align the format in the revision.)

\section{Related Work and Broader Impacts}
\subsection{Related Work}
\label{sec appendix: related work}
  Causal discovery aims to identify causal relationships from observational data.
  Most existing approaches are based on the assumption that there are no latent confounders \citep{spirtes2000causation,chickering2002optimal,lingam,hoyer2009ANM,zhang2009PNL},
  and yet this assumption barely holds for real-life problems.
  Thus, causal discovery methods that can handle the existence of latent variables are crucial.
  Existing causal discovery methods for  handling latent variables can be categorized into the following folds.
  
  \textbf{(i) Conditional independence constraints.} The FCI algorithm \citep{spirtes2000causation} and its variants        \citep{colombo2012learning,Pearl:2000:CMR:331969,Akbari2021}.
      This line of work checks conditional independence  over observed variables 
       to identify the causal structure over observed variables up to a maximal ancestral graph.
       They can deal with both linear and nonlinear causal relationships,
         but there are large indeterminacies in their results,
         e.g., the existence of an edge and confounders. Plus, they cannot consider causal relationships between latent variables.
         Based on CI tests, \citet{triantafillou2015constraint} proposes a method that can co-analyze multiple datasets that share common variables and sort the significance tests to address conflicts from statistical errors. 
      \textbf{(ii) Tetrad condition.} This line of work makes use of the rank constraints of every $2 \times 2$ off-diagonal sub-covariance matrix
      to  locate latent variables and thus find the causal skeleton based on linear relationships between variables
       \citep{Silva-linearlvModel, Kummerfeld2016, Shuyan20,Pearl88}. 
       One limitation of this line of work is that they assume  each measured variable is influenced by only one latent parent, and each latent variable must have more than three pure measured children.
      \textbf{(iii) Matrix decomposition.} This line of work proposes to decompose the precision matrix into a low-rank matrix and a sparse matrix, where the former represents the causal structure from latent variables to measured variables and the latter represents the causal structure over measured variables, under certain assumptions \citep{RankSparsity_11, RankSparsity_12,anandkumar2013learning}.
      E.g., \citet{anandkumar2013learning} decomposed the covariance matrix into a low-rank matrix and a diagonal matrix, by assuming three times more measured variables than latent variables. 
      \textbf{(iv) Over-complete independent component analysis (ICA).}
      Over-complete ICA allows more source signals than observed signals,
      and thus can be used to learn the causal structure with latent variables \citep{shimizu2009estimation},
      and yet they normally do not consider the causal structure among latent variables. The estimation of over-complete ICA models could be hard to reach global optimum without further assumptions \citep{entner2010discovering, tashiro2014parcelingam}.
      \textbf{(v) Generalized independent noise (GIN).} The GIN condition is
       an extension of the independent noise condition when latent variables exist. Based on non-gaussianity it leverages higher-order statistics to identify 
        latent structures.
        E.g., \citet{xie2020generalized} allows 
        multiple latent parents behind every pair of observed variables and can identify causal directions among latent variables, and yet it requires at least twice measured children as latent variables. \cite{dai2022independence} proposes a transformed version of GIN to handle measurement errors.
      \textbf{(vi) Mixture oracles-based.}  \citet{kivva2021learning} proposes a mixture oracles-based method
       to identify the causal structure in the presence of latent variables where the causal relationships can be nonlinear. It is based on assumptions that
        the latent variables are discrete and each latent variable has measured variables as children.
    \textbf{(vii) Rank deficiency.} Recently \citet{huang2022latent} proposes to leverage rank deficiency of sub-covariance of observed variables to find the underlying causal structure in the presence of latent variables.
    Our method differs in that we consider a more general setting, i.e., we allow hidden variables can be causally related to each other, form a hierarchical structure (i.e., the children of hidden variables can still be hidden), and even serve as both confounders and intermediate variables for observed variables. 
    Our graphical condition for identifiability also generalizes the condition in \citep{huang2022latent} to cases where edges between observed variables are allowed.
    \textbf{(viii) Heterogeneous data.} \citet{CDNOD_jmlr} considere a special type of latent confounders that can be represented as a function of domain index or a smooth function of time. This line of work makes use of domain index or time index as a surrogate  to remove confounders' influence and consequently identify causal structure over observed variables.
    \textbf{(ix) Score based.} \citet{agrawal2021decamfounder} propose a score-based method for latent variable causal discovery, by assuming additional structure among latent 
    confounders.

The most related work to our method is Hier. Rank \cite{huang2022latent}.
Compared to Hier. Rank, our graphical conditions are not only strictly but also much weaker.
Our conditions are strictly weaker in the sense that Hier. Rank can be taken as a special case of the proposed method by disallowing direct edges between observed variables, which is formally captured by our Corollary~\ref{cor:pcandrank}.
Our conditions are much weaker in the sense that, basically we allow latent variables and observed variables to be flexibly related and exist everywhere in a graph, which is illustrated in Figure~\ref{fig:compare graphs with each method} (b) v.s. Figure~\ref{fig:compare graphs with each method} (d). 
The reason why we are able to identify these latent structures that Hier. Rank cannot identify, lies in that we utilize the rank constraints in a more flexible and comprehensive fashion. 
Specifically, Hier. Rank only uses the part of the rank information $\text{rank}(\Sigma_{\mathbf{A},\mathbf{B}})$ where $\mathbf{A}\cap\mathbf{B}=\emptyset$, while
the proposed method uses the $\text{rank}(\Sigma_{\mathbf{A},\mathbf{B}})$ where $\mathbf{A}$ and $\mathbf{B}$ are rather arbitrary and thus more t-separations can be inferred.
The extra graphical information allows us to make use of, e.g., Lemma ~\ref{lemma: rank and d-sep} for identifying edges between observed variables, and Theorem~\ref{theorem:unique_rank_of_atomic_cover} to identify atomic covers that are partially hidden partially observed.

\begin{center}
\begin{table}[tb]
\vspace{-0mm}
 
 \caption{ \small{Structural Hamming Distance (SHD) of compared methods on different types of latent graphs where the values are averaged over three random seeds. \textbf{The smaller the better.}} }
   \vspace{-0mm}
   \label{tab:SHD v}
  \footnotesize
  \center 
\begin{center}
\begin{tabular}{|c|c|c|c|c|c|c|c|}
  \hline  \multicolumn{2}{|c|}{} &\multicolumn{6}{|c|}{\textbf{SHD for skeleton among all variables $\set{V}_\graph$ (both $\set{X}_\graph$ and $\set{L}_\graph$)}}\\
  \hline 
  \multicolumn{2}{|c|}{Algorithm}  & {\color{white}{0}}\textbf{Ours}{\color{white}{0}} & Hier. rank  & {\color{white}{0}}PC{\color{white}{0}} & {\color{white}{0}}FCI{\color{white}{0}} & {\color{white}{0}}GIN{\color{white}{0}} &{\color{white}{0}}RCD{\color{white}{0}}\\
  \hline 
    & 2k 
    & {\color{white}{0}}\textbf{6.9} & {\color{white}{0}}9.3 & 23.7 & 23.7 &   20.5 & 22.2\\
  \cline{2-8}
  {\emph{Latent+tree}}
    &5k 
    & {\color{white}{0}}\textbf{3.2} & {\color{white}{0}}9.0 & 25.0 & 24.3 &   21.2 & 23.5 \\
  \cline{2-8}
    &10k
    & {\color{white}{0}}\textbf{0.7} &{\color{white}{0}}9.0  &25.1 & 24.3 &    20.0& 24.0 \\
  \hline 
    %case 2
    & 2k 
    & {\color{white}{0}}\textbf{4.6} & {\color{white}{0}}8.1 &14.3 &  14.7 & 10.8&15.4 \\
    \cline{2-8}
    {\emph{Latent+measm}}
    &5k 
    & {\color{white}{0}}\textbf{3.8} & {\color{white}{0}}7.7 & 15.0&15.0 &  {\color{white}{0}}9.2&16.2 \\
  \cline{2-8}
    &10k
    & {\color{white}{0}}\textbf{2.9} & {\color{white}{0}}7.4 & 15.5 & 14.8 &  {\color{white}{0}}9.2& 16.0\\
  \hline 
  %case 3
    & 2k 
    & \textbf{27.1} & 28.1 & 38.0 & 37.6 &   36.4&36.5  \\
  \cline{2-8}
  {\emph{Latent general}} &5k 
  & \textbf{23.0} & 26.0& 38.2 & 36.8 &   33.8& 32.5 \\
  \cline{2-8}&10k 
  & \textbf{21.4}& 26.0 &39.0  & 37.1 &  34.1& 36.1\\
  \hline 
  %\caption{F1 score for $\set{V}$.}
\end{tabular}
\end{center}
\vspace{-0mm}
\end{table}
\end{center}

\section{Additional Information}
\subsection{Empirical Result using SHD}
\label{result using SHD}
In this section we further show the performance of each method using the Structural Hamming Distance (SHD). As shown in Table~\ref{tab:SHD v}, The SHD of the proposed RLCD method to the ground truth is consistently smaller than all comparative methods under all the settings, which again validates RLCD in the finite sample cases.

\begin{figure}[tb]%\vspace{2mm}
  \vspace{-0mm}
  \centering
  \begin{subfigure}[b]{0.37\textwidth}
    \centering
    \includegraphics[width=\textwidth]{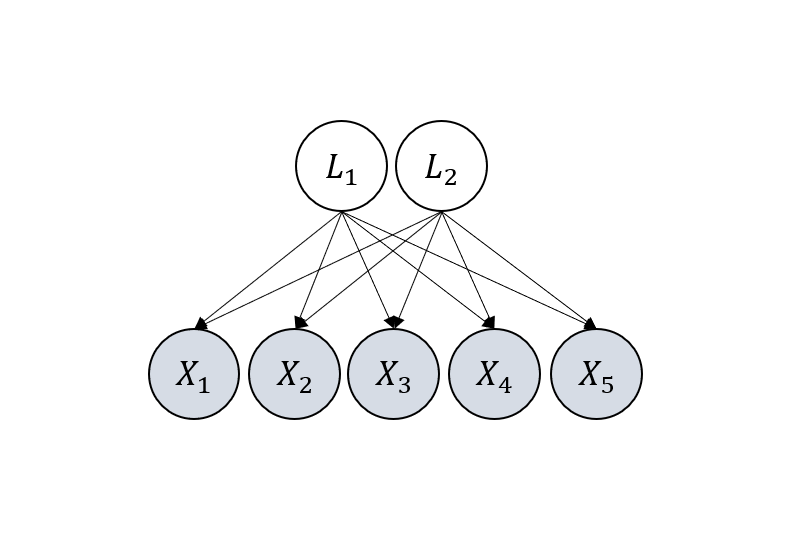}
    \caption{{In $\graph_1$, Condition~\ref{cond:basic} is not satisfied, in the sense that $\node{L_1}\node{L_2}$ do not have enough pure children and neighbours.}} 
  \end{subfigure}
  \begin{subfigure}[b]{0.37\textwidth}
    \centering
    \includegraphics[width=\textwidth]{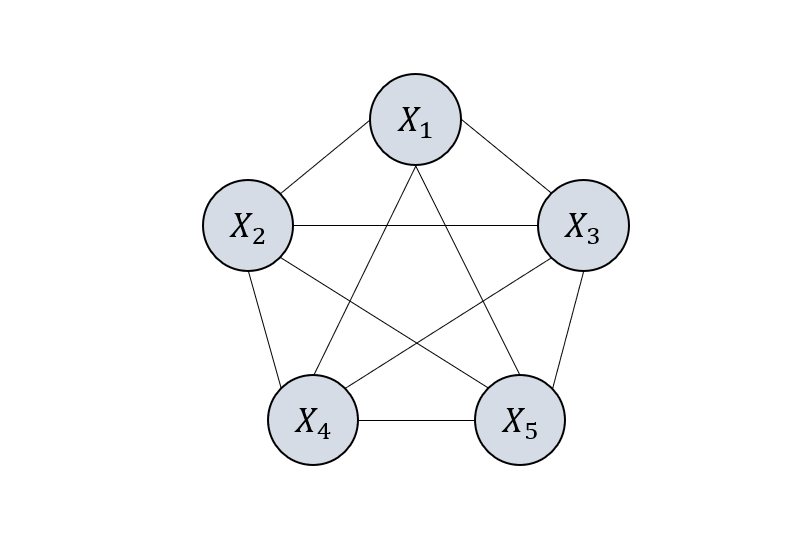}
    \caption{{Given $\graph_1$, RLCD outputs $\graph_1'$, which is not informative but correct as they are rank-equivalent.}}
  \end{subfigure}
  \begin{subfigure}[b]{0.37\textwidth}
    \centering
    \includegraphics[width=\textwidth]{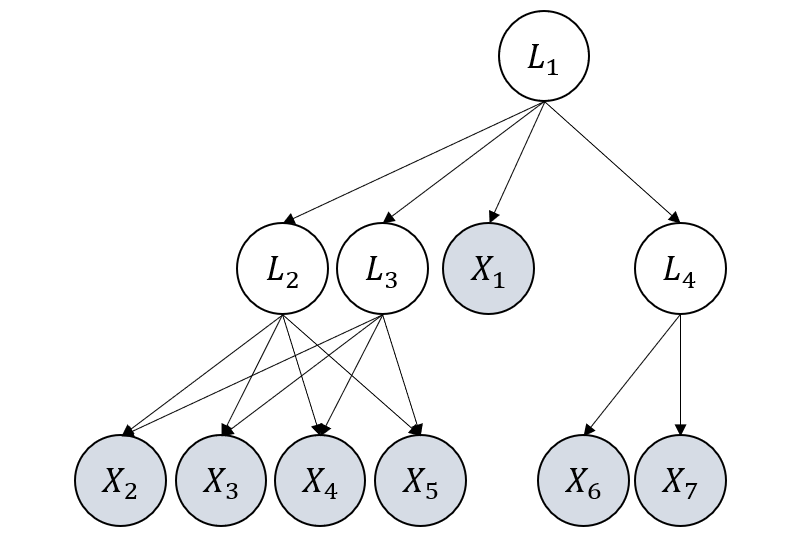}
    \caption{{In $\graph_2$, Condition~\ref{cond:basic} is not satisfied, in the sense that $\node{L_4}$ does not have enough pure children and neighbours.}}
  \end{subfigure}
  \begin{subfigure}[b]{0.37\textwidth}
    \centering
    \includegraphics[width=\textwidth]{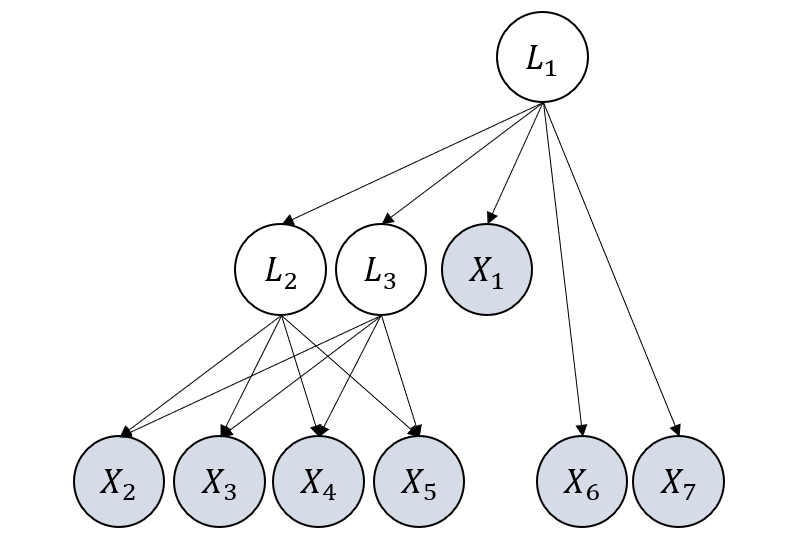}
    \caption{{Given $\graph_2$, RLCD outputs $\graph_2'$, which is correct and informative, though not exactly the same as $\graph_2$.}}
  \end{subfigure}
  \begin{subfigure}[b]{0.37\textwidth}
    \centering
    \includegraphics[width=\textwidth]{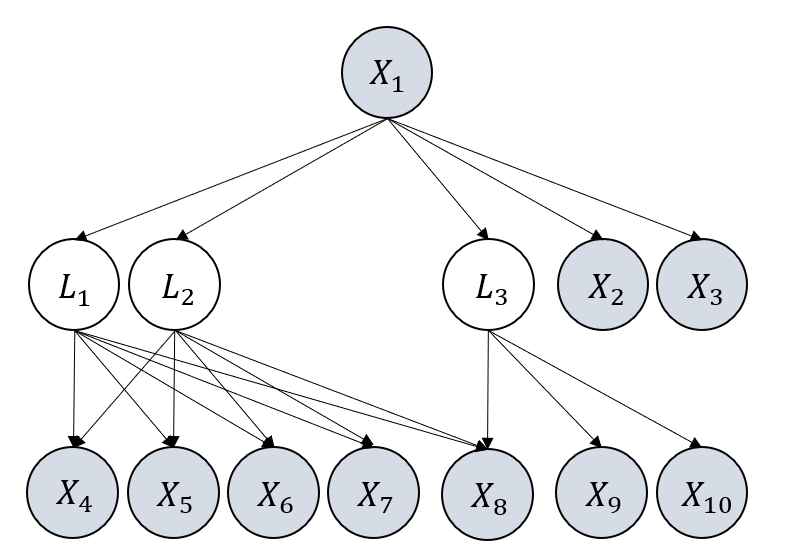}
    \caption{{In $\graph_3$, the Condition~\ref{cond:vstructure} is not satisfied, because $|\{\node{L_1},\node{L_2}\}|+|\{\node{L_3}\}| > |\{\node{X_1}\}|+|\{\node{X_8}\}|$.}}
  \end{subfigure}
  \begin{subfigure}[b]{0.37\textwidth}
    \centering
    \includegraphics[width=\textwidth]{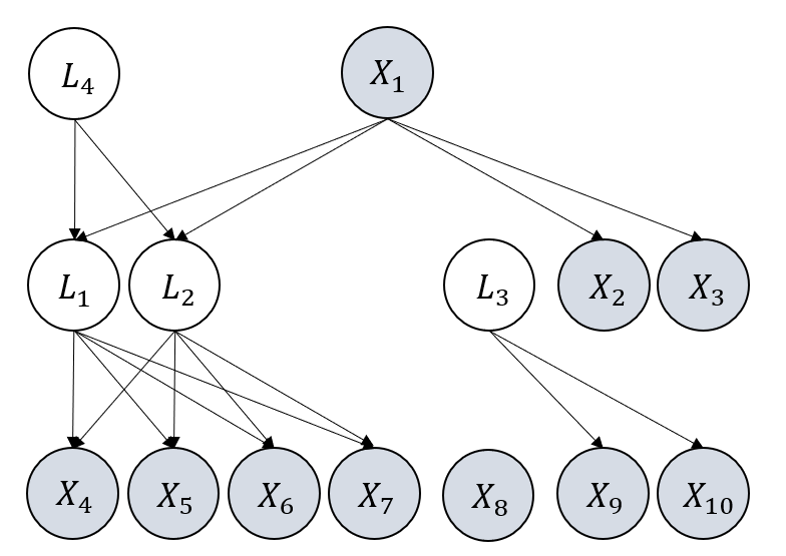}
    \caption{{Given $\graph_3$, RLCD outputs $\graph_3'$, which is incorrect but we can infer from the result that conditions are violated.}}
  \end{subfigure}
  \caption{{Examples to show that even when graphical conditions are not satisfied, the proposed RLCD can provide the correct result (though may be uninformative), or infer that the condition is violated.}}
  \label{fig:violation of conditions}
\end{figure}

\subsection{Violation of Graphical Conditions}
In this section, we shall discuss what would the output of the proposed method be when graphical conditions~\ref{cond:basic}  \ref{cond:vstructure} are not satisfied.

(i) The result is correct but uninformative. For instance, in Figure~\ref{fig:violation of conditions} (a), $\node{L_1}\node{L_2}$ do not belong to any atomic cover, as they do not have enough pure children plus neighbours. Thus, Condition~\ref{cond:basic} is not satisfied. In this scenario, though the output of the proposed method is not informative, the result is correct. It is not informative in the sense that the result in Figure~\ref{fig:violation of conditions} (b) fails to inform us the existence of latent variables. The result is correct in the sense that it correctly outputs the CI skeleton, which is rank-equivalent to the ground truth $\graph_1$. In other words, the output graph $\graph_1'$ and the ground truth $\graph_1$ are able to entail the same set of observational rank constraints and no algorithm can differentiate them solely by rank information.

(ii) The result is correct and informative, though it uses a more compact graph as the representation of the rank-equivalence class.
An example is given in Figure~\ref{fig:violation of conditions} (c), where $\node{L_4}$ does not belong to any atomic cover (as no enough pure children plus neighbours), and thus $\graph_2$ does not satisfy Condition~\ref{cond:basic}. However, the proposed RLCD still outputs the correct and informative result $\graph_2'$.  The output $\graph_2'$ is correct in the sense that $\graph_2'$ and  $\graph_2$ are rank-equivalent, i.e., they entail the same set of observational rank constraints, and the local violation of Condition~\ref{cond:basic} does not harm the correctness of other substructures. The output is also informative as $\graph_2'$ is a compact representation of the rank-equivalence class that contains $\graph_2$; one can easily infer many other members of the class from $\graph_2'$.

(iii) The result is incorrect, but from the result we can infer that the conditions are violated. An example is given in Figure~\ref{fig:violation of conditions} (e) where for the v structure in $\graph_3$  we have $|\{\node{L_1},\node{L_2}\}|+|\{\node{L_3}\}| > |\{\node{X_1}\}|+|\{\node{X_8}\}|$ and thus Condition~\ref{cond:vstructure} is not satisfied. In this scenario, RLCD will output $\graph_3'$, which is incorrect. However, we can easily infer that this result is abnormal, as the result is not consistent with the CI skeleton: in $\graph_3'$, there are 3 subgroups that are not connected to each other, while the CI skeleton would inform us that all the variables are directly or indirectly connected.

We note that the above analysis holds in the large sample limit. In the finite sample cases, the result of statistical tests could be incorrect and thus self-contradictory.

\subsection{Triangle Structure}
\label{def of triangle}
The definition of  triangle is as follows. If three variables are mutually adjacent, then they form a triangle. For example, in Figure~\ref{fig:triangle}, $\node{L_1},\node{L_3},\node{L_4}$ form a triangle and $\node{X_5},\node{X_6},\node{X_7}$ form a triangle. Our Condition~\ref{cond:basic} rules out the case of the triangle among $\node{L_1},\node{L_3},\node{L_4}$ as it involves latent variables, but does not rule out the triangle among $\node{X_5},\node{X_6},\node{X_7}$.

  \begin{figure}[t]
    \setlength{\abovecaptionskip}{0mm}
\setlength{\belowcaptionskip}{0mm}
  \centering 
  \includegraphics[width=0.5\linewidth]{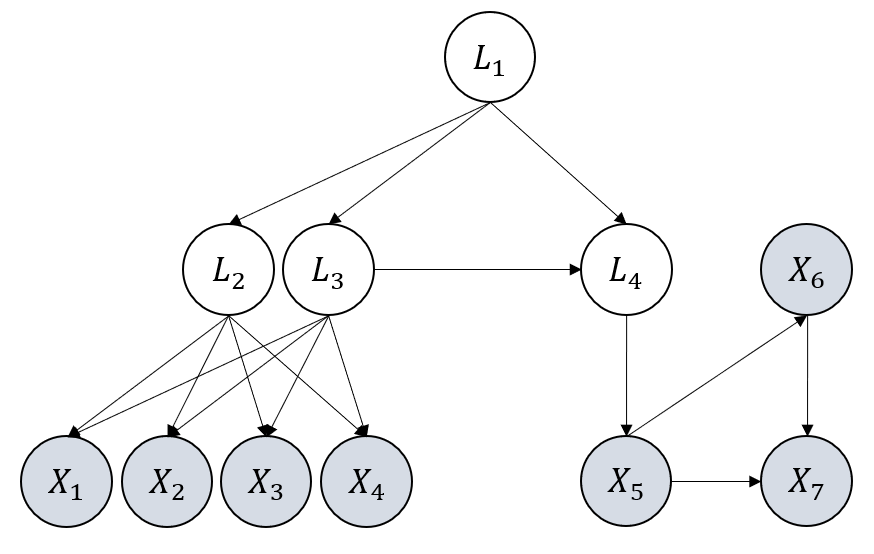}
  \caption{{An illustrative example to show the triangle structure.}}
  \label{fig:triangle}
\end{figure}

\begin{figure}[tb]%\vspace{2mm}
  \vspace{-0mm}
  \centering
  \begin{subfigure}[b]{0.65\textwidth}
    \centering
    \includegraphics[width=\textwidth]{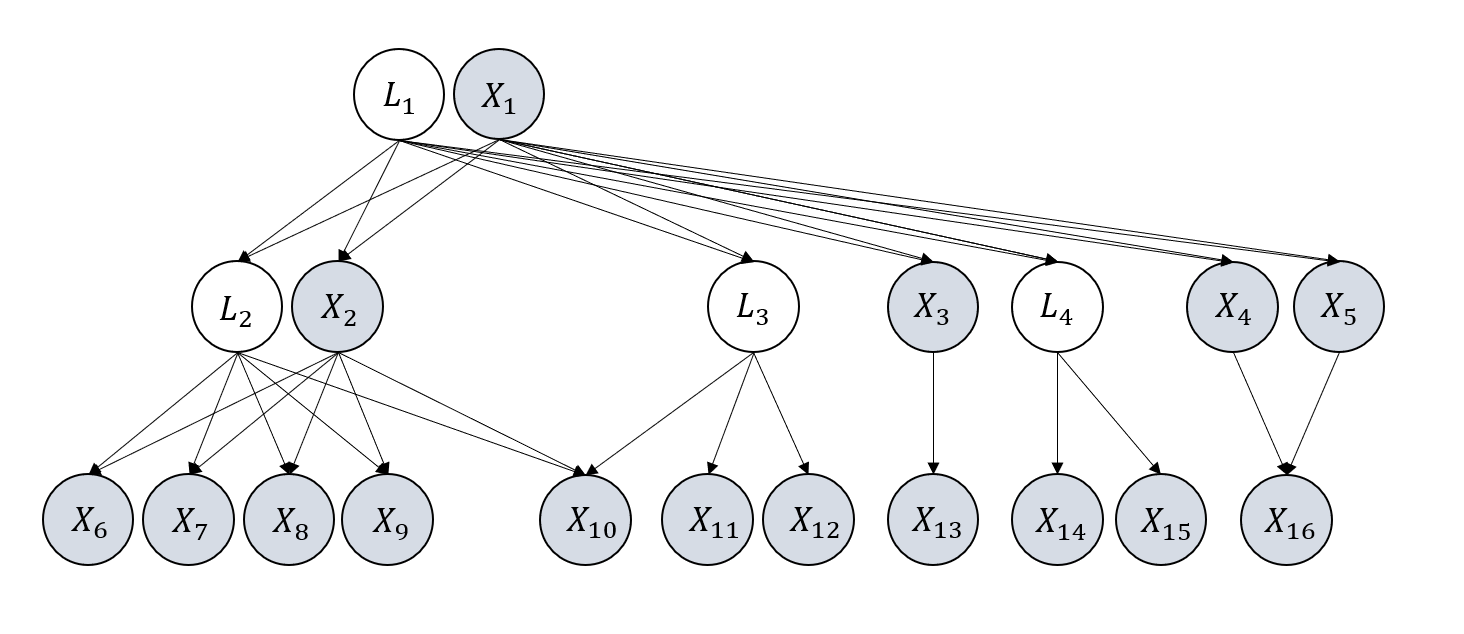}
    \caption{{Input  $\graph$, a latent general graph.}}
  \end{subfigure}
  \begin{subfigure}[b]{0.35\textwidth}
    \centering
    \includegraphics[width=\textwidth]{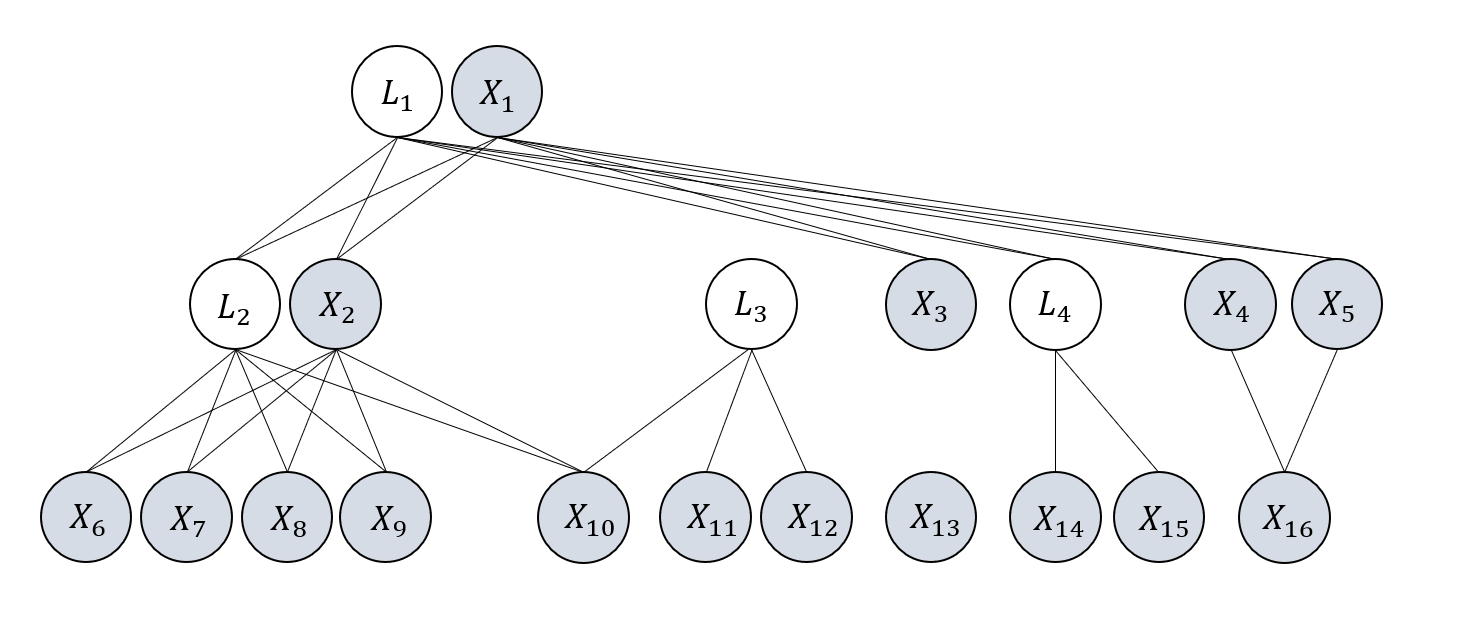}
    \caption{{The output of RLCD, $\graph_1$.}}
  \end{subfigure}
  \begin{subfigure}[b]{0.35\textwidth}
    \centering
    \includegraphics[width=\textwidth]{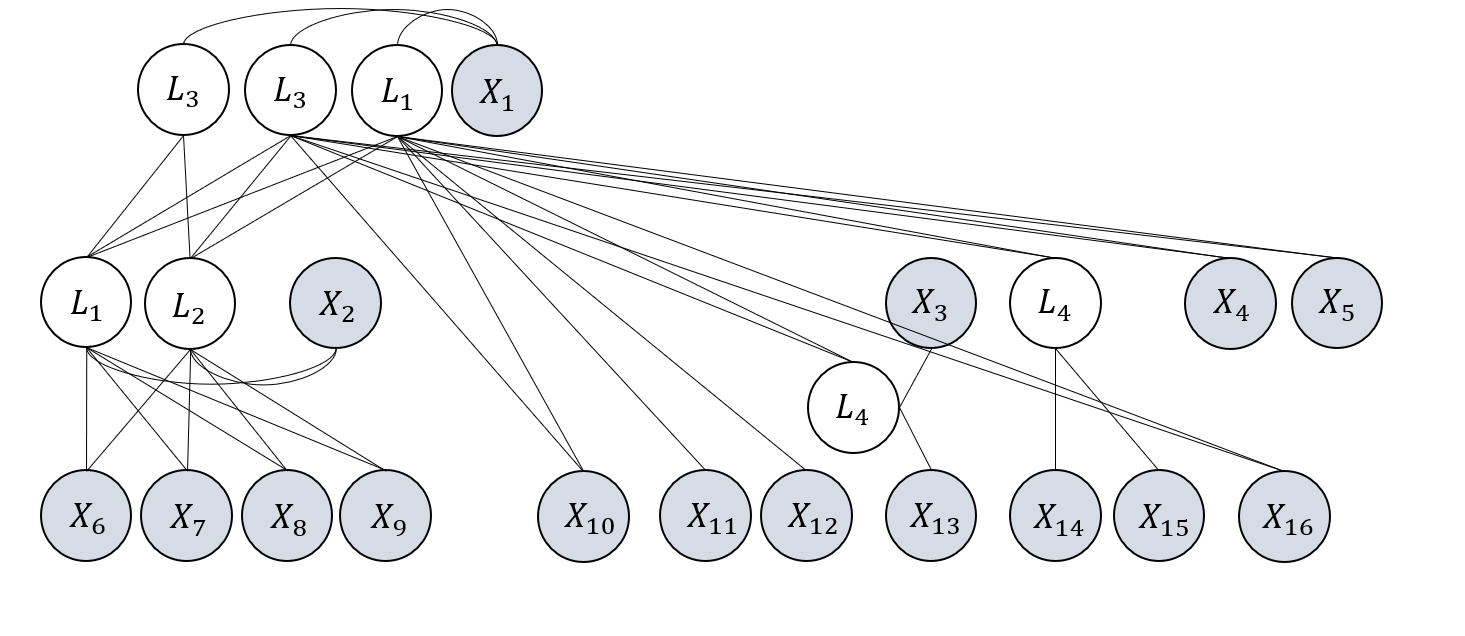}
    \caption{{The output of Hier. Rank, $\graph_2$.}}
  \end{subfigure}
  \begin{subfigure}[b]{0.35\textwidth}
    \centering
    \includegraphics[width=\textwidth]{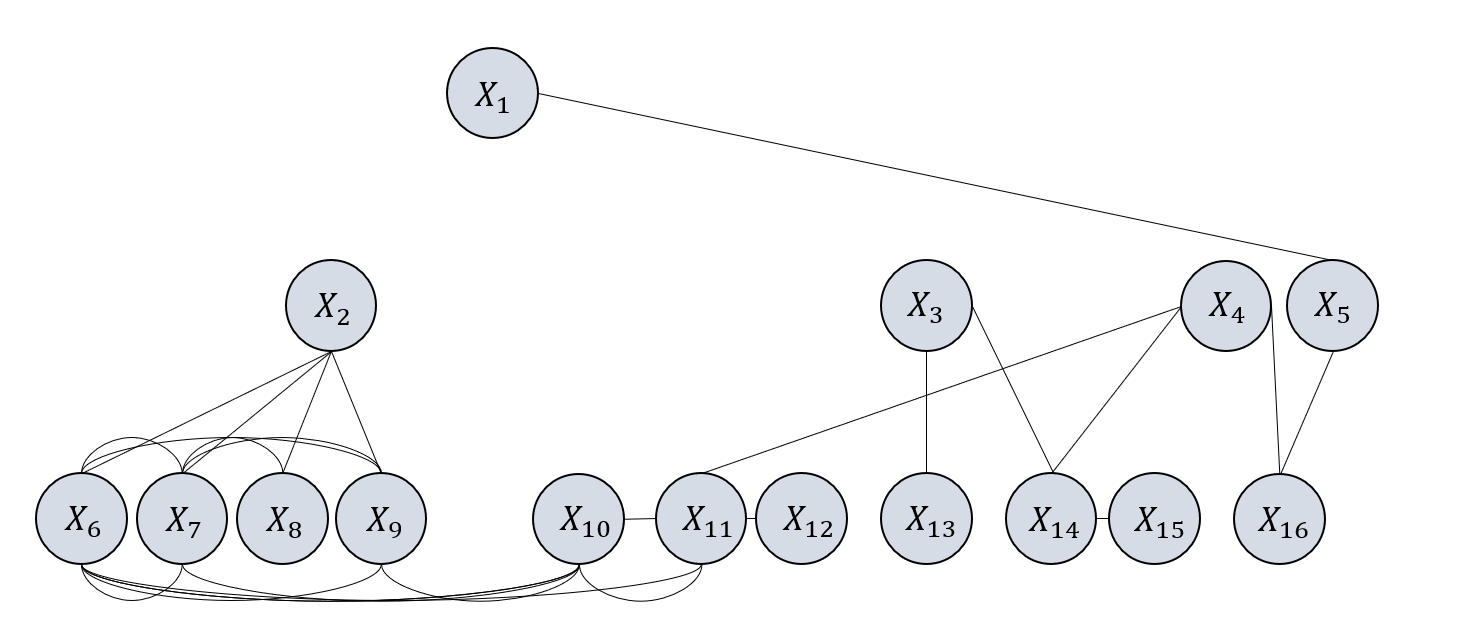}
    \caption{{The output of FCI, $\graph_3$.}}
  \end{subfigure}
  \begin{subfigure}[b]{0.35\textwidth}
    \centering
    \includegraphics[width=\textwidth]{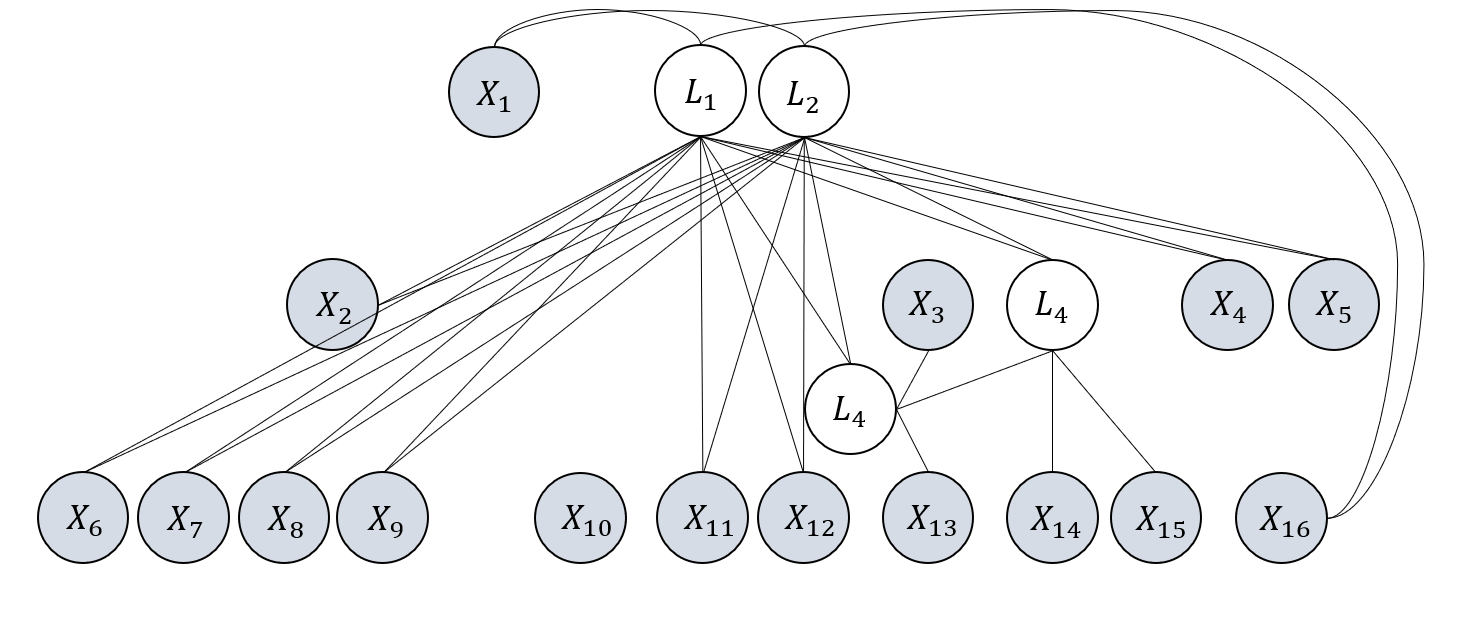}
    \caption{{The output of GIN, $\graph_4$.}}
  \end{subfigure}
  \caption{{Illustrative examples of the output of each method given data generated from $\graph$. We only show the skeleton here for simplicity.}}
  \label{fig:output of each method}
\end{figure}

\subsection{Illustrative Outputs of Each Method.}
Here we give illustrative outputs of each method to give an intuitive understanding of why the proposed RLCD performs the best. Here we use 10000 data points and only show the skeleton for simplicity.

The underlying graph $\graph$ is shown in Figure~\ref{fig:output of each method} (a) and it has 16 observed variables and 4 latent variables. The output of the proposed RLCD, $\graph_1$ is given in Figure~\ref{fig:output of each method} (b) and it is nearly the same as the input $\graph$. As it is the finite sample case, there are still two edges missing due to the error of the statistic test. However, by increasing the sample size, the missing two edges can also be recovered.

The output of Hier. rank, $\graph_2$ is given in Figure~\ref{fig:output of each method} (c). It introduces redundant latent variables compared to the ground truth, and many edges are incorrect. The underlying reason is that Hier. rank cannot handle the scenario where observed variables are directly adjacent and it does not allow edges from an observed variable to a latent variable.

The output of FCI, $\graph_3$ is given in Figure~\ref{fig:output of each method} (d). It can neither identify the cardinality nor the location of latent variables. Furthermore, many edges between observed variables are missing or incorrect.
This might be due to the fact that FCI relies on CI tests conditioned on multiple variables but tests of independence conditional on
large numbers of variables have very low power \cite{spirtes2001anytime}.

The output of GIN, $\graph_4$ is given in Figure~\ref{fig:output of each method} (e). The number of latent variables it discovered is correct, but they do not exactly correspond to the latent variables in the ground truth $\graph$. Plus, many edges are incorrect, and the reason is similar to that for Hier. rank, i.e., GIN cannot handle direct edges between observed variables and edges from an observed variable to a latent variable

\subsection{Detailed Discussion about Combining Result of Phases 2 \& 3 with Phase 1.}
\label{combine result}

(i) ``Transfer the estimated DAG $\graph''$ to Markov equivalence class" (the first part of Line 7 in Algorithm~\ref{alg:all}): Here, we transfer the output of Phases 2 \& 3, i.e., $\graph''$, to a Completed Partial Directed Acyclic Graph (CPDAG), which represents the corresponding Markov Equivalence Class. 
Algorithms for transferring a DAG to CPDAG have been well studied \citep{chickering2002optimal,chickering2002learning,chickering2013transformational} with implementations available from e.g., causal-learn \citep{causallearn} or causaldag \citep{squires2018causaldag} python package.

(ii) ``Update $\graph'$ by $\graph''$" (the second part of Line 7 in Algorithm~\ref{alg:all}):
Note that the input to Phases 2 \& 3 is $\set{X}_\setset{Q}\cup\set{N}_\setset{Q}$, and let $\set{L_\setset{Q}}$ be the newly discovered latent variables during Phases 2 \& 3.
By
Theorem~\ref{theorem:necessary and sufficient under cond},  latent variables must be in treks between variables in $\set{X}_\setset{Q}$. Therefore, we just need to consider the edges among $\set{X}_\setset{Q}\cup\set{L_\setset{Q}}$.
Specifically, we first
  delete all the edges among $\set{X}_\setset{Q}$ in graph $\graph'$, then add $\set{L_\setset{Q}}$ to $\graph'$, and finally add 
all the edges among $\set{X}_\setset{Q}\cup\set{L_\setset{Q}}$ from $\graph''$ to $\graph'$.

(iii) ``Orient remaining causal directions that can be inferred from v structures"  (Line 8 in Algorithm~\ref{alg:all}): Given $\graph'$, a partially directed acyclic graph (PDAG), here we orient all remaining causal directions that can be determined.
This can be easily achieved by following stage 2 of PC \citep{spirtes2000causation}: we use the CI results detected in Phase 1 to find v-structures among observed variables and apply Meek's rule \citep{meek2013causal} to infer the remaining directions that can be decided. After that, we then transfer this PDAG to a CPDAG (well studied in \cite{chickering2002optimal,chickering2002learning,chickering2013transformational} with implementations available) to get the corresponding Markov equivalence class.

\end{appendices}

\end{document}